\def\eqref#1{(\ref{#1})}
\def\eps{{\epsilon}}
\newcommand{\tr}{\mathrm{tr}}
\def\ra{{\textnormal{a}}}
\def\vzero{{\bm{0}}}
\def\va{{\bm{a}}}
\def\vb{{\bm{b}}}
\def\vp{{\bm{p}}}
\def\vu{{\bm{u}}}
\def\vv{{\bm{v}}}
\def\vw{{\bm{w}}}
\def\vx{{\bm{x}}}
\def\vy{{\bm{y}}}
\def\vz{{\bm{z}}}
\def\mA{{\bm{A}}}
\def\mB{{\bm{B}}}
\def\mC{{\bm{C}}}
\def\mD{{\bm{D}}}
\def\mE{{\bm{E}}}
\def\mG{{\bm{G}}}
\def\mH{{\bm{H}}}
\def\mI{{\bm{I}}}
\def\mL{{\bm{L}}}
\def\mM{{\bm{M}}}
\def\mU{{\bm{U}}}
\def\mV{{\bm{V}}}
\DeclareMathAlphabet{\mathsfit}{\encodingdefault}{\sfdefault}{m}{sl}
\SetMathAlphabet{\mathsfit}{bold}{\encodingdefault}{\sfdefault}{bx}{n}
\def\gA{{\mathcal{A}}}
\def\gB{{\mathcal{B}}}
\def\gC{{\mathcal{C}}}
\def\gF{{\mathcal{F}}}
\def\gH{{\mathcal{H}}}
\def\gL{{\mathcal{L}}}
\def\gM{{\mathcal{M}}}
\def\gO{{\mathcal{O}}}
\def\gQ{{\mathcal{Q}}}
\def\gS{{\mathcal{S}}}
\def\gW{{\mathcal{W}}}
\def\gX{{\mathcal{X}}}
\def\gZ{{\mathcal{Z}}}
\def\sP{{\mathbb{P}}}
\def\sR{{\mathbb{R}}}
\newcommand{\E}{\mathbb{E}}
\newcommand{\KL}{\mathrm{KL}}
\DeclareMathOperator*{\argmax}{arg\,max}
\DeclareMathOperator*{\argmin}{arg\,min}
\theoremstyle{plain}
\newtheorem{assumption}{Assumption}
\newtheorem{definition}{Definition}
\newtheorem{theorem}{Theorem}
\newtheorem{proposition}{Proposition}
\newtheorem{lemma}{Lemma}
\newtheorem{remark}{Remark}
\def\1{\mathbbm{1}}
\newcommand{\diag}{\mathrm{diag}}
\newcommand{\Ber}{\mathrm{Bernoulli}}
\definecolor{darkblue}{rgb}{0.0,0.0,0.65}
\definecolor{darkred}{rgb}{0.65,0.0,0.0}
\definecolor{darkgreen}{rgb}{0.0,0.5,0.0}
\definecolor{tab:blue}{RGB}{31,119,180}  
\definecolor{tab:red}{RGB}{214,39,40}  
\definecolor{tab:green}{RGB}{44,160,44}  
\definecolor{tab:orange}{RGB}{255,127,14}  
\newcommand{\cmark}{\text{\ding{51}}}
\newcommand{\xmark}{\text{\ding{55}}}
\newcommand{\gyes}{{\color[rgb]{0,.8,0}\cmark}}
\newcommand{\rno}{\color[rgb]{.8,0,0}\xmark}
\newcommand{\Reg}{\mathrm{Reg}}
\newcommand{\vectorize}{\mathrm{vec}}
\def\RR{{\mathbb{R}}}
\DeclareMathOperator{\EE}{\mathbb{E}} 
\def\ddefloop#1{\ifx\ddefloop#1\else\ddef{#1}\expandafter\ddefloop\fi}
\def\ddef#1{\expandafter\def\csname c#1\endcsname{\ensuremath{\mathcal{#1}}}}
\def\ddef#1{\expandafter\def\csname h#1\endcsname{\ensuremath{\hat{#1}}}}
\def\ddef#1{\expandafter\def\csname hc#1\endcsname{\ensuremath{\hat{\mathcal{#1}}}}}
\def\ddef#1{\expandafter\def\csname hb#1\endcsname{\ensuremath{\hat{\mathbf{#1}}}}}
\def\ddef#1{\expandafter\def\csname hb#1\endcsname{\ensuremath{\hat{\boldsymbol{#1}}}}}
\def\bth{{\boldsymbol \theta}}
\def\th{\theta}
\def\lam{\lambda}
\def\eps{\varepsilon}
\def\th{\theta}
\def\cd{\cdot}
\def\la{\langle}
\def\ra{\rangle}
\begin{document}

%
\runningtitle{Regret-to-Confidence-Set Conversion}

%

\twocolumn[

\aistatstitle{Improved Regret Bounds of (Multinomial) Logistic Bandits via Regret-to-Confidence-Set Conversion}

\aistatsauthor{ Junghyun Lee\footnotemark[1] \And Se-Young Yun\footnotemark[1] \And Kwang-Sung Jun\footnotemark[2] }

\aistatsaddress{ \footnotemark[1]Kim Jaechul Graduate School of AI, KAIST, Seoul, Republic of Korea \\ \footnotemark[2]Department of Computer Science, University of Arizona, Tucson AZ, USA \\ \texttt{\{jh\_lee00, yunseyoung\}@kaist.ac.kr} \quad \texttt{kjun@cs.arizona.edu} }
]

\setlength{\abovedisplayskip}{3pt}
\setlength{\belowdisplayskip}{4pt}
\setlength{\abovedisplayshortskip}{3pt}
\setlength{\belowdisplayshortskip}{4pt}
\textfloatsep=1.4em

\begin{abstract}
Logistic bandit is a ubiquitous framework of modeling users' choices, e.g., click vs. no click for advertisement recommender system. We observe that the prior works overlook or neglect dependencies in $S \geq \lVert \bm\theta_\star \rVert_2$, where $\bm\theta_\star \in \sR^d$ is the unknown parameter vector, which is particularly problematic when $S$ is large, e.g., $S \geq d$. In this work, we improve the dependency on $S$ via a novel approach called {\it regret-to-confidence set conversion (R2CS)}, which allows us to construct a convex confidence set based on only the \textit{existence} of an online learning algorithm with a regret guarantee. Using R2CS, we obtain a strict improvement in the regret bound w.r.t. $S$ in logistic bandits while retaining computational feasibility and the dependence on other factors such as $d$ and $T$.
We apply our new confidence set to the regret analyses of logistic bandits with a new martingale concentration step that circumvents an additional factor of $S$.
We then extend this analysis to multinomial logistic bandits and obtain similar improvements in the regret, showing the efficacy of R2CS.
While we applied R2CS to the (multinomial) logistic model, R2CS is a generic approach for developing confidence sets that can be used for various models, which can be of independent interest.
\end{abstract}

\begin{table*}
    \centering
    \begin{tabular}{|c||c|c|c|} \hline 
         &  {\bf Algorithm} &  {\bf Regret Upper Bound} & {\bf \!\!Tractable?\!\!} \\ \hline 
         \multirow{5}*{\!Logistic Bandits\!} &  \makecell{\texttt{SupLogistic} \\ \citep{jun2021confidence}} & $\sqrt{dT} + d^3 \kappa(T)^2$ & \gyes \\ \cline{2-4}
         &  \makecell{\texttt{OFULog} \\ \citep{abeille2021logistic}} & $d S^{\frac{3}{2}} \sqrt{\frac{T}{\kappa_\star(T)}} + \min\left\{ d^2 S^3 \kappa_\gX(T), R_\gX(T) \right\}$ & \rno \\ \cline{2-4}
         &  \makecell{\texttt{OFULog-r} \\ \citep{abeille2021logistic}} & $d S^{\frac{5}{2}} \sqrt{\frac{T}{\kappa_\star(T)}} + \min\left\{ d^2 S^4 \kappa_\gX(T), R_\gX(T) \right\}$ & \gyes \\ \cline{2-4}
         &  \makecell{\texttt{ada-OFU-ECOLog} \\ \citep{faury2022logistic}} & $dS \sqrt{\frac{T}{\kappa_\star(T)}} + d^2 S^6 \kappa(T)$ & \gyes \\ \cline{2-4}
         &  \makecell{\texttt{OFULog+} \\ ({\bf ours, Section~\ref{sec:logistic-regret}})} & $dS \sqrt{\frac{T}{\kappa_\star(T)}} + \min\left\{ d^2 S^2 \kappa_\gX(T), R_\gX(T) \right\}$ & \gyes  \\ \hline 
         \multirow{4}*{MNL Bandits} & \makecell{\texttt{MNL-UCB} \\ \!\citep{amani2021mnl}\!} & $dK^{\frac{3}{4}} S \sqrt{\kappa(T) T}$ & \rno \\ \cline{2-4}
         &  \makecell{\texttt{Improved MNL-UCB} \\ \!\citep{amani2021mnl}\!} & $dK^{\frac{5}{4}} S^{\frac{3}{2}} \left( \sqrt{T} + dK^{\frac{5}{4}} S \kappa(T) \right)$  & \rno \\ \cline{2-4} 
         &  \makecell{\texttt{MNL-UCB+} \\ ({\bf ours})} & $d \sqrt{K S \kappa(T) T}$  & \gyes \\ \cline{2-4}
         &  \makecell{\texttt{Improved MNL-UCB+} \\ ({\bf ours})} & $d \sqrt{K} S \left( \sqrt{T} + d K^{\frac{3}{2}} \sqrt{S} \kappa(T) \right)$  & \rno \\ \hline
    \end{tabular}
    \caption{\label{tab:results} Comparison of regret upper bounds for contextual logistic and MNL bandits, w.r.t. $\kappa_\star(T)$, $\kappa_\gX(T)$, $\kappa(T)$, $d$, $T$, $K$, and $S$ (see Section~\ref{sec:logistic-setting} and \ref{sec:multinomial} for definitions). For simplicity, we omit logarithmic factors. For logistic bandits, $R_{\gX}(T)$ is an arm-set-dependent term that may be much smaller than $\kappa_\gX(T)$. ``Tractable?'' is considered in the case of a finite arm-set, i.e., when $|\gX| < \infty$.}
\end{table*}

\section{INTRODUCTION}
\label{sec:intro}
The bandit problem~\citep{thompson1933bandit,robbins1952bandit} provides a ubiquitous framework to model the exploration-exploitation dilemma, with various variants depending on the application domain.
Out of them, (multinomial) logistic bandits~\citep{filippi2010glm,faury2020logistic,amani2021mnl} has recently received much attention due to its power in modeling binary-valued (discrete-valued) rewards with observed covariates and contexts (respectively).
Their applications are abundant in interactive machine learning tasks including news recommendation~\citep{li2010news,li2012glm} where the rewards are (`click', `no click') or online ad placements where the rewards are one of the multiple outcomes (`click', `show me later', `never show again', `no click'). 

In logistic bandits, at every time step $t$, the learner observes a potentially infinite arm-set $\gX_t \subset \RR^d$ that can vary over time, then plays an action $\vx_t \in \gX_t$.
She then receives a reward $r_t \sim \Ber(\mu(\langle \vx_t, \bm\theta_\star \rangle))$ for some unknown $\bm\theta_\star \in \sR^d$, where $\mu(z) = (1 + e^{-z})^{-1}$ is the logistic function. 
The goal of the learner is to maximize the cumulative reward, and the performance is typically measured by the (pseudo-) regret:
\begin{equation}
  \Reg^B(T) := \sum_{t=1}^T \left\{ \mu(\langle \vx_{t,\star}, \bm\theta_\star \rangle) - \mu(\langle \vx_t, \bm\theta_\star \rangle) \right\},
\end{equation}
where $\vx_{t,\star} := \argmax_{\vx \in \gX_t} \mu(\langle \vx, \bm\theta_\star \rangle)$ is the optimal action at time $t$.
The multinomial problem is defined in Section~\ref{sec:multinomial}. 

One popular bandit strategy is the optimistic approach (also known as ``optimism in the face of uncertainty''), which selects the next arm with the largest upper confidence bound (UCB).
In generalized linear models, the UCB of an arm $\bm x\in\RR^d$ is typically constructed by constructing a confidence set $\gC_t$ for the unknown parameter $\bm\theta_\star$ and then computing $\max_{\bm\th \in \gC_t} \la \bm x, \bm\th\ra$~\citep{dani2008linear,abbasiyadkori2011linear,faury2022logistic}.
For this, it is important to ensure that $\gC_t$ is a convex set since otherwise the maximization above is computationally intractable in general, and one often needs to resort to using a significantly loosened UCB (e.g.,~\citet{faury2020logistic}), which hurts the performance.

One way to construct a convex confidence set is to leverage the loss function, which first appeared in~\citet{abeille2021logistic}: 
\begin{align*}
    \gC_t = \left\{ \bm \th : \|\bth\|_2 \le S, \bar\gL_t(\bth) - \bar\gL_t(\widehat\bth_t) \le \beta_t(\delta)^2 \right\}
\end{align*}
where $\bar\gL_t$ is the regularized negative log-likelihood, $\widehat\bth_t$ is the regularized MLE at time $t$, and $\beta_t$ is slowly growing in $t$.
This set $\gC_t$ is convex due to the convexity of $\bar\gL_t$.
Such a confidence set is natural as it is based on the log-likelihood ratio and leads to the state-of-the-art regret bound and numerical performance~\citep{abeille2021logistic,faury2022logistic}.
However, the tightness of the set above, specifically the radius $\beta_t(\delta)^2 = \gO(dS^3\log(t))$, is not clear, which is important given that the tightness directly affects the performance of the algorithm, both in the analysis and the numerical performance.

\paragraph{Contributions.}
In this paper, we make a number of contributions in (multinomial) logistic bandits that are enabled by a tightened loss-based confidence set.

Firstly, we propose a novel and generic confidence set construction method that we call regret-to-confidence-set conversion (R2CS).
Specifically, R2CS constructs a loss-based confidence set via an achievable regret bound in the online learning problem with the matching loss \textit{without} ever having to run the online algorithm.
Using R2CS, we provide new confidence sets for logistic loss (Theorem~\ref{thm:confidence-logistic}) and MNL loss (Theorem~\ref{thm:confidence-multinomial}) that are tighter than prior arts~\citep{abeille2021logistic,amani2021mnl,zhang2023mnl}.
Specifically, for the logistic model, our radius is $\beta_t^2 = \gO(d \log(t) + S)$ which is a significant improvement upon $\gO(dS\log(t))$ from~\citet{abeille2021logistic} when $S$ is large. 

R2CS depends on regret bounds of online learning algorithms just like similar approaches of online-to-confidence-set conversion (O2CS; \cite{abbasiyadkori2012conversion}) or online Newton step-based confidence set~\citep{Dekel10robust}.
However, R2CS is fundamentally different from them as R2CS does \textit{not} run the online learning algorithm itself, which allows us to leverage the tight regret guarantees that are currently only available via computationally intractable algorithms~\citep{foster2018logistic,mayo22scale}; see Appendix~\ref{app:o2cs} for a detailed comparison.

Secondly, we obtained improved regret bounds of contextual (multinomial) logistic bandits with our new confidence sets\footnote{After our initial submission, we became aware of a concurrent work of \cite{zhang2023mnl} that tackles the same problem. We discuss how our results compare with theirs in Section~\ref{subsec:multinomial-regret}.} as outlined in Table~\ref{tab:results}.
For logistic bandits, we improve by $\sqrt{S}$ in the leading term and $S$ for lower-order term compared to \cite{abeille2021logistic}, and we improve by $S^4$ and possibly $\kappa$ in the lower-order term compared to \cite{faury2022logistic}. 
For MNL bandits, we improve by at least $\sqrt{S}$ and $\sqrt{K}$ for the leading terms compared to \cite{amani2021mnl,zhang2023mnl}.
This is discussed in detail in the last paragraphs of Section~\ref{subsec:logistic-regret} and \ref{subsec:multinomial-regret}.

\paragraph{Outline.}
Section~\ref{sec:logistic-setting} provides the preliminaries of logistic bandits.
Section~\ref{sec:logistic-confidence} describes in detail the core ideas of R2CS for logistic bandits, and based on the new confidence set, Section~\ref{sec:logistic-regret} discusses the resulting improved regret bound of logistic bandits.
Lastly, in Section~\ref{sec:multinomial}, we address how R2CS's applicability extends to multinomial logistic bandits.

    
    


\paragraph{Notations.}
$A \lesssim B$ is when we have $A \leq c B$ for some {\it universal} constant $c$ independent of any quantities we explicitly mention, up to any logarithmic factors.
For an integer $n$, let $[n] := \{1, 2, \cdots, n\}$.
$\Delta_{>0}^K$ is the interior of $(K-1)$-dimensional probability simplex.
$\gB^d(S)$ is the Euclidean $d$-ball of radius $S$, and $\gB^{K \times d}(S)$ is the ball of radius $S$ in $\sR^{K \times d}$ endowed with the Frobenius metric.
For a square matrices $\mA$ and $\mB$, $\lambda_{\min}(\mA)$ and $\lambda_{\max}(\mA)$ is the minimum and maximum eigenvalue of $\mA$, respectively.
Also, we define the Loewner ordering $\succeq$ as $\mA \succeq \mB$ if $\mA - \mB$ is positive semi-definite.
Let $\text{Categorical}(\bm\mu)$ be the $(K+1)$-categorical distribution over $\{0,1,\ldots,K\}$ with $\bm\mu := [\mu_i]_{i\in[K]} \in [0, 1]^{K}$ where $\mu_i \in\RR$ is the mean parameter for category $i \in [K]$ and $\mu_0 = 1 - \sum_i \mu_i$.
Denote by $\KL(\bm\mu_1, \bm\mu_2)$ the KL-divergence from $\text{Categorical}(\bm\mu_1)$ to $\text{Categorical}(\bm\mu_2)$.

\section{PROBLEM SETTING}
\label{sec:logistic-setting}

We first consider stochastic contextual logistic bandit setting that proceeds as described in Section~\ref{sec:intro}.
For $s \geq 1$, let $\gF_s := \sigma\left( \left\{ \vx_1, r_1, \cdots, \vx_s, r_s, \vx_{s+1} \right\} \right)$, which constitutes the so-called canonical bandit model; also see Chapter 4.6 of \cite{banditalgorithms}.

We consider the following standard assumptions~\citep{faury2020logistic}:
\begin{assumption}
\label{assumption:X}
    $\gX_t \subseteq \gB^d(1)$ for all $t \geq 1$.
\end{assumption}
\begin{assumption}
\label{assumption:S}
    $\bm\theta_\star \in \gB^d(S)$ with known $S > 0$.
\end{assumption}

We define the following problem-dependent quantities:
\begin{align*}
    \kappa_\star(T) \!:=\! \frac{1}{\frac{1}{T} \sum_{t=1}^T \dot{\mu}(\vx_{t,\star}^\intercal \bm\theta_\star)}, \ \kappa_{\gX}(T) \!:=\! \max_{t \in [T]} \max_{\vx \in \gX_t} \frac{1}{\dot{\mu}(\vx^\intercal \bm\theta_\star)},
   \\ \text{and~~} \kappa(T) := \max_{t \in [T]} \max_{\vx \in \gX_t} \max_{\bm\theta \in \gB^d(S)} \frac{1}{\dot{\mu}(\vx^\intercal \bm\theta)}~. \hspace{5em}
\end{align*}
These quantities can scale exponentially in $S$ in the worst-case~\citep{faury2020logistic}.






\section{IMPROVED CONFIDENCE SET}
\label{sec:logistic-confidence}

\paragraph{Overview and Main Theorem.}
Our R2CS approach starts by directly constructing a {\it loss-based} confidence set that contains the true parameter $\bm\theta_\star$ with probability at least $1 - \delta$.
This confidence set is centered around the norm-constrained, unregularized maximum likelihood estimator (MLE), $\widehat{\bm\theta}_t$, defined as
\begin{equation}
\label{eqn:MLE}
    \widehat{\bm\theta}_t := \argmin_{\lVert \bm\theta \rVert_2 \leq S} \sbr{\gL_t(\bm\theta) \triangleq \sum_{s=1}^{t-1} \ell_s(\bm\theta)}~,
\end{equation}
where $\ell_s$ is the logistic loss at time $s$, defined as
\begin{equation*}
    \ell_s(\bm\theta) := -r_s \log \mu(\langle \vx_s, \bm\theta \rangle) - (1 - r_s) \log (1 - \mu(\langle \vx_s, \bm\theta \rangle)).
\end{equation*}

Our loss-based confidence set is then of the form $\gL_t(\bm\theta) - \gL_t(\widehat{\bm\theta}_t) \leq \beta_t(\delta)^2$; note that as $\gL_t$ is convex, so is the resulting confidence set.
Ultimately, we want its radius $\beta_t(\delta)$ to be as small as possible while retaining the high-probability guarantee.
\begin{remark}
    The existence of $\widehat{\bm\theta}_t$ is guaranteed as $\gB^d(S)$ is compact.
    Also, as the domain and the objectives are both convex, one can use standard convex optimization algorithms, e.g., Frank-Wolfe method~\citep{frank1956optim} or interior point method~\citep{cvxbook}, to tractably compute $\widehat{\bm\theta}_t$.
\end{remark}

We now present the first main theorem characterizing our new, improved confidence set:
\begin{theorem}[Improved Confidence Set for Logistic Loss]
\label{thm:confidence-logistic}
    We have
    \begin{align*}
        \sP\left[ \forall t \geq 1, \  \bm\theta_\star \in \gC_t(\delta) \right] \geq 1 - \delta,
    \end{align*}
    where
    \begin{align*}
        \gC_t(\delta) &= \left\{ \bm\theta \in \gB^d(S) : \gL_t(\bm\theta) - \gL_t(\widehat{\bm\theta}_t) \leq \beta_t(\delta)^2 \right\},\\
        \beta_t(\delta) &= \sqrt{10d \log\left( \frac{St}{4d} + e \right) + 2((e - 2) + S) \log\frac{1}{\delta}}.
    \end{align*}
\end{theorem}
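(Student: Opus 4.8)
The plan is to exploit the defining feature of R2CS: rewrite the log-likelihood ratio $\gL_t(\bm\theta_\star) - \gL_t(\widehat{\bm\theta}_t)$ so that it is governed by (i) the regret of an auxiliary online logistic-regression learner and (ii) a time-uniform martingale tail, and only then plug in the tightest \emph{existing} regret bound for (i). Concretely, fix any online learner that, having seen $\gF_{s-1}$, outputs a predictable iterate $\bm\theta_s$ and incurs logistic loss $\ell_s(\bm\theta_s)$. Since $\gL_t(\bm\theta)=\sum_{s=1}^{t-1}\ell_s(\bm\theta)$, adding and subtracting $\sum_{s}\ell_s(\bm\theta_s)$ gives
\[
\gL_t(\bm\theta_\star) - \gL_t(\widehat{\bm\theta}_t) = \underbrace{\sum_{s=1}^{t-1}\big(\ell_s(\bm\theta_\star)-\ell_s(\bm\theta_s)\big)}_{\text{(I): stochastic}} + \underbrace{\Big(\sum_{s=1}^{t-1}\ell_s(\bm\theta_s) - \gL_t(\widehat{\bm\theta}_t)\Big)}_{\text{(II): regret}},
\]
where (II) is exactly the learner's regret against the best-in-hindsight comparator $\widehat{\bm\theta}_t=\argmin_{\|\bm\theta\|_2\le S}\gL_t(\bm\theta)$, and (I) compares the true $\bm\theta_\star$ to the online iterates.

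For term (II) I would invoke the mere \emph{existence} of an online logistic-regression algorithm whose regret is $\kappa$-free and only logarithmic in $t$, of the form $\sum_s\ell_s(\bm\theta_s)-\min_{\|\bm\theta\|_2\le S}\gL_t(\bm\theta)\lesssim d\log(St/d)$; this is the source of the $10d\log(\tfrac{St}{4d}+e)$ term. The whole point of R2CS is that we never run this learner—only its predictable existence and its regret guarantee enter the argument—so we may borrow the sharpest known bound even when the underlying algorithm is computationally intractable.

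For term (I) I would build an exponential supermartingale. The conceptual seed is that, with $r_s\sim\Ber(\mu(\langle\vx_s,\bm\theta_\star\rangle))$, the increment $\ell_s(\bm\theta_\star)-\ell_s(\bm\theta_s)$ has conditional mean $-\KL(\cdot,\cdot)\le 0$—a favorable negative drift—while the likelihood-ratio structure makes $\exp(\ell_s(\bm\theta_\star)-\ell_s(\bm\theta_s))$ behave like a one-step martingale factor. Because the logistic loss on $\gB^d(S)$ ranges over an interval of width $\lesssim S$, I would apply the elementary bound $e^{x}\le 1+x+(e-2)x^2$ (valid for $x\le 1$) to the suitably scaled increments, so that the quadratic terms are absorbed by the negative drift and the range enters only linearly. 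Taking the product over $s$ yields a nonnegative supermartingale started at $1$, and Ville's maximal inequality converts it into the time-uniform tail $\sP[\exists t:\ (\text{I})\gtrsim((e-2)+S)\log\tfrac1\delta]\le\delta$, producing the $2((e-2)+S)\log\tfrac1\delta$ contribution. Combining (I) and (II) on this single good event gives $\gL_t(\bm\theta_\star)-\gL_t(\widehat{\bm\theta}_t)\le\beta_t(\delta)^2$ for all $t$ simultaneously, i.e. $\bm\theta_\star\in\gC_t(\delta)$.

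The main obstacle I anticipate is term (I): since the per-round loss gap is only $O(S)$-bounded (it is \emph{not} uniformly bounded as $S$ grows), a naive Azuma/Hoeffding step would inject a $\sqrt{S^2 t\log(1/\delta)}$ term and ruin the bound. The crux is to show that the negative $\KL$-drift cancels the variance proxy generated by the $(e-2)x^2$ term, leaving only the additive $S\log(1/\delta)$—precisely the ``martingale concentration step that circumvents an additional factor of $S$'' promised in the introduction. Getting this cancellation with the stated constants, while simultaneously matching the logarithmic argument $\tfrac{St}{4d}+e$ inherited from (II), is the delicate computation; a secondary subtlety is verifying that the chosen iterates $\bm\theta_s$ remain $\gF_{s-1}$-predictable, so that the supermartingale property is legitimate even though we never instantiate them.
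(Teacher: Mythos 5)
Your overall architecture coincides with the paper's: the split of $\gL_t(\bm\theta_\star)-\gL_t(\widehat{\bm\theta}_t)$ into an online-regret term (II) plus a stochastic term (I) is exactly Lemmas~\ref{lem:decomposition1-logistic}--\ref{lem:decomposition2-logistic}, the bound on (II) via the mere \emph{existence} of the improper learner of \citet{foster2018logistic} is Theorem~\ref{thm:foster}, and your supermartingale-plus-Ville treatment of (I) is the paper's anytime Freedman inequality (Lemma~\ref{lem:freedman}). The genuine gap is precisely the step you yourself defer as ``the delicate computation'': you assert that the $(e-2)x^2$ variance proxy produced by the exponential bound is absorbed by the negative KL drift, but you supply no reason why $\E[X_s^2\mid\gF_{s-1}]$ is dominated by an $O(S)$ multiple of $\KL_s:=\KL(\mu_s(\bm\theta_\star),\mu_s(\tilde{\bm\theta}_s))$ --- and without that domination your process is simply not a supermartingale, so the proof does not close. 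This comparison is not automatic; it is one of the paper's advertised technical novelties. The paper obtains it by first linearizing \emph{exactly} (Lemma~\ref{lem:decomposition1-logistic}): $\ell_s(\bm\theta_\star)-\ell_s(\tilde{\bm\theta}_s)=\xi_s\langle\vx_s,\tilde{\bm\theta}_s-\bm\theta_\star\rangle-\KL_s$, so that Freedman applies to the linear martingale part with conditional variance $\dot{\mu}(\vx_s^\intercal\bm\theta_\star)\langle\vx_s,\bm\theta_\star-\tilde{\bm\theta}_s\rangle^2$, and then proving the pointwise bound $\KL_s\geq\frac{\dot{\mu}(\vx_s^\intercal\bm\theta_\star)}{2+2S}\langle\vx_s,\bm\theta_\star-\tilde{\bm\theta}_s\rangle^2$ via the identity of the Bernoulli KL with a Bregman divergence of the log-partition function (Lemma~\ref{lem:kl-bregman}) combined with self-concordance (Lemma~\ref{lem:self-concordant}); the choice $\eta=\frac{1}{2(e-2)+2S}$ then makes the net variance coefficient negative and leaves $\frac{1}{\eta}\log\frac{1}{\delta}=2((e-2)+S)\log\frac{1}{\delta}$. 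To complete your version you would need this lemma or an equivalent (e.g.\ the elementary fact that a log-likelihood ratio bounded by $B$ satisfies $\E_p\bigl[(\log(p/q))^2\bigr]\lesssim B\,\KL(p,q)$, which follows from monotonicity of $y\mapsto(e^y-1-y)/y^2$); as written, the central inequality is conjectured, not proven.

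A separate, constructive remark: the passing observation you make --- that $\exp(\ell_s(\bm\theta_\star)-\ell_s(\tilde{\bm\theta}_s))$ is a one-step likelihood-ratio martingale factor --- is by itself a complete and even sharper solution to (I), and it makes the $O(S)$-range obstacle you worried about vanish. Since the learner's predicted label distribution $q_s$ is $\gF_{s-1}$-measurable, $M_t=\prod_{s\le t}q_s(r_s)/p_{\bm\theta_\star}(r_s)$ is a nonnegative martingale with $\E[M_t]=1$ (no boundedness or variance control needed), so Ville's inequality (Lemma~\ref{lem:ville}) gives $\sum_{s\le t}\bigl(\ell_s(\bm\theta_\star)-\ell_s(q_s)\bigr)\le\log\frac{1}{\delta}$ for all $t$ simultaneously with probability $1-\delta$. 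Combined with Theorem~\ref{thm:foster} this yields a radius in which $2((e-2)+S)\log\frac{1}{\delta}$ is replaced by $\log\frac{1}{\delta}$, i.e.\ a bound \emph{tighter} than the paper's. So the Freedman-style route you committed to is strictly harder than necessary; had you trusted the exact likelihood-ratio structure instead of hedging with the $e^{x}\le 1+x+(e-2)x^2$ bound, both the gap and the $S\log\frac{1}{\delta}$ term would disappear.
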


Roughly speaking, the confidence set of \cite{abeille2021logistic} resulted in the radius of $\beta_t(\delta) = \gO(\sqrt{dS^3 \log t})$, while ours result in $\gO(\sqrt{(d + S) \log t})$.
This separation of $d$ and $S$ leads to an overall improvement in factors of $S$.
Another important observation is that for any $\bm\theta'$, $\gL_t(\bm\theta) - \gL_t(\bm\theta') \leq \gL_t(\bm\theta) - \gL_t(\widehat{\bm\theta}_t) \leq \beta_t(\delta)^2$, and thus, even when one could find only an approximate estimate of $\gL_t(\bm\theta)$, the high-probability guarantee of $\bm\theta_\star \in \gC_t(\delta)$ still holds!
This is in contrast to the prior confidence set~\citep[Section 3.1]{abeille2021logistic}, which is geometrically centered around $\widehat{\bm\theta}_t$ and thus a biased estimate shifts the confidence set, breaking the high-probability guarantee.

We now present the proof of Theorem~\ref{thm:confidence-logistic}, which is the essence of our R2CS approach.

\paragraph{Proof Sketch of Theorem~\ref{thm:confidence-logistic}.}
The proof has three main technical novelties, which constitute the crux of our R2CS approach and may be of independent interest to other applications.
The first novelty is the two novel decomposition lemmas for the logistic loss (Lemma~\ref{lem:decomposition1-logistic}, \ref{lem:decomposition2-logistic}) that express $\beta_t(\delta)^2$ as the sum of the regret of {\it any} online learning algorithm of our choice, a sum of martingales, and a sum of KL-divergences.
The second novelty is when bounding the sum of martingales, we derive and utilize an anytime variant of the Freedman's inequality for martingales (Lemma~\ref{lem:freedman}).
The third novelty is when bounding the sum of KL-divergences, we combine the self-concordant result of \cite{abeille2021logistic} and the information geometric interpretation of the KL-divergence (Lemma~\ref{lem:kl-bregman}).

We then use the state-of-the-art online logistic regression regret guarantee of \cite{foster2018logistic} to obtain the final confidence set (Theorem~\ref{thm:confidence-logistic}).
To use the result of \cite{foster2018logistic}, we use the norm-constrained, unregularized MLE (Eqn.~\eqref{eqn:MLE}) instead of a regularized MLE used in~\cite{abeille2021logistic}.
We emphasize here that we do not need to explicitly run the online learning algorithm of \cite{foster2018logistic}, which is quite costly; otherwise, we would have to consider its efficient variant~\citep{jezequel2020logistic}, which gives an online regret bound scaling with $S$ that gives us no improvement.

\subsection{Complete Proof of Theorem~\ref{thm:confidence-logistic}}
To use martingale concentrations, we begin by writing
\begin{equation}
\label{eqn:martingale}
    r_s = \mu(\langle \vx_s, \bm\theta_\star \rangle) + \xi_s,
\end{equation}
where $\xi_s$ is a real-valued martingale difference noise.

The following is the first decomposition lemma:
\begin{lemma}
\label{lem:decomposition1-logistic}
    For the logistic loss $\ell_s$, the following holds for any $\bm\theta$:
    \begin{equation*}
        \ell_s(\bm\theta_\star) = \ell_s(\bm\theta) + \xi_s \langle \vx_s, \bm\theta - \bm\theta_\star \rangle - \KL(\mu_s(\bm\theta_\star), \mu_s(\bm\theta)).
    \end{equation*}
\end{lemma}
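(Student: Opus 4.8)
The plan is to prove this identity by a direct algebraic computation, exploiting the fact that the logistic loss is exactly the cross-entropy of a Bernoulli model and that the logistic link linearizes the log-odds. Throughout I write $p := \mu_s(\bm\theta_\star) = \mu(\langle \vx_s, \bm\theta_\star\rangle)$ and $q := \mu_s(\bm\theta) = \mu(\langle \vx_s, \bm\theta\rangle)$ for brevity, and recall from \eqref{eqn:martingale} that $\xi_s = r_s - p$.

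First I would subtract the two losses and collect them into log-ratios. Since $\ell_s(\bm\theta) = -r_s\log q - (1-r_s)\log(1-q)$ and likewise for $\bm\theta_\star$, the difference telescopes into
\begin{equation*}
    \ell_s(\bm\theta_\star) - \ell_s(\bm\theta) = r_s \log\frac{q}{p} + (1-r_s)\log\frac{1-q}{1-p}.
\end{equation*}
The next step is to substitute $r_s = p + \xi_s$ and split the right-hand side into the term carrying $\xi_s$ (the stochastic part) and the term depending only on $p$ (the deterministic part).

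The stochastic part equals $\xi_s\left(\log\frac{q}{p} - \log\frac{1-q}{1-p}\right) = \xi_s \log\frac{q(1-p)}{p(1-q)}$. Here the key identity enters: for the logistic link one has the logit relation $\log\frac{\mu(z)}{1-\mu(z)} = z$, so $\log\frac{q}{1-q} = \langle \vx_s, \bm\theta\rangle$ and $\log\frac{p}{1-p} = \langle \vx_s, \bm\theta_\star\rangle$. Subtracting, the stochastic part collapses to exactly $\xi_s \langle \vx_s, \bm\theta - \bm\theta_\star\rangle$, which is the desired linear form. The deterministic part is $p\log\frac{q}{p} + (1-p)\log\frac{1-q}{1-p} = -\KL(\mu_s(\bm\theta_\star), \mu_s(\bm\theta))$, simply by the definition of the Bernoulli KL-divergence. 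Assembling the two pieces yields the claimed decomposition.

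There is no real obstacle here; the lemma is an exact identity and the proof is routine once the two substitutions are made. The only point requiring care is the appeal to the logit linearization $\log\frac{\mu(z)}{1-\mu(z)} = z$, which is precisely what converts the difference of log-likelihoods into a term linear in $\bm\theta - \bm\theta_\star$, and hence what makes the subsequent online-learning reduction possible. I would state that identity explicitly before invoking it.
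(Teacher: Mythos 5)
Your proof is correct, and every step checks out: the loss difference does telescope to $r_s\log\frac{q}{p}+(1-r_s)\log\frac{1-q}{1-p}$, the substitution $r_s=p+\xi_s$ cleanly separates the stochastic and deterministic parts, the logit identity $\log\frac{\mu(z)}{1-\mu(z)}=z$ collapses the stochastic part to $\xi_s\langle \vx_s,\bm\theta-\bm\theta_\star\rangle$, and the deterministic part is $-\KL(\mu_s(\bm\theta_\star),\mu_s(\bm\theta))$ by definition. However, your route is genuinely different from the paper's. The paper proves the identity via a first-order Taylor expansion of the loss \emph{in the mean parameter} with the integral form of the remainder: it computes $\ell_s'(\mu)$ and $\ell_s''(\mu)$, expands $\ell_s(\mu_s)-\ell_s(\mu^\star)$ around $\mu^\star$, and then evaluates the remainder integral explicitly, after which the KL term emerges as the accumulated second-order remainder plus cross terms (this is why the paper calls the rearranging ``nontrivial''). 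Your argument needs no calculus at all: it is a pure log-likelihood-ratio decomposition in which the KL appears by definition and the linear term appears because the logistic link linearizes the log-odds. What you gain is brevity and transparency --- your proof makes explicit that the identity is exact precisely because of the canonical (logit) link, which is the structural reason the online-learning reduction works. What the paper's heavier route buys is a template that transfers mechanically to the multinomial case (its Lemma~\ref{lem:decomposition1-multinomial} repeats the same multivariate Taylor-with-remainder computation), and it foreshadows the Bregman/information-geometric machinery (Lemma~\ref{lem:kl-bregman}) used later in bounding $\zeta_2(t)$; that said, your approach also extends to the multinomial setting via the analogous identity $\log\frac{\mu_k}{\mu_0}=\langle \vx_s,\bm\theta^{(k)}\rangle$, which the paper itself invokes at the final step $(*)$ of its multinomial proof.
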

\begin{proof}
    The proof follows from the first-order Taylor expansion with integral remainder and some careful rearranging of the terms (which is nontrivial); see Appendix~\ref{app:proof-decomposition1} for the full proof.
\end{proof}

We can then replace $\bm\theta$ in the above lemma with a sequence of parameters, $\{\tilde{\bm\theta}_s\}$, ``outputted'' from an online learning algorithm of our choice.
This does {\it not} imply that the algorithm of \cite{foster2018logistic} is proper, as the choice of $\tilde{\bm\theta}_s$ {\it depends} on the current given instance $\vx$; see the paragraph below Theorem~\ref{thm:foster}.

Stemming from this, the following is the second decomposition lemma:
\begin{lemma}
\label{lem:decomposition2-logistic}
    For the logistic loss $\ell_s$, the following holds:
    \begin{equation}
    \label{eqn:decomposition}
     \sum_{s=1}^t \ell_s(\bm\theta_\star) - \ell_s(\widehat{\bm\theta}_t) \leq \Reg^O(t) + \zeta_1(t) - \zeta_2(t),
    \end{equation}
    where $\Reg^O(t) := \sum_{s=1}^t \ell_s(\tilde{\bm\theta}_s) - \sum_{s=1}^t \ell_s(\widehat{\bm\theta}_t)$ is the regret incurred by the online learning algorithm of our choice up to time $t$, $\zeta_1(t) := \sum_{s=1}^t \xi_s \langle \vx_s, \bm\theta_\star - \tilde{\bm\theta}_s \rangle$ is a sum of martingale difference sequences, and $\zeta_2(t) := \sum_{s=1}^t \KL(\mu_s(\bm\theta_\star), \mu_s(\tilde{\bm\theta}_s))$ is a sum of KL-divergences.
\end{lemma}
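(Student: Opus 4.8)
The plan is to promote the pointwise identity of Lemma~\ref{lem:decomposition1-logistic} into the telescoped bound \eqref{eqn:decomposition} by evaluating it along the online learner's iterates. Since Lemma~\ref{lem:decomposition1-logistic} holds for \emph{every} fixed $\bm\theta$, I would instantiate it at $\bm\theta = \tilde{\bm\theta}_s$, the parameter predicted at round $s$ by whatever online learning algorithm we choose to invoke. For each $s$ this gives
\begin{equation*}
    \ell_s(\bm\theta_\star) = \ell_s(\tilde{\bm\theta}_s) + \xi_s \langle \vx_s, \tilde{\bm\theta}_s - \bm\theta_\star \rangle - \KL(\mu_s(\bm\theta_\star), \mu_s(\tilde{\bm\theta}_s)).
\end{equation*}

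Next I would subtract the common comparator $\ell_s(\widehat{\bm\theta}_t)$ from both sides and sum over $s = 1, \ldots, t$. The left-hand side is exactly $\sum_{s=1}^t \left[ \ell_s(\bm\theta_\star) - \ell_s(\widehat{\bm\theta}_t) \right]$, and on the right the three families of terms separate without interaction: the likelihood gaps $\sum_{s=1}^t \left[ \ell_s(\tilde{\bm\theta}_s) - \ell_s(\widehat{\bm\theta}_t) \right]$ are by definition the online regret $\Reg^O(t)$ against the batch minimizer $\widehat{\bm\theta}_t$; the bilinear terms collect into $\sum_{s=1}^t \xi_s \langle \vx_s, \tilde{\bm\theta}_s - \bm\theta_\star \rangle = -\zeta_1(t)$; and the divergences are precisely $\zeta_2(t)$. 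This produces the exact identity $\sum_{s=1}^t \left[ \ell_s(\bm\theta_\star) - \ell_s(\widehat{\bm\theta}_t) \right] = \Reg^O(t) - \zeta_1(t) - \zeta_2(t)$, which in particular yields the claimed inequality \eqref{eqn:decomposition}.

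Two points warrant care rather than ingenuity. First, the comparator defining $\Reg^O(t)$ must be the same $\widehat{\bm\theta}_t$ appearing in the confidence set; because $\widehat{\bm\theta}_t$ minimizes $\sum_{s \leq t} \ell_s$ over $\gB^d(S)$ (Eqn.~\eqref{eqn:MLE}), the quantity $\Reg^O(t)$ is a genuine regret, and this is exactly what later lets us substitute any \emph{achievable} regret bound for it without running the algorithm. Second, regarding the martingale term: the derivation produces $-\zeta_1(t)$, but in the canonical model $\vx_s$ and $\tilde{\bm\theta}_s$ are $\gF_{s-1}$-measurable while $\E[\xi_s \mid \gF_{s-1}] = 0$ by \eqref{eqn:martingale}, so $\{\xi_s \langle \vx_s, \bm\theta_\star - \tilde{\bm\theta}_s \rangle\}_s$ is a martingale difference sequence; its negation is one too, so writing $+\zeta_1(t)$ as in the statement is harmless and conveniently orients the term for the anytime Freedman bound of Lemma~\ref{lem:freedman}.

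There is no real obstacle in this lemma once Lemma~\ref{lem:decomposition1-logistic} is available: the entire content is the substitution $\bm\theta \mapsto \tilde{\bm\theta}_s$ together with the bookkeeping that recognizes the regret, martingale, and KL structures. The substantive difficulties are deferred to the subsequent steps, namely controlling $\zeta_1(t)$ via Lemma~\ref{lem:freedman} and $\zeta_2(t)$ via the self-concordance/KL-geometry argument, and to the choice of online algorithm whose regret we ultimately plug into $\Reg^O(t)$.
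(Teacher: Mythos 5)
Your proof is correct and is essentially the paper's own argument: both instantiate Lemma~\ref{lem:decomposition1-logistic} at $\bm\theta = \tilde{\bm\theta}_s$, insert the comparator $\ell_s(\widehat{\bm\theta}_t)$, and sum over $s$ to recognize $\Reg^O(t)$, the martingale sum, and the KL sum. Your explicit remark that the derivation yields $-\zeta_1(t)$ rather than $+\zeta_1(t)$, and that this sign is immaterial because the negation of a martingale difference sequence is again one (so Lemma~\ref{lem:freedman} applies either way), correctly addresses a point the paper glosses over with ``rearranging gives the desired result.''
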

\begin{proof}
    The proof follows from Lemma~\ref{lem:decomposition1-logistic} and some rearranging; see Appendix~\ref{app:proof-decomposition2} for the full proof.
\end{proof}
\begin{remark}
    This decomposition is similar to the online-to-PAC conversion of \cite{lugosi2023conversion}; see Appendix~\ref{app:o2cs} for more discussions.
\end{remark}

For $\Reg^O(t)$, we use the following regret bound for online logistic regression scaling {\it logarithmically} in $S$:
\begin{theorem}[Theorem 3 of \cite{foster2018logistic}]
\label{thm:foster}
    There exists an (improper learning) algorithm for online logistic regression with the following regret:
    \begin{equation}
    \label{eqn:foster}
        \Reg^O(t) \leq 10d \log \left( e + \frac{S t}{2d} \right).
    \end{equation}
\end{theorem}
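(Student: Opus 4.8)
The plan is to exhibit the improper predictor as a \emph{Bayesian mixture} (equivalently, the aggregating forecaster for the log loss) over the comparator ball and bound its regret by a Laplace/volumetric argument, never requiring the algorithm to be efficient. Concretely, I would fix a prior $\pi$ on $\gB^d(S)$ (uniform, or a truncated Gaussian to avoid boundary issues) and, at each round $s$, predict the posterior-mixture conditional probability $\widehat p_s = \mathbb{E}_{\bm\theta\sim\pi}[\mu(\langle\vx_s,\bm\theta\rangle)\mid r_{1:s-1},\vx_{1:s}]$. This is improper because $\widehat p_s$ need not equal $\mu(\langle\vx_s,\bm\theta\rangle)$ for any single $\bm\theta$, and it legitimately depends on the revealed $\vx_s$, matching the paragraph below the statement. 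Since the logistic loss is exactly the log loss in the prediction $\widehat p_s$, it is $1$-mixable, so the mixture's cumulative loss telescopes to $-\log Z_t$ with $Z_t := \int_{\gB^d(S)} e^{-\gL_t(\bm\theta)}\,d\pi(\bm\theta)$. Against the constrained MLE $\widehat{\bm\theta}_t$ this gives the exact identity
\begin{equation*}
\Reg^O(t) = -\log Z_t - \gL_t(\widehat{\bm\theta}_t) = \log \frac{1}{\int_{\gB^d(S)} e^{-(\gL_t(\bm\theta) - \gL_t(\widehat{\bm\theta}_t))}\,d\pi(\bm\theta)},
\end{equation*}
reducing the problem to lower-bounding the mixture mass near the MLE.

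Next I would lower-bound that integral by restricting to an ellipsoidal neighborhood $\{\bm\theta : \|\bm\theta - \widehat{\bm\theta}_t\|_{H_t}\le \rho\}$, where $H_t = \nabla^2\gL_t(\widehat{\bm\theta}_t) = \sum_{s\le t}\dot\mu(\langle\vx_s,\widehat{\bm\theta}_t\rangle)\,\vx_s\vx_s^\top$. On this neighborhood the self-concordance of the logistic loss exploited by \cite{abeille2021logistic} (namely $\dot\mu(z')\ge \dot\mu(z)e^{-|z-z'|}$) lets me show $\gL_t(\bm\theta)-\gL_t(\widehat{\bm\theta}_t)\le \tfrac12(1+o(1))\|\bm\theta-\widehat{\bm\theta}_t\|_{H_t}^2$ for $\rho=\gO(1)$, so the integrand is at least $e^{-\rho^2}$ times the prior density, integrated over a volume of order $\rho^d/\sqrt{\det H_t}$. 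Taking logs and using $H_t\preceq (t/4)I$ (since $\dot\mu\le\tfrac14$ and $\|\vx_s\|\le1$), hence $\det H_t\le (t/4)^d$, together with the prior-density factor of order $S^{-d}$, yields $\Reg^O(t)\le \tfrac d2\log(c\,St/d)+\gO(d)$, which I would then coarsen to the stated $10d\log(e+St/(2d))$.

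The main obstacle is the self-concordance step. The logistic Hessian is not uniformly lower-bounded, as its curvature degrades like $e^{-S}$ in the worst direction; this is exactly why a proper predictor committing to a single $\bm\theta$ must pay an $e^S$ factor. The improper mixture sidesteps this because it only needs the quadratic lower bound to hold on a neighborhood whose $H_t$-radius is $\gO(1)$ — a region over which self-concordance keeps the Hessian comparable to $H_t$ — rather than globally. Making this local comparison quantitative while matching the absolute constants ($10d$ and the $St/(2d)$ inside the log), and cleanly handling the case where $\widehat{\bm\theta}_t$ lies on the boundary of $\gB^d(S)$ (where I would either enlarge the prior's support slightly or replace the uniform prior by a Gaussian and absorb the resulting ridge term into the comparator), is where the real work lies.
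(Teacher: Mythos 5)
The paper never proves this statement: Theorem~\ref{thm:foster} is imported verbatim from \citet{foster2018logistic}, and the surrounding text only handles how the algorithm's outputs are converted into in-ball parameters $\tilde{\bm\theta}_s$ (Proposition~\ref{prop:surjection}). So your proposal must be judged against the original source, and there it is essentially the right reconstruction: Foster et al.'s improper learner is precisely a Bayesian mixture (the aggregating forecaster for the $1$-mixable log loss), the telescoping identity $\Reg^O(t) = -\log\int_{\gB^d(S)} e^{-(\gL_t(\bm\theta)-\gL_t(\widehat{\bm\theta}_t))}\,d\pi(\bm\theta)$ is exact, and the bound comes from lower-bounding the prior mass of a small neighborhood of a (smoothed) constrained MLE. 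Your conceptual point --- that improperness is what evades the $e^{S}$ barrier for proper predictors, and that mixture predictions may depend on the revealed $\vx_s$ --- is also exactly the role this theorem plays in the paper's R2CS argument, where only the \emph{existence} of the regret bound is used.

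One step as written would fail, though: the claim that self-concordance keeps the Hessian comparable to $H_t$ over a neighborhood of $H_t$-radius $\gO(1)$. The logistic loss is only \emph{pseudo} self-concordant ($|\ddot{\mu}|\le\dot{\mu}$), so Hessian-comparison constants degrade like $e^{|\langle \vx_s,\,\bm\theta-\widehat{\bm\theta}_t\rangle|}$, and inside the ellipsoid $\{\|\bm\theta-\widehat{\bm\theta}_t\|_{H_t}\le\rho\}$ a flat direction with $\dot{\mu}\approx e^{-S}$ permits $|\langle\vx_s,\bm\theta-\widehat{\bm\theta}_t\rangle|$ of order $\rho\, e^{S/2}$; the Dikin-ellipsoid localization valid for standardly self-concordant functions does not apply here, which is exactly why Lemma~\ref{lem:self-concordant} carries the additive $|z_1-z_2|$ term and is only ever applied on Euclidean balls in this paper. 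Relatedly, at a boundary MLE the first-order term $\nabla\gL_t(\widehat{\bm\theta}_t)^\intercal(\bm\theta-\widehat{\bm\theta}_t)$ is nonnegative and can be of order $t\,\|\bm\theta-\widehat{\bm\theta}_t\|_2$, so a purely quadratic expansion around $\widehat{\bm\theta}_t$ is unavailable. Both defects are repaired by promoting your own fallback to the main argument: set $\bm\theta' := (1-\tfrac{d}{St})\widehat{\bm\theta}_t$, note each $\ell_s$ is $1$-Lipschitz in $\bm\theta$ (as $\|\vx_s\|_2\le1$ and $|\mu-r_s|\le1$), so on the Euclidean ball of radius $d/t$ around $\bm\theta'$ (which lies inside $\gB^d(S)$) the excess loss is at most $2d$, while its uniform-prior mass is $(d/(St))^d$. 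This yields $\Reg^O(t)\le d\log(St/d)+\gO(d)$, comfortably within the stated $10d\log(e+St/(2d))$, with no Laplace approximation or curvature control at all; your sharper $\tfrac{1}{2}\log\det H_t$ version only improves constants the statement does not require.
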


\begin{remark}
    The dependency on $S$ is tight with corresponding lower bound; see Theorem 5 of \cite{foster2018logistic} and Theorem 6 of \cite{mayo22scale}.
\end{remark}

The output of Algorithm 1 of \cite{foster2018logistic} is a sequence of $\hat{\vz}_s = (\hat{z}_0, \hat{z}_1)$, corresponding to $\vx_s$ at each time $s$.
For our purpose, we need to designate a vector $\tilde{\bm\theta}_s \in \gB^d(S)$ such that $\bm\sigma(\widehat{\vz}_s) = \bm\sigma\left( (\langle \vx_s, \tilde{\bm\theta}_s \rangle) \right)$, where $\bm\sigma : \sR^1 \rightarrow \Delta_{>0}^2$ is the softmax function defined as $\bm\sigma(z_1) = \left(\frac{1}{1 + e^{z_1}}, \frac{e^{z_1}}{1 + e^{z_1}} \right)$; see Proposition~\ref{prop:surjection} in Appendix~\ref{app:rmk-proof} for a generalization of this for $(K+1)$-classification.

\paragraph{Upper Bounding $\zeta_1(t)$: Martingale Concentrations.}
Recall that $\gF_s = \sigma\left( \left\{ \vx_1, r_1, \cdots, \vx_s, r_s, \vx_{s+1} \right\} \right)$ is the filtration for the canonical bandit model.
We start by observing that $\vx_s$ and $\tilde{\bm\theta}_s$ are $\gF_{s-1}$-measurable, and $\xi_s$ is a martingale difference sequence w.r.t. $\gF_{s-1}$.
We also have that
\begin{align*}
    |\xi_s \langle \vx_s, \tilde{\bm\theta}_s - \bm\theta_\star \rangle| &\leq 2S, \\
    \E[\xi_s \langle \vx_s, \tilde{\bm\theta}_s - \bm\theta_\star \rangle | \gF_{s-1}] &= 0,
\end{align*}
and
\begin{equation*}
    \E[\xi_s^2 \langle \vx_s, \bm\theta_\star - \tilde{\bm\theta}_s \rangle^2 | \gF_{s-1}] = \dot{\mu}(\vx_s^\intercal \bm\theta_\star) \langle \vx_s, \tilde{\bm\theta}_s - \bm\theta_\star \rangle^2.
\end{equation*}

We now use a variant\footnote{This is a slight variant from the original inequality~\citep[Theorem 1.6]{freedman1975concentration} in that this uses any fixed estimate of the variance rather than an upper bound.} of Freedman's inequality for martingales, combined with Ville's inequality to make the concentration hold for any $t \geq 1$.
\begin{lemma}[Modification of Theorem 1 of \cite{beygelzimer2011contextual}]
\label{lem:freedman}
    Let $X_1, \cdots, X_t$ be martingale difference sequence satisfying $\max_s |X_s| \leq R$ a.s, and let $\gF_s$ be the $\sigma$-field generated by $(X_1, \cdots, X_s)$.
    Then for any $\delta \in (0, 1)$ and any $\eta \in [0, 1/R]$, the following holds with probability at least $1 - \delta$:
    \begin{equation*}
        \sum_{s=1}^t X_s \leq (e - 2) \eta \sum_{s=1}^t \E[X_s^2 | \gF_{s-1}] + \frac{1}{\eta} \log\frac{1}{\delta}, \quad \forall t \geq 1.
    \end{equation*}
\end{lemma}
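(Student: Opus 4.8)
The plan is to build a nonnegative exponential supermartingale from the partial sums and then invoke Ville's inequality to obtain the uniform-in-$t$ guarantee in one stroke. Concretely, for a fixed $\eta \in [0, 1/R]$ I would consider
\[
    M_t := \exp\left( \eta \sum_{s=1}^t X_s - (e-2)\eta^2 \sum_{s=1}^t \E[X_s^2 \mid \gF_{s-1}] \right), \qquad M_0 := 1,
\]
and show that $(M_t)_{t \geq 0}$ is a supermartingale with respect to $(\gF_t)$ satisfying $\E[M_0] = 1$. The case $\eta = 0$ is trivial, so assume $\eta > 0$ throughout.

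The core per-step estimate I would establish is the conditional moment-generating-function bound
\[
    \E[e^{\eta X_s} \mid \gF_{s-1}] \leq \exp\left( (e-2)\eta^2 \,\E[X_s^2 \mid \gF_{s-1}] \right).
\]
This rests on the elementary inequality $e^x \leq 1 + x + (e-2)x^2$, which holds for all $x \leq 1$. Since $\eta \leq 1/R$ and $|X_s| \leq R$ almost surely, we have $\eta X_s \leq 1$, so the inequality applies pointwise with $x = \eta X_s$. Taking conditional expectations and using the martingale difference property $\E[X_s \mid \gF_{s-1}] = 0$ gives $\E[e^{\eta X_s} \mid \gF_{s-1}] \leq 1 + (e-2)\eta^2 \E[X_s^2 \mid \gF_{s-1}] \leq \exp((e-2)\eta^2 \E[X_s^2 \mid \gF_{s-1}])$, where the last step is $1 + u \leq e^u$. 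Because $\sum_{s \leq t} \E[X_s^2 \mid \gF_{s-1}]$ is $\gF_{t-1}$-measurable, this immediately yields $\E[M_t \mid \gF_{t-1}] \leq M_{t-1}$, i.e. the supermartingale property.

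With the supermartingale in hand, I would apply Ville's inequality: for any nonnegative supermartingale with $\E[M_0] \leq 1$, one has $\PP[\exists t \geq 1 : M_t \geq 1/\delta] \leq \delta$. Hence, with probability at least $1 - \delta$, the bound $M_t \leq 1/\delta$ holds simultaneously for all $t \geq 1$, which upon taking logarithms reads $\eta \sum_{s=1}^t X_s - (e-2)\eta^2 \sum_{s=1}^t \E[X_s^2 \mid \gF_{s-1}] \leq \log(1/\delta)$. Dividing by $\eta > 0$ and rearranging gives exactly the claimed inequality, holding for all $t \geq 1$ on a single event of probability at least $1 - \delta$.

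The only genuinely delicate point is the per-step moment bound, and within it the requirement $\eta X_s \leq 1$; this is precisely why the admissible range of $\eta$ is capped at $1/R$. Everything else is bookkeeping: the supermartingale property follows mechanically from the tower rule, and the anytime (uniform-in-$t$) nature of the conclusion---the one feature distinguishing this from the fixed-horizon statement of \cite{beygelzimer2011contextual}---is delivered entirely by Ville's inequality rather than by a union bound over $t$.
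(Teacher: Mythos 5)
Your proof is correct and takes essentially the same route as the paper: the paper defines exactly the same exponential supermartingale $Z_t = Z_{t-1}\exp\left(\eta X_t - (e-2)\eta^2 \E[X_t^2 \mid \gF_{t-1}]\right)$ and applies Ville's inequality in place of Markov's inequality to obtain the uniform-in-$t$ conclusion. The only difference is cosmetic---you prove the per-step bound $\E[e^{\eta X_s}\mid\gF_{s-1}] \leq \exp\left((e-2)\eta^2\E[X_s^2\mid\gF_{s-1}]\right)$ explicitly via $e^x \leq 1 + x + (e-2)x^2$ for $x \leq 1$, whereas the paper delegates this step to the proof of Theorem 1 of \cite{beygelzimer2011contextual}.
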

\begin{proof}
    Define $Z_0 = 1$ and $Z_t = Z_{t-1} \cd \exp(\lam X_t - (e-2)\lam^2 \EE[X_t^2 \mid \gF_{t-1}]), \forall t\ge1$.
    The proof of Theorem 1 of \cite{beygelzimer2011contextual} shows that $(Z_t)_{t=0}^\infty$ is supermartingale and then applies Markov's inequality.
    In our proof, we apply Ville's inequality (Lemma~\ref{lem:ville} in Appendix~\ref{app:ville}), to conclude the proof.
\end{proof}

Thus, for $\eta \in \left[0, \frac{1}{2S}\right]$ to be chosen later, the following holds with probability at least $1 - \delta$: for all $t \geq 1$,
\begin{equation}
\label{eqn:zeta1}
    \zeta_1(t) \!\leq\! (e - 2)\eta\! \sum_{s=1}^t \dot{\mu}(\vx_s^\intercal \bm\theta_\star) \langle \vx_s, \bm\theta_\star \!-\! \tilde{\bm\theta}_s \rangle^2 \!+\! \frac{1}{\eta} \log\frac{1}{\delta}.
\end{equation}

\paragraph{Lower Bounding $\zeta_2(t)$: Second-order Expansion of KL Divergence.}
We first recall the definition of Bregman divergence:
\begin{definition}
    For a given $m : \gZ \rightarrow \sR$, the {\bf Bregman divergence} $D_m(\cdot, \cdot)$ is defined as follows:
    \begin{equation*}
        D_m(\vz_1, \vz_2) = m(\vz_1) - m(\vz_2) - \nabla m(\vz_2)^\intercal (\vz_1 - \vz_2)
    \end{equation*}
\end{definition}
In our case, $\gZ = \sR$, and thus, from the first-order Taylor's expansion with integral remainder, we have that
\begin{equation}
\label{eqn:bregman}
    D_m(z_1, z_2) = \int_{z_2}^{z_1} m''(z) (z_1 - z) dz.
\end{equation}

The following lemma, which is a standard result in information geometry~\citep{infogeom,nielsen2020infogeom,brekelmans2020bregman}, relates Bernoulli KL divergence to a specific Bregman divergence; we provide the proof in Appendix~\ref{app:kl-bregman} for completeness.
\begin{lemma}
\label{lem:kl-bregman}
    Let $m(z) := \log(1 + e^z)$ be the log-partition function for Bernoulli distribution and $\mu(z) = \frac{1}{1 + e^{-z}}$.
    Then, we have that $\KL(\mu(z_2), \mu(z_1)) = D_m(z_1, z_2)$.
\end{lemma}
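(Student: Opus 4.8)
The plan is to verify the identity by direct expansion of both sides, which becomes routine once we record two elementary facts about the log-partition function $m(z) = \log(1 + e^z)$. First I would compute $m'(z) = \frac{e^z}{1+e^z} = \mu(z)$, confirming that the derivative of the log-partition equals the mean parameter; this is what lets $\nabla m(z_2)$ appear as $\mu(z_2)$ on the Bregman side. Second, writing $\mu(z) = \frac{e^z}{1+e^z}$ and $1 - \mu(z) = \frac{1}{1+e^z}$, I would record the log-likelihood identities $\log \mu(z) = z - m(z)$ and $\log(1 - \mu(z)) = -m(z)$, which are the algebraic engine of the whole computation.

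Next I would expand the left-hand side using the definition of the Bernoulli KL divergence (under the convention, fixed in the Notations, that the first argument is the reference distribution):
\begin{equation*}
\KL(\mu(z_2), \mu(z_1)) = \mu(z_2) \log \frac{\mu(z_2)}{\mu(z_1)} + (1 - \mu(z_2)) \log \frac{1 - \mu(z_2)}{1 - \mu(z_1)}.
\end{equation*}
Substituting the two log-likelihood identities, the log-ratios become $\log \frac{\mu(z_2)}{\mu(z_1)} = (z_2 - z_1) - (m(z_2) - m(z_1))$ and $\log \frac{1 - \mu(z_2)}{1 - \mu(z_1)} = -(m(z_2) - m(z_1))$. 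The terms weighted by $\mu(z_2)$ and $1 - \mu(z_2)$ then combine, the coefficient of $-(m(z_2) - m(z_1))$ collapsing to $\mu(z_2) + (1 - \mu(z_2)) = 1$, leaving
\begin{equation*}
\KL(\mu(z_2), \mu(z_1)) = \mu(z_2)(z_2 - z_1) - m(z_2) + m(z_1).
\end{equation*}

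Finally I would expand the right-hand side straight from the definition, using $\nabla m(z_2) = \mu(z_2)$, to get $D_m(z_1, z_2) = m(z_1) - m(z_2) - \mu(z_2)(z_1 - z_2)$, and observe that this is term-by-term identical to the collapsed expression for the KL, which finishes the proof. There is no genuine obstacle here beyond bookkeeping; the one point requiring care — and the reason the statement looks mildly counterintuitive — is the \emph{ordering} of the arguments, namely that the KL from $\mu(z_2)$ to $\mu(z_1)$ equals $D_m(z_1, z_2)$ with the arguments reversed. This reversal is exactly the standard duality between the natural parameter $z$ (in which $m$ is convex and generates the Bregman divergence) and the mean parameter $\mu(z)$ (in which the KL is naturally written), and keeping the KL convention consistent with the Bregman argument order is the only place an error could slip in. As an aside, one could instead invoke the general exponential-family identity $\KL(p_{z_2} \,\|\, p_{z_1}) = D_m(z_1, z_2)$, of which this is the Bernoulli special case, but the self-contained computation above is cleaner for our purpose.
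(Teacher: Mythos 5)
Your proof is correct and is essentially the same direct computation as the paper's: both verify the identity by elementary expansion using $m'(z_2)=\mu(z_2)$ and the logit/log-partition relations, with the only cosmetic difference being direction (you expand $\KL(\mu(z_2),\mu(z_1))$ down to $m(z_1)-m(z_2)-\mu(z_2)(z_1-z_2)$, while the paper starts from $D_m(z_1,z_2)$ and rearranges it into the KL form). Your handling of the argument-order convention matches the paper's definition of $\KL(\bm\mu_1,\bm\mu_2)$, so there is no gap.
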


Combining all of the above and the fact that $m''(z) = \dot{\mu}(z)$, we have that
\begin{align*}
    &\KL(\mu_t(\vx_s^\intercal\bm\theta_\star)), \mu(\vx_s^\intercal \tilde{\bm\theta}_s)) \\
    &= D_m(\vx_s^\intercal \tilde{\bm\theta}_s, \vx_s^\intercal\bm\theta_\star) \tag{Lemma~\ref{lem:kl-bregman}} \\
    &= \int_{\vx_s^\intercal\bm\theta_\star}^{\vx_s^\intercal \tilde{\bm\theta}_s} \dot{\mu}(z) (\vx_s^\intercal \tilde{\bm\theta}_s - z) dz \tag{Eqn.~\eqref{eqn:bregman}} \\
    &= \langle \vx_s, \bm\theta_\star - \tilde{\bm\theta}_s \rangle^2 \int_0^1 (1 - v) \dot{\mu}(\vx_s^\intercal (\tilde{\bm\theta}_s + (1 - v) \bm\theta_\star)) dv \tag{change-of-variable} \\
    &\overset{(*)}{\geq} \langle \vx_s, \bm\theta_\star - \tilde{\bm\theta}_s \rangle^2 \frac{\dot{\mu}(\vx_s^\intercal \bm\theta_\star)}{2 + |\vx_s^\intercal (\bm\theta_\star - \tilde{\bm\theta}_s)|} \\
    &\geq \langle \vx_s, \bm\theta_\star - \tilde{\bm\theta}_s \rangle^2 \frac{\dot{\mu}(\vx_s^\intercal \bm\theta_\star)}{2 + 2S} \tag{Assumption~\ref{assumption:X}, \ref{assumption:S} and triangle inequality},
\end{align*}
where $(*)$ is due to the following self-concordant result:
\begin{lemma}[Lemma 8 of \cite{abeille2021logistic}]
\label{lem:self-concordant}
    Let $f$ be any strictly increasing self-concordant function, i.e., $|\ddot{\mu}| \leq \dot{\mu}$, and let $\gZ \subset \sR$ be bounded.
    Then, the following holds for any $z_1, z_2 \in \gZ$:
    \begin{equation*}
        \int_0^1 (1 - v) \dot{f}(z_1 + v(z_2 - z_1)) dv \geq \frac{\dot{f}(z_1)}{2 + |z_1 - z_2|}.
    \end{equation*}
\end{lemma}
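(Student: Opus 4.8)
The plan is to split the argument into an analytic step that exploits self-concordance and a purely elementary single-variable estimate. Write $\alpha := |z_1 - z_2|$ and set $g(v) := \dot{f}(z_1 + v(z_2 - z_1))$, so that the quantity to be bounded is $\int_0^1 (1-v) g(v)\, dv$ with $g(0) = \dot{f}(z_1) > 0$, the positivity coming from $f$ being strictly increasing.

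First I would convert the self-concordance hypothesis $|\ddot{f}| \le \dot{f}$ into a one-sided exponential lower bound on $g$. Differentiating gives $g'(v) = \ddot{f}(z_1 + v(z_2 - z_1))\,(z_2 - z_1)$, hence
\[
    \left| \frac{d}{dv} \log g(v) \right| = \left| \frac{\ddot{f}}{\dot{f}} \right| \, |z_2 - z_1| \le |z_2 - z_1| = \alpha .
\]
Integrating the resulting inequality $\frac{d}{dv} \log g(v) \ge -\alpha$ from $0$ to $v$ yields $g(v) \ge g(0) e^{-\alpha v} = \dot{f}(z_1) e^{-\alpha v}$. Substituting this pointwise bound into the integral gives
\[
    \int_0^1 (1-v) g(v)\, dv \ \ge\ \dot{f}(z_1) \int_0^1 (1-v) e^{-\alpha v}\, dv .
\]

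It then remains to prove the scalar inequality $J(\alpha) := \int_0^1 (1-v) e^{-\alpha v}\, dv \ge \frac{1}{2 + \alpha}$ for all $\alpha \ge 0$. A direct integration gives the closed form $J(\alpha) = \frac{\alpha - 1 + e^{-\alpha}}{\alpha^2}$ (with $J(0) = 1/2$ by continuity, which already matches the target with equality). Clearing the positive denominators, the claim is equivalent to $h(\alpha) := \alpha - 2 + (2 + \alpha) e^{-\alpha} \ge 0$. I would verify this by the two-derivative trick: $h(0) = 0$, then $h'(\alpha) = 1 - (1+\alpha)e^{-\alpha}$ with $h'(0) = 0$, and $h''(\alpha) = \alpha e^{-\alpha} \ge 0$ on $[0,\infty)$. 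Thus $h'$ is nondecreasing and hence nonnegative, so $h$ is nondecreasing and hence nonnegative, establishing $J(\alpha) \ge \frac{1}{2+\alpha}$. Combining this with the previous display finishes the proof.

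The only mildly delicate point is the first step: one must ensure $g$ stays strictly positive so that $\log g$ is legitimate (guaranteed precisely by the strict monotonicity of $f$), and one must keep the correct sign, since a \emph{lower} bound on the integrand requires $\frac{d}{dv}\log g \ge -\alpha$ rather than the upper side. Everything afterward is routine calculus; the elementary inequality $h \ge 0$ could alternatively be dispatched by a Taylor/convexity argument, but the successive-derivative check is the cleanest route.
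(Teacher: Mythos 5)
Your proof is correct and follows essentially the same route as the paper: this scalar lemma is only quoted from \cite{abeille2021logistic}, but the paper's own proof of its multivariate generalization (Lemma~\ref{lem:generalized-self-concordant1}, Appendix~\ref{app:lem-generalized}) consists of exactly your two steps, namely the self-concordance comparison bound $\dot{f}(z_1 + v(z_2 - z_1)) \geq \dot{f}(z_1)\, e^{-v|z_1 - z_2|}$ followed by integration against $(1-v)$ and the elementary inequality $\frac{1}{z} + \frac{e^{-z}-1}{z^2} \geq \frac{1}{2+z}$ for $z \geq 0$. The only differences are cosmetic, since you verify both ingredients from first principles (the log-derivative integration and the two-derivative check of $h \geq 0$) rather than citing them; one small caveat is that strict monotonicity alone does not force $\dot{f} > 0$ pointwise (consider $z \mapsto z^3$ at the origin), so either invoke the self-concordance bound to rule out zeros of $\dot{f}$, or avoid logarithms altogether by observing that $u(v) := g(v)e^{\alpha v}$ satisfies $u'(v) = \left( g'(v) + \alpha g(v) \right) e^{\alpha v} \geq 0$, which yields $g(v) \geq g(0)e^{-\alpha v}$ directly.
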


All in all, we have that
\begin{equation}
\label{eqn:zeta2}
    \zeta_2(t) \geq \frac{1}{2 + 2S} \sum_{s=1}^t \dot{\mu}(\vx_s^\intercal \bm\theta_\star) \langle \vx_s, \bm\theta_\star - \tilde{\bm\theta}_s \rangle^2.
\end{equation}

\vspace{-.6em}
\paragraph{Wrapping up the proof.}
Combining Eqn.~\eqref{eqn:decomposition}, \eqref{eqn:foster}, \eqref{eqn:zeta1}, \eqref{eqn:zeta2} with $\eta = \frac{1}{2(e - 2) + 2S} < \frac{1}{2S}$ and the fact that $-\frac{1}{2 + 2S} + \frac{e - 2}{2(e - 2) + 2S} < 0$, we are done.

\section{IMPROVED REGRET}
\label{sec:logistic-regret}
\subsection{\texttt{OFULog+} and Improved Regret}
\label{subsec:logistic-regret}
\RestyleAlgo{ruled}
\begin{algorithm2e}[t]
	\SetAlgoLined
	\For{$t = 1, \dots, T$}{
		$\widehat{\bm\theta}_t \gets \argmin_{\lVert \bm\theta \rVert_2 \leq S} \gL_t(\bm\theta)$\;

        $(\vx_t, \bm\theta_t) \gets \argmax_{\vx \in \gX_t, \bm\theta \in \gC_t(\delta)} \mu(\langle \vx, \bm\theta \rangle)$, with $\gC_t(\delta)$ as defined in Theorem~\ref{thm:confidence-logistic}\;

        Play $\vx_t$ and observe reward $r_t$\;
	}
	\caption{\texttt{OFU-Log+}}
	\label{alg:logistic}
\end{algorithm2e}

Our new loss-based confidence set (Theorem~\ref{thm:confidence-logistic}) leads to an OFUL-type algorithm~\citep{abbasiyadkori2011linear}, which we refer to as \texttt{OFULog+}; its pseudocode is shown in Algorithm~\ref{alg:logistic}.

Note that the optimization in line 2 is tractable because $\gC_t(\delta)$ is always convex (as $\gL_t$ is convex, and the level set of any convex function is convex), and $\mu(\cdot)$ is an increasing function, meaning that line 2 can be equivalently rewritten as
\begin{equation*}
    (\vx_t, \bm\theta_t) \in \argmax_{\vx \in \gX_t, \bm\theta \in \gC_t(\delta)} \langle \vx, \bm\theta \rangle.
\end{equation*}

The existing confidence-set-based approach to logistic bandit was due to \cite{abeille2021logistic}, in which they first proposed a nonconvex confidence set, from which a loss-based confidence set was derived via convex relaxation.
As our R2CS directly constructs the loss-based confidence set, this can be elegantly ``plugged-in'' to the algorithm and proof of \cite{abeille2021logistic} with minimal change.
This is in contrast to \cite{faury2022logistic}, which requires major algorithmic innovations.

We now present the regret bound of \texttt{OFULog+} (See Theorem~\ref{thm:new-regret-logistic-full} in Appendix~\ref{app:full-statements} for the full statement, including the omitted logarithmic factors.):
\begin{theorem}[Simplified]
\label{thm:new-regret-logistic}
    \texttt{{OFULog+}} attains the following regret bound with probability at least $1 - \delta$:
    \begin{equation*}
        \Reg^B(T) \lesssim dS \sqrt{\frac{T}{\kappa_\star(T)}} + \min\left\{ d^2 S^2 \kappa_\gX(T), R_\gX(T) \right\},
    \end{equation*}
    where $R_\gX(T) := S \sum_{t=1}^T \mu(\vx_{t,\star}^\intercal \bm\theta_\star) \mathds{1}[\vx_t \in \gX_-(t)]$ and the RHS hides dependencies on $\log\frac{1}{\delta}$.
    Here, $\gX_-(t)$ is the set of detrimental arms at time $t$; see Section 4 of \cite{abeille2021logistic}.
\end{theorem}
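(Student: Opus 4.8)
The plan is to condition on the high-probability event of Theorem~\ref{thm:confidence-logistic} that $\bm\theta_\star \in \gC_t(\delta)$ for all $t\ge 1$, and then run essentially the regret analysis of \citet{abeille2021logistic} with our tighter loss-based set plugged in, exactly as the text promises. On this event, the joint optimistic maximization in line~2 of Algorithm~\ref{alg:logistic} guarantees $\mu(\langle\vx_t,\bm\theta_t\rangle)\ge\mu(\langle\vx_{t,\star},\bm\theta_\star\rangle)$, so each instantaneous regret is bounded by the prediction gap $\mu(\langle\vx_t,\bm\theta_t\rangle)-\mu(\langle\vx_t,\bm\theta_\star\rangle)$, whose two arguments both correspond to parameters in $\gC_t(\delta)$. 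A mean-value expansion splits this gap into a first-order term $\dot\mu(\vx_t^\intercal\bm\theta_\star)\,\abs{\langle\vx_t,\bm\theta_t-\bm\theta_\star\rangle}$ and a self-concordant second-order remainder; the former drives the leading regret and the latter yields the lower-order term.

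For the leading term I would work with the weighted design matrix $\mH_t := \lambda\mathbf{I}+\sum_{s<t}\dot\mu(\vx_s^\intercal\bm\theta_\star)\,\vx_s\vx_s^\intercal$ (the small $\lambda$ being a proof device for the potential argument). Using the loss bound $\gL_t(\bm\theta)-\gL_t(\widehat{\bm\theta}_t)\le\beta_t(\delta)^2$ for both $\bm\theta_\star$ and $\bm\theta_t$, together with the self-concordance estimate of Lemma~\ref{lem:self-concordant}, I convert the confidence radius $\beta_t(\delta)$ into a bound on $\norm{\bm\theta_t-\bm\theta_\star}_{\mH_t}$. Writing the first-order sum as $\sum_t\sqrt{\dot\mu(\vx_{t,\star}^\intercal\bm\theta_\star)}\cdot\big(\sqrt{\dot\mu(\vx_t^\intercal\bm\theta_\star)}\,\abs{\langle\vx_t,\bm\theta_t-\bm\theta_\star\rangle}\big)$, bounding the parenthesized factor by $\norm{\vx_t}_{\mH_t^{-1}}\norm{\bm\theta_t-\bm\theta_\star}_{\mH_t}$ (reweighted), applying Cauchy--Schwarz to peel off $\sqrt{\sum_t\dot\mu(\vx_{t,\star}^\intercal\bm\theta_\star)}=\sqrt{T/\kappa_\star(T)}$, and invoking the generalized elliptical-potential lemma to control $\sum_t\dot\mu(\vx_{t,\star}^\intercal\bm\theta_\star)\norm{\vx_t}_{\mH_t^{-1}}^2=\gO(d\log T)$ produces the leading term $dS\sqrt{T/\kappa_\star(T)}$. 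The $\sqrt S$ improvement over \citet{abeille2021logistic} is inherited essentially verbatim from $\beta_t(\delta)^2=\gO(d\log t+S)$ replacing their $\gO(dS^3\log t)$.

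The second-order remainder contributes $\sum_t \dot\mu(\cdot)\langle\vx_t,\bm\theta_t-\bm\theta_\star\rangle^2$; bounding each $\langle\vx_t,\bm\theta_t-\bm\theta_\star\rangle^2$ by $\norm{\vx_t}_{\mH_t^{-1}}^2$ times the squared norm radius and summing gives the $d^2S^2\kappa_\gX(T)$ branch, whereas restricting the sum to the detrimental arms $\gX_-(t)$ and using that the remaining arms incur no such penalty gives the $R_\gX(T)$ branch; taking the smaller of the two reproduces the $\min$ exactly as in \citet{abeille2021logistic}.

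The main obstacle, and the only genuinely new step relative to \citet{abeille2021logistic}, will be controlling the transfer between the local geometry at the estimate and at $\bm\theta_\star$ without paying an extra factor of $S$. A naive self-concordance comparison of $\mH_t$ evaluated at $\bm\theta_t$ versus $\bm\theta_\star$ loses a multiplicative $(1+2S)$, which would inflate the leading term back to $dS^{3/2}\sqrt{T/\kappa_\star(T)}$. Instead I would isolate the offending quantity as a martingale whose increments are mean-zero given $\gF_{s-1}$ and bounded by $\gO(S)$ (as in the $\abs{\xi_s\langle\vx_s,\bm\theta_\star-\tilde{\bm\theta}_s\rangle}\le 2S$ bound already used), with conditional variance proportional to $\dot\mu(\vx_s^\intercal\bm\theta_\star)\langle\vx_s,\cdot\rangle^2$, and apply the anytime Freedman inequality of Lemma~\ref{lem:freedman}; tuning its free parameter $\eta$ so that the variance term is reabsorbed—in the same spirit as the $\zeta_1$--$\zeta_2$ cancellation in the proof of Theorem~\ref{thm:confidence-logistic}—trades the spurious $S$ for a logarithmic term. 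Verifying this cancellation, and checking that the regularization $\lambda$ and the detrimental-arm case split are handled consistently with the elliptical-potential bound, is where the bulk of the careful bookkeeping lies.
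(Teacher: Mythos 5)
Your proposal tracks the paper's own proof almost step for step: condition on the event of Theorem~\ref{thm:confidence-logistic}, convert the loss-based radius into a bound on $\lVert \bm\theta - \bm\theta_\star \rVert_{\mH_t(\bm\theta_\star)}$ (this is exactly the paper's Lemma~\ref{lem:H-norm}), and then rerun the analysis of \citet{abeille2021logistic} with Cauchy--Schwarz, elliptical-potential arguments, and the polynomial self-bounding step, the second-order remainder giving the $\min\left\{ d^2 S^2 \kappa_\gX(T), R_\gX(T) \right\}$ term. You also correctly isolate the genuinely new ingredient: bounding the gradient term via Freedman's inequality with $\eta$ tuned so the variance is reabsorbed into the $\mH_t$-norm, rather than via Cauchy--Schwarz plus self-concordance, which is what saves the extra factor of $S$.

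There is, however, a concrete gap in how you propose to execute that Freedman step. The quantity to control is $\nabla\gL_t(\bm\theta_\star)^\intercal(\bm\theta_\star - \bm\theta_t) = \sum_{s=1}^{t}\xi_s\,\vx_s^\intercal(\bm\theta_\star-\bm\theta_t)$, and the direction $\bm\theta_\star - \bm\theta_t$ is data-dependent: $\bm\theta_t$ is whatever point of $\gC_t(\delta)$ the optimistic maximization selects, so it depends on the entire history up to round $t$. Consequently the summands $\xi_s\,\vx_s^\intercal(\bm\theta_\star-\bm\theta_t)$ for $s \le t$ are \emph{not} mean-zero conditionally on $\gF_{s-1}$ (since $\bm\theta_t$ is not $\gF_{s-1}$-measurable), and Lemma~\ref{lem:freedman} cannot be applied to them as you describe. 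This is genuinely different from the $\zeta_1(t)$ term in the proof of Theorem~\ref{thm:confidence-logistic}, where $\tilde{\bm\theta}_s$ is produced before the label $r_s$ is revealed and hence \emph{is} $\gF_{s-1}$-measurable. The paper's fix (Appendix~\ref{app:H-norm}) is a time-dependent $\eps$-net argument: Freedman is applied to $\sum_s \xi_s\vx_s^\intercal\vv$ for each \emph{fixed} $\vv$ in an $\eps_t$-cover $\hcB_{\eps_t}$ of $\gB^d(2S)$, a union bound is taken over the net and over $t$ with $\delta_t = (\eps_t/5S)^d\,\delta/t$, and the choice $\eps_t = d/t$ keeps the approximation error $\eps_t t$ of order $d$; the entropy term $\frac{d}{\eta}\log\frac{5S}{\eps_t}$ is what drives the $dS^2\log\left(e+\frac{St}{d}\right)$ scaling of $\gamma_t(\delta)^2$ in Lemma~\ref{lem:H-norm}. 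Without some such uniformity device, your martingale step fails.

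A secondary issue: you assert $\sum_t \dot{\mu}(\vx_t^\intercal\bm\theta_\star)\lVert\vx_t\rVert^2_{\mH_t^{-1}} = \gO(d\log T)$ directly from an elliptical-potential lemma. With the proof-device regularization $\lambda_t \asymp S^{-3}$, a single summand can already be of order $1/\lambda_t \asymp S^3$, so the unclipped sum admits no such bound, and rescaling so that every summand is below one costs exactly the $S^3$ factor the paper is trying to avoid. The paper instead clips each summand at $1$ (Lemma~\ref{lem:EPL}) and handles the rounds with $\lVert\tilde\vx_t\rVert^2_{\mH_t^{-1}} > 1$ separately through the elliptical potential \emph{count} lemma (Lemma~\ref{lem:EPCL}; see Remark~\ref{rmk:EPCL}), whose contribution is only an additive $\gO(dS\log S)$. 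You flag the $\lambda$ bookkeeping as routine, but this case split is a separate, named device in the paper rather than something the standard potential argument supplies.
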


\begin{remark}
Explicitly running the algorithm of \cite{foster2018logistic} and constructing a confidence set using techniques like \cite{abbasiyadkori2012conversion} does not yield a better guarantee, as the confidence set radius depends additively on the online regret.
Moreover, their algorithm is {\it computationally \underline{very} heavy}; our R2CS does this using only an {\it achievable} online regret bound.
\end{remark}

Extending upon Table~\ref{tab:results}, below, we discuss in detail how our bound compares to existing works\footnote{see Appendix~\ref{app:full-statements} for the omitted full statements of prior regret bounds.}:
\paragraph{Comparison to Prior Arts.}
Contextual logistic bandits, with time-varying arm-set, were first studied by \cite{faury2020logistic}, in which the authors derived the regret bounds of $\widetilde{\gO}(\sqrt{\kappa(T) T})$ and $\widetilde{\gO}(\sqrt{T} + \kappa(T))$ (corresponding to their two algorithms) based on self-concordant analyses of logistic regression~\citep{bach2010logistic}.
Although not tight, their analyses laid a stepping stone for the subsequent works on logistic bandits.
\cite{abeille2021logistic} provided the first algorithm that attains\footnote{In the original paper, the authors considered $\lambda_t = d \log \frac{t}{\delta}$, which incurred additional factors in $S$. Here, for a fair comparison, we re-tracked the $S$-dependencies with the ``optimal'' choice of $\lambda_t = \frac{d}{S} \log\frac{1 + \frac{St}{d}}{\delta}$.} a regret bound of $\widetilde{\gO}\left( dS^{\frac{3}{2}} \sqrt{\frac{T}{\kappa_\star(T)}} + \min\left\{d^2 S^3 \kappa_\gX(T), R_\gX(T) \right\} \right)$ along with near-matching minimax lower bound via an intricate local analysis.
\cite{abeille2021logistic} also proposed a tractable variant of the algorithm, \texttt{OFULog-r}, via a convex relaxation, but it incurs an extra dependency on $S$ as shown in Table~\ref{tab:results}.
\cite{faury2022logistic} provided a jointly efficient and optimal algorithm with $\widetilde{\gO}\left( dS \sqrt{\frac{T}{\kappa_\star(T)}} + d^2 S^6 \kappa(T) \right)$ regret that takes $\Omega(1)$ time complexity.
Our regret bound's leading term, $dS\sqrt{\frac{T}{\kappa_\star(T)}}$, improves upon \cite{abeille2021logistic} by a factor of $\sqrt{S}$ and matches that of \cite{faury2022logistic}, and our lower-order term, $\min\{ d^2 S^2 \kappa_\gX(T), R_\gX(T) \}$, improves upon \cite{abeille2021logistic} by a factor of $S$ and improves upon \cite{faury2022logistic} by a factor of $S^4$ and possibly $\kappa(T)$.

In Section~\ref{sec:expr}, we provide numerical results for logistic bandits, showing that our \texttt{OFULog+} obtains the state-of-the-art performance in regret over prior arts and results in a tighter confidence set.

On a slightly different approach, \cite{mason2022logistic} proposed an experimental design-based algorithm. However, the algorithm and its guarantee require the arm-set to be {\it not} time-varying, making them incomparable to ours.
Moreover, the current arm-elimination approach like \cite{mason2022logistic} is impractical as it needs a long warmup length of order at least $\gO(\kappa d^2)$.
This is in contrast to the optimism-based approach, which incurs a lower-order algorithm adaptive to the arm-set geometry in that the lower-order term may scale independently of $\kappa_\gX(T)$, given that the arm-set is sufficiently benign, e.g., unit ball~\citep[Theorem 3]{abeille2021logistic}.
\texttt{SupLogistic} of \cite{jun2021confidence} assumes that the context vectors follow a distribution and further assumes the minimum eigenvalue condition on the context covariance matrix, which is rather limiting.

\begin{remark}
Note that \cite{mason2022logistic} completely removes the factor of $S$ from the leading term in the regret bound in the fixed arm set setting.
We speculate that it is possible to construct an optimism-based algorithm that does not scale with $S$ in the leading term of the regret (up to logarithmic factors), at least for the fixed arm set setting.
We leave to future work whether it is possible to improve further the radius of the confidence set from $\gO(\sqrt{(d + S) \log t})$ to $\gO(\sqrt{d \log t})$.
\end{remark}

\subsection{Proof Sketch of Theorem~\ref{thm:new-regret-logistic}}
The proof of \cite{abeille2021logistic} heavily relies on an upper bound on the Hessian-induced distance between $\bm\theta \in \gC_t(\delta)$ and $\bm\theta_\star$, $\lVert \bm\theta - \bm\theta_\star \rVert_{\mH_t(\bm\theta_\star)}$.
Here, we define a regularized Hessian $\mH_t(\bm\theta_\star)$ centered at $\bm\theta_\star$ as
\begin{equation*}
    \mH_t(\bm\theta_\star) := \sum_{s=1}^{t-1} \dot{\mu}(\vx_s^\intercal \bm\theta_\star) \vx_s \vx_s^\intercal + \lambda_t \mI_d,
\end{equation*}
where the regularization coefficient $\lambda_t > 0$ is to be chosen later.
Note that although our MLE is not regularized (Eqn.~\eqref{eqn:MLE}), the regularization ensures that $\mH_t$ is positive definite, allowing us to use the elliptical potential lemma argument w.r.t. $\mH_t^{-1}$-induced norm in the later proof.
We remark here that unlike \cite{abeille2021logistic} where $\lambda_t$ directly impacts the algorithm design, in our case, $\lambda_t$ is solely for the proof and does {\it not} impact our algorithm in any way.

Two key differences exist between our proof and that of \cite{abeille2021logistic}.
One is that we derive a new (high-probability) upper bound on $\lVert \bm\theta - \bm\theta_\star \rVert_{\mH_t(\bm\theta_\star)}$ (Lemma~\ref{lem:H-norm}).
Na\"{i}vely using Cauchy-Schwartz inequality and self-concordant controls (as done in the proof of Lemma 1 of \cite{abeille2021logistic}) gives us an extra factor of $S$.
To circumvent this, we instead use the martingale decomposition of the logistic bandit reward (Eqn.~\eqref{eqn:martingale}) and Freedman's inequality (Lemma~\ref{lem:freedman}) with an $\eps$-net argument, leading to extra factors of $S$ shaved off at the end.
Another is that we use a more refined elliptical potential count lemma argument to avoid the extra dependencies on $S$ (Lemma~\ref{lem:EPCL}, \ref{lem:EPL}; see Remark~\ref{rmk:EPCL} in Appendix~\ref{app:proof-regret-logistic}).
With these and our new confidence set (Theorem~\ref{thm:confidence-logistic}), we appropriately modify the proof of \cite{abeille2021logistic} to arrive at our new regret bound.

\subsection{Complete Proof of Theorem~\ref{thm:new-regret-logistic}}
We start with the following crucial lemma bounding the Hessian-induced distance between $\bm\theta$ and $\bm\theta_\star$:
\begin{restatable}{lemma}{hnorm}
\label{lem:H-norm}
    With $\lambda_t = \frac{1}{4S^2 (2 + 2S)}$, for any $\bm\theta \in \gC_t(\delta)$, the following holds with probability at least $1 - \delta$:
    \begin{equation*}
        \lVert \bm\theta - \bm\theta_\star \rVert_{\mH_t(\bm\theta_\star)}^2 \lesssim \gamma_t(\delta)^2 \triangleq S^2 \left( d \log\left( e + \frac{St}{d}\right) + \log\frac{1}{\delta} \right).
    \end{equation*}
\end{restatable}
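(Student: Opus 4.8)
The plan is to reduce the Hessian-induced distance to the single quantity
$$V_t := \sum_{s=1}^{t-1} \dot\mu(\vx_s^\intercal \bm\theta_\star)\langle \vx_s, \bm\theta - \bm\theta_\star\rangle^2,$$
since $\lVert \bm\theta - \bm\theta_\star\rVert_{\mH_t(\bm\theta_\star)}^2 = V_t + \lambda_t \lVert\bm\theta-\bm\theta_\star\rVert_2^2$. The regularization term is immediate: both $\bm\theta,\bm\theta_\star\in\gB^d(S)$ give $\lVert\bm\theta-\bm\theta_\star\rVert_2\le 2S$, so with $\lambda_t = \frac{1}{4S^2(2+2S)}$ we get $\lambda_t\lVert\bm\theta-\bm\theta_\star\rVert_2^2 \le \frac{1}{2+2S} = \gO(1)$, dominated by $\gamma_t(\delta)^2$. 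It therefore remains to show $V_t \lesssim \gamma_t(\delta)^2$.

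For this I would re-use the machinery of Theorem~\ref{thm:confidence-logistic} but with the fixed comparator $\bm\theta$ in place of the online iterates $\tilde{\bm\theta}_s$. Summing Lemma~\ref{lem:decomposition1-logistic} over $s$ and rearranging yields
$$\sum_{s=1}^{t-1}\KL(\mu_s(\bm\theta_\star),\mu_s(\bm\theta)) = \sum_{s=1}^{t-1}\xi_s\langle \vx_s,\bm\theta-\bm\theta_\star\rangle + \left(\gL_t(\bm\theta) - \gL_t(\bm\theta_\star)\right).$$
The loss gap is controlled by the confidence-set constraint and the optimality of the MLE: since $\gL_t(\widehat{\bm\theta}_t)\le \gL_t(\bm\theta_\star)$ and $\bm\theta\in\gC_t(\delta)$, we have $\gL_t(\bm\theta)-\gL_t(\bm\theta_\star) \le \gL_t(\bm\theta)-\gL_t(\widehat{\bm\theta}_t) \le \beta_t(\delta)^2$. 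For the left-hand side, the same self-concordant computation used in Theorem~\ref{thm:confidence-logistic} (Lemmas~\ref{lem:kl-bregman} and~\ref{lem:self-concordant}, valid for any comparator in $\gB^d(S)$) lower-bounds each KL term by $\frac{\dot\mu(\vx_s^\intercal\bm\theta_\star)}{2+2S}\langle\vx_s,\bm\theta-\bm\theta_\star\rangle^2$, giving $\sum_s\KL \ge \frac{V_t}{2+2S}$. Combining, $\frac{V_t}{2+2S} \le \sum_s \xi_s\langle\vx_s,\bm\theta-\bm\theta_\star\rangle + \beta_t(\delta)^2$.

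The crux is to bound the martingale term \emph{uniformly} over $\bm\theta$, since the relevant $\bm\theta\in\gC_t(\delta)$ is data-dependent while Lemma~\ref{lem:freedman} applies only to a fixed comparator. I would discretize $\gB^d(S)$ by an $\eps$-net $\mathcal{N}_\eps$ of cardinality $(1+2S/\eps)^d$ and apply Freedman to each net point $\bm\theta'$ with a union bound and the anytime (Ville) guarantee. The decisive feature is that the conditional-variance proxy in Freedman equals $\dot\mu(\vx_s^\intercal\bm\theta_\star)\langle\vx_s,\bm\theta'-\bm\theta_\star\rangle^2$, i.e. it is (a net-perturbation of) $V_t$ itself; choosing $\eta = \frac{1}{4(e-2)(2+2S)} \le \frac{1}{2S}$ makes the $(e-2)\eta\cdot(\text{variance})$ term absorbable into $\tfrac{V_t}{2(2+2S)}$ on the left. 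For an arbitrary $\bm\theta$, passing to the nearest $\bm\theta'$ costs a discretization error $\sum_s|\xi_s|\,|\langle\vx_s,\bm\theta-\bm\theta'\rangle| \le t\eps$ on the martingale term and an $\gO(\eps^2 t)$ inflation of the variance proxy (via $\langle\vx_s,\bm\theta'-\bm\theta_\star\rangle^2\le 2\langle\vx_s,\bm\theta-\bm\theta_\star\rangle^2 + 2\eps^2$ and $\dot\mu\le\tfrac14$). Taking $\eps = 1/t$ renders both negligible while contributing only $\log|\mathcal{N}_\eps| = \gO(d\log(St))$ inside the $\frac1\eta\log\frac{|\mathcal{N}_\eps|}{\delta} = \gO(S(d\log(St)+\log\frac1\delta))$ term. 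Putting the pieces together gives $\frac{V_t}{2(2+2S)} \lesssim S(d\log(St)+\log\frac1\delta) + \beta_t(\delta)^2 + \gO(1)$, and since $\beta_t(\delta)^2 = \gO(d\log(\cdot)+S\log\frac1\delta)$, multiplying through by $2+2S = \gO(S)$ yields $V_t \lesssim S^2(d\log(St)+\log\frac1\delta) = \gamma_t(\delta)^2$.

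I expect the main obstacle to be precisely this $\eps$-net step: one must verify that the Freedman variance term is genuinely proportional to $V_t$, so that it can be moved to the left-hand side \emph{without} a Cauchy--Schwarz bound that would reintroduce a spurious factor of $S$ (exactly the slack incurred in Lemma~1 of \cite{abeille2021logistic}), and that the net resolution $\eps = 1/t$ simultaneously controls the martingale discretization error and the variance inflation while keeping the covering number's contribution logarithmic.
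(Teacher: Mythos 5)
Your proposal is correct and takes essentially the same route as the paper's proof: your combination of Lemma~\ref{lem:decomposition1-logistic} with the KL lower bound (Lemmas~\ref{lem:kl-bregman} and~\ref{lem:self-concordant}) is literally the same identity as the paper's second-order Taylor expansion together with $\widetilde{\mG}_t(\bm\theta_\star,\bm\theta) \succeq \frac{1}{2+2S}\mH_t(\bm\theta_\star)$, and your treatment of the martingale term --- Freedman plus a time-dependent $\eps$-net over the $2S$-ball, with the conditional-variance proxy absorbed into the left-hand side precisely to avoid the Cauchy--Schwarz factor of $S$ --- is exactly the paper's argument in Appendix~\ref{app:H-norm}. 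The only differences are immaterial constants (the paper uses $\eta = \frac{1}{2(e-2)(2+2S)}$, $\eps_t = d/t$, and bounds the variance perturbation via $(\vx_s^\intercal\va)^2-(\vx_s^\intercal\vb)^2 \leq 4S\eps_t+\eps_t^2$ rather than your doubling inequality).
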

\begin{proof}
    By Theorem~\ref{thm:confidence-logistic}, we have that with probability at least $1 - \delta$, $\gL_t(\bm\theta_\star) - \gL_t(\widehat{\bm\theta}_t) \leq \beta_t(\delta)^2$; throughout the proof let us assume that this event is true.
    Also, let $\bm\theta \in \gC_t(\delta)$.
    Then, by second-order Taylor expansion of $\gL_t(\bm\theta)$ around $\bm\theta_\star$,
    \begin{equation*}
        \gL_t(\bm\theta)
        = \gL_t(\bm\theta_\star) + \nabla \gL_t(\bm\theta_\star)^\intercal (\bm\theta - \bm\theta_\star) + \lVert \bm\theta - \bm\theta_\star \rVert_{\widetilde{\mG}_t(\bm\theta_\star, \bm\theta) - \lambda_t \mI}^2,
    \end{equation*}
    where $\lambda_t > 0$ is to be determined, and we define the following quantities:
    \begin{align*}
        \widetilde{\alpha}(\vx, \bm\theta_1, \bm\theta_2) &:= \int_0^1 (1 - v) \dot{\mu}\left( \vx^\intercal (\bm\theta_1 + v(\bm\theta_2 - \bm\theta_1)) \right) dv \\
        \widetilde{\mG}_t(\bm\theta_1, \bm\theta_2) &:= \sum_{s=1}^{t-1} \widetilde{\alpha}(\vx_s, \bm\theta_1, \bm\theta_2) \vx_s \vx_s^\intercal + \lambda_t \mI_d.
    \end{align*}
        
    Lemma~\ref{lem:self-concordant} implies that $\widetilde{\mG}_t(\bm\theta_1, \bm\theta_2) \succeq \frac{1}{2 + 2S} \mH_t(\bm\theta_1)$.
    Thus, we have that
    \begin{align*}
        &\lVert \bm\theta - \bm\theta_\star \rVert_{\mH_t(\bm\theta_\star)}^2 \\
        &\lesssim S \left( \gL_t(\bm\theta) - \gL_t(\bm\theta_\star) + \nabla \gL_t(\bm\theta_\star)^\intercal (\bm\theta_\star - \bm\theta) + \lambda_t \lVert \bm\theta - \bm\theta_\star \rVert_2^2 \right) \\
        &\lesssim S \left( \gL_t(\bm\theta) - \gL_t(\widehat{\bm\theta}_t) + \nabla \gL_t(\bm\theta_\star)^\intercal (\bm\theta_\star - \bm\theta) \right)  \tag{$\gL_t(\widehat{\bm\theta}_t) \leq \gL_t(\bm\theta_\star)$, $\lambda_t = \frac{1}{4S^2 (2 + 2S)}$} \\
        &\lesssim S \beta_t(\delta)^2 + S \nabla \gL_t(\bm\theta_\star)^\intercal (\bm\theta_\star- \bm\theta), \tag{$\bm\theta \in \gC_t(\delta)$}
    \end{align*}
    where the last inequality holds with probability at least $1 - \delta$.
    Note that we do not need $\lambda_t$ to vary over $t$.
    
    As $\nabla \gL_t(\bm\theta_\star)^\intercal (\bm\theta_\star- \bm\theta)$ can be written as a sum of martingale difference sequences and $\bm\theta_\star- \bm\theta \in \gB^d(2S)$, the proof then concludes via a time-dependent $\eps$-net argument on $\gB^d(2S)$ with Freedman's inequality; see Appendix~\ref{app:H-norm} for the missing details.
\end{proof}
The proof of Theorem~\ref{thm:new-regret-logistic} finally concludes by tracking the regret analysis of Appendix C of \cite{abeille2021logistic}; see Appendix~\ref{app:proof-regret-logistic} for the remaining argument.

\section{EXTENSION TO MNL BANDITS}
\label{sec:multinomial}

\paragraph{Problem Setting.}
We now consider a natural extension of logistic bandits, namely, multinomial logistic (MNL) bandits, first introduced in \cite{amani2021mnl}.
At every round $t$, the learner observes a potentially infinite arm-set $\gX_t$, which can also be time-varying, and plays an action $\vx_t \in \gX$.
She then receives a reward of $r_t = \bm\rho^\intercal \vy_t$, where $\bm\rho \in \sR^{K}$ is a known reward vector, and $\vy_t = (y_{t,1}, \cdots, y_{t,K}) \in \{0, 1\}^K$ satisfies $\lVert \vy_t \rVert_1 \leq 1$.
$y_{s,k} = 1$ when $k$-th item is chosen at time $s$, and for simplicity we denote $y_{t,0} := 1 - \lVert \vy_t \rVert_1$.
Then, $(y_0, \vy_t)$ follows the multinomial logit choice model:
\begin{equation}
    \sP[\vy_t = \bm\delta_k | \vx_t] =
    \begin{cases}
        \mu_k(\vx_t, \bm\Theta_\star) &\quad k > 0, \\
        1 - \sum_{j=1}^K \mu_j(\vx_t, \bm\Theta_\star) &\quad k = 0,
    \end{cases}
\end{equation}
where $\bm\delta_k$ is the $K$-dimensional one-hot encoding for the index $k$ and $\bm\delta_0 := \vzero$.
Intuitively, $\vy_t = \bm\delta_0$ corresponds to the scenario where the user has not chosen any of the $K$ possible choices.
Here, we denote
\begin{equation}
    \mu_k(\vx_t, \bm\Theta_\star) := \frac{\exp\left( \langle \vx_t, (\bm\theta_\star^{(k)}) \rangle \right)}{1 + \sum_{j=1}^K \exp\left( \langle \vx_t, (\bm\theta_\star^{(j)}) \rangle \right)}
\end{equation}
for some unknown $\left\{ \bm\theta_\star^{(j)} \right\}_{j=1}^K \subset \sR^d$.
Here, we use $K \times d$ matrix to denote the unknown parameter, namely, $\bm\Theta_\star := [\bm\theta_\star^{(1)}, \cdots, \bm\theta_\star^{(K)}]^\intercal \in \sR^{K \times d}$ and $\bm\mu(\vx_t, \bm\Theta_\star) := [\mu_t(\bm\theta_\star^{(1)}), \cdots, \mu_t(\bm\theta_\star^{(K)})]^\intercal$.
This simplifies some parts of the analysis (e.g., avoid using Kronecker products).

The regret of MNL bandits is defined as follows:
\begin{equation}
    \Reg^B(T) := \sum_{t=1}^T \bm\rho^\intercal \left( \bm\mu(\vx_{t,\star}, \bm\Theta_\star) - \bm\mu(\vx_t, \bm\Theta) \right),
\end{equation}
where $\vx_{t,\star} := \argmax_{\vx \in \gX} \bm\rho^\intercal \bm\mu(\vx, \bm\Theta_\star)$.

We define the following quantity, which will be crucial in our overall analysis:
\begin{equation}
    \mA(\vx, \bm\Theta) := \diag(\bm\mu(\vx, \bm\Theta)) - \bm\mu(\vx, \bm\Theta) \bm\mu(\vx, \bm\Theta)^\intercal.
\end{equation}

We also have the following assumptions with problem-dependent quantities:
\begin{assumption}
    $\gX_t \subseteq \gB^d(1)$ for all $t \geq 1$.
\end{assumption}
\begin{assumption}
\label{assumption:R}
    There exist known $S, R > 0$ such that $\bm\Theta_\star \in \gB^{K \times d}(S)$ and $\bm\rho \in \gB^d(R)$.
\end{assumption}

We consider the following problem-dependent quantity~\citep{amani2021mnl}:
\begin{align*}
    \kappa(T) := \max_{t \in [T]} \max_{\vx \in \gX_t} \max_{\bm\Theta \in \gB^{K \times d}(S)} \frac{1}{\lambda_{\min}\left( \mA(\vx, \bm\Theta) \right)}.
\end{align*}

\paragraph{Improved Confidence Set.}
We proceed similarly to how we applied R2CS to logistic bandits; to make the correspondence explicit, we overload the notations used in previous sections.
We first define the norm-constrained, unregularized MLE for multiclass logistic regression as
\begin{equation}
    \widehat{\bm\Theta}_t := \argmin_{\bm\Theta \in \gB^{K \times d}(S)} \gL_t(\bm\Theta) \triangleq \sum_{s=1}^{t-1} \ell_s(\bm\Theta),
\end{equation}
where $\ell_s$ is the multiclass logistic (or {\it softmax-cross-entropy}) loss at time $s$, defined as
\begin{equation*}
    \ell_s(\bm\Theta) := - \sum_{k=0}^K y_{s,k} \log \mu_k (\vx_s, \bm\Theta),
\end{equation*}
where we denote $\mu_0(\vx_s, \bm\Theta) := 1 - \sum_{j=1}^K \mu_j(\vx_0, \bm\Theta)$.

Via similar analysis, we obtain the following new confidence set, which reduces to the logistic case when $K = 1$ up to some absolute constants:
\begin{theorem}[Improved Confidence Set for Multinomial Logistic Loss]
\label{thm:confidence-multinomial}
    We have
    \begin{align*}
        \sP\left[ \forall t \geq 1, \  \bm\Theta_\star \in \gC_t(\delta) \right] \geq 1 - \delta,
    \end{align*}
    with
    \begin{align*}
        \gC_t(\delta) &\!=\! \left\{ \bm\Theta \in \gB^{K \times d}(S) : \gL_t(\bm\Theta) - \gL_t(\widehat{\bm\Theta}_t) \leq \beta_t(\delta)^2 \right\},\\
        \beta_t(\delta)^2 &\!=\! 5dK' \log\left(e + \frac{St}{dK'}\right) + 2((e - 2) + \sqrt{6K}S) \log\frac{1}{\delta},
    \end{align*}
    where we denote $K' = K + 1$.
\end{theorem}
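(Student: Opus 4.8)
The plan is to mirror the R2CS argument behind Theorem~\ref{thm:confidence-logistic}, upgrading every scalar object to its vector-valued softmax counterpart. First I would set up the martingale decomposition of the MNL reward: writing $y_{s,k} = \mu_k(\vx_s, \bm\Theta_\star) + \xi_{s,k}$ for $k \in [K]$, the vector $\bm\xi_s := (\xi_{s,1}, \dots, \xi_{s,K})^\intercal$ is an $\gF_{s-1}$-martingale difference whose conditional covariance is exactly $\E[\bm\xi_s \bm\xi_s^\intercal \mid \gF_{s-1}] = \mA(\vx_s, \bm\Theta_\star)$, the matrix introduced in the problem setting. This is the structural reason $\mA$ will appear throughout the bounds on the martingale and KL terms.

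Next I would establish the two decomposition lemmas analogous to Lemma~\ref{lem:decomposition1-logistic} and Lemma~\ref{lem:decomposition2-logistic}. Introducing the reference class $k = 0$ with $\bm\theta^{(0)} \equiv \vzero$ and using $\sum_{k=0}^K \xi_{s,k} = 0$, the log-partition contributions cancel and one obtains the pointwise identity $\ell_s(\bm\Theta_\star) = \ell_s(\bm\Theta) + \sum_{k=1}^K \xi_{s,k} \langle \vx_s, \bm\theta^{(k)} - \bm\theta_\star^{(k)} \rangle - \KL(\bm\mu(\vx_s, \bm\Theta_\star), \bm\mu(\vx_s, \bm\Theta))$, which reduces to Lemma~\ref{lem:decomposition1-logistic} at $K = 1$. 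Summing over $s$ and substituting the iterates $\tilde{\bm\Theta}_s$ of an online learner of our choice yields $\sum_{s=1}^t \ell_s(\bm\Theta_\star) - \ell_s(\widehat{\bm\Theta}_t) \le \Reg^O(t) + \zeta_1(t) - \zeta_2(t)$, where $\zeta_1(t) := \sum_{s=1}^t \bm\xi_s^\intercal \vv_s$ with $\vv_s := (\langle \vx_s, \bm\theta_\star^{(k)} - \tilde{\bm\theta}_s^{(k)} \rangle)_{k=1}^K$, and $\zeta_2(t) := \sum_{s=1}^t \KL(\bm\mu(\vx_s, \bm\Theta_\star), \bm\mu(\vx_s, \tilde{\bm\Theta}_s))$.

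For $\Reg^O(t)$ I would invoke the $(K+1)$-class generalization of Foster's online logistic regression regret (the source of the $5 d K' \log(e + St/(dK'))$ term), translating its softmax predictions into parameters $\tilde{\bm\Theta}_s \in \gB^{K \times d}(S)$ through the surjection of Proposition~\ref{prop:surjection}. For $\zeta_1(t)$, the summand $\bm\xi_s^\intercal \vv_s$ has conditional variance $\vv_s^\intercal \mA(\vx_s, \bm\Theta_\star) \vv_s$ and is bounded (by Cauchy--Schwarz, $\|\bm\xi_s\|_2 \le \sqrt 2$ and $\|\vv_s\|_2 \le 2S$), so the anytime Freedman inequality (Lemma~\ref{lem:freedman}) gives $\zeta_1(t) \le (e-2)\eta \sum_s \vv_s^\intercal \mA(\vx_s, \bm\Theta_\star) \vv_s + \tfrac{1}{\eta}\log\tfrac1\delta$. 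For $\zeta_2(t)$ I would first generalize Lemma~\ref{lem:kl-bregman}: with the multiclass log-partition $m(\mathbf z) = \log(1 + \sum_k e^{z_k})$ we have $\nabla^2 m = \mA$ and $\KL(\bm\mu(\mathbf z_2), \bm\mu(\mathbf z_1)) = D_m(\mathbf z_1, \mathbf z_2)$, so that each summand of $\zeta_2(t)$ becomes an integral remainder of the quadratic form in $\vv_s$ against $\mA$ evaluated along the segment joining $\tilde{\bm\Theta}_s$ to $\bm\Theta_\star$.

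The main obstacle, and the step I expect to require the most care, is the lower bound on $\zeta_2(t)$: I need a \emph{matrix} (Loewner) self-concordance estimate generalizing Lemma~\ref{lem:self-concordant} to the multinomial log-partition, of the form $\int_0^1 (1-v)\, \mA\big(\vx_s, \tilde{\bm\Theta}_s + v(\bm\Theta_\star - \tilde{\bm\Theta}_s)\big)\, dv \succeq \tfrac{1}{2 + c\sqrt{K} S}\, \mA(\vx_s, \bm\Theta_\star)$, so that $\zeta_2(t) \gtrsim \tfrac{1}{2 + c\sqrt K S} \sum_s \vv_s^\intercal \mA(\vx_s, \bm\Theta_\star) \vv_s$ with \emph{the same} quadratic form that Freedman produces. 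This exact matching is what lets the two terms cancel once $\eta = (2(e-2) + 2\sqrt{6K}S)^{-1}$ is chosen, precisely as in the logistic wrap-up; the $\sqrt{6K}$ factor in $\beta_t(\delta)^2$ is the price of this multinomial self-concordance constant, and faithfully tracking the $\sqrt{K}$ dependence (rather than a naive $K$) through the Loewner bound and the attendant norm conversions is the delicate part. Combining these four ingredients as in the proof of Theorem~\ref{thm:confidence-logistic} then yields the stated radius $\beta_t(\delta)^2$.
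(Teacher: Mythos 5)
Your proposal follows the paper's proof essentially step for step: the same vector-valued martingale decomposition with conditional covariance $\mA(\vx_s,\bm\Theta_\star)$, the same two decomposition lemmas, the $(K+1)$-class regret bound of Foster et al.\ imported through Proposition~\ref{prop:surjection}, the anytime Freedman inequality for $\zeta_1$, the KL-as-Bregman identity for the multiclass log-partition, generalized (matrix) self-concordance for $\zeta_2$, and the same cancellation via $\eta = (2(e-2)+2\sqrt{6K}S)^{-1}$.

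Two corrections to the details. First, $\lVert \vv_s \rVert_2 \leq 2S$ is not quite right: Proposition~\ref{prop:surjection} only places $\tilde{\bm\Theta}_s$ in $\gB^{K\times d}(\sqrt{K}S)$, so $\lVert \vv_s \rVert_2 = \lVert (\bm\Theta_\star - \tilde{\bm\Theta}_s)\vx_s \rVert_2 \leq (1+\sqrt{K})S$; this is harmless since your $\eta$ still satisfies $\eta \leq 1/R$ for the corrected almost-sure bound $R$. Second, and more substantively, your displayed self-concordance estimate is anchored at the wrong end: you put the $(1-v)$ weight on the segment starting at $\tilde{\bm\Theta}_s$ but ask for a lower bound by the Hessian at the far end $\bm\Theta_\star$. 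With that weighting the achievable constant is $\int_0^1 v\,e^{-av}\,dv = \bigl(1-e^{-a}(1+a)\bigr)/a^2$ with $a = \sqrt{6}\,\lVert(\tilde{\bm\Theta}_s-\bm\Theta_\star)\vx_s\rVert_2$, which decays like $a^{-2}$ rather than $a^{-1}$, i.e.\ quadratically worse in $\sqrt{K}S$, and would inflate the radius. Fortunately the inequality you actually need is the other one: the Taylor/Bregman representation of $\KL\bigl(\bm\mu(\vx_s,\bm\Theta_\star), \bm\mu(\vx_s,\tilde{\bm\Theta}_s)\bigr) = D_m(\tilde{\bm\Theta}_s\vx_s, \bm\Theta_\star\vx_s)$ automatically anchors the integral (and the $(1-v)$ weight) at $\bm\Theta_\star\vx_s$, so the bound by $\mA(\vx_s,\bm\Theta_\star)$ with constant $1/(2+2\sqrt{6K}S)$ is exactly Lemma~\ref{lem:generalized-self-concordant1}, which is how the paper proceeds; with that fix your argument goes through verbatim.
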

\begin{proof}
    We extend our previous proof of Theorem~\ref{thm:confidence-logistic} to the multinomial scenario.
    Some key differences include using generalized self-concordant control~\citep{sun2019generalized,trandinh2015generalized}, properties of the Kronecker product, and devising multinomial versions of a new martingale concentration argument (Lemma~\ref{lem:H-norm-multinomial}); see Appendix~\ref{app:confidence-multinomial} for the full proof.
\end{proof}

\subsection{\texttt{MNL-UCB+} and Improved Regret}
\label{subsec:multinomial-regret}
\RestyleAlgo{ruled}
\begin{algorithm2e}[t]
	\SetAlgoLined
	\For{$t = 1, \dots, T$}{
		$\widehat{\bm\Theta}_t \gets \argmin_{\lVert \bm\Theta \rVert_2 \leq S} \gL_t(\bm\Theta)$\;

        $\vx_t \gets \argmax_{\vx \in \gX_t} \bm\rho^\intercal \bm\mu(\vx, \widehat{\bm\Theta}_t) + \epsilon_t(\vx)$, with $\epsilon_t(\vx) = \sqrt{2\kappa(T)} R L \gamma_t(\delta) \lVert \vx \rVert_{\mV_t^{-1}}$\;

        Play $\vx_t$ and observe reward $r_t$\;
	}
	\caption{\texttt{MNL-UCB+}}
	\label{alg:multinomial1}
\end{algorithm2e}

\begin{savenotes}
\RestyleAlgo{ruled}
\begin{algorithm2e}[t]
	\SetAlgoLined
        $\gM_1(\bm\Theta) \gets \gB^{K \times d}(S)$\;
	\For{$t = 1, \dots, T$}{
        $\widehat{\bm\Theta}_t \gets \argmin_{\bm\Theta \in \gM_t} \gL_t(\bm\Theta)$\;
        
        $\vx_t \gets \argmax_{\vx \in \gX_t} \bm\rho^\intercal \bm\mu(\vx, \widehat{\bm\Theta}_t) + \overline{\epsilon}_t(\vx)$, with $\overline{\epsilon}_t(\vx)$ defined in Eqn.~\eqref{eqn:improved-bonus-multinomial} (Appendix~\ref{app:multinomial2})\;

        Play $\vx_t$ and observe reward $r_t$\;

        $\gM_{t+1} \gets \gM_t \cap \left\{ \bm\Theta : \exists \bm\Theta'_t \in \mathrm{min}(\gC_t(\delta)) \text{ s.t. } \mA(\vx_t, \bm\Theta) \succeq \mA(\vx_t, \bm\Theta'_t \right\}$\footnote{$\mathrm{min}(\gC_t(\delta))$ is the set of all minimal elements of the poset $\gC_t(\delta)$, endowed with the Loewner ordering w.r.t. $\mA(\vx_t, \bm\Theta)$.}\;
	}
	\caption{Improved \texttt{MNL-UCB+}}
	\label{alg:multinomial2}
\end{algorithm2e}
\end{savenotes}

\begin{figure*}[!t]
\centering
\subfigure[$S = 5$]{
\includegraphics[width=0.233\textwidth]{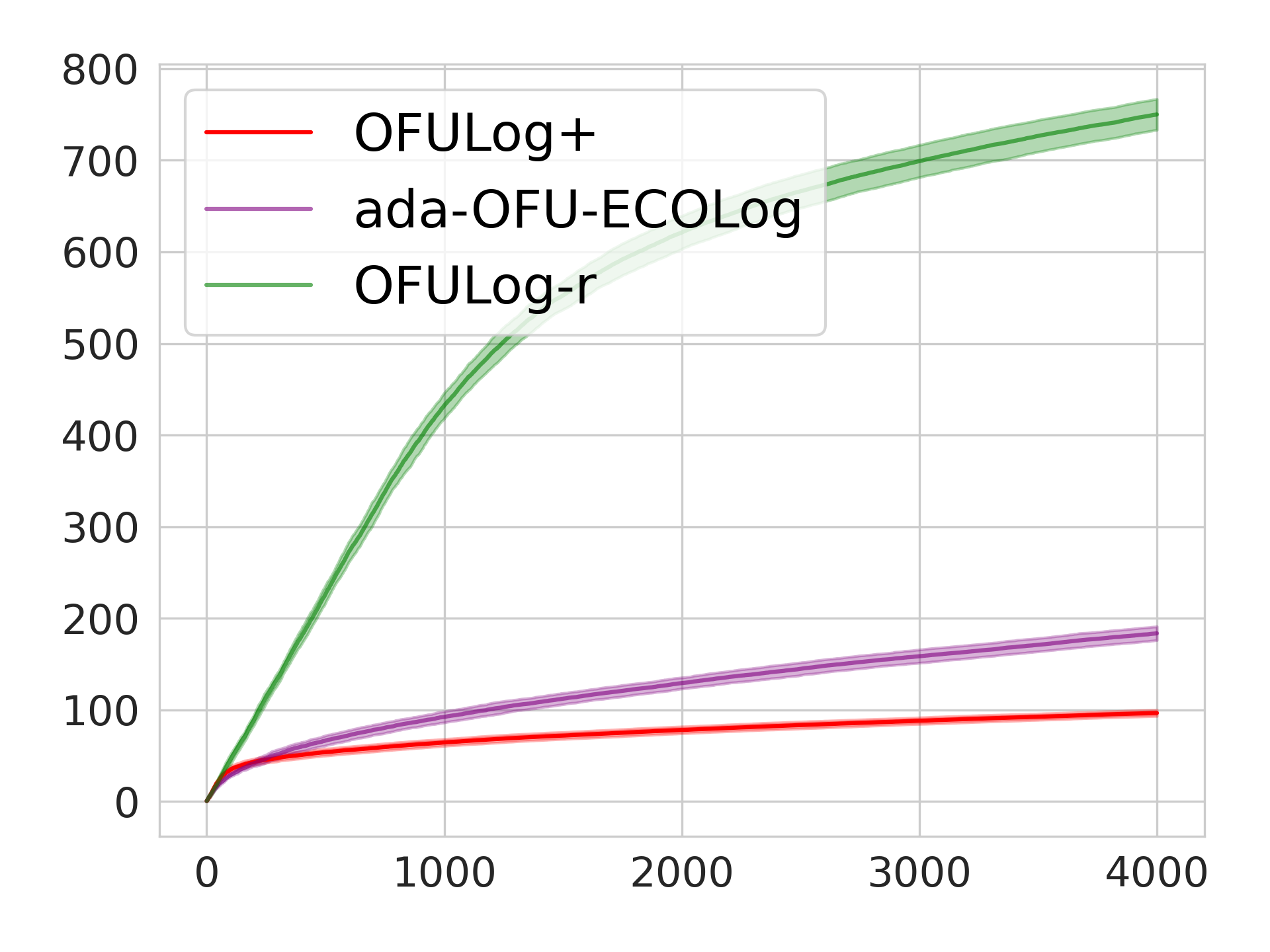}
\label{fig:logistic-a}
}
\subfigure[$S = 10$]{
\includegraphics[width=0.233\textwidth]{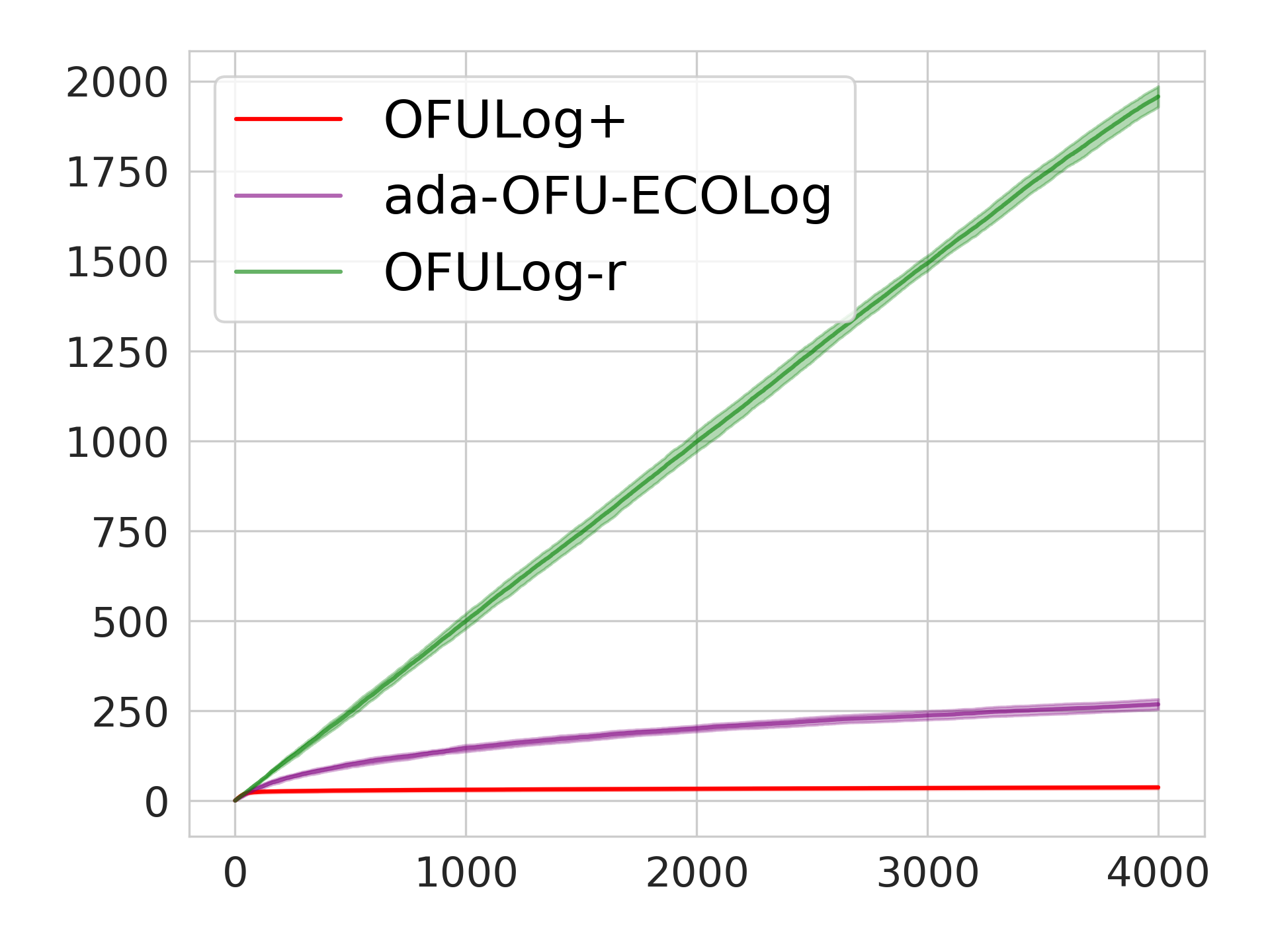}
\label{fig:logistic-b}
}
\centering
\subfigure[$S = 5$]{
\includegraphics[width=0.23\textwidth]{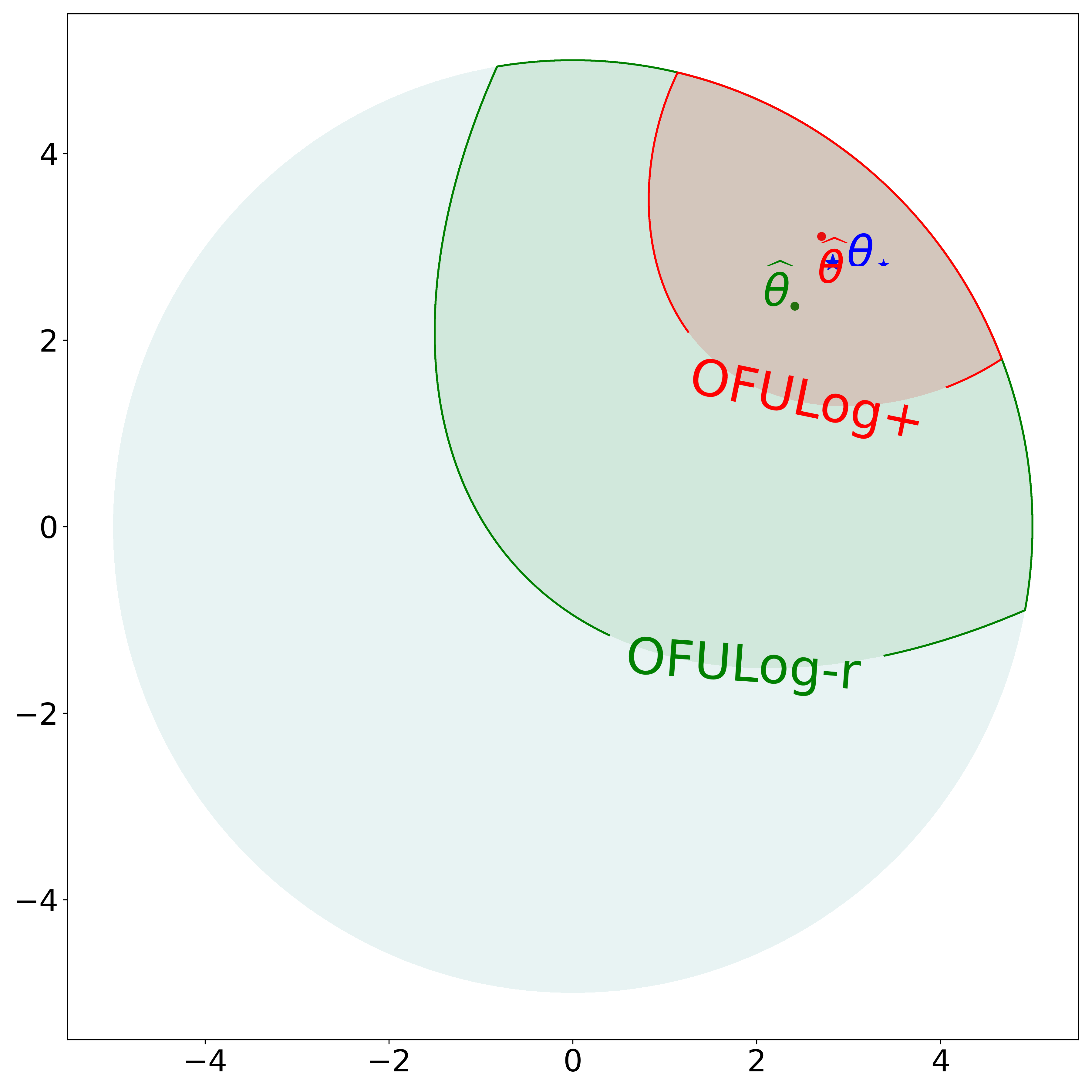}
\label{fig:logistic-c}
}
\subfigure[$S = 10$]{
\includegraphics[width=0.23\textwidth]{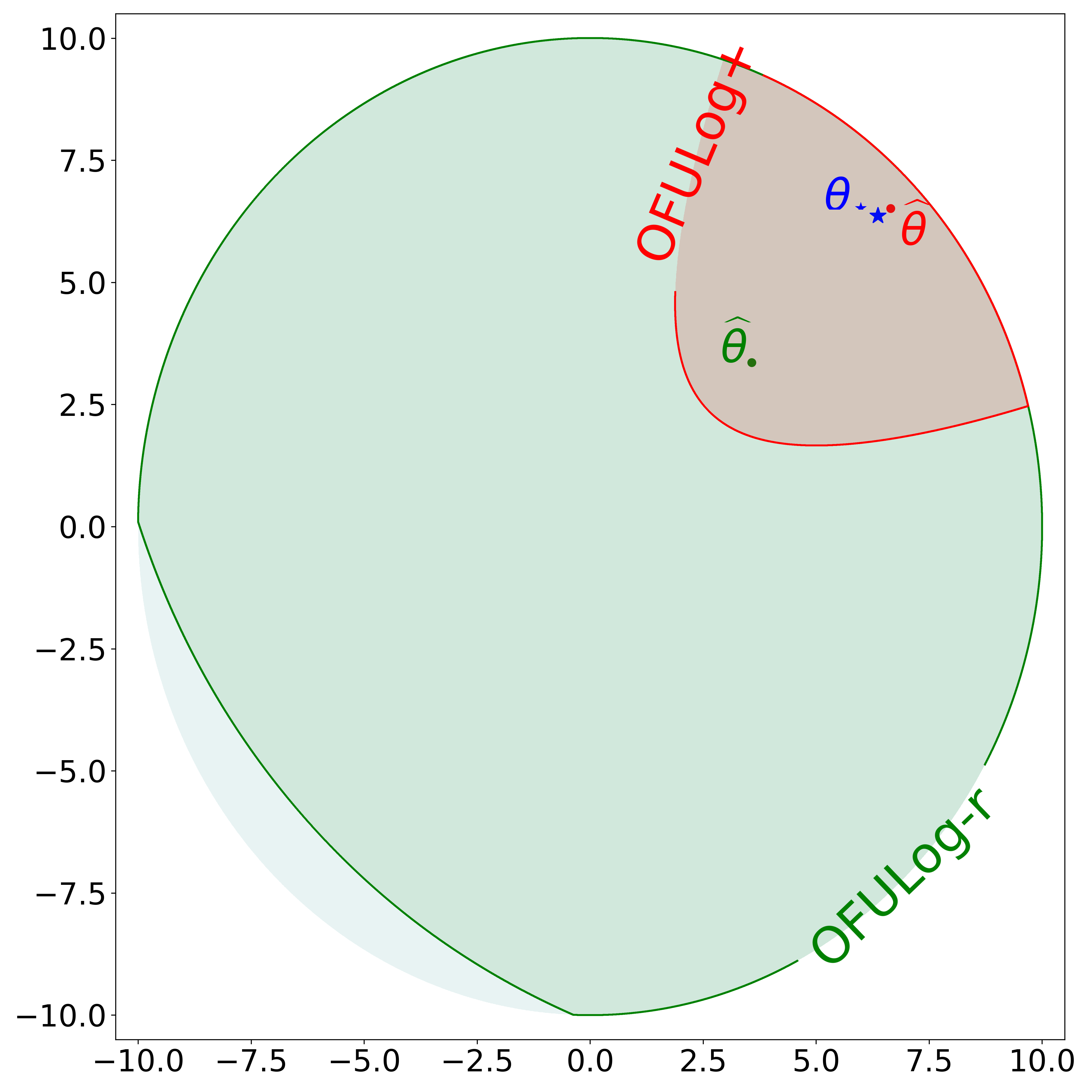}
\label{fig:logistic-d}
}
\caption{(a,b) Plot of $\Reg^B(T)$ for all considered algorithms (c,d) Confidence sets at $t = 4000$ from a single run: {\color{BrickRed} red} is from \textsc{OFULog+} and {\color{ForestGreen} green} is from \textsc{OFULog-r}.
}
\label{fig:logistic}
\end{figure*}

Following \cite{amani2021mnl}, our new confidence set leads to our algorithm with an improved bonus term, \texttt{MNL-UCB+}; its pseudocode is shown in Algorithm~\ref{alg:multinomial1}.
We can improve it further with a tighter bonus term and constrained $\gC_t(\delta)$; see Algorithm~\ref{alg:multinomial2}.

For the below theorem statement, we ignore any logarithmic factors and assume that $\kappa(T)$ is very large, as it scales exponentially in $S$; see Section 3 of \cite{amani2021mnl}.
\begin{theorem}[Simplified]
\label{thm:new-regret-multinomial}
    \texttt{MNL-UCB+} and its improved version attain the following regret bounds up to logarithmic factors, respectively, w.p. $1 - \delta$:
    \begin{align*}
        \Reg^B(T) &\lesssim R d \sqrt{K S \kappa(T) T}, \\
        \Reg^B_{imp}(T) &\lesssim R d \sqrt{K} S \left( \sqrt{T} + d K^{3/2} \sqrt{S} \kappa(T) \right).
    \end{align*}
\end{theorem}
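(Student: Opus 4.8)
The plan is to mirror the logistic-bandit regret analysis (Theorem~\ref{thm:new-regret-logistic}) in the multinomial setting, leveraging the new confidence set (Theorem~\ref{thm:confidence-multinomial}) as the drop-in replacement for the prior confidence set of \cite{amani2021mnl}. First I would establish a multinomial analogue of Lemma~\ref{lem:H-norm}: a high-probability bound on the Hessian-induced distance $\lVert \bm\Theta - \bm\Theta_\star \rVert_{\mH_t(\bm\Theta_\star)}$ for any $\bm\Theta \in \gC_t(\delta)$, where the regularized Hessian is now $\mH_t(\bm\Theta_\star) := \sum_{s=1}^{t-1} \mA(\vx_s, \bm\Theta_\star) \otimes \vx_s \vx_s^\intercal + \lambda_t \mI_{Kd}$ (this is Lemma~\ref{lem:H-norm-multinomial} referenced in the excerpt). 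As in the logistic case, the key is to perform a second-order Taylor expansion of $\gL_t$ around $\bm\Theta_\star$, control the integrated Hessian from below by $\mH_t(\bm\Theta_\star)$ up to a factor via generalized self-concordance, and then bound the linear term $\nabla \gL_t(\bm\Theta_\star)^\intercal(\bm\Theta_\star - \bm\Theta)$ by writing it as a sum of vector-valued martingale differences and applying Freedman's inequality (Lemma~\ref{lem:freedman}) with a time-dependent $\eps$-net over $\gB^{K \times d}(2S)$. This should yield $\gamma_t(\delta)^2 \lesssim \kappa(T) \cdot (\text{something} \cdot S^2) \cdot (dK \log(\cdot) + \sqrt{K}S\log\tfrac{1}{\delta})$, carefully tracking the $\sqrt{K}S$ and $dK$ factors appearing in $\beta_t(\delta)^2$.

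Next, for \texttt{MNL-UCB+} (Algorithm~\ref{alg:multinomial1}), I would verify optimism: the bonus $\epsilon_t(\vx) = \sqrt{2\kappa(T)} R L \gamma_t(\delta) \lVert \vx \rVert_{\mV_t^{-1}}$ (with $\mV_t := \sum_{s<t}\vx_s\vx_s^\intercal + \lambda_t \mI_d$ and $L$ the Lipschitz constant of $\bm\rho^\intercal\bm\mu$ in the parameter) upper-bounds the reward-prediction error $\abs{\bm\rho^\intercal(\bm\mu(\vx,\bm\Theta_\star) - \bm\mu(\vx,\widehat{\bm\Theta}_t))}$ on the confidence event. This follows from a mean-value expansion of $\bm\mu$, bounding by $\kappa(T)$-scaled Hessian norm, and then Cauchy–Schwarz in the $\mH_t$-geometry converting $\mH_t$ to $\kappa(T)\mV_t$ since $\mA \succeq \kappa(T)^{-1}\mI$. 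Standard optimism then gives $\Reg^B(T) \lesssim \sum_t \epsilon_t(\vx_t) \lesssim \sqrt{\kappa(T)}\,R\,\gamma_T(\delta)\sqrt{\sum_t \lVert \vx_t\rVert_{\mV_t^{-1}}^2}$, and the elliptical potential lemma bounds the sum by $\sqrt{dT\log(\cdot)}$. Plugging in $\gamma_T(\delta) \approx \sqrt{\kappa(T)}\,d^{1/2}K^{1/2}S$-type scaling and collecting factors should produce $\Reg^B(T) \lesssim R d \sqrt{KS\kappa(T)T}$.

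For the improved version (Algorithm~\ref{alg:multinomial2}), the plan is to replace the uniform worst-case $\kappa(T)$ by the local curvature at the played arm via the constrained set $\gM_t$, which enforces $\mA(\vx_t,\bm\Theta) \succeq \mA(\vx_t,\bm\Theta'_t)$ for a minimal element $\bm\Theta'_t$. This lets the leading term depend on the realized $\mA(\vx_t,\bm\Theta_\star)$-geometry rather than $\kappa(T)$, separating the regret into a $\sqrt{T}$-term free of $\kappa(T)$ and a lower-order term $d K^{3/2}\sqrt{S}\kappa(T)$. Concretely I would use a refined elliptical potential count lemma (the multinomial analogue of Lemma~\ref{lem:EPCL}, \ref{lem:EPL}) to split rounds into those where the self-concordant-corrected prediction variance is large (few such rounds, contributing the $\kappa(T)$ lower-order term) versus small (contributing the $\sqrt{T}$ leading term), exactly paralleling the detrimental-arm decomposition of \cite{abeille2021logistic}.

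The main obstacle I anticipate is the martingale-concentration step inside the multinomial Lemma~\ref{lem:H-norm-multinomial}. Unlike the scalar logistic case where $\xi_s$ is one-dimensional, here the noise $\vy_s - \bm\mu(\vx_s,\bm\Theta_\star)$ is a $K$-dimensional martingale difference with covariance $\mA(\vx_s,\bm\Theta_\star)$, so applying Freedman's inequality requires either a dimension-free vector concentration or an $\eps$-net over $\gB^{K\times d}(2S)$ whose covering number contributes a $Kd$ factor inside the logarithm and an additive $\sqrt{K}S\log\tfrac{1}{\delta}$ term — matching $\beta_t(\delta)^2$, but only if the net radius and the per-direction variance proxy (governed by $\lambda_{\max}(\mA) \le 1$ and the generalized self-concordance $\lVert \nabla^2_z \ell\rVert \lesssim \nabla^2_z \ell$ in the sense of \cite{sun2019generalized}) are handled so that the extra $S$ factors cancel rather than accumulate. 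Getting the Kronecker-structured Hessian $\mA \otimes \vx\vx^\intercal$ to interact cleanly with the $\eps$-net union bound — so that the final $\gamma_t(\delta)$ carries only $\sqrt{K}$ rather than $K$ in the leading term — is the delicate part that determines whether the advertised $\sqrt{K}$ improvement over \cite{amani2021mnl} actually goes through.
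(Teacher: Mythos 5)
Your skeleton does match the paper's: a multinomial concentration lemma proved via second-order Taylor expansion, generalized self-concordance, and Freedman's inequality with a time-dependent $\eps$-net (the paper's Lemma~\ref{lem:H-norm-multinomial}); then optimism with a $\sqrt{2\kappa(T)}$-scaled bonus and elliptical-potential arguments for Algorithm~\ref{alg:multinomial1}; and, for Algorithm~\ref{alg:multinomial2}, the Loewner-minimal constraint set together with ``multinomial'' versions of the elliptical potential and count lemmas (Lemmas~\ref{lem:EPCL-generalized}, \ref{lem:EPL-generalized}). However, your factor bookkeeping contains two genuine errors that would prevent you from recovering the stated bounds. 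First, you place the concentration lemma in the $\mH_t(\bm\Theta_\star)$ geometry, mirroring the logistic Lemma~\ref{lem:H-norm}. The paper deliberately does \emph{not} do this: Lemma~\ref{lem:H-norm-multinomial} bounds $\lVert \bm\theta - \bm\theta_\star \rVert^2_{\widetilde{\mG}_t(\bm\Theta_\star,\bm\Theta)} \lesssim dKS\log\left(e + \frac{St}{dK}\right) + \sqrt{K}S\log\frac{1}{\delta}$ in the \emph{integrated-Hessian} norm, and every downstream Cauchy--Schwarz step (Lemmas~\ref{lem:1}, \ref{lem:17}, \ref{lem:18}) is taken w.r.t.\ $\widetilde{\mG}_t$. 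The quadratic remainder of the Taylor expansion is exactly the $\widetilde{\mG}_t$-norm, so keeping the bound there costs only one power of $S$ (from the Freedman tuning $\eta \approx 1/S$); converting to $\mH_t(\bm\Theta_\star)$ via $\mH_t \preceq (2 + 2\sqrt{6}S)\widetilde{\mG}_t$ (Lemma~\ref{lem:generalized-self-concordant1}) necessarily costs an additional factor of $S$ in $\gamma_t(\delta)^2$, hence $\sqrt{S}$ in the bonus, and your leading term becomes $Rd\sqrt{K}S\sqrt{\kappa(T)T}$ rather than $Rd\sqrt{KS\kappa(T)T}$ --- precisely the $\sqrt{S}$ improvement over \cite{amani2021mnl} that the theorem advertises would be lost.

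Second, your anticipated radius $\gamma_t(\delta)^2 \lesssim \kappa(T)\cdot S^2\cdot\left(dK\log(\cdot) + \sqrt{K}S\log\frac{1}{\delta}\right)$ contains a spurious $\kappa(T)$, which double-counts: $\kappa(T)$ enters the analysis exactly once, in the bonus conversion $\widetilde{\mG}_t^{-1} \preceq 2\kappa(T)\left(\mI_K \otimes \mV_t^{-1}\right)$ of Lemma~\ref{lem:1} (your ``$\mA \succeq \kappa(T)^{-1}\mI$'' step), while the confidence radius itself is $\kappa$-free. If $\gamma_t$ also carried $\sqrt{\kappa(T)}$, the final bound would scale linearly in $\kappa(T)$ instead of as $\sqrt{\kappa(T)}$. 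Indeed your own substitution is internally inconsistent: with $\gamma_T(\delta)\approx\sqrt{\kappa(T)}\,d^{1/2}K^{1/2}S$, the chain $\sqrt{\kappa(T)}\,R\,\gamma_T(\delta)\sqrt{dT}$ evaluates to $R\,d\,\kappa(T)\sqrt{K}\,S\sqrt{T}$, not the claimed $Rd\sqrt{KS\kappa(T)T}$. The correct radius is $\gamma_t(\delta)^2 \approx dKS\log\left(e+\frac{St}{dK}\right) + \sqrt{K}S\log\frac{1}{\delta}$, with no $\kappa(T)$ and a single power of $S$ on the $dK$ term, and obtaining it forces the $\widetilde{\mG}_t$ formulation above. (A minor point: the $\sqrt{T}+\kappa$ split for the improved algorithm follows the structure of Appendix D of \cite{amani2021mnl} --- perturbation of $\mA$, minimal elements making $\mH_t(\bm\Theta_t)\succeq\mL_t$, then the generalized elliptical lemmas --- rather than the detrimental-arm decomposition of \cite{abeille2021logistic}, though the leading/lower-order spirit is as you describe.)
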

\begin{proof}
    See Theorem~\ref{thm:new-regret-multinomial-full} in Appendix~\ref{app:full-statements-multinomial} for the full statement.
    Compared to \cite{amani2021mnl}, key differences are the use of our improved confidence set and new ``multinomial'' versions of elliptical lemmas; see Appendix~\ref{app:regret-multinomial} for the full proof.
\end{proof}

\paragraph{Comparison to Prior Arts.}
Again, extending upon Table~\ref{tab:results}, we now discuss how our bound compares to existing works in detail.
To the best of our knowledge, at the time of submission, the only comparable work was \cite{amani2021mnl}.
There, the authors provide two bonus-based algorithms inspired by \cite{faury2020logistic}, each leading\footnote{We only consider super-logarithmic dependencies on $d, K, S, \kappa(T), T$; see Appendix~\ref{app:full-statements} for the full statement. Also, we have re-tracked the $S$-dependency with the ``optimal'' choice of $\lambda = \frac{d K^{3/2}}{S} \log\frac{1 + \frac{S T}{dK}}{\delta}$.} to the regret bounds of $\widetilde{\gO}\left( dK^{3/4} S \sqrt{\kappa(T) T} \right)$ and $\widetilde{\gO}\left( d K^{5/4} S^{3/2} \left( \sqrt{T} + d K^{5/4} S \kappa(T) \right) \right)$, respectively.
We first note that even though they conjectured that $\widetilde{\gO}(dK)$ is optimal in terms of $d$ and $K$, even their regret bound with an appropriate choice of $\lambda$ results in $\widetilde{\gO}(dK^{3/4})$.
Moreover, we further improve the dependency down to $\widetilde{\gO}(d \sqrt{K})$, and even in terms of $S$, we improve by a factor of $\sqrt{S}$ in the leading term.
Recently, a concurrent work by \cite{zhang2023mnl} also made substantial improvements, both statistically and computationally.
Their \texttt{MNL-UCB+} simultaneously attains $\widetilde{\gO}\left( dS\sqrt{K \kappa(T) T} \right)$ and $\widetilde{\gO}\left( d K S \left( \sqrt{ST} + d \kappa(T) \right) \right)$, and assuming $dK \gtrsim S$ for simplicity\footnote{If $dK \lesssim S$, then we accordingly have extra $S$ dependency and less $dK$ dependency.}, their \texttt{OFUL-MLogB} simultaneously attains $\widetilde{\gO}\left( d S^{3/2} \sqrt{K \kappa(T) T} + d^2 K S^3 \kappa(T) \right)$ and $\widetilde{\gO}\left( d K S^{3/2} \sqrt{T} + d^2 K S^3 \kappa(T) \right)$, while using only $\gO(1)$ computation cost per round.
In all cases, our guarantees are strictly better by at least $\sqrt{S}$ and $\sqrt{K}$.
Still, our Algorithm~\ref{alg:multinomial2} is intractable, and we leave to future work on whether we can obtain computational efficiency while retaining our so-far optimal regret guarantees.

\section{EXPERIMENTS}
\label{sec:expr}
\paragraph{Setting.}
We consider logistic bandits and follow the experimental setting of \cite{faury2022logistic}.
We compare our \texttt{OFULog+} with \texttt{ada-OFU-ECOLog}~\citep{faury2022logistic} and \texttt{OFULog-r}~\citep{abeille2021logistic}.
The existing implementation~\citep{faury2022logistic} utilizes only a few steps of Newton's method to approximate the MLE, which we replace with Sequential Least SQuares Programming (SLSQP) implemented in SciPy~\citep{scipy}, yielding a more precise MLE and allowing for a fairer comparison.
We also remark that their implementation does not directly reflect their theoretical algorithm, but we still use the same implementation without any modification for fairness.
Throughout the experiments, we fix $T = 4000$, $d = 2$, $|\gA| = 20$, and $\delta = 0.05$.
We use ${\color{blue}\bm\theta_\star} = \frac{S - 1}{\sqrt{d}} \bm1$ for $S \in \{5, 10\}$, and time-varying arm-set by sampling in the unit ball at random at each $t$.
For \texttt{ada-OFU-ECOLog}, we set $\lambda = 10$.
The codes are available in our \href{https://github.com/nick-jhlee/logistic_bandit}{GitHub repository}.

\paragraph{Results.}
The regret curves averaged over $10$ independent runs are shown in Figure~\ref{fig:logistic-a} and \ref{fig:logistic-b}, where it is clear that \texttt{OFULog+} is the best.
The confidence sets at $t = 4000$ for \texttt{OFULog-r} and \texttt{OFULog+} are shown in Figure~\ref{fig:logistic-c} and \ref{fig:logistic-d}, where we note how our MLE estimate ${\color{red} \widehat{\bm\theta}}$ is the closest to ${\color{blue}\bm\theta_\star}$, and that our confidence set is the smallest.
There are some interesting observations to be made.
First, even though \texttt{ada-OFU-ECOLog} shares the same leading term in theoretical regret as ours, numerically, \texttt{OFULog+} still outperforms by a large margin.
Second, for $S=5$ (or generally, for small $S$), \texttt{ada-OFU-ECOLog} attains better numerical regret than \texttt{OFULog+} in the {\it initial} phase, but then becomes worse in the later phase.
We believe that is due to explicit regularization of \texttt{ada-OFU-ECOLog}, which helps initially but later forces the MLE estimate to be bounded.

\vspace{-0.3cm}
\section{CONCLUSION}
In this paper, we propose regret-to-confidence-set conversion (R2CS) that converts an online learning regret guarantee to a new confidence set, without the need to run the online algorithm explicitly.
Using a novel combination of self-concordant control and information-geometric interpretation of KL-divergence as well as new martingale concentration arguments, we proved new confidence sets for logistic and MNL bandits, leading to the state-of-the-art regret bounds with improved dependencies on $S$ and $K$.

One crucial and exciting future direction is to extend our R2CS to various other settings such as improved Thompson-Sampling for logistic bandits~\citep{abeille2017thompson,faury2022logistic}, generalized linear bandits~\citep{filippi2010glm,mutny2021poisson}, norm-agnostic scenario~\citep{gales2022norm}, and even multinomial logistic MDP~\citep{hwang2023mnl}.
Another direction is to improve the Bradley-Terry model-based RLHF, which is similar to logistic bandits~\citep{wu2024preference,das2024rlhf}.

\clearpage

\subsubsection*{Acknowledgements}
We thank the anonymous reviewers for their helpful and insightful comments.
J. Lee and S.-Y. Yun were supported by the Institute of Information \& Communications Technology Planning \& Evaluation (IITP) grants funded by the Korean government (MSIT) (No.2022-0-00311, Development of Goal-Oriented Reinforcement Learning Techniques for Contact-Rich Robotic Manipulation of Everyday Objects; No.2019-0-00075, Artificial Intelligence Graduate School Program (KAIST)).
K.-S. Jun was supported in part by the National Science Foundation under grant CCF-2327013.

\bibliographystyle{plainnat}
\bibliography{references}

\newpage
\onecolumn
\appendix

\newpage
\tableofcontents
\newpage

\fancypagestyle{plain}{
\fancyfoot[C]{\thepage}}
\pagestyle{plain}
\setlength{\footskip}{20pt}	

\section{FURTHER RELATED WORK}

\subsection{Online-to-Something Conversion}
\label{app:o2cs}
\paragraph{Online-to-Confidence Set.}
Recently, many results have connected online learning to the concentration of measure, starting from~\cite{rakhlin17on}, followed by~\cite{jun19parameterfree,orabona21tight}, which is also closely related to the ``reduction'' framework championed by John Landford\footnote{\url{https://hunch.net/~jl/projects/reductions/reductions.html}} and later followed upon in \cite{foster2018oracle,foster2020oracle}.

For linear models, there are two main categories of techniques for building confidence sets based on online learning algorithms.
The first is to leverage the negative term $-\|\widehat{\bm\theta}_{T+1} - \bm\theta^\star\|^2_{\mV_{T}}$ from the regret bound of online Newton step (ONS)~\citep{hazan07logarithmic} where $\mV_T := \lambda \mI + \sum_{t=1}^T \vx_t \vx_t^\intercal$ and $\widehat{\bm\theta}_{T+1}$ is the parameter predicted at the time step $T+1$.
This way, one can construct a confidence set centered at $\widehat{\bm\theta}_{T+1}$ with a confidence radius that depends on the rest of the terms in the regret bound~\citep{Dekel10robust,dekel12selective,crammer13multiclass,gentile14onmultilabel,zhang16online}.
The second one, which is dubbed as {\it online-to-confidence-set conversion (O2CS)}, is to start from the regret bound $\sum_{t=1}^T \ell_t(\bm\theta_t) - \ell_t(\bm\th^\star) \le B_T$ where $\ell_t$ is a properly defined loss function (e.g., squared loss), $\bm\theta_t$ is the parameter predicted at time $t$, and $B_T$ is the regret bound of the algorithm.
We then lower bound its left-hand side with a standard concentration inequality, which results in a quadratic constraint on $\bm\th^\star$~\citep{abbasiyadkori2012conversion,jun2017conversion}.
While this itself defines a confidence set for $\bm\th^\star$, one can further manipulate the quadratic constraint into a confidence set centered at a new estimator that regresses on the prediction $\hat y_t$'s from the online learning algorithm rather than the actual label $y_t$'s.
The benefit of O2CS over the ONS-based one is that we are not married to the particular algorithm of ONS but are open to using any online learning algorithm, and thus ``progress in constructing better algorithms for online prediction problems directly translates into tighter confidence sets''~\citep{abbasiyadkori2012conversion}; see also Chapter 23.3 of \cite{banditalgorithms}.

However, these two techniques have one fundamental difference from our proposed R2CS: they require running the online learning algorithm directly, whereas R2CS relies only on knowing an achievable regret bound without actually running it.
This means that our R2CS establishes a third category of techniques for building confidence sets based on online learning algorithms.

\paragraph{Online-to-PAC.}
Our R2CS also has a strong resemblance to the online-to-PAC conversion~\citep{lugosi2023conversion}, which shows that an achievable regret for the so-called {\it generalization game} implies a bound on the generalization error that holds for all statistical learning algorithms, uniformly, up to some martingale concentration term.
This is quite similar to our O2CS framework, except the quantity under interest for us is the confidence set of some unknown parameter or model, which is different from the generalization error.


%

\subsection{Likelihood Ratio Confidence Sets}
\label{app:likelihood}
Although our paper is focused on bandits, our ``loss-based'' confidence sets (Theorem~\ref{thm:confidence-logistic}, \ref{thm:confidence-multinomial}) are based on some likelihood ratio testing.
Despite being around for more than 50 years since the seminal work of \cite{robbins1972class}, the statistics community and especially the field of safe anytime-valid inference (SAVI) has recently revived the interest in LRCS and hypothesis testing procedures due to their elegancy and many desirable properties such as being ``universal''~\citep{wasserman2020universal} and anytime valid~\citep{ramdas2022anytime}.
The general idea is that as the sequential likelihood ratio process (SLRP) is super-martingale, one can utilize Ville's inequality~\citep{Ville1939} to obtain a time-uniform confidence sequence~\citep{wasserman2020universal,emmenegger2023likelihood}; see \cite{ramdas2022anytime} for a more detailed overview of this subject from a statistics perspective.
Recently, \cite{emmenegger2023likelihood} proposes weighted SLRP-based confidence set w.r.t. some sequence of estimators $\{\widehat{\bm\theta}_t\}$, which is chosen as outputs of the follow-the-regularized-leader (FTRL) for a naturally-derived online prediction game, and an adaptive reweighting based on the bias of the estimator $\widehat{\bm\theta}_t$ to reduce the variance.
They then instantiate their confidence set for generalized linear models, and by leveraging a deep connection between the Bregman divergence geometry and Bregman information gain~\citep[Theorem 3]{chowdhury2023bregman}, they quantitatively analyzed the geometry of their confidence set.

Indeed, there is a strong resemblance between our R2CS and \cite{emmenegger2023likelihood}.
\cite{emmenegger2023likelihood} sticks to the {\it sequential} likelihood ratio testing (SRLT), $\gL_t(\bm\theta) - \gL_t(\{\widehat{\bm\theta}_s\}_{s=1}^t)$, where $\gL_t(\cdot)$ is some log-likelihood.
Our R2CS also starts with SLRT, but we then convert it to a form of batched likelihood ratio testing, $\gL_t(\bm\theta) - \gL_t(\widehat{\bm\theta}_t)$ by leveraging some online learning regret.
Investigating further into the deep connections between R2CS and \cite{emmenegger2023likelihood} and even aforementioned related works on SAVI is an exciting future direction.

\subsection{Optimism-based Approaches to Linear Bandits}
\label{app:linear}
We briefly review the optimism-based approaches to linear bandits and some recent advances.
``Optimism in the face of uncertainty'' (OFU) is a powerful principle in sequential decision-making that operates by choosing actions in the most optimistic way possible while being sufficiently plausible.
For bandits especially, this amounts to constructing an anytime-valid confidence sequence of some models, where the radius of each confidence set corresponds to the amount of uncertainty at a given time.
The seminal work by \cite{abbasiyadkori2011linear} shows that for linear bandits, one can construct such sequence using the celebrated self-normalized martingale concentrations~\citep{delepena2004self} and that the regret can be bounded as roughly the sum of radii of the confidence sets over all the timesteps.
There has been much effort to improve the confidence set of linear bandits; most recently, \cite{flynn2023linearbandit} proposed a set of confidence sequences that can be constructed via adaptive martingale mixtures.
Equally as important, a misspecified choice of confidence set radius can be catastrophic.
Recently, there has been some work on tackling this issue as well.
\cite{gales2022norm} considered the norm-agnostic scenario, and \cite{kim2022variance,jun2024variance} considered the variance-agnostic scenario.

Recall that most logistic and MNL bandits literature, including ours, is OFU-style. One notable distinction of our R2CS framework is that we do not utilize self-normalized martingale concentrations, which had to be modified for logistic and MNL losses~\citep{faury2020logistic,amani2021mnl}.
It would be interesting to take the recent advancements in linear bandits mentioned above and extend them to logistic and MNL bandits.

\subsection{Multinomial Logistic (MNL) Bandits}
\label{app:mnl}
There are two lines of work in multinomial logistic (MNL) bandits.
One line of work, closely related to ours and which we have discussed extensively in the main text, considers $K + 1$ outcomes modeled by the multinomial logit model, a multinomial extension of \cite{faury2020logistic}.
There are only two relevant works in this line so far.
\cite{amani2021mnl} proposed two UCB-based algorithms, one of which is intractable due to the complex nature of its confidence set.
\cite{zhang2023mnl} then proposed two algorithms, one that is UCB-based with improved confidence set, and another that is jointly efficient and regret-effective in the style of \cite{faury2022logistic} with better computation cost, $\gO(1)$ per round.
Notably, they also use an online-to-confidence-set conversion type argument, with some appropriate modifications.
Another line of work considers a combinatorial bandit-type extension for assortment selection problem from choice model theory~\citep{agrawal2023mnl,oh2021mnl}.
Here, their considered setting fundamentally differs from ours in that the learner chooses an assortment (a subset of indices) $\gQ_t$, from which the reward follows the multinomial logit distribution over $\gQ_t$.

\subsection{Generalized Linear Bandits}
Generalized linear (GL) bandit, which is a generalization of the logistic bandits by replacing the logistic link with a general exponential family link, was introduced by the seminal work of \cite{filippi2010glm}, in which they also proposed an optimistic algorithm.
Other than the advances in logistic bandits, as surveyed in the main text, there were also significant advances in the GL bandits.
Inspired by online Newton step~\citep{hazan07logarithmic}, several works have proposed efficient and online algorithms for generalized linear bandits~\citep{li2017glm,jun2017conversion}.
Thompson sampling-style algorithms~\citep{russo2018thompson} have also been studied extensively for logistic bandits and generalized linear bandits~\citep{kveton2020glm,abeille2017thompson,kim2023thompson}.
\cite{kazerouni2021glm} studied the problem of best arm identification for the generalized linear bandits.
\cite{russac2021glm} considers a (piecewise) non-stationary GL bandit and proposes an algorithm with forgetting.
\cite{oh2021sparse,li2022sparse} considered a high-dimensional variant of GL bandits with sparsity.
\cite{kang2022glmbilinear} recently extended the GL bandit setting to generalized low-rank matrix bandits in which the arm-set becomes the low-rank matrix manifold.

\newpage
\section{MISSING RESULTS}
\label{app:results}
In this section, we provide two missing results from the main text.

\subsection{Ville's Inequality}
\label{app:ville}
We used a martingale version of Markov's inequality in the proof of Lemma~\ref{lem:freedman}, known as Ville's inequality.
Here's the full statement:
\begin{lemma}[Th\'{e}or\`{e}me 1 of pg. 84 of \cite{Ville1939}]\label{lem:ville}
    Let $X_n$ be a nonnegative supermartingale.
    Then, for any $\lambda > 0$, $\sP\left[ \sup_{n \geq 0} X_n \geq \lambda \right] \leq \frac{\E[X_0]}{\lambda}$.
\end{lemma}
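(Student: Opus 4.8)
The plan is to prove this classical maximal inequality by a stopping-time argument combined with Markov's inequality, with nonnegativity entering at one crucial step. First I would fix $\lambda > 0$ and introduce the level-crossing time $\tau := \inf\{n \geq 0 : X_n \geq \lambda\}$ (with the convention $\inf \emptyset = \infty$), then form the stopped process $Y_n := X_{n \wedge \tau}$. The first step is to verify directly that $(Y_n)_{n \geq 0}$ is again a supermartingale, which avoids invoking optional stopping as a black box. Decomposing on the $\gF_n$-measurable events $\{\tau \leq n\}$ and $\{\tau > n\}$: on the former $Y_{n+1} = X_\tau = Y_n$, while on the latter $Y_{n+1} = X_{n+1}$ and $Y_n = X_n$, so that $\E[Y_{n+1} \mid \gF_n] = \1[\tau \leq n]\, X_\tau + \1[\tau > n]\, \E[X_{n+1} \mid \gF_n] \leq Y_n$ by the supermartingale property of $X$. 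Since $Y_0 = X_0$, iterating gives $\E[Y_n] \leq \E[X_0]$ for every $n$; integrability is immediate from nonnegativity and $\E[X_0] < \infty$.

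The second step is the finite-horizon Markov bound, and this is where nonnegativity is used. On $\{\tau \leq n\}$ we have $Y_n = X_\tau \geq \lambda$, while on $\{\tau > n\}$ we have $Y_n = X_n \geq 0$; hence pointwise $\lambda\, \1[\tau \leq n] \leq Y_n$. Taking expectations and combining with the bound from the first step yields $\lambda\, \sP[\tau \leq n] \leq \E[Y_n] \leq \E[X_0]$. Noting that $\{\tau \leq n\} = \{\max_{0 \leq k \leq n} X_k \geq \lambda\}$, this already gives the finite-horizon statement $\sP[\max_{0 \leq k \leq n} X_k \geq \lambda] \leq \E[X_0]/\lambda$.

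The last step is to pass to the limit $n \to \infty$. The events $\{\tau \leq n\}$ increase to $\{\tau < \infty\} = \{\exists\, n : X_n \geq \lambda\}$, so continuity of probability from below gives $\sP[\exists\, n : X_n \geq \lambda] \leq \E[X_0]/\lambda$. The one subtlety I would flag as the main obstacle is the gap between this event and the target event $\{\sup_{n \geq 0} X_n \geq \lambda\}$: the supremum can equal $\lambda$ as a limit that is never attained, so the two events need not coincide, and a careless identification would be a genuine error. I would close this gap by applying the bound at the strictly smaller levels $\lambda - 1/k$: since $\{\sup_n X_n \geq \lambda\} \subseteq \{\exists\, n : X_n \geq \lambda - 1/k\}$ for each $k \geq 1$, we obtain $\sP[\sup_n X_n \geq \lambda] \leq \E[X_0]/(\lambda - 1/k)$, and letting $k \to \infty$ recovers the claimed inequality. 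Everything else — adaptedness and integrability of the stopped process — is routine given nonnegativity.
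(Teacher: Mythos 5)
Your proof is correct. There is nothing in the paper to compare it against: the paper does not prove this lemma but imports it as a classical result, citing Th\'{e}or\`{e}me 1 on p.~84 of Ville's 1939 thesis (the appendix only adds a historical remark and a tightness reference). Your stopping-time argument is the standard textbook proof and all steps are sound: the hands-on verification that the stopped process $Y_n = X_{n\wedge\tau}$ is a supermartingale (decomposing on the $\gF_n$-measurable events $\{\tau \leq n\}$ and $\{\tau > n\}$), the pointwise bound $\lambda\,\1[\tau\leq n] \leq Y_n$ which is exactly where nonnegativity of $X$ on $\{\tau > n\}$ is needed, the passage to $\{\tau < \infty\}$ by continuity from below, and --- the step most often glossed over --- the replacement of the level $\lambda$ by $\lambda - 1/k$ to handle a supremum that is approached but never attained. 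The only cosmetic remark is that the bound at level $\lambda - 1/k$ requires $\lambda - 1/k > 0$, so the limit should be taken over $k$ large enough; this does not affect the conclusion.
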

This is known to be essentially tight; see \cite{howard2020timeuniform} for further discussions.

A fun historical note: this is also commonly known as the {\it Doob's maximal inequality}, but historically, Jean Ville was the first to report this in literature in his 1939 thesis~\citep{Ville1939}.
Interestingly, despite finding Ville's writing style lacking in his review of the book~\citep{Doob1939}, Joseph L. Doob recognized the significance of the result it presented, as evidenced by his later work~\citep{Doob1940}.

\subsection{``Outputs'' from Algorithm 1 of \cite{foster2018logistic}}
\label{app:rmk-proof}
The following proposition justifies using the improper learning algorithm of \cite{foster2018logistic} for our purpose (specifically, the existence of $\tilde{\bm\theta}_s$ for logistic bandits and $\widetilde{\bm\Theta}_s$ for multinomial logistic bandits):
\begin{proposition}
\label{prop:surjection}
    Consider a softmax function $\bm\sigma : \sR^K \rightarrow \Delta_{>0}^{K+1}$ defined as $\bm\sigma(\vz)_k = \frac{e^{z_k}}{1 + \sum_{k' \in [K]} e^{z_{k'}}}$ for $k \in [K]$ and $\bm\sigma(\vz)_0 = \frac{1}{1 + \sum_{k' \in [K]} e^{z_{k'}}}$.
    Then, for any $\vx \in \gB^d(1)$ and $\hat{\vz} \in \sR^{K+1}$ outputted from Algorithm 1 of \cite{foster2018logistic} (see their line 4), there exists $\bm\Theta = [\bm\theta^{(1)} | \cdots | \bm\theta^{(K)}]^\intercal \in \gB^{K \times d}(\sqrt{K}S)$ s.t. $\bm\sigma(\hat{\vz}) = \bm\sigma\left( (\langle \vx, \bm\theta^{(1)} \rangle, \cdots, \langle \vx, \bm\theta^{(K)} \rangle) \right)$.
\end{proposition}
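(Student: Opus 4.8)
The plan is to reduce the realizability claim to a bound on the range of the logits emitted at line~4 of Algorithm~1 of \cite{foster2018logistic}, and then to hit those logits with a minimum-norm linear predictor. First I would exploit shift-invariance and bijectivity of the softmax. Since the $(K+1)$-ary softmax $\bm\sigma(\hat{\vz})_k = e^{\hat z_k}/\sum_{j=0}^K e^{\hat z_j}$ is unchanged under adding a common constant to all coordinates, the predicted distribution depends on $\hat{\vz}$ only through the $K$ differences $z^\star_k := \hat z_k - \hat z_0$; that is, $\bm\sigma(\hat{\vz}) = \bm\sigma(\vz^\star)$ in the paper's $\sR^K$-input convention, with $\vz^\star := (z^\star_1,\dots,z^\star_K)$. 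As $\bm\sigma : \sR^K \to \Delta^{K+1}_{>0}$ is a bijection onto the open simplex, it suffices to exhibit $\bm\Theta = [\bm\theta^{(1)} \mid \cdots \mid \bm\theta^{(K)}]^\intercal$ with $\lVert \bm\Theta \rVert_F \le \sqrt{K} S$ and $\langle \vx, \bm\theta^{(k)} \rangle = z^\star_k$ for every $k \in [K]$.

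Next I would write down the canonical minimum-norm solution by row-wise projection onto $\vx$: set $\bm\theta^{(k)} = (z^\star_k / \lVert \vx \rVert_2^2)\, \vx$ when $\vx \neq \vzero$, and $\bm\theta^{(k)} = \vzero$ otherwise. This satisfies $\langle \vx, \bm\theta^{(k)} \rangle = z^\star_k$ and $\lVert \bm\theta^{(k)} \rVert_2 = |z^\star_k| / \lVert \vx \rVert_2$, so the entire statement collapses to the single estimate $|z^\star_k| \le S \lVert \vx \rVert_2$ for each $k$: this forces $\lVert \bm\theta^{(k)} \rVert_2 \le S$, hence $\lVert \bm\Theta \rVert_F \le \sqrt{K} S$, and it automatically handles $\vx = \vzero$ (which then forces $\vz^\star = \vzero$ and $\bm\Theta = \vzero$).

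The crux is therefore to certify $|z^\star_k| \le S \lVert \vx \rVert_2$, and this is the step that genuinely uses the structure of line~4. The property I would extract is that the emitted distribution $\hat{\bm p} := \bm\sigma(\hat{\vz})$ is a convex combination $\hat{\bm p} = \sum_i \lambda_i\, \bm\sigma(\langle \vx, \bm\Theta_i \rangle)$ of proper model likelihoods with comparators $\bm\Theta_i \in \gB^{K \times d}(S)$. Writing $z^{(i)}_j := \langle \vx, \bm\theta_i^{(j)} \rangle$, $Z_i := 1 + \sum_j e^{z^{(i)}_j}$, and $u_i := \lambda_i / Z_i \ge 0$, I would observe that $\hat p_k = \sum_i u_i e^{z^{(i)}_k}$ and $\hat p_0 = \sum_i u_i$, so the ratio $\hat p_k / \hat p_0$ is a weighted average of the per-component ratios $e^{z^{(i)}_k}$ and hence lies in $[e^{-S\lVert \vx \rVert_2}, e^{S \lVert \vx \rVert_2}]$, using $|z^{(i)}_k| \le \lVert \bm\theta_i^{(k)} \rVert_2 \lVert \vx \rVert_2 \le \lVert \bm\Theta_i \rVert_F \lVert \vx \rVert_2 \le S \lVert \vx \rVert_2$. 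Since $z^\star_k = \log(\hat p_k / \hat p_0)$, taking logarithms gives exactly $|z^\star_k| \le S \lVert \vx \rVert_2$, closing the argument.

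The main obstacle I anticipate is precisely this coordinatewise bound. A naïve estimate that only controls the Euclidean norm of the component logits, $\lVert \vz^{(i)} \rVert_2 \le S \lVert \vx \rVert_2$, would leak a spurious $\sqrt{K}$ factor and yield merely $\lVert \bm\Theta \rVert_F \le K S$; indeed, softmax mixtures genuinely inflate the Euclidean logit norm (for instance, mixing $\bm\sigma(a, 0)$ and $\bm\sigma(0, a)$ reconstructs a logit near $(a, a)$, of norm $\sqrt{2}\,a$). It is the per-coordinate ratio-of-averages estimate above---rather than an $\ell_2$ bound---that keeps each row at norm $\le S$ and recovers the tight $\sqrt{K} S$ Frobenius radius. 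The one external input I rely on is the exact form of line~4 of \cite{foster2018logistic}, namely that $\hat{\bm p}$ is (or lies coordinatewise within the range of) such convex combinations of proper predictions over $\gB^{K \times d}(S)$; reducing to $K=1$ recovers the binary claim that $\tilde{\bm\theta}_s \in \gB^d(S)$ used in the main text.
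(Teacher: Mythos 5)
Your proposal is correct, and its engine is the same as the paper's: the observation that for a mixture of proper softmax predictions, the ratio $\hat p_k/\hat p_0$ is a weighted average (weights proportional to inverse partition functions) of the per-component ratios $e^{z^{(i)}_k}$, hence lies in $[e^{-S\lVert\vx\rVert_2}, e^{S\lVert\vx\rVert_2}]$, combined with the rank-one realization $\bm\theta^{(k)} \propto \vx$. The difference is architectural. The paper splits the argument into two lemmas: Lemma~\ref{lem:expectation}, a separation argument (Hahn--Banach plus Riesz representation in a separable Hilbert space) showing that the expectation $\E_{\bm\Theta\sim P_t}[\bm\sigma(\bm\Theta\vx)]$ lies in the convex hull of the proper predictions, and Lemma~\ref{lem:convex}, showing that this convex hull is contained in $F(\gB^{K\times d}(\sqrt{K}S))$ --- proved there only for two-point combinations, via exactly your ratio computation. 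You bypass the first lemma entirely by running the ratio bound directly on the mixture; since that bound is a ratio of two integrals of a pointwise-bounded function, it applies verbatim to an arbitrary (not necessarily finitely supported) mixing distribution $P_t$, which is what line 4 of \cite{foster2018logistic} actually produces --- you should just replace your finite sums by integrals. Your version is also tighter on one detail: you retain the factor $\lVert\vx\rVert_2$ (getting $|z^\star_k|\le S\lVert\vx\rVert_2$), which is precisely what makes the row bound $\lVert\bm\theta^{(k)}\rVert_2 = |z^\star_k|/\lVert\vx\rVert_2 \le S$ airtight when $\lVert\vx\rVert_2 < 1$; the paper's write-up discards $\lVert\vx\rVert_2$ (bounding the ratio by $e^S$) and its displayed realization $\bm\theta^{(k)} = \frac{\log\alpha_k^*}{\lVert\vx\rVert_2}\vx$ has the wrong normalization (the denominator should be $\lVert\vx\rVert_2^2$), so your handling actually repairs a small slip in the paper's argument. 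Finally, your reading of the comparator class as the Frobenius ball $\gB^{K\times d}(S)$ differs from the paper's row-wise ball $\gS = (\gB^d(S))^{\otimes K}$, but this is immaterial: your estimate only ever uses the row norms $\lVert\bm\theta_i^{(k)}\rVert_2 \le S$, which hold under either reading.
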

\begin{proof}
    From line 4 of Algorithm 1 of \cite{foster2018logistic} with $\mu = 0$, we have that for some distribution $P_t$ whose support is $\gS := \left(\gB^d(S)\right)^{\otimes K}$ (set of $K \times d$ matrices where the norm of each row is bounded by $S$),\footnote{The softmax considered in \cite{foster2018logistic} is actually of the form $\bm\sigma(\vz)_{k'} = \frac{e^{z_{k'}}}{\sum_{l \in \{0\} \cup [K]} e^{z_l}}$ for $k' \in \{0\} \cup [K+1]$. By dividing the denominator and numerator by $e^{z_0}$ and recalling that $z_k = \langle \vx, \bm\theta^{(k)} \rangle$, by triangle inequality, it can be seen that our parameter space, $\gS$, and the parameter space of \cite{foster2018logistic} with $B = S/2$, $\left(\gB^d(S/2)\right)^{\otimes (K+1)}$, are equivalent. In the notation of \cite{foster2018logistic}, we set $B = S/2$.}
    \begin{equation*}
        \bm\sigma(\hat{\vz}) = \E_{\bm\Theta \sim P_t}\left[ \bm\sigma(\bm\Theta \vx) \right].
    \end{equation*}
    
    Define $F : \gS \rightarrow \Delta^{K+1}_{>0}$ to be $F(\bm\Theta) = \bm\sigma(\bm\Theta \vx)$, which is continuous.
    We have the following two lemmas:
    \begin{lemma}
    \label{lem:expectation}
        Let $(\gX, P)$ be a probability space with the usual Borel $\sigma$-algebra, $Y \subset \gH$ be a compact, convex subset of a separable, Hilbert space $\gH$, and $F : \gX \rightarrow Y$ be (Bochner) measurable.
        Then, for any random variable $X$ on $\gX$, we have that $\E[F(X)] \in Y$.
    \end{lemma}
    \begin{lemma}
    \label{lem:convex}
        $\mathrm{conv}\left(F(\gS)\right) \subseteq F(\gB^{K \times d}(\sqrt{K} S))$, where $\mathrm{conv}(\cdot)$ is the convex hull operator.
    \end{lemma}

    The proof then concludes as the following:
    by the above two lemmas, we have that $\bm\sigma(\hat{\vz}) = \E[F(\bm\Theta)] \in F(\gB^{K \times d}(\sqrt{K}S))$, i.e., there exists $\bm\Theta \in \gB^{K \times d}(\sqrt{K} S)$ such that $\bm\sigma(\hat{\vz}) = F(\bm\Theta)$.
\end{proof}

\subsubsection{Proof of Lemma~\ref{lem:expectation}}
(The proof here is inspired by an old \href{https://math.stackexchange.com/questions/1576097/integral-0f-a-function-with-range-in-convex-set-is-in-the-convex-set}{StackExchange post}. Also, see e.g., \cite{functional} for the necessary background on functional analysis.)

It is clear that $\E[F(X)]$ exists.
The proof now proceeds via {\it reductio ad absurdum}, i.e., suppose that $e \triangleq \E[F(X)] \not\in Y$.
Then, as $\{e\}$ and $Y$ are disjoint, compact, and convex sets in a separable Hilbert space, by the Hahn-Banach Separation Theorem and Riesz Representation Theorem, there exists a $v \in \gH$ such that
\begin{equation*}
    \langle v, F(x) \rangle < \langle v, e \rangle, \quad \forall x \in \gX.
\end{equation*}
Then, we have that
\begin{equation*}
    \int_\gX \langle v, F(x) \rangle dP(x)
    = \left\langle v, \int_\gX F(x) dP(x) \right \rangle
    = \langle v, e \rangle
    < \langle v, e \rangle,
\end{equation*}
a contradiction.
\hfill\qedsymbol

\subsubsection{Proof of Lemma~\ref{lem:convex}}
Let $\bm\Theta_1, \bm\Theta_2 \in \gS$ and $\lambda \in [0, 1]$.
We will show that $\lambda F(\bm\Theta_1) + (1 - \lambda) F(\bm\Theta_2) \in F(\gB^{K \times d}(\sqrt{K} S))$.

First, for some given $\vp = (p_1, \cdots, p_K)^\intercal$, we show that there exists $\bm\Theta = [\bm\theta^{(1)} | \cdots | \bm\theta^{(K)}]^\intercal$ that satisfies the following system of equations: for each $k \in [K]$,
\begin{equation*}
    \frac{\exp\left( \langle \vx, \bm\theta^{(k)} \rangle \right)}{1 + \sum_{k' \in [K]} \exp\left( \langle \vx, \bm\theta^{(k')} \rangle \right)} = p_k.
\end{equation*}
Denoting $\alpha_k := \exp\left( \langle \vx, \bm\theta^{(k)} \rangle \right)$, above can be rearranged to the following system of equations:
\begin{equation*}
    \underbrace{\begin{bmatrix}
        1 - p_1 & -p_1 & \cdots & -p_1 \\
        -p_2 & 1 - p_2 & \cdots & -p_2 \\
        \vdots & \vdots & \vdots & \vdots \\
        -p_K & -p_K & \cdots & 1 - p_K
    \end{bmatrix}}_{\triangleq \mC_K}
    \begin{bmatrix}
        \alpha_1 \\ \alpha_2 \\ \vdots \\ \alpha_K
    \end{bmatrix}
    =
    \begin{bmatrix}
        p_1 \\ p_2 \\ \vdots \\ p_K
    \end{bmatrix}.
\end{equation*}
From simple computation, one can easily see that
\begin{equation*}
    \mC_K^{-1}
    = \frac{1}{p_0} \vp \bm1^\intercal + \mI_K,
\end{equation*}
where we recall that $p_0 = 1 - \sum_{k=1}^K p_k$.
This gives a unique solution
\begin{equation*}
    \alpha_k^* = \frac{p_k}{p_0} > 0.
\end{equation*}
Then, we arrive at another system of linear equations: $\vx^\intercal \bm\theta^{(k)} = \log \alpha_k^*$ for each $k \in [K]$.
One can easily see that $\bm\theta^{(k)} = \frac{\log \alpha_k^*}{\lVert \vx \rVert_2} \vx$ satisfies the system.

All in all, we showed that there exists a $\bm\Theta$ such that $\lambda F(\bm\Theta_1) + (1 - \lambda) F(\bm\Theta_2) = F(\bm\Theta)$ and
\begin{equation*}
    \lVert \bm\Theta \rVert_F^2 = \sum_{k \in [K]} \left( \log\alpha_k^* \right)^2,
\end{equation*}
where in our case,
\begin{equation*}
    p_k = \lambda \frac{\exp\left( \langle \vx, \bm\theta_1^{(k)} \rangle \right)}{1 + \sum_{k' \in [K]} \exp\left( \langle \vx, \bm\theta_1^{(k')} \rangle \right)} + (1 - \lambda) \frac{\exp\left( \langle \vx, \bm\theta_2^{(k)} \rangle \right)}{1 + \sum_{k' \in [K]} \exp\left( \langle \vx, \bm\theta_2^{(k')} \rangle \right)}.
\end{equation*}
Then,
\begin{align*}
    \frac{p_k}{p_0} &= \frac{\lambda \frac{\exp\left( \langle \vx, \bm\theta_1^{(k)} \rangle \right)}{1 + \sum_{k' \in [K]} \exp\left( \langle \vx, \bm\theta_1^{(k')} \rangle \right)} + (1 - \lambda) \frac{\exp\left( \langle \vx, \bm\theta_2^{(k)} \rangle \right)}{1 + \sum_{k' \in [K]} \exp\left( \langle \vx, \bm\theta_2^{(k')} \rangle \right)}}{\lambda \frac{1}{1 + \sum_{k' \in [K]} \exp\left( \langle \vx, \bm\theta_1^{(k')} \rangle \right)} + (1 - \lambda) \frac{1}{1 + \sum_{k' \in [K]} \exp\left( \langle \vx, \bm\theta_2^{(k')} \rangle \right)}} \\
    &\leq \frac{\lambda \frac{e^S}{1 + \sum_{k' \in [K]} \exp\left( \langle \vx, \bm\theta_1^{(k')} \rangle \right)} + (1 - \lambda) \frac{e^S}{1 + \sum_{k' \in [K]} \exp\left( \langle \vx, \bm\theta_2^{(k')} \rangle \right)}}{\lambda \frac{1}{1 + \sum_{k' \in [K]} \exp\left( \langle \vx, \bm\theta_1^{(k')} \rangle \right)} + (1 - \lambda) \frac{1}{1 + \sum_{k' \in [K]} \exp\left( \langle \vx, \bm\theta_2^{(k')} \rangle \right)}} \tag{$\bm\Theta_i \in \gS$, i.e., $\left\lVert \bm\theta_i^{(k)} \right\rVert_2 \leq S$ for each $k \in [K]$} \\
    &= e^S,
\end{align*}
and thus,
\begin{align*}
    \lVert \bm\Theta \rVert_F^2 &\leq K S^2.
\end{align*}
\hfill\qedsymbol
\newpage
\section{PROOFS - LOGISTIC BANDITS}
\subsection{Notations}
Recall from the main text that $\gL_t(\bm\theta) := \sum_{s=1}^t \ell_s(\bm\theta)$ is the cumulative {\it unregularized} logistic loss up to time $t$.
We also consider the following quantities~\citep{abeille2021logistic}:
\begin{align}
    \widetilde{\alpha}(\vx, \bm\theta_1, \bm\theta_2) &:= \int_0^1 (1 - v) \dot{\mu}\left( \vx^\intercal (\bm\theta_1 + v(\bm\theta_2 - \bm\theta_1)) \right) dv \\
    \widetilde{\mG}_t(\bm\theta_1, \bm\theta_2) &:= \sum_{s=1}^{t-1} \widetilde{\alpha}(\vx_s, \bm\theta_1, \bm\theta_2) \vx_s \vx_s^\intercal + \lambda_t \mI_d \\
    \mH_t(\bm\theta) &:= \sum_{s=1}^{t-1} \dot{\mu}(\vx_s^\intercal \bm\theta) \vx_s \vx_s^\intercal + \lambda_t \mI_d,
\end{align}
where $\lambda_t > 0$ is to be determined, and the following problem-dependent constants:
\begin{equation}
    \kappa_\star(T) := \frac{1}{\frac{1}{T} \sum_{t=1}^T \dot{\mu}(\vx_{t,\star}^\intercal \bm\theta_\star)},
    \ \kappa_{\gX}(T) := \max_{t \in [T]} \max_{\vx \in \gX_t} \frac{1}{\dot{\mu}(\vx^\intercal \bm\theta_\star)},
    \ \kappa(T) := \max_{t \in [T]} \max_{\vx \in \gX_t} \max_{\bm\theta \in \gB^d(S)} \frac{1}{\dot{\mu}(\vx^\intercal \bm\theta)},
\end{equation}
where $\vx_{t,\star} = \argmax_{\vx \in \gX_t} \mu(\langle \vx, \bm\theta_\star \rangle)$ is the optimal action at time $t$.
Also, we overload the notation and define $A \lesssim B$ to be when we have $A \leq c B$ for some {\it universal} constant $c$, not ignoring logarithmic factors.

\subsection{Full Theorem Statements for Regret Bounds}
\label{app:full-statements}
We provide full theorem statements for our regret and prior arts for logistic bandits.
We start by providing the regret bound of our \texttt{OFULog+}:
\begin{theorem}
\label{thm:new-regret-logistic-full}
    \texttt{OFULog+} attains the following regret bound:
    \begin{equation*}
        \Reg^B(T) \lesssim R_{leading}(T) + R_{log}(T) + R_{detr}(T),
    \end{equation*}
    where w.p. at least $1 - \delta$,
    \begin{align*}
        R_{leading}(T) &:= \left( dS \log\left(e + \frac{ST}{d}\right) + \sqrt{d}S \log\frac{1}{\delta} \right) \sqrt{\frac{T}{\kappa_\star(T)}}, \\
        R_{log}(T) &:= d^2 S^2 \left( \log\left(e + \frac{ST}{d}\right) \right)^2 + d S^2 \left( \log \frac{1}{\delta} \right)^2, \\
        R_{detr}(T) &:= \min\left\{ \kappa_\gX(T) R_{log}(T), S \sum_{t=1}^T \mu(\vx_{t,\star}^\intercal \bm\theta_\star) \mathds{1}[\vx_t \in \gX_-(t)] \right\},
    \end{align*}
    where $\gX_-(t)$ is the set of detrimental arms at time $t$ as defined in \cite{abeille2021logistic}.
\end{theorem}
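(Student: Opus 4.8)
The plan is to graft our two new ingredients --- the tightened confidence set of Theorem~\ref{thm:confidence-logistic} and the Hessian-distance bound of Lemma~\ref{lem:H-norm} --- onto the local regret analysis of \cite{abeille2021logistic} (their Appendix C), re-tracking every $S$-dependency along the way. First I would condition on the event $\{\forall t,\ \bm\theta_\star \in \gC_t(\delta)\}$, which by Theorem~\ref{thm:confidence-logistic} holds with probability at least $1-\delta$. On this event optimism applies: since $(\vx_t,\bm\theta_t)$ maximizes $\mu(\langle \vx,\bm\theta\rangle)$ over $\gX_t\times\gC_t(\delta)$ while $\vx_{t,\star}\in\gX_t$ and $\bm\theta_\star\in\gC_t(\delta)$, we get $\mu(\langle \vx_{t,\star},\bm\theta_\star\rangle)\le\mu(\langle \vx_t,\bm\theta_t\rangle)$, so the per-round regret is bounded by the prediction gap $\mu(\langle \vx_t,\bm\theta_t\rangle)-\mu(\langle \vx_t,\bm\theta_\star\rangle)$.

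Next I would expand this gap by a first-order Taylor expansion around $\bm\theta_\star$ and invoke the self-concordant control (Lemma~\ref{lem:self-concordant}) exactly as in \cite{abeille2021logistic}, splitting the gap into a first-order term and a second-order self-concordant residual. Bounding $\langle\vx_t,\bm\theta_t-\bm\theta_\star\rangle\le\lVert\vx_t\rVert_{\mH_t(\bm\theta_\star)^{-1}}\lVert\bm\theta_t-\bm\theta_\star\rVert_{\mH_t(\bm\theta_\star)}$ by Cauchy--Schwarz and then substituting the crucial Lemma~\ref{lem:H-norm} --- which replaces Abeille et al.'s $\gO(S^{3/2})$-scale control of $\lVert\bm\theta_t-\bm\theta_\star\rVert_{\mH_t(\bm\theta_\star)}$ by $\gamma_t(\delta)=S\sqrt{d\log(e+St/d)+\log(1/\delta)}$ and thereby saves a factor of $\sqrt S$ --- bounds the first-order term by $\gamma_t(\delta)\,\dot\mu(\vx_t^\intercal\bm\theta_\star)\lVert\vx_t\rVert_{\mH_t(\bm\theta_\star)^{-1}}$. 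Summing over $t$ with a second Cauchy--Schwarz that peels off a factor $\dot\mu(\vx_t^\intercal\bm\theta_\star)^{1/2}$ together with a weighted elliptical potential lemma, and then converting the played-arm curvature $\sum_t\dot\mu(\vx_t^\intercal\bm\theta_\star)$ into the optimal-arm curvature $\sum_t\dot\mu(\vx_{t,\star}^\intercal\bm\theta_\star)=T/\kappa_\star(T)$ through the detrimental-arm dichotomy of \cite{abeille2021logistic}, yields $R_{leading}(T)$; the benign arms supply the $\sqrt{T/\kappa_\star(T)}$ factor and the detrimental arms are deferred into $R_{detr}(T)$.

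For the second-order self-concordant residual I would sum the quadratic-in-$\lVert\vx_t\rVert_{\mH_t(\bm\theta_\star)^{-1}}$ corrections, which are controlled by the refined elliptical potential count lemmas (Lemma~\ref{lem:EPCL}, \ref{lem:EPL}); combined with the $\gamma_t(\delta)^2$ scaling this produces $R_{log}(T)=d^2S^2\big(\log(e+ST/d)\big)^2+dS^2\big(\log\tfrac1\delta\big)^2$. The same count lemma bounds the detrimental contribution either by $\kappa_\gX(T)$ times this lower-order term or, when the arm-set is benign, by $S\sum_t\mu(\vx_{t,\star}^\intercal\bm\theta_\star)\mathds{1}[\vx_t\in\gX_-(t)]$, giving the minimum in $R_{detr}(T)$. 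Collecting the three pieces and folding the failure probabilities of Theorem~\ref{thm:confidence-logistic} and Lemma~\ref{lem:H-norm} into a single $\delta$ by a union bound completes the argument.

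The main obstacle is the faithful $S$-bookkeeping in the passage from the played-arm geometry $\mH_t(\bm\theta_\star)$ to the optimal-arm curvature $\kappa_\star(T)$: a naive treatment of the first-order term reintroduces a spurious $\sqrt S$ --- precisely the factor we set out to remove --- so the local analysis must be rerun with the \emph{weighted} elliptical potential and the tighter $\gamma_t(\delta)$, while the refined count lemma (Remark~\ref{rmk:EPCL}) is exactly what prevents an extra $S$ from re-entering the lower-order term. Verifying that every hidden constant in this re-tracking is genuinely independent of $S$ --- rather than buried in a $\kappa$ or in the proof-only choice of $\lambda_t=\tfrac{1}{4S^2(2+2S)}$ from Lemma~\ref{lem:H-norm} --- is the delicate part.
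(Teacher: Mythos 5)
Your proposal is correct and follows essentially the same route as the paper: condition on Theorem~\ref{thm:confidence-logistic}, rerun the local self-concordant analysis of \cite{abeille2021logistic} (their Appendix C) with Lemma~\ref{lem:H-norm} replacing their Hessian-distance control, and use the elliptical potential count lemma (Lemma~\ref{lem:EPCL}) together with Lemma~\ref{lem:EPL} so that no extra factors of $S$ re-enter. The one detail you gloss over: the conversion of the played-arm curvature $\sum_{t} \dot{\mu}(\vx_t^\intercal \bm\theta_\star)$ into $T/\kappa_\star(T)$ is obtained via self-concordance of $\dot{\mu}$ (not via the detrimental-arm dichotomy, which only enters $R_{detr}(T)$), and it leaves an additive $\Reg^B(T)$ inside the square root, so the resulting self-bounding inequality $\Reg^B(T) \lesssim b\sqrt{\Reg^B(T)} + c$ must be closed with the polynomial inequality $x^2 \leq bx + c \Rightarrow x^2 \leq 2(b^2 + c)$ (Lemma~\ref{lem:poly}), as the paper does in its final step.
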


We now provide the prior state-of-the-art regret bounds that we compare ourselves to:
\begin{theorem}[Theorem 1 of \cite{abeille2021logistic}]
\label{thm:old-regret-logistic-full1}
    \texttt{OFULog} with $\lambda_t = \frac{d}{S} \log\frac{St}{d \delta}$ attains the following regret bound:
    \begin{equation*}
        \Reg^B(T) \leq R_{leading}(T) + R_{log}(T) + R_{detr}(T),
    \end{equation*}
    where w.p. at least $1 - \delta$,
    \begin{align*}
        R_{leading}(T) &\lesssim d S^{\frac{3}{2}} (\log T) \left( \log\left(1 + \frac{ST}{d}\right) + \log\frac{1}{\delta} \right) \sqrt{\frac{T}{\kappa_\star(T)}}, \\
        R_{log}(T) &\lesssim d^2 S^3 \left( \log T \right)^2 \left( \log\left(1 + \frac{ST}{d}\right) + \log\frac{1}{\delta} \right)^2, \\
        R_{detr}(T) &\lesssim \min\left\{ \kappa_\gX(T) R_{log}(T), S \sum_{t=1}^T \mu(\vx_{t,\star}^\intercal \bm\theta_\star) \mathds{1}[\vx_t \in \gX_-(t)] \right\}.
    \end{align*}
\end{theorem}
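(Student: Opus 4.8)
The plan is to follow the regret analysis of \cite{abeille2021logistic} essentially verbatim in its logic, treating the stated bound as a re-tracking of their Theorem~1 in which the regularization $\lambda_t = \frac{d}{S}\log\frac{St}{d\delta}$ is chosen to balance the confidence radius against the elliptical-potential term (rather than their original $\lambda_t = d\log\frac{t}{\delta}$). No new conceptual ingredient is needed beyond what is already invoked in Section~\ref{sec:logistic-regret}: the self-concordance control (Lemma~\ref{lem:self-concordant}) and the elliptical potential machinery. The skeleton is the usual optimistic four-step argument, confidence guarantee $\to$ optimism $\to$ self-concordant conversion of the reward gap into a Hessian-weighted prediction error $\to$ summation via elliptical potential and the local analysis; the only genuinely new labor is bookkeeping the powers of $S$ through each quantity.

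First I would record the confidence guarantee for the \emph{relaxed} (convex) confidence set of \cite{abeille2021logistic}: with the above $\lambda_t$, one has $\bm\theta_\star \in \gC_t(\delta)$ for all $t$ with probability at least $1-\delta$, and the radius satisfies $\beta_t(\delta)^2 = \gO(dS^3\log t)$. From this I would derive the Hessian-norm control $\gamma_t(\delta)^2 := \sup_{\bm\theta\in\gC_t(\delta)}\|\bm\theta-\bm\theta_\star\|_{\mH_t(\bm\theta_\star)}^2$ exactly as in their Lemma~1 (the na\"{i}ve Cauchy--Schwarz plus self-concordant step, which is precisely the place our Lemma~\ref{lem:H-norm} later improves by a factor of $\sqrt{S}$), obtaining $\gamma_t(\delta) = \widetilde\gO(\sqrt{d}\,S^{3/2})$. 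Next, optimism: since $(\vx_t,\bm\theta_t)$ maximizes $\mu(\langle\vx,\bm\theta\rangle)$ over $\gX_t\times\gC_t(\delta)$ and $\bm\theta_\star\in\gC_t(\delta)$, the instantaneous regret is bounded by $\mu(\langle\vx_t,\bm\theta_t\rangle)-\mu(\langle\vx_t,\bm\theta_\star\rangle)$, which a mean-value/self-concordance argument converts into a $\dot\mu$-weighted prediction error, and Cauchy--Schwarz gives $|\langle\vx_t,\bm\theta_t-\bm\theta_\star\rangle|\le \|\vx_t\|_{\mH_t(\bm\theta_\star)^{-1}}\,\gamma_t(\delta)$.

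Summing over $t$ would then be done with the detrimental-arm decomposition of Section~4 of \cite{abeille2021logistic}, which partitions the rounds so that on the ``productive'' part the local equivalence $\sum_t \dot\mu(\langle\vx_t,\bm\theta_\star\rangle)\asymp T/\kappa_\star(T)$ holds; combining this with Cauchy--Schwarz over time and the elliptical potential bound $\sum_t \|\vx_t\|_{\mH_t(\bm\theta_\star)^{-1}}^2\lesssim d\log(1+\tfrac{ST}{d})$ (this is where $\lambda_t$ enters) yields $R_{leading}(T)\sim \gamma_T\sqrt{d}\sqrt{T/\kappa_\star(T)} = dS^{3/2}\sqrt{T/\kappa_\star(T)}$, the leftover quadratic residuals yield $R_{log}(T)\sim \gamma_T^2\, d\log = d^2S^3$, and the detrimental-arm mass yields $R_{detr}(T)$, with the $\min$ arising from the two alternative ways of bounding that mass (crudely by $\kappa_\gX(T)R_{log}(T)$, or directly by the arm-set-dependent sum). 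I expect the main obstacle to be purely the bookkeeping: propagating $\beta_t(\delta)^2 = \gO(dS^3\log t)$ and $\gamma_t(\delta)=\widetilde\gO(\sqrt{d}\,S^{3/2})$ through the productive-arm sum so that the leading and lower-order powers come out \emph{exactly} as $dS^{3/2}$ and $d^2S^3$ rather than one factor of $S$ too large, and re-verifying that the optimized $\lambda_t$ does not degrade the elliptical potential term --- this is exactly the subtlety the footnote in Section~\ref{subsec:logistic-regret} warns about when it contrasts the original choice $\lambda_t = d\log\frac{t}{\delta}$ with the re-tracked one.
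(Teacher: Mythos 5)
Your high-level plan---re-run the \texttt{OFULog} analysis of \cite{abeille2021logistic} with $\lambda_t = \frac{d}{S}\log\frac{St}{d\delta}$ and re-track the powers of $S$---is indeed all that stands behind this statement: the paper gives no independent proof of Theorem~\ref{thm:old-regret-logistic-full1}, which is prior art restated under the re-tracked regularization (see the footnote in Section~\ref{subsec:logistic-regret}), and your optimism/self-concordance/elliptical-potential/detrimental-arm skeleton matches that proof. The gap is in \emph{which} confidence set you feed into this skeleton. You anchor everything on the \emph{relaxed convex} (loss-based) confidence set with radius $\beta_t(\delta)^2 = \gO(dS^3\log t)$. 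That set is not the one behind Theorem~1 of \cite{abeille2021logistic}: \texttt{OFULog} uses their original \emph{parameter-based, nonconvex} set $\left\{\bm\theta : \lVert g_t(\bm\theta)-g_t(\widehat{\bm\theta}_t)\rVert_{\mH_t^{-1}(\bm\theta)} \le \gamma_t(\delta)\right\}$, where $g_t$ is their regularized score function (this nonconvexity is exactly why \texttt{OFULog} is marked intractable in Table~\ref{tab:results}), whereas the convex relaxation is what defines \texttt{OFULog-r} and yields the strictly worse bound of Theorem~\ref{thm:old-regret-logistic-full2}, with $dS^{5/2}$ in the leading term and $d^2S^4$ in the lower-order term.

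This is not a cosmetic mix-up; it breaks your arithmetic. You pass from $\beta_t(\delta)^2=\gO(dS^3\log t)$ to $\gamma_t(\delta)=\widetilde{\gO}(\sqrt{d}\,S^{3/2})$, i.e., you treat the ``na\"{i}ve Cauchy--Schwarz plus self-concordance'' conversion as lossless---while in the same sentence correctly identifying it as the lossy step that Lemma~\ref{lem:H-norm} later improves. Any honest version of that conversion costs at least one extra factor of $S$ (even the improved Lemma~\ref{lem:H-norm} pays $\lVert\bm\theta-\bm\theta_\star\rVert^2_{\mH_t(\bm\theta_\star)}\lesssim S\,\beta_t(\delta)^2+\cdots$), so starting from the relaxed set your leading term comes out at least a factor $\sqrt{S}$ too large and the lower-order term a factor $S$ too large; this loss is precisely the documented gap between \texttt{OFULog} and \texttt{OFULog-r} in Table~\ref{tab:results}, so your route proves the wrong theorem. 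The correct bookkeeping goes through the parameter-based set: the self-normalized concentration gives a radius of order $\sqrt{\lambda_t}\,S+\frac{d}{\sqrt{\lambda_t}}\log\frac{t}{\delta}$, which the stated $\lambda_t$ balances to $\gamma_t(\delta)=\gO\bigl(\sqrt{dS\log(St/(d\delta))}\bigr)$; the Lemma-1-type conversion of \cite{abeille2021logistic} then costs a factor linear in $S$, giving $\lVert\bm\theta-\bm\theta_\star\rVert_{\mH_t(\bm\theta_\star)}\lesssim\sqrt{d}\,S^{3/2}$ up to logarithms, and your summation steps (which are fine as written) then produce exactly the stated $R_{leading}$, $R_{log}$, and $R_{detr}$.
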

\begin{theorem}[Theorem 2 of \cite{abeille2021logistic}]
\label{thm:old-regret-logistic-full2}
    \texttt{OFULog-r} with $\lambda_t = \frac{d}{S} \log\frac{St}{d \delta}$ attains the following regret bound:
    \begin{equation*}
        \Reg^B(T) \leq R_{leading}(T) + R_{log}(T) + R_{detr}(T),
    \end{equation*}
    where w.p. at least $1 - \delta$,
    \begin{align*}
        R_{leading}(T) &\lesssim d S^{\frac{5}{2}} (\log T) \left( \log\left(1 + \frac{ST}{d}\right) + \log\frac{1}{\delta} \right) \sqrt{\frac{T}{\kappa_\star(T)}}, \\
        R_{log}(T) &\lesssim d^2 S^4 \left( \log T \right)^2 \left( \log\left(1 + \frac{ST}{d}\right) + \log\frac{1}{\delta} \right)^2, \\
        R_{detr}(T) &\lesssim \min\left\{ \kappa_\gX(T) R_{log}(T), S \sum_{t=1}^T \mu(\vx_{t,\star}^\intercal \bm\theta_\star) \mathds{1}[\vx_t \in \gX_-(t)] \right\}.
    \end{align*}
\end{theorem}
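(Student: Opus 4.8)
The plan is to derive the \texttt{OFULog-r} bound by mirroring the regret analysis of \texttt{OFULog} (Theorem~\ref{thm:old-regret-logistic-full1}), since the two algorithms differ only in their confidence set: \texttt{OFULog} uses the tight nonconvex likelihood-ratio set, whereas \texttt{OFULog-r} uses a convex relaxation of it. First I would verify that the relaxed convex set $\gC_t(\delta)$ still contains $\bm\theta_\star$ with probability at least $1-\delta$. The relaxation is obtained by applying the self-concordance control of Lemma~\ref{lem:self-concordant} to replace the parameter-dependent Hessian $\mH_t(\bm\theta)$ appearing in the tight set by the fixed $\mH_t(\widehat{\bm\theta}_t)$, at the cost of a multiplicative $(1+2S)$-type factor; the regularization $\lambda_t = \frac{d}{S}\log\frac{St}{d\delta}$ is chosen so that $\mH_t$ is invertible and the radius is as claimed.

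Next I would bound the instantaneous regret. By the optimistic choice of $(\vx_t,\bm\theta_t)$, the per-round regret $\mu(\langle \vx_{t,\star},\bm\theta_\star\rangle)-\mu(\langle \vx_t,\bm\theta_\star\rangle)$ is controlled by the prediction gap between the optimistic pair and $\bm\theta_\star$. Invoking self-concordance of $\mu$ once more converts this gap into a $\dot{\mu}$-weighted term plus the Hessian-induced distance $\lVert \bm\theta_t - \bm\theta_\star \rVert_{\mH_t(\bm\theta_\star)}$. The crux is then a uniform bound on $\lVert \bm\theta - \bm\theta_\star \rVert_{\mH_t(\bm\theta_\star)}$ for every $\bm\theta \in \gC_t(\delta)$: because the relaxed set is wider than the tight one by exactly the self-concordance factor above, this squared distance $\gamma_t^2$ is larger by a factor of $S$ than the corresponding quantity in the \texttt{OFULog} analysis. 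This single extra $\sqrt{S}$ in the per-round width is precisely what propagates to the extra factor of $S$ in both the leading and the lower-order terms.

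Finally I would sum the per-round regret over $t \in [T]$. The leading term follows from the elliptical potential lemma applied to $\sum_t \lVert \vx_t \rVert_{\mH_t(\bm\theta_\star)^{-1}}^2$, combined with a local, instance-wise analysis that extracts the $1/\kappa_\star(T)$ factor and isolates the $\sqrt{T}$ scaling; the lower-order term follows from an elliptical-potential-count argument yielding $\kappa_\gX(T)$, or, through the detrimental-arm refinement of Section~4 of \cite{abeille2021logistic}, the arm-set-dependent $R_\gX(T)$ inside the minimum. Carrying the extra $\sqrt{S}$ from the previous step through both sums produces the claimed $d S^{5/2}\sqrt{T/\kappa_\star(T)}$ leading term and $d^2 S^4 \kappa_\gX(T)$ lower-order term.

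The hard part will be the local analysis rather than the relaxation bookkeeping: one must separate the genuinely leading $\sqrt{T/\kappa_\star(T)}$ behavior from the lower-order term while ensuring the convex relaxation contributes exactly one factor of $S$ to each and not more. This requires faithfully reproducing the intricate instance-wise argument of \cite{abeille2021logistic}, in which the self-concordant lower bounds on the Hessian must be applied locally around each $\vx_{t,\star}$; any looser application of self-concordance would inflate the $S$-dependence beyond $S^{5/2}$ and $S^4$.
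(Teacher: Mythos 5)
First, a framing point: the paper itself contains no proof of Theorem~\ref{thm:old-regret-logistic-full2}. It is a restatement of Theorem~2 of \cite{abeille2021logistic}, quoted for comparison purposes, with the $S$-dependencies re-tracked under the choice $\lambda_t = \frac{d}{S}\log\frac{St}{d\delta}$ (see the footnote in Section~\ref{subsec:logistic-regret}). So your proposal can only be judged as a reconstruction of the cited argument, and at that level your plan is the intended route: re-run the \texttt{OFULog} analysis of Theorem~\ref{thm:old-regret-logistic-full1} and pay for the convex relaxation via self-concordance. One inaccuracy worth flagging on the way: the relaxation in \cite{abeille2021logistic} is not obtained by replacing the parameter-dependent Hessian $\mH_t(\bm\theta)$ by $\mH_t(\widehat{\bm\theta}_t)$ inside the gradient-norm-based set (that set would generally still be nonconvex, since the gradient map of the loss is nonlinear in $\bm\theta$); it is the passage to the \emph{loss-based} set $\{\bm\theta : \gL_t(\bm\theta)-\gL_t(\widehat{\bm\theta}_t) \le \tilde\beta_t(\delta)^2\}$ with a radius inflated enough to contain the tight set.

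The genuine gap is your propagation arithmetic, which is precisely the step you defer as ``relaxation bookkeeping.'' In the decomposition you propose to mirror, the leading term is \emph{linear} in the confidence width $\gamma_T$ (it has the form $\gamma_T\sqrt{d\log(\cdot)}\sqrt{T/\kappa_\star(T)}$ after Cauchy--Schwarz and the elliptical potential lemma), while the lower-order term is \emph{quadratic} in $\gamma_T$ (of the form $d\,\gamma_T^2\,\kappa_\gX(T)$ up to logs); this structure is visible in the paper's own proof of Theorem~\ref{thm:new-regret-logistic-full}, and for \texttt{OFULog} it is consistent, with $\gamma_T^2 \approx dS^3$ producing the pair $\left( dS^{3/2}\sqrt{T/\kappa_\star(T)},\ d^2S^3\kappa_\gX(T) \right)$. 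You claim the relaxation inflates $\gamma_T^2$ by one factor of $S$, i.e.\ $\gamma_T$ by $\sqrt{S}$, and that this single $\sqrt{S}$ yields the extra factor of $S$ in \emph{both} stated terms. That cannot happen: an extra $\sqrt{S}$ in $\gamma_T$ gives an extra $\sqrt{S}$ in the linear leading term and an extra $S$ in the quadratic lower-order term, i.e.\ the pair $\left( dS^{2}\sqrt{T/\kappa_\star(T)},\ d^2S^4\kappa_\gX(T) \right)$; conversely, reaching the stated $dS^{5/2}$ leading term requires $\gamma_T$ inflated by a full $S$, which would drive the lower-order term to $d^2S^5\kappa_\gX(T)$. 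No uniform inflation of a single width reproduces the stated pair $(S^{5/2}, S^4)$, so the two regret components must be tracked separately through the relaxation, which enters them asymmetrically (the conversion of a loss difference into a Hessian quadratic form costs a $(2+2S)$ self-concordance factor plus a cross term in $\nabla\gL_t(\bm\theta_\star)$, and these hit the first- and second-order pieces differently). Relatedly, your justification of the $\sqrt{S}$ width inflation is itself unsubstantiated: the relaxed set's radius inflation is measured in loss units, not in the $\mH_t(\bm\theta_\star)$-norm, so the net effect on $\gamma_T$ is exactly the computation you have skipped.
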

\begin{theorem}[Theorem 2 of \cite{faury2022logistic}]
\label{thm:old-regret-logistic-full3}
    \texttt{ada-OFU-ECOLog} attains the following w.p. $1 - \delta$:
    \begin{equation*}
        \Reg^B(T) \lesssim dS \log\frac{1}{\delta} \sqrt{\frac{T}{\kappa_\star(T)}} + d^2 S^6 \kappa \left( \log\frac{1}{\delta} \right)^2.
    \end{equation*}
\end{theorem}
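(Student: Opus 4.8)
The plan is to recover the regret guarantee of \texttt{ada-OFU-ECOLog} by following its two structural ingredients: (i) an online estimator $\widehat{\bm\theta}_t$ of $\bm\theta_\star$ maintained at a per-round cost independent of the horizon via an online (mirror-descent / Newton-type) step on the logistic loss together with an explicit regularization level $\lambda$, and (ii) an optimistic action rule built around a likelihood-ratio confidence set centered at $\widehat{\bm\theta}_t$. First I would establish the confidence guarantee, namely that with probability at least $1-\delta$ one has $\lVert \bm\theta_\star - \widehat{\bm\theta}_t \rVert_{\mH_t(\bm\theta_\star)}^2 \lesssim \gamma_t(\delta)^2$ for a radius $\gamma_t(\delta)$ growing only logarithmically in $t$ and polynomially in $S$. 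This is the ECOLog analogue of Lemma~\ref{lem:H-norm}, the difference being that the iterate is produced online rather than by batch minimization, so the concentration must be phrased as a martingale control of the online prediction error. The workhorse throughout is the self-concordance of the logistic link (Lemma~\ref{lem:self-concordant}), which lets me replace the nonlinear loss increments by the local Hessian $\mH_t(\bm\theta_\star)$ up to a multiplicative factor of order $(2+2S)$.

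With the confidence set in hand I would bound the instantaneous regret $r_t = \mu(\langle \vx_{t,\star},\bm\theta_\star\rangle) - \mu(\langle \vx_t,\bm\theta_\star\rangle)$ by optimism: since the played pair $(\vx_t,\bm\theta_t)$ maximizes the optimistic mean, a first-order mean-value expansion of $\mu$ yields $r_t \lesssim \dot\mu(\langle \vx_t, \bar{\bm\theta}_t\rangle)\,\lvert \langle \vx_t, \bm\theta_t - \bm\theta_\star\rangle\rvert$ plus a second-order self-concordant correction, and Cauchy--Schwarz against $\mH_t(\bm\theta_\star)^{-1}$ gives
\begin{equation*}
r_t \;\lesssim\; \dot\mu(\langle \vx_t, \bar{\bm\theta}_t\rangle)\,\lVert \vx_t \rVert_{\mH_t(\bm\theta_\star)^{-1}}\,\gamma_t(\delta).
\end{equation*}
The delicate point, and the source of the sharp leading term, is to realize the effective local slope as the \emph{optimal-arm} curvature $1/\kappa_\star(T)$ rather than the worst-case $1/\kappa$: this is exactly the local analysis of \cite{abeille2021logistic}, where one argues that whenever $\vx_t$ is not a detrimental arm its slope $\dot\mu(\langle \vx_t,\bm\theta_\star\rangle)$ is comparable to $\dot\mu(\langle \vx_{t,\star},\bm\theta_\star\rangle)$.

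I would then sum over $t$. Factoring $\dot\mu(\cdot)\lVert \vx_t\rVert_{\mH_t^{-1}} = \sqrt{\dot\mu(\cdot)}\cdot\sqrt{\dot\mu(\cdot)}\,\lVert \vx_t\rVert_{\mH_t^{-1}}$ and applying Cauchy--Schwarz across rounds, the elliptical potential lemma bounds $\sqrt{\sum_t \dot\mu(\cdot)\lVert \vx_t\rVert^2_{\mH_t^{-1}}} \lesssim \sqrt{d\log(\cdot)}$, while $\sqrt{\sum_t \dot\mu(\langle \vx_{t,\star},\bm\theta_\star\rangle)} = \sqrt{T/\kappa_\star(T)}$; with $\gamma_t(\delta)\sim \sqrt{d}\,S\log\tfrac{1}{\delta}$ this multiplies out to the leading term $dS\log\tfrac{1}{\delta}\sqrt{T/\kappa_\star(T)}$. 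The remaining contributions --- the second-order self-concordant corrections in the mean-value expansion, the cost incurred on detrimental arms, the regularization bias from $\lambda$, and the online estimation transient --- are all lower order in $T$ but carry the worst-case curvature $\kappa$, and collecting them yields the transient term $d^2 S^6 \kappa (\log\tfrac{1}{\delta})^2$. The main obstacle I anticipate is precisely the bookkeeping of these second-order terms: the high power $S^6$ arises from compounding the online confidence radius with the self-concordant correction (each contributing its own powers of $S$) and then squaring, so the care lies in keeping this quantity confined to the lower-order term and preventing any of its $\kappa$- or $S$-dependence from leaking into the leading $\sqrt{T}$ scaling.
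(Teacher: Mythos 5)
The first thing to note is that the paper does not prove this statement at all: Theorem~\ref{thm:old-regret-logistic-full3} is a restatement of Theorem~2 of \cite{faury2022logistic}, listed in Appendix~\ref{app:full-statements} solely so that the paper's new bound (Theorem~\ref{thm:new-regret-logistic-full}) can be compared against prior art. The paper's entire justification is the citation itself, so there is no internal proof against which your attempt can be matched.

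Judged on its own as a reconstruction of the proof in \cite{faury2022logistic}, your sketch gets the outer shell right --- optimism, the self-concordance control (Lemma~\ref{lem:self-concordant}), Cauchy--Schwarz against $\mH_t(\bm\theta_\star)^{-1}$, the elliptical potential lemma, and the local analysis of \cite{abeille2021logistic} that trades slopes at played arms for $\kappa_\star(T)$ plus a quadratic inequality in $\sqrt{\Reg^B(T)}$. But the core of your argument is assumed rather than proved: step (i), the claim that a constant-per-round \emph{online} estimator satisfies $\lVert \bm\theta_\star - \widehat{\bm\theta}_t \rVert^2_{\mH_t(\bm\theta_\star)} \lesssim \gamma_t(\delta)^2$ with $\gamma_t(\delta) \sim \sqrt{d}\, S \log\frac{1}{\delta}$, is exactly the main technical contribution of \cite{faury2022logistic}; it is obtained there via online-Newton-type updates on strongly convex surrogate losses together with a dedicated online-to-confidence-set argument, not by a generic ``martingale control of the online prediction error.'' In particular, it cannot be obtained by adapting Lemma~\ref{lem:H-norm}, whose proof crucially exploits that the center is the batch constrained MLE (this is what licenses the step $\gL_t(\widehat{\bm\theta}_t) \leq \gL_t(\bm\theta_\star)$); an online iterate enjoys no such property. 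The same holds for the two features that make the theorem what it is: the $S^6$ bookkeeping and the adaptive (``ada'') mechanism that confines $\kappa$ to the transient term are asserted, not derived. So your proposal is a plausible roadmap, but the step carrying all of the difficulty is, in effect, the cited theorem in disguise.
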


Lastly, although incomparable to our setting, for completeness, we provide the regret bound as provided in \cite{mason2022logistic} for fixed arm-set setting:
\begin{theorem}[Theorem 2 and Corollary 3 of \cite{mason2022logistic}]
\label{thm:old-regret-logistic-full4}
    \texttt{HOMER} with the naive warmup attains the following w.p. $1 - \delta$:
    \begin{equation*}
        \Reg^B(T) \lesssim \min\left\{ \sqrt{d \frac{T}{\kappa_\star} \log\frac{|\gX|}{\delta}}, \frac{d}{\kappa_\star \Delta} \log\frac{|\gX|}{\delta} \right\} + d^2 \kappa \log\frac{|\gX|}{\delta},
    \end{equation*}
    where $\Delta := \min_{\vx \in \gX \setminus \{\vx_\star\}} \mu(\vx_\star^\intercal \bm\theta_\star) - \mu(\vx^\intercal \bm\theta_\star)$ is the instance-dependent reward gap.
    Here, doubly logarithmic terms are omitted.
\end{theorem}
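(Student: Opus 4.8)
This statement restates a result of \citet{mason2022logistic} for the \emph{fixed} arm-set regime, so my plan is to reconstruct the experimental-design argument that an arm-elimination scheme such as \texttt{HOMER} would rely on. The algorithm proceeds in two stages. First, a warmup phase pulls arms according to a design that makes the global Hessian $\sum_s \dot\mu(\vx_s^\intercal\bm\theta_\star)\vx_s\vx_s^\intercal$ well-conditioned; pulling for $\gO(\kappa d^2 \log(|\gX|/\delta))$ rounds and combining a matrix concentration bound with the self-concordance of the logistic loss yields an estimate $\widehat{\bm\theta}_0$ that, with probability $1-\delta$, lies in a neighborhood of $\bm\theta_\star$ small enough to linearize the reward locally. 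Since the per-round regret is at most $1$, this phase contributes precisely the additive term $d^2\kappa\log(|\gX|/\delta)$.

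Second, a phased-elimination loop drives the leading terms. In phase $\ell$ I would compute an (approximately) G-optimal design over the surviving arm set $\gX_\ell$, pull each arm the prescribed number of times, and discard any arm whose optimistic reward-gap estimate exceeds the threshold $2^{-\ell}$. Weighting the design by the local curvature and invoking the Kiefer--Wolfowitz guarantee (which caps the maximal prediction variance at $d$), one finds that after $n_\ell$ pulls every surviving reward gap is estimated to accuracy $\sqrt{d \log(|\gX|/\delta)/(\kappa_\star n_\ell)}$, after a union bound over the $|\gX|$ arms and the $\gO(\log T)$ phases. The factor $\kappa_\star = 1/\dot\mu(\vx_\star^\intercal\bm\theta_\star)$ --- rather than the global $\kappa$ --- appears because, by the self-concordant control (Lemma~\ref{lem:self-concordant}), once the grossly suboptimal arms are gone the design concentrates on near-optimal arms where both the Bernoulli noise variance and the conversion between reward gaps and linear gaps are governed by $\dot\mu(\vx_\star^\intercal\bm\theta_\star)$.

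The two leading terms then follow from standard phased-elimination bookkeeping: balancing $n_\ell$ against the threshold gives per-phase regret $\tfrac{d}{\kappa_\star}\log(|\gX|/\delta)\,2^{\ell}$, and summing the geometric series up to the horizon yields the worst-case bound $\sqrt{dT\kappa_\star^{-1}\log(|\gX|/\delta)}$, while truncating the sum at the phase where $2^{-\ell}\approx\Delta$ yields the gap-dependent bound $\tfrac{d}{\kappa_\star\Delta}\log(|\gX|/\delta)$; taking the minimum gives the stated form. The hard part is exactly this local-curvature argument: one must guarantee that the relevant factor is $\kappa_\star$ and not the exponentially larger $\kappa$, which requires a warmup long enough to validate the linearization yet whose cost is confined to the additive term, together with a self-concordance control that stays uniform across all $\gO(\log T)$ elimination phases under a single union bound. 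Controlling this interplay --- and exploiting that the design can be precomputed only because $\gX$ is time-invariant, which is precisely why the guarantee is incomparable to the time-varying setting of the main text --- is where the technical effort concentrates.
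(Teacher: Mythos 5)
You should first note that the paper itself offers no proof of this statement: Theorem~\ref{thm:old-regret-logistic-full4} is a verbatim restatement of Theorem 2 and Corollary 3 of \citet{mason2022logistic}, placed in Appendix~\ref{app:full-statements} purely so that the fixed-arm-set result can be compared against the bounds for \texttt{OFULog+}; the surrounding text even stresses that this setting is incomparable to the time-varying one studied in the paper. There is therefore no in-paper argument to measure your proposal against --- the only ground truth is the original proof in \citet{mason2022logistic}.

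As a blind reconstruction of that original proof, your outline is directionally right: \texttt{HOMER} does consist of a warmup whose $\gO(\kappa d^2 \log(|\gX|/\delta))$ rounds account for the additive $d^2 \kappa \log(|\gX|/\delta)$ term (this is exactly the ``long warmup'' the main text criticizes), followed by phased elimination driven by an optimal-design allocation, with the two leading terms obtained by the standard balance/truncation of the geometric sum over phases, and the minimum arising from the worst-case versus gap-dependent truncation. However, what you have written is a plan, not a proof. The entire content of the theorem --- the reason the leading terms carry $\kappa_\star$ rather than the exponentially larger $\kappa$ --- is the step you explicitly defer (``the hard part is exactly this local-curvature argument''): one must show that the warmup estimate localizes the parameter well enough that the curvature-weighted design, the Bernoulli noise variance, and the conversion between linear gaps and reward gaps are all governed by $\dot{\mu}(\vx_\star^\intercal \bm\theta_\star)$ uniformly across all $\gO(\log T)$ phases, and that this localization survives the union bound over arms and phases. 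Without carrying out that argument (self-concordance combined with the Kiefer--Wolfowitz guarantee applied to the Hessian-weighted feature map), the stated bound is asserted rather than derived, so your proposal can only be accepted as an accurate summary of the strategy of \citet{mason2022logistic}, not as a proof of the theorem.
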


\subsection{Proof of Theorem~\ref{thm:new-regret-logistic-full} -- Regret Bound of \texttt{OFULog+}}
\label{app:proof-regret-logistic}
We follow the arguments presented in Appendix C.1 of \cite{abeille2021logistic}, but there are two key differences.
One is that we have a new confidence set (Theorem~\ref{thm:confidence-logistic}).
Another is that we use elliptical potential {\it count} lemma to control the additional dependencies on $S$, which we present here:
\begin{lemma}[Elliptical Potential Count Lemma\footnote{This is a generalization of Exercise 19.3 of \cite{banditalgorithms}, presented (in parallel) at Lemma 7 of \cite{gales2022norm} and Lemma 4 of \cite{kim2022variance}.}]
\label{lem:EPCL}
    Let $\vx_1, \cdots, \vx_T \in \gB^d(1)$ be a sequence of vectors, $\mV_t := \lambda \mI + \sum_{s=1}^{t-1} \vx_s \vx_s^\intercal$, and let us define the following: $\gH_T := \left\{ t \in [T] : \lVert \vx_t \rVert_{\mV_t^{-1}}^2 > 1 \right\}$.
    Then, we have that
    \begin{equation}
        \left| \gH_T \right| \leq \frac{2d}{\log(2)} \log\left( 1 + \frac{1}{\lambda \log(2)} \right).
    \end{equation}
\end{lemma}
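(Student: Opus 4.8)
The plan is to prove a genuinely $T$-independent bound by applying the elliptical (log-determinant) potential argument \emph{not} to the full sequence $\vx_1,\dots,\vx_T$ --- which would only yield a bound scaling with $\log(1+T/(d\lambda))$ --- but to the subsequence of rounds lying in $\gH_T$. Writing $\gH_T=\{t_1<\cdots<t_m\}$ with $m:=|\gH_T|$, I would introduce the auxiliary matrices $\mW_j:=\lambda\mI+\sum_{i=1}^{j-1}\vx_{t_i}\vx_{t_i}^\intercal$ accumulating only the ``big'' arms, with $\mW_1=\lambda\mI$. The key observation is that all earlier big indices $t_1,\dots,t_{j-1}$ already appear inside $\mV_{t_j}$, so $\mV_{t_j}\succeq\mW_j$, and therefore $\lVert\vx_{t_j}\rVert_{\mW_j^{-1}}^2\ge\lVert\vx_{t_j}\rVert_{\mV_{t_j}^{-1}}^2>1$ for every $j$. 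This transfers the defining threshold from $\mV$ to the reduced matrices.

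Next I would telescope the potential for the reduced matrices. By the matrix determinant lemma, $\det\mW_{j+1}=\det\mW_j\,(1+\lVert\vx_{t_j}\rVert_{\mW_j^{-1}}^2)$, so summing logarithms gives
\begin{equation*}
\log\det\mW_{m+1}-\log\det\mW_1=\sum_{j=1}^m\log\!\left(1+\lVert\vx_{t_j}\rVert_{\mW_j^{-1}}^2\right)>m\log 2,
\end{equation*}
where the last step uses the threshold established above. For the left-hand side, AM--GM (Jensen applied to the eigenvalues) gives $\log\det\mW_{m+1}\le d\log(\tr(\mW_{m+1})/d)\le d\log(\lambda+m/d)$ since $\lVert\vx_{t_i}\rVert_2\le 1$, while $\log\det\mW_1=d\log\lambda$. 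Combining yields the self-referential inequality
\begin{equation*}
m\log 2<d\log\!\left(1+\frac{m}{d\lambda}\right),
\end{equation*}
in which $m$ appears on both sides; this self-bounding is precisely what makes the final estimate independent of $T$.

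Finally I would solve this transcendental inequality and recover the stated constant. Setting $c:=\lambda\log 2$ and $u:=m/(d\lambda)$, it reads $cu\le\log(1+u)$. Since $u\mapsto\log(1+u)-cu$ is concave and vanishes at $u=0$, its nonnegativity region is an interval $[0,u^\star]$, so it suffices to verify that the candidate $u_0:=\tfrac{2}{c}\log(1+\tfrac1c)$ satisfies $\log(1+u_0)\le cu_0$, which forces $u\le u^\star\le u_0$. With $s:=1/c$ this reduces to the elementary inequality $2\log(1+s)\le 2+s$, valid for all $s>0$ (equality of the zeroth-order terms at $s=0$, and a derivative comparison thereafter). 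Translating back through $m=d\lambda u$ recovers exactly $|\gH_T|\le\frac{2d}{\log 2}\log(1+\frac{1}{\lambda\log 2})$. The one genuinely non-obvious ingredient is the passage to the $\gH_T$-subsequence with the auxiliary matrices $\mW_j$ and the domination $\mV_{t_j}\succeq\mW_j$; once this is in place, the potential telescoping and the transcendental bound are routine.
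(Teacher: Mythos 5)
Your proof is correct and follows essentially the same route the paper uses (the paper cites \cite{gales2022norm} for this $K=1$ lemma and reproduces the identical argument for its generalized MNL version in the appendix): restrict the determinant potential to the subsequence indexed by $\gH_T$ via the reduced matrices $\mW_j$, note $\mV_{t_j}\succeq \mW_j$ to transfer the threshold, telescope with the matrix determinant lemma, upper bound by AM--GM, and solve the resulting self-bounding inequality $m\log 2 \le d\log\left(1+\tfrac{m}{d\lambda}\right)$, a step the paper defers to Lemma 8 of \cite{gales2022norm} but you carry out explicitly. One small repair: your parenthetical justification of $2\log(1+s)\le 2+s$ is off (at $s=0$ the two sides are $0$ and $2$, not equal, and the left-hand derivative $\tfrac{2}{1+s}$ exceeds $1$ for $s<1$, so a pointwise derivative comparison fails); the inequality is nonetheless elementary, e.g., $s\mapsto 2+s-2\log(1+s)$ is minimized at $s=1$ with value $3-2\log 2>0$.
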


We also recall the classical elliptical potential lemma:
\begin{lemma}[Lemma 11 of \cite{abbasiyadkori2011linear}]
\label{lem:EPL}
   Let $\vx_1, \cdots, \vx_T \in \gB^d(1)$ be a sequence of vectors and $\mV_t := \lambda \mI + \sum_{s=1}^{t-1} \vx_s \vx_s^\intercal$.
    Then, we have that
    \begin{equation}
        \sum_{t=1}^T \min\left\{ 1, \lVert \vx_t \rVert^2_{\mV_t^{-1}} \right\} \leq 2 d \log\left( 1 + \frac{T}{d\lambda} \right).
    \end{equation}
\end{lemma}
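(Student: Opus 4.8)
The plan is to follow the classical determinant-telescoping argument for the elliptical potential. The engine is the rank-one update $\mV_{t+1} = \mV_t + \vx_t \vx_t^\intercal$ together with the matrix determinant lemma, which yields the multiplicative identity
\[
\det \mV_{t+1} = \det \mV_t \left( 1 + \lVert \vx_t \rVert_{\mV_t^{-1}}^2 \right),
\]
so that $1 + \lVert \vx_t \rVert_{\mV_t^{-1}}^2 = \det \mV_{t+1} / \det \mV_t$. This is the crux of the whole argument: it converts each per-step potential $\lVert \vx_t \rVert_{\mV_t^{-1}}^2$ into a ratio of consecutive determinants that will telescope once summed.

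Next I would pass from the truncated potential $\min\{1, \lVert \vx_t \rVert_{\mV_t^{-1}}^2\}$ to a logarithm via the elementary inequality $\min\{1, u\} \leq 2 \log(1 + u)$, valid for all $u \geq 0$ (checking the two regimes $u \le 1$ and $u \ge 1$ separately, the latter using $2\log 2 > 1$). Applying this with $u = \lVert \vx_t \rVert_{\mV_t^{-1}}^2$, summing over $t$, and telescoping gives
\[
\sum_{t=1}^T \min\left\{1, \lVert \vx_t \rVert_{\mV_t^{-1}}^2\right\} \leq 2 \sum_{t=1}^T \log \frac{\det \mV_{t+1}}{\det \mV_t} = 2 \log \frac{\det \mV_{T+1}}{\det(\lambda \mI)},
\]
where I have used $\mV_1 = \lambda \mI$ (the empty sum).

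It remains to bound the final determinant. Here I would invoke AM-GM on the eigenvalues, i.e.\ $\det \mA \le (\Tr \mA / d)^d$ for positive semidefinite $\mA$, together with the assumption $\lVert \vx_s \rVert_2 \le 1$ for all $s$, to obtain
\[
\det \mV_{T+1} \leq \left( \frac{\Tr \mV_{T+1}}{d} \right)^d = \left( \frac{d\lambda + \sum_{s=1}^T \lVert \vx_s \rVert_2^2}{d} \right)^d \leq \left( \lambda + \frac{T}{d} \right)^d.
\]
Dividing by $\det(\lambda \mI) = \lambda^d$ leaves $\big(1 + \tfrac{T}{d\lambda}\big)^d$, and substituting into the telescoped bound produces $2 d \log\big(1 + \tfrac{T}{d\lambda}\big)$, which is exactly the claimed inequality. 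Since this is a standard result, I anticipate no genuine obstacle; the only points demanding a little care are verifying the scalar inequality across both regimes and checking that the trace bound correctly consumes the arm-norm constraint $\lVert \vx_s \rVert_2 \le 1$.
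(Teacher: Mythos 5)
Your proposal is correct and follows the same determinant-telescoping argument the paper relies on: the rank-one determinant identity, the scalar bound $\min\{1,u\}\le 2\log(1+u)$, telescoping down to $\det(\lambda \mI)$, and the AM--GM trace bound $\det \mV_{T+1} \le \left( \lambda + T/d \right)^d$. This matches both the cited proof of \cite{abbasiyadkori2011linear} and the paper's own proof of its generalized version (Lemma~\ref{lem:EPL-generalized}), which is the same chain of steps with the rank-one equality replaced by a determinant inequality to handle $K$ vectors per round.
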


\begin{remark}
\label{rmk:EPCL}
    The ``classical'' Elliptical Potential Lemma~\citep{abbasiyadkori2011linear} ``forces'' $\lVert \vx_t \rVert_{\mV_t^{-1}}^2$ to be always bounded by $1$ via rescaling by $\max\left(1, \frac{1}{\lambda} \right)$, which is in our case of order $S^3$.
    Elliptical Potential Count Lemma helps us alleviate such additional $S$-dependency.
\end{remark}

Recall the following:
\begin{equation*}
    \Reg^B(T) = \underbrace{\sum_{t=1}^T \dot{\mu}(\vx_t^\intercal \bm\theta_\star) (\vx_{t,\star} - \vx_t)^\intercal \bm\theta_\star}_{\triangleq R_1(T)} + \underbrace{\sum_{t=1}^T \tilde{\vartheta}_t \left\{ (\vx_{t,\star} - \vx_t)^\intercal \bm\theta_\star \right\}^2}_{\triangleq R_2(T)},
\end{equation*}
where
\begin{equation*}
    \vartheta_t = \int_0^1 (1 - v) \ddot{\mu}\left( \vx_t^\intercal \bm\theta_\star + v (\vx_{t,\star} - \vx_t)^\intercal \bm\theta_\star \right) dv.
\end{equation*}

We bound $R_1(T)$ first.
To do that, we first recall the crucial lemma:
\hnorm*

Let us define $\tilde{\vx}_t := \sqrt{\dot{\mu}(\vx_t^\intercal \bm\theta_\star)} \vx_t$ and $\gH_T := \left\{ t \in [T] : \lVert \tilde{\vx}_t \rVert_{\widetilde{\mV}_{t-1}^{-1}}^2 > 1 \right\}$.
Note that $\mH_t(\bm\theta_\star) = \lambda \mI + \sum_{s=1}^{t-1} \tilde{\vx}_s \tilde{\vx}_s^\intercal$.
Then, the following holds w.p. at least $1 - \delta$:
\begin{align*}
    R_1(T) &= \sum_{t \in \gH_T} \dot{\mu}(\vx_t^\intercal \bm\theta_\star) (\vx_{t,\star} - \vx_t)^\intercal \bm\theta_\star + \sum_{t \not\in \gH_T} \dot{\mu}(\vx_t^\intercal \bm\theta_\star) (\vx_{t,\star} - \vx_t)^\intercal \bm\theta_\star \\
    &\leq 2S |\gH_T| + \sum_{t \not\in \gH_T} \dot{\mu}(\vx_t^\intercal \bm\theta_\star) (\vx_{t,\star} - \vx_t)^\intercal \bm\theta_\star \\
    &\lesssim dS \log S + \sum_{t \not\in \gH_T} \dot{\mu}(\vx_t^\intercal \bm\theta_\star) \lVert \vx_t \rVert_{\mH_t^{-1}(\bm\theta_\star)} \lVert \bm\theta_t - \bm\theta_\star \rVert_{\mH_t(\bm\theta_\star)} \tag{Lemma~\ref{lem:EPCL}} \\
    &\lesssim dS\log S + \sum_{t \not\in \gH_T} \gamma_t(\delta) \sqrt{\dot{\mu}(\vx_t^\intercal \bm\theta_\star)} \lVert \tilde{\vx}_t \rVert_{\mH_t^{-1}(\bm\theta_\star)} \tag{Lemma~\ref{lem:H-norm}} \\
    &\leq dS\log S + \gamma_T(\delta) \sqrt{\sum_{t \not\in \gH_T} \dot{\mu}(\vx_t^\intercal \bm\theta_\star)} \sqrt{\sum_{t \not\in \gH_T} \lVert \tilde{\vx}_t \rVert_{\mH_t^{-1}(\bm\theta_\star)}^2} \\
    &\leq dS\log S + \gamma_T(\delta) \sqrt{\sum_{t=1}^T \dot{\mu}(\vx_t^\intercal \bm\theta_\star)} \sqrt{\sum_{t=1}^T \min\left\{ 1, \lVert \tilde{\vx}_t \rVert_{\mH_t^{-1}(\bm\theta_\star)}^2 \right\}} \\
    &\lesssim dS\log S + \gamma_T(\delta) \sqrt{\sum_{t=1}^T \dot{\mu}(\vx_t^\intercal \bm\theta_\star)} \sqrt{d \log\left(1 + \frac{ST}{d}\right)} \tag{Lemma~\ref{lem:EPL}} \\
    &\lesssim dS\log S + \gamma_T(\delta) \left( \sqrt{\frac{T}{\kappa_\star(T)}} + \sqrt{\Reg^B(T)} \right) \sqrt{d \log\left(1 + \frac{ST}{d}\right)} \tag{Appendix C.1 of \cite{abeille2021logistic}} \\
    &\lesssim \sqrt{d} S \left( \sqrt{d} \log\left(e + \frac{ST}{d}\right) + \log\frac{1}{\delta} \right) \left( \sqrt{\frac{T}{\kappa_\star(T)}} + \sqrt{\Reg^B(T)} \right).
\end{align*}

Similarly to above, by altering the proof of Appendix C.1 of \cite{abeille2021logistic} by using Lemma~\ref{lem:H-norm}, \ref{lem:EPCL} and \ref{lem:EPL}, we now bound $R_2(T)$ via two different proof processes.
\begin{equation*}
    R_2(T) \lesssim \underbrace{d S^2 \left( d \left( \log\left(e + \frac{ST}{d}\right) \right)^2 + \left( \log\frac{1}{\delta} \right)^2 \right)}_{\triangleq R_{log}(T)} \kappa_\gX(T)
\end{equation*}
and
\begin{equation*}
    R_2(T) \lesssim S \sum_{t=1}^T \mu(\vx_{t,\star}^\intercal \bm\theta_\star) \mathds{1}[\vx_t \in \gX_-(t)] + R_{log}(T).
\end{equation*}

We recall the following polynomial inequality:
\begin{lemma}[Proposition 7 of \cite{abeille2021logistic}]
\label{lem:poly}
    For $b, c \geq 0$ and $x \in \sR$, $x^2 \leq bx + c$ implies $x^2 \leq 2(b^2 + c)$.
\end{lemma}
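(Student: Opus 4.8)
The plan is to avoid explicitly solving the quadratic $x^2 - bx - c \le 0$ (which would force a case split on the sign of $x$ and an appeal to the quadratic formula) and instead to apply Young's inequality directly to the cross term $bx$. The key observation is that for any real $x$ and any $b \ge 0$ one has $bx \le \tfrac{1}{2}(b^2 + x^2)$: this is immediate from $0 \le (b - x)^2$ when $bx \ge 0$, and it is trivial when $bx < 0$ since the right-hand side is nonnegative.

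First I would substitute this bound into the hypothesis $x^2 \le bx + c$, obtaining $x^2 \le \tfrac{1}{2}(b^2 + x^2) + c$. Rearranging the $x^2$ terms yields $\tfrac{1}{2} x^2 \le \tfrac{1}{2} b^2 + c$, i.e.\ $x^2 \le b^2 + 2c$. Finally, since $b^2 \ge 0$, I would enlarge the right-hand side to $x^2 \le b^2 + 2c \le 2b^2 + 2c = 2(b^2 + c)$, which is exactly the claimed inequality. Note that this even gives the slightly stronger bound $x^2 \le b^2 + 2c$, of which $2(b^2+c)$ is a convenient relaxation.

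There is no genuine obstacle here; the only subtlety is recognizing that Young's inequality delivers the stated constant in a single line and uniformly over all $x \in \sR$ and all $b, c \ge 0$, with no case analysis. By contrast, the brute-force route would split into $x \le 0$ (where $bx \le 0$ forces $x^2 \le c$) and $x > 0$ (where one bounds $x$ by the larger root $\tfrac{b + \sqrt{b^2+4c}}{2}$ and then controls $\big(\tfrac{b + \sqrt{b^2+4c}}{2}\big)^2$ by a further AM--GM estimate $b\sqrt{b^2+4c} \le b^2 + 2c$); this is correct but less clean, so I would favor the Young's-inequality argument.
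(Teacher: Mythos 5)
Your proof is correct, and it even establishes the slightly sharper bound $x^2 \le b^2 + 2c$ before relaxing to $2(b^2+c)$. Note that the paper itself does not prove this lemma; it imports it verbatim as Proposition 7 of \cite{abeille2021logistic}, where the argument is the ``brute-force'' route you describe: bound $x$ by the larger root $\tfrac{b+\sqrt{b^2+4c}}{2}$ of the quadratic and then control the square via $(u+v)^2 \le 2(u^2+v^2)$, which likewise lands on $x^2 \le b^2 + 2c$. Your Young's-inequality route reaches the same constant with less machinery: substituting $bx \le \tfrac{1}{2}(b^2 + x^2)$ into the hypothesis and absorbing the $\tfrac{1}{2}x^2$ term on the left avoids both the sign analysis on $x$ and the square-root manipulations, and it makes transparent why the factor $2$ appears (it is exactly the absorption constant). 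One small simplification to your own write-up: the inequality $bx \le \tfrac{1}{2}(b^2+x^2)$ holds for \emph{all} real $b,x$ directly from $0 \le (b-x)^2$, so even your mild case split on the sign of $bx$ is unnecessary.
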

All in all, we have
\begin{align*}
    \Reg^B(T) &\lesssim 
    \sqrt{d} S \left( \sqrt{d} \log\left(e + \frac{ST}{d}\right) + \log\frac{1}{\delta} \right) \sqrt{\Reg^B(T)}
    + \underbrace{\sqrt{d} S \left( \sqrt{d} \log\left(e + \frac{ST}{d}\right) + \log\frac{1}{\delta} \right) \sqrt{\frac{T}{\kappa_\star(T)}}}_{\triangleq R_{leading}(T)} \\
    &\quad + R_{log}(T) + \underbrace{\min\left\{ \kappa_\gX(T) R_{log}(T), S \sum_{t=1}^T \mu(\vx_{t,\star}^\intercal \bm\theta_\star) \mathds{1}[\vx_t \in \gX_-(t)] \right\}}_{\triangleq R_{detr}(T)},
\end{align*}
and thus from Lemma~\ref{lem:poly}, we have that $\Reg^B(T) \lesssim R_{leading}(T) + R_{log}(T) + R_{detr}(T).$
\hfill\qedsymbol

\subsection{Proof of Supporting Lemmas}
\subsubsection{Proof of Lemma~\ref{lem:decomposition1-logistic}}
\label{app:proof-decomposition1}
We overload the notation and let $\ell_s(\mu) := -r_s \log \mu - (1 - r_s) \log (1 - \mu)$.
In this case, we have the following:
\begin{equation*}
    \ell_s'(\mu) = - \frac{r_s}{\mu} + \frac{1 - r_s}{1 - \mu}, \quad \ell_s''(\mu) = \frac{r_s}{\mu^2} + \frac{1 - r_s}{(1 - \mu)^2}.
\end{equation*}
By Taylor's theorem with the integral form of the remainder,
\begin{align*}
    \ell_s(\mu_s) - \ell_s(\mu^\star) &= \ell_s'(\mu^\star) (\mu_s - \mu^\star) + \int_{\mu^\star}^{\mu_s} \ell_s''(z) (\mu_s - z) dz \\
    &= \frac{\mu^\star - r_s}{\mu^\star (1 - \mu^\star)} (\mu_s - \mu^\star) + \int_{\mu^\star}^{\mu_s} \left( \frac{r_s}{z^2} + \frac{1 - r_s}{(1 - z)^2} \right) (\mu_s - z) dz \\
    &= -\xi_s \frac{\mu_s - \mu^\star}{\mu^\star (1 - \mu^\star)} + \int_{\mu^\star}^{\mu_s} \left( \frac{r_s}{z^2} + \frac{1 - r_s}{(1 - z)^2} \right) (\mu_s - z) dz,
\end{align*}
where we recall that $\mu^\star - r_s = -\xi_s$.
Let us simplify the integral on the RHS:
\begin{align*}
    & \int_{\mu^\star}^{\mu_s} \left( \frac{r_s}{z^2} + \frac{1 - r_s}{(1 - z)^2} \right) (\mu_s - z) dz \\
    &= r_s\left\{ \frac{\mu_s}{\mu^\star} - 1 - \log\frac{\mu_s}{\mu^\star} \right\} + (1 - r_s) \left\{ \frac{1 - \mu_s}{1 - \mu^\star} - 1 - \log\frac{1 - \mu_s}{1 - \mu^\star} \right\} \\
    &= - 1 + \left\{ r_s \frac{\mu_s}{\mu^\star} + (1 - r_s) \frac{1 - \mu_s}{1 - \mu^\star} \right\} - \left\{ r_s \log\frac{\mu_s}{\mu^\star} + (1 - r_s) \log\frac{1 - \mu_s}{1 - \mu^\star} \right\} \\
    &\overset{(*)}{=} -1 + \left\{ \mu_s + \xi_s \frac{\mu_s}{\mu^\star} + (1 - \mu_s) - \xi_s \frac{1 - \mu_s}{1 - \mu^\star} \right\} - \left\{ \mu^\star \log\frac{\mu_s}{\mu^\star} + (1 - \mu^\star) \log\frac{1 - \mu_s}{1 - \mu^\star} + \xi_s \log\frac{\mu_s}{\mu^\star} - \xi_s \log\frac{1 - \mu_s}{1 - \mu^\star} \right\} \\
    &= \xi_s \frac{\mu_s - \mu^\star}{\mu^\star (1 - \mu^\star)} + \KL(\mu^\star, \mu_s) + \xi_s \left( \log\frac{\mu^\star}{1 - \mu^\star} - \log\frac{\mu_s}{1 - \mu_s} \right),
\end{align*}
where $(*)$ follows from the fact that $r_s = \mu^\star + \xi_s$.
Plugging this back into the original expression and recalling the definition of $\mu_s$ and $\mu^\star$, we have that
\begin{align*}
    \ell_s(\mu_s) - \ell_s(\mu^\star) &= \KL(\mu^\star, \mu_s) + \xi_s \left( \langle \vx_s, \bm\theta^\star \rangle - \langle \vx_s, \bm\theta_s \rangle \right) \\
    &= \KL(\mu^\star, \mu_s) + \xi_s \langle \vx_s, \bm\theta^\star - \bm\theta_s \rangle.
\end{align*}
\hfill\qedsymbol

\subsubsection{Proof of Lemma~\ref{lem:decomposition2-logistic}}
\label{app:proof-decomposition2}
By Lemma~\ref{lem:decomposition1-logistic}, we have the following:
\begin{align*}
    0 &= \sum_{s=1}^t \left\{ \ell_s(\tilde{\bm\theta}_s) - \ell_s(\bm\theta_\star) -\KL(\mu_s(\bm\theta_\star), \mu_s(\tilde{\bm\theta}_s) ) + \xi_s \langle \vx_s, \tilde{\bm\theta}_s - \bm\theta_\star \rangle \right\} \\
    &= \sum_{s=1}^t \left\{ \ell_s(\tilde{\bm\theta}_s) - \ell_s(\widehat{\bm\theta}_t) + \ell_s(\widehat{\bm\theta}_t) - \ell_s(\bm\theta^\star) -\KL(\mu_s(\bm\theta^\star), \mu_s(\tilde{\bm\theta}_s)) + \xi_s \langle \vx_s, \tilde{\bm\theta}_s - \bm\theta_\star \rangle \right\} \\
    &= \sum_{s=1}^t \left\{ \ell_s(\widehat{\bm\theta}_t) - \ell_s(\bm\theta^\star) - \KL(\mu_s(\bm\theta^\star), \mu_s(\tilde{\bm\theta}_s)) + \xi_s \langle \vx_s, \tilde{\bm\theta}_s - \bm\theta_\star \rangle \right\} + \Reg^O(T).
\end{align*}
Rearranging gives the desired result.
\hfill\qedsymbol

\subsubsection{Proof of Lemma~\ref{lem:kl-bregman}}
\label{app:kl-bregman}
This follows from direct computation:
\begin{align*}
    D_m(z_1, z_2) &= m(z_1) - m(z_2) - m'(z_2) (z_1 - z_2) \\
    &= \log (1 + e^{z_1}) - \log (1 + e^{z_2}) - \frac{e^{z_2}}{1 + e^{z_2}} (z_1 - z_2) \\
    &= \log \frac{e^{z_2}}{1 + e^{z_2}} - \log \frac{e^{z_1}}{1 + e^{z_1}} + \left( 1 - \frac{e^{z_2}}{1 + e^{z_2}} \right) (z_1 - z_2) \\
    &= \log \mu_2 - \log \mu_1 + (1 - \mu_2) \log \frac{\mu_1 (1 - \mu_2)}{\mu_2 (1 - \mu_1)} \\
    &= \mu_2 \log\frac{\mu_2}{\mu_1} + (1 - \mu_2) \log\frac{1 - \mu_2}{1 - \mu_1}
    = \KL(\mu_2, \mu_1).
\end{align*}
\hfill\qedsymbol

\subsubsection{Proof of Lemma~\ref{lem:H-norm}}
\label{app:H-norm}

By Theorem~\ref{thm:confidence-logistic}, we have that with probability at least $1 - \delta$, $\gL_t(\bm\theta_\star) - \gL_t(\widehat{\bm\theta}_t) \leq \beta_t(\delta)^2$; throughout the proof let us assume that this event is true.
Also, let $\bm\theta \in \gC_t(\delta)$.
Then, by second-order Taylor expansion of $\gL_t(\bm\theta)$ around $\bm\theta_\star$,
\begin{align*}
    \gL_t(\bm\theta) &= \gL_t(\bm\theta_\star) + \nabla \gL_t(\bm\theta_\star)^\intercal (\bm\theta - \bm\theta_\star) + \lVert \bm\theta - \bm\theta_\star \rVert_{\widetilde{\mG}(\bm\theta_\star, \bm\theta) - \lambda_t \mI}^2 \\
    &= \gL_t(\bm\theta_\star) + \nabla \gL_t(\bm\theta_\star)^\intercal (\bm\theta - \bm\theta_\star) + \lVert \bm\theta - \bm\theta_\star \rVert_{\widetilde{\mG}(\bm\theta_\star, \bm\theta)}^2 - \lambda_t \lVert \bm\theta - \bm\theta_\star \rVert_2^2.
\end{align*}
Lemma~\ref{lem:self-concordant} implies that $\widetilde{\mG}_t(\bm\theta_1, \bm\theta_2) \succeq \frac{1}{2 + 2S} \mH_t(\bm\theta_1)$.
Thus, we have that
\begin{align}
    \lVert \bm\theta - \bm\theta_\star \rVert_{\mH_t(\bm\theta_\star)}^2 &\leq (2 + 2S) \lVert \bm\theta - \bm\theta_\star \rVert_{\widetilde{\mG}_t(\bm\theta_\star, \bm\theta)}^2 \nonumber \\
    &= (2 + 2S) \left( \gL_t(\bm\theta) - \gL_t(\bm\theta_\star) + \nabla \gL_t(\bm\theta_\star)^\intercal (\bm\theta_\star - \bm\theta) + \lambda_t \lVert \bm\theta - \bm\theta_\star \rVert_2^2 \right) \nonumber \\
    &\leq (2 + 2S) \left( \gL_t(\bm\theta) - \gL_t(\widehat{\bm\theta}_t) + \nabla \gL_t(\bm\theta_\star)^\intercal (\bm\theta_\star - \bm\theta) + \lambda_t \lVert \bm\theta - \bm\theta_\star \rVert_2^2 \right)  \tag{$\gL_t(\widehat{\bm\theta}_t) \leq \gL_t(\bm\theta_\star)$} \nonumber \\
    &\leq 1 + (2 + 2S) \beta_t(\delta)^2 + (2 + 2S) \nabla \gL_t(\bm\theta_\star)^\intercal (\bm\theta_\star- \bm\theta), \quad \text{w.p. at least $1 - \delta$}, \label{eqn:norm-H}
\end{align}
where we chose $\lambda_t = \frac{1}{4S^2 (2 + 2S)}$.
Here, there is no need to consider time-varying regularization as unlike \cite{abeille2021logistic}, and we do {\it not} explicitly use the regularization by $\lambda_t$ in our algorithm.

Thus, it remains to bound $\nabla \gL_t(\bm\theta_\star)^\intercal (\bm\theta_\star- \bm\theta)$, which is done via a new concentration-type argument.
Let $\gB_d(2S)$ be a $d$-ball of radius $2S$ and $\vv \in \gB_d(2S)$.

First note that
\begin{equation*}
    \nabla \gL_t(\bm\theta_\star)^\intercal \vv = \sum_{s=1}^t (\mu(\vx_s^\intercal \bm\theta_\star) - r_s) \vx_s^\intercal \vv = \sum_{s=1}^t \xi_s \vx_s^\intercal \vv,
\end{equation*}
where here we overload the notation and denote $\xi_s := \mu(\vx_s^\intercal \bm\theta_\star) - r_s$.
Still, $\xi_s$ is a martingale difference sequence w.r.t. $\gF_{s-1} = \sigma\left( \{ \vx_1, r_1, \cdots, \vx_{s-1}, r_{s-1}, \vx_s \} \right)$, and thus so is $\xi_s \vx_s^\intercal \vv$.

As $|\xi_s \vx_s^\intercal \vv| \leq 2S$ and $\E[(\xi_s \vx_s^\intercal \vv)^2 | \gF_{s-1}] = \dot{\mu}(\vx_s^\intercal \bm\theta_\star) (\vx_s^\intercal \vv)^2$, by Freedman's inequality (Lemma~\ref{lem:freedman}), for any $\eta \in \left[ 0, \frac{1}{2S} \right]$, the following holds:
\begin{equation}
\label{eqn:freedman2}
    \sP\left[ \sum_{s=1}^t \xi_s \vx_s^\intercal \vv \leq (e - 2)\eta \sum_{s=1}^t \dot{\mu}(\vx_s^\intercal \bm\theta_\star) (\vx_s^\intercal \vv)^2 + \frac{1}{\eta} \log\frac{1}{\delta} \right] \geq 1 - \delta.
\end{equation}

Now for $\eps_t \in (0, 1)$ to be chosen later satisfying $\eps_t < \eps_{t+1}$, let $\hcB_{\eps_t}$ be an $\eps_t$-cover of $\gB_d(2S)$ (endowed with the usual Euclidean topology), i.e.,
\begin{equation*}
    \forall \vv \in \gB_d(2S), \ \exists \vw(\vv) \in \hcB_{\eps_t} : \lVert \vv - \vw(\vv) \rVert_2 \leq \eps_t.
\end{equation*}
By Corollary 4.2.13 of \cite{vershynin2018high}, we have that $|\hcB_{\eps_t}| \leq \left( \frac{5S}{\eps_t} \right)^d$.
With this, we apply union bound for Eqn.~\eqref{eqn:freedman2} to both $t \geq 1$ and $\vv \in \hcB_{\eps_t}$: with the choice of $\delta_t = \left( \frac{\eps_t}{5S} \right)^d \frac{\delta}{t}$ and applying the union bound, for any $\eta \in [0, 2S]$, the following holds with probability at least $1 - \delta$:
\begin{equation*}
    \sum_{s=1}^t \xi_s \vx_s^\intercal \vv \leq (e - 2)\eta \sum_{s=1}^t \dot{\mu}(\vx_s^\intercal \bm\theta_\star) (\vx_s^\intercal \vv)^2 + \frac{d}{\eta} \log\frac{5S}{\eps_t} + \frac{1}{\eta} \log\frac{1}{\delta}, \quad \forall \vv \in \hcB(\eps_t), \ \forall t \geq 1.
\end{equation*}

Let $\vv_t \in \hcB_{\eps_t}$ be s.t. $\lVert (\bm\theta_\star - \bm\theta) - \vv_t \rVert_2 \leq \eps_t$.
Then,
\begin{align*}
    &\nabla \gL_t(\bm\theta_\star)^\intercal (\bm\theta_\star - \bm\theta) \\
    &= \sum_{s=1}^t \xi_s \vx_s^\intercal \vv_t +  \sum_{s=1}^t \xi_s \vx_s^\intercal \left( (\bm\theta_\star - \bm\theta) - \vv_t \right) \\
    &\leq (e - 2)\eta \sum_{s=1}^t \dot{\mu}(\vx_s^\intercal \bm\theta_\star) (\vx_s^\intercal \vv_t)^2 + \frac{d}{\eta} \log\frac{5S}{\eps_t} + \frac{1}{\eta} \log\frac{1}{\delta} + \eps_t t \tag{w.p. at least $1 - \delta$} \\
    &= (e - 2)\eta \sum_{s=1}^t \dot{\mu}(\vx_s^\intercal \bm\theta_\star) (\vx_s^\intercal (\bm\theta_\star - \bm\theta))^2 + (e - 2)\eta \sum_{s=1}^t \dot{\mu}(\vx_s^\intercal \bm\theta_\star) \left( (\vx_s^\intercal \vv_t)^2 - (\vx_s^\intercal (\bm\theta_\star - \bm\theta))^2 \right) + \frac{d}{\eta} \log\frac{5S}{\eps_t} + \frac{1}{\eta} \log\frac{1}{\delta} + \eps_t t \\
    &\overset{(*)}{\leq} (e - 2)\eta \sum_{s=1}^t \dot{\mu}(\vx_s^\intercal \bm\theta_\star) (\vx_s^\intercal (\bm\theta_\star - \bm\theta))^2 + \frac{(e - 2) \eta}{4} (4S \eps_t + \eps_t^2) t + \frac{d}{\eta} \log\frac{5S}{\eps_t} + \frac{1}{\eta} \log\frac{1}{\delta} + \eps_t t \\
    &= (e - 2)\eta \lVert \bm\theta_\star - \bm\theta \rVert_{\mH_t(\bm\theta_\star)}^2 + \frac{d}{\eta} \log\frac{5S}{\eps_t} + \frac{1}{\eta} \log\frac{1}{\delta} + \left( \frac{(e - 2)}{4} \left( 4S\eta + \eps_t \eta \right) + 1 \right) \eps_t t.
\end{align*}
where $(*)$ follows from $\dot{\mu} \leq \frac{1}{4}$ and
\begin{equation*}
    (\vx_s^\intercal \va)^2 - (\vx_s^\intercal \vb)^2 = 2 \vx_s^\intercal \vb \vx_s^\intercal (\vb - \va) + (\vx_s^\intercal (\va - \vb))^2 
    \leq 4S \eps_t + \eps_t^2
\end{equation*}
for any $\va, \vb \in \hcB_{\eps_t}$.

Choosing $\eta = \frac{1}{2 (e - 2) (2 + 2S)} < \frac{1}{2S}$, $\eps_t = \frac{d}{t}$, and rearranging Eqn.~\eqref{eqn:norm-H} with Theorem~\ref{thm:confidence-logistic}, we finally have that
\begin{align*}
    \lVert \bm\theta - \bm\theta_\star \rVert_{\mH_t(\bm\theta_\star)}^2 \lesssim dS^2 \log\left(e + \frac{St}{d}\right) + S^2 \log\frac{1}{\delta}.
\end{align*}
\hfill\qedsymbol

\newpage
\section{PROOFS - MULTINOMIAL LOGISTIC BANDITS}

\subsection{Notations}
Recall that $\mA(\vx, \bm\Theta) := \diag(\bm\mu(\vx, \bm\Theta)) - \bm\mu(\vx, \bm\Theta) \bm\mu(\vx, \bm\Theta)^\intercal$.
We now define the following quantities:
\begin{align}
    \mH_t(\bm\Theta) &:= \lambda\mI_{Kd} + \sum_{s=1}^{t-1} \mA(\vx_s, \bm\Theta) \otimes \vx_s \vx_s^\intercal \\
    \mB(\vx, \bm\Theta_1, \bm\Theta_2) &:= \int_0^1 \mA(\vx, \bm\Theta_1 + v(\bm\Theta_2 - \bm\Theta_1)) dv, \\
    \mG_t(\bm\Theta_1, \bm\Theta_2) &:=\lambda \mI_{Kd} + \sum_{s=1}^{t-1} \mB(\vx_s, \bm\Theta_1, \bm\Theta_2) \otimes \vx_s \vx_s^\intercal, \\
    \widetilde{\mB}(\vx, \bm\Theta_1, \bm\Theta_2) &:= \int_0^1 (1 - v) \mA(\vx, \bm\Theta_1 + v(\bm\Theta_2 - \bm\Theta_1)) dv, \\
    \widetilde{\mG}_t(\bm\Theta_1, \bm\Theta_2) &:=\lambda \mI_{Kd} + \sum_{s=1}^{t-1} \widetilde{\mB}(\vx_s, \bm\Theta_1, \bm\Theta_2) \otimes \vx_s \vx_s^\intercal, \\
    \mV_t &:= 2 \kappa(T) \lambda \mI_d + \sum_{s=1}^{t-1} \vx_s \vx_s^\intercal,
\end{align}
where $\lambda > 0$ is to be chosen later.

We also recall all problem-dependent quantities as introduced in \cite{amani2021mnl}, which we extend to time-varying arm-set:
\begin{align}
    \kappa(T) &= \max_{t \in [T]} \max_{\vx \in \gX_t} \max_{\bm\Theta \in \gB^{K \times d}(S)} \frac{1}{\lambda_{\min}\left( \mA(\vx, \bm\Theta) \right)}, \\
    L_T &= \max_{t \in [T]} \max_{\vx \in \gX_t} \max_{\bm\Theta \in \gB^{K \times d}(S)} \lambda_{\max}\left( \mA(\vx, \bm\Theta) \right), \label{eqn:L} \\
    M_T &\geq \max_{t \in [T]} \max_{\vx \in \gX_t} \max_{\bm\Theta \in \gB^{K \times d}(S)} \max_{k \in [K]} \left| \lambda_{\max}\left( \nabla^2 \mu_k(\vx, \bm\Theta) \right) \right|, \\
    M_T' &\geq \max_{t \in [T]} \max_{\vx \in \gX_t} \max_{\bm\Theta \in \gB^{K \times d}(S)} \max_{k, k' \in [K]} \left| \lambda_{\max}\left( \nabla [\mA(\vx, \bm\Theta)_{k,k'}] \right) \right|.
\end{align}

Also, we overload the notation and define $A \lesssim B$ to be when we have $A \leq c B$ for some {\it universal} constant $c$, not ignoring logarithmic factors.

\subsection{Proof of Theorem~\ref{thm:confidence-multinomial} -- MNL Loss-based Confidence Set}
\label{app:confidence-multinomial}
We can write
\begin{equation}
    \vy_s = \bm\mu(\vx_s, \bm\Theta_\star) + \bm\xi_s,
\end{equation}
where $\bm\xi_s$ is some vector-valued martingale noise and $\vy_s = (y_{s,1}, \cdots, y_{s,K}) \in \{0, 1\}^K$.

We first establish an extension of Lemma~\ref{lem:decomposition1-logistic} to the multiclass case, whose proof is provided in Appendix~\ref{app:lem-decomposition1}:
\begin{lemma}
\label{lem:decomposition1-multinomial}
    The following holds for any $\bm\Theta, \bm\Theta_\star \in \sR^{K \times d}$:
    \begin{equation}
        \ell_s(\bm\Theta_\star) = \ell_s(\bm\Theta) - \KL(\bm\mu(\vx_s, \bm\Theta_\star), \bm\mu(\vx_s, \bm\Theta)) + \bm\xi^\intercal (\bm\Theta - \bm\Theta_\star) \vx_s.
    \end{equation}
\end{lemma}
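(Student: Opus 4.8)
The plan is to prove the identity by a \emph{direct} computation, paralleling the logistic case (Lemma~\ref{lem:decomposition1-logistic}) but bypassing the Taylor-with-integral-remainder argument, which turns out to be unnecessary for the softmax-cross-entropy loss. Throughout, abbreviate $\mu_k := \mu_k(\vx_s, \bm\Theta)$ and $\mu_k^\star := \mu_k(\vx_s, \bm\Theta_\star)$, and write $z_k := \langle \vx_s, \bm\theta^{(k)} \rangle$, $z_k^\star := \langle \vx_s, \bm\theta_\star^{(k)} \rangle$ for $k \in [K]$, together with the reference-class convention $z_0 = z_0^\star = 0$. Let $a := \log\left( 1 + \sum_{j=1}^K e^{z_j} \right)$ denote the log-partition function and $a^\star$ its starred analogue.

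First I would form the loss difference and insert the martingale decomposition:
\begin{equation*}
    \ell_s(\bm\Theta) - \ell_s(\bm\Theta_\star) = \sum_{k=0}^K y_{s,k} \log\frac{\mu_k^\star}{\mu_k}, \qquad y_{s,k} = \mu_k^\star + \xi_{s,k},
\end{equation*}
where $\xi_{s,k}$ is the $k$-th coordinate of $\bm\xi_s$ for $k \in [K]$ and $\xi_{s,0} := -\sum_{k=1}^K \xi_{s,k}$ (matching $y_{s,0} = 1 - \lVert \vy_s \rVert_1$). Substituting splits the difference into $\sum_{k=0}^K \mu_k^\star \log(\mu_k^\star / \mu_k) + \sum_{k=0}^K \xi_{s,k} \log(\mu_k^\star / \mu_k)$; by the definition of the categorical KL-divergence, the first sum is exactly $\KL(\bm\mu(\vx_s, \bm\Theta_\star), \bm\mu(\vx_s, \bm\Theta))$.

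The crux is the second sum. Writing $\log \mu_k = z_k - a$ uniformly over $k \in \{0\} \cup [K]$ (this holds at $k = 0$ as well since $z_0 = z_0^\star = 0$), I would expand $\log(\mu_k^\star / \mu_k) = (z_k^\star - z_k) - (a^\star - a)$. The log-partition contribution is then $-(a^\star - a) \sum_{k=0}^K \xi_{s,k}$, which vanishes because $\sum_{k=0}^K \xi_{s,k} = 0$ (both $(y_{s,0}, \dots, y_{s,K})$ and $(\mu_0^\star, \dots, \mu_K^\star)$ sum to one). What survives is $\sum_{k=1}^K \xi_{s,k}(z_k^\star - z_k) = -\bm\xi_s^\intercal (\bm\Theta - \bm\Theta_\star) \vx_s$, since the $k$-th coordinate of $(\bm\Theta - \bm\Theta_\star)\vx_s$ is precisely $z_k - z_k^\star$. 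Rearranging $\ell_s(\bm\Theta) - \ell_s(\bm\Theta_\star) = \KL(\bm\mu(\vx_s,\bm\Theta_\star), \bm\mu(\vx_s,\bm\Theta)) - \bm\xi_s^\intercal (\bm\Theta - \bm\Theta_\star)\vx_s$ gives the claim.

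The single step demanding care is the cancellation of the log-partition difference through $\sum_{k=0}^K \xi_{s,k} = 0$: this is exactly what folds the reference class cleanly into the rest and explains why no integral remainder is needed here, in contrast to Lemma~\ref{lem:decomposition1-logistic}. As a consistency check, specializing to $K = 1$ (so $\xi_{s,1} = r_s - \mu^\star$) recovers Lemma~\ref{lem:decomposition1-logistic} verbatim, matching the reduction noted after Theorem~\ref{thm:confidence-multinomial}.
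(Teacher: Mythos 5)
Your proof is correct, and it takes a genuinely different route from the paper's. The paper reparametrizes the loss by the mean vector $\bm\mu$, computes the partial derivatives $\partial_k \ell$, $\partial_{kk'}\ell$, and runs a multivariate Taylor expansion with integral remainder (mirroring its proof of Lemma~\ref{lem:decomposition1-logistic}), then simplifies the remainder integrals term by term until the KL divergence and the noise term emerge. You instead work in the natural (logit) parametrization: writing $\log \mu_k = z_k - a$ with $a$ the log-partition function and the convention $z_0 = z_0^\star = 0$, you split $\ell_s(\bm\Theta)-\ell_s(\bm\Theta_\star)=\sum_{k=0}^K(\mu_k^\star+\xi_{s,k})\log(\mu_k^\star/\mu_k)$ into the KL term plus a noise term, and observe that the log-partition contribution to the noise term is annihilated by $\sum_{k=0}^K \xi_{s,k}=0$, leaving exactly $-\bm\xi_s^\intercal(\bm\Theta-\bm\Theta_\star)\vx_s$. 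Every step checks out: the identities $\log\mu_0=-a$, $\xi_{s,0}=-\sum_{k=1}^K\xi_{s,k}$, and the vanishing of the $k=0$ term in $\sum_k \xi_{s,k}(z_k^\star-z_k)$ are all valid, and the rearrangement matches the lemma including the orientation of the KL divergence; your $K=1$ sanity check against Lemma~\ref{lem:decomposition1-logistic} is also right. What your route buys: it is shorter, requires no calculus, and makes transparent why the identity is \emph{exact} with no remainder --- it is just the exponential-family fact that a log-likelihood ratio in canonical form equals the KL divergence plus a term linear in the noise --- so it generalizes immediately to other canonical exponential-family losses. What the paper's route buys is mainly structural parallelism: the integral-remainder computation reuses the template of the binary case and sits naturally next to the Bregman/self-concordance machinery (Lemma~\ref{lem:kl-bregman-multinomial}, Lemma~\ref{lem:generalized-self-concordant1}) that the paper needs anyway to lower bound $\zeta_2(t)$. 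As a standalone proof of this identity, yours is the more economical argument.
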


Let $\{\widetilde{\bm\Theta}_s\} \subset \gB^{K \times d}(\sqrt{K} S)$ be the output from an online learning algorithm of our choice (for Algorithm 1 of \cite{foster2018logistic}, this is guaranteed by Proposition~\ref{prop:surjection}).
The following lemma, whose proof is immediate from the above lemma (and is the same as that of Lemma~\ref{lem:decomposition2-logistic}), provides the necessary connection:
\begin{lemma}
\label{lem:decomposition2-multinomial}
    \begin{equation}
    \label{eqn:decomposition-multinomial}
     \sum_{s=1}^t \ell_s(\bm\Theta_\star) - \ell_s(\widehat{\bm\Theta}_t) \leq \Reg^O(t) + \zeta_1(t) - \zeta_2(t),
    \end{equation}
    where
    \begin{equation*}
        \zeta_1(t) := \sum_{s=1}^t \bm\xi_s^\intercal (\widetilde{\bm\Theta}_s - \bm\Theta_\star) \vx_s, \quad
        \zeta_2(t) := \sum_{s=1}^t \KL(\bm\mu(\vx_s, \bm\Theta_\star), \bm\mu(\vx_s, \widetilde{\bm\Theta}_s)).
    \end{equation*}
\end{lemma}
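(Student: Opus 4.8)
The plan is to obtain the stated relation as a direct telescoping of the pointwise identity in Lemma~\ref{lem:decomposition1-multinomial}, exactly mirroring the proof of its logistic counterpart in Appendix~\ref{app:proof-decomposition2}. First I would instantiate Lemma~\ref{lem:decomposition1-multinomial} at the online iterate $\bm\Theta = \widetilde{\bm\Theta}_s$ for each $s$. This is legitimate because the lemma holds for arbitrary $\bm\Theta \in \sR^{K \times d}$, and because $\widetilde{\bm\Theta}_s \in \gB^{K \times d}(\sqrt{K}S)$ is a genuine parameter representation of the improper predictor, as guaranteed by Proposition~\ref{prop:surjection}. This yields, for every $s$,
\[
    \ell_s(\bm\Theta_\star) = \ell_s(\widetilde{\bm\Theta}_s) - \KL(\bm\mu(\vx_s, \bm\Theta_\star), \bm\mu(\vx_s, \widetilde{\bm\Theta}_s)) + \bm\xi_s^\intercal (\widetilde{\bm\Theta}_s - \bm\Theta_\star) \vx_s.
\]

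Next I would sum this identity over $s = 1, \ldots, t$ and rewrite $\sum_{s=1}^t \ell_s(\widetilde{\bm\Theta}_s)$ through the definition of the online regret with the MLE comparator, $\Reg^O(t) = \sum_{s=1}^t \ell_s(\widetilde{\bm\Theta}_s) - \sum_{s=1}^t \ell_s(\widehat{\bm\Theta}_t)$. Substituting $\sum_{s=1}^t \ell_s(\widetilde{\bm\Theta}_s) = \Reg^O(t) + \sum_{s=1}^t \ell_s(\widehat{\bm\Theta}_t)$ and moving $\sum_{s=1}^t \ell_s(\widehat{\bm\Theta}_t)$ to the left-hand side produces the exact identity
\[
    \sum_{s=1}^t \ell_s(\bm\Theta_\star) - \ell_s(\widehat{\bm\Theta}_t) = \Reg^O(t) + \zeta_1(t) - \zeta_2(t),
\]
with $\zeta_1(t) = \sum_{s=1}^t \bm\xi_s^\intercal (\widetilde{\bm\Theta}_s - \bm\Theta_\star) \vx_s$ and $\zeta_2(t) = \sum_{s=1}^t \KL(\bm\mu(\vx_s, \bm\Theta_\star), \bm\mu(\vx_s, \widetilde{\bm\Theta}_s))$. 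Since the derived relation is in fact an equality, the claimed inequality \eqref{eqn:decomposition-multinomial} follows a fortiori, matching the way the logistic version is stated and later used.

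The hard part is essentially absent \emph{for this lemma in isolation}: all of its content is carried by Lemma~\ref{lem:decomposition1-multinomial}, whose proof (the vector-valued analogue of the Taylor-with-integral-remainder computation behind Lemma~\ref{lem:decomposition1-logistic}) is where the real labor lies. Here the only care needed is bookkeeping: tracking the sign of the noise term $\bm\xi_s^\intercal(\widetilde{\bm\Theta}_s - \bm\Theta_\star)\vx_s$ so that it matches the stated definition of $\zeta_1(t)$, and confirming that the multinomial noise $\bm\xi_s = \vy_s - \bm\mu(\vx_s, \bm\Theta_\star)$ is an $\gF_{s-1}$-martingale difference so that $\zeta_1(t)$ is amenable to the subsequent (vector) Freedman argument. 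The genuine difficulties are all downstream of this decomposition, namely bounding $\Reg^O(t)$ via the multiclass online logistic regret of \citet{foster2018logistic}, controlling $\zeta_1(t)$ using the conditional second moment $\E[(\bm\xi_s^\intercal \vu_s)^2 \mid \gF_{s-1}] = \vu_s^\intercal \mA(\vx_s, \bm\Theta_\star) \vu_s$ with $\vu_s := (\widetilde{\bm\Theta}_s - \bm\Theta_\star)\vx_s$, and lower bounding $\zeta_2(t)$ through generalized self-concordance of the softmax log-partition function; none of those enter the present statement.
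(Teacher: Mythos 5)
Your proof is correct and takes essentially the same route as the paper, which states that the lemma is "immediate" from Lemma~\ref{lem:decomposition1-multinomial} and identical in structure to the proof of Lemma~\ref{lem:decomposition2-logistic}: instantiate the pointwise identity at $\bm\Theta = \widetilde{\bm\Theta}_s$, sum over $s$, insert the comparator $\widehat{\bm\Theta}_t$ via the definition of $\Reg^O(t)$, and rearrange. As you note, the paper's argument likewise produces an equality, so the stated inequality holds a fortiori.
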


For bounding $\Reg^O(T)$, we again consider the algorithm of \cite{foster2018logistic}, which is also valid for online multiclass logistic regression:
\begin{theorem}[Theorem 3 of \cite{foster2018logistic}]
    There exists an (improper learning) algorithm for online multiclass logistic regression with the following regret:
    \begin{equation}
    \label{eqn:foster-multinomial}
        \Reg^O(t) \leq 5 d (K + 1) \log \left( e + \frac{St}{d(K + 1)} \right).
    \end{equation}
\end{theorem}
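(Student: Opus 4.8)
The plan is to exhibit the improper forecaster explicitly as a Bayesian mixture over the parameter class and to invoke the exactness of the aggregating algorithm for the (mixable) log-loss, so that the whole argument reduces to lower-bounding a single partition integral. Concretely, fix a prior $\pi$ supported on $\gB^{K \times d}(S)$ (e.g.\ the normalized uniform measure over the $(K{+}1)$ class-logit vectors, the symmetric and slightly overcomplete parametrization used by \cite{foster2018logistic}), and at round $t$ predict the posterior-predictive distribution
\begin{equation*}
    \widehat{p}_t(\cdot) = \frac{\int \bm\sigma(\bm\Theta \vx_t)\, e^{-\gL_{t-1}(\bm\Theta)}\, d\pi(\bm\Theta)}{\int e^{-\gL_{t-1}(\bm\Theta)}\, d\pi(\bm\Theta)},
\end{equation*}
where $\gL_{t-1}(\bm\Theta) = \sum_{s < t} \ell_s(\bm\Theta)$. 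This forecaster is \emph{improper} precisely because a mixture of softmax-linear distributions is in general not itself of the form $\bm\sigma(\bm\Theta' \vx_t)$; this is exactly the feature that lets one escape the $e^{S}$ barrier that provably afflicts proper predictors.

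First I would record the standard telescoping identity for log-loss: since $\bm\sigma(\bm\Theta\vx_t)_{\vy_t}\,e^{-\gL_{t-1}(\bm\Theta)} = e^{-\gL_t(\bm\Theta)}$, one gets $-\log \widehat{p}_t(\vy_t) = \log \int e^{-\gL_{t-1}} d\pi - \log \int e^{-\gL_t} d\pi$, so summing over $t$ collapses (as $\gL_0 \equiv 0$ and $\pi$ is a probability measure) to $\sum_{t=1}^T\big(-\log \widehat{p}_t(\vy_t)\big) = -\log \int e^{-\gL_T(\bm\Theta)} d\pi(\bm\Theta)$. Subtracting the comparator loss $\gL_T(\bm\Theta_\star)$ then yields the clean bound
\begin{equation*}
    \Reg^O(T) = -\log \int e^{-(\gL_T(\bm\Theta) - \gL_T(\bm\Theta_\star))}\, d\pi(\bm\Theta),
\end{equation*}
valid against any comparator $\bm\Theta_\star$ in the support (in particular the empirical minimizer used in Lemma~\ref{lem:decomposition2-multinomial}). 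Everything now hinges on lower-bounding this integral, i.e.\ on showing that the posterior retains enough mass near a good comparator.

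The key quantitative step is to restrict the integral to a Frobenius ball $B(\bm\Theta_\star, \rho)$ and to control the loss inflation there \emph{without} any dependence on $S$ or $\kappa(T)$. Because the gradient of the $(K{+}1)$-class log-loss in the parameters is the outer product of the prediction residual $\bm\mu(\vx_s,\bm\Theta) - \vy_s$ (of $\ell_2$-norm at most $\sqrt{2}$) with $\vx_s$ (of norm at most $1$), each $\ell_s$ is globally $\sqrt{2}$-Lipschitz in $\bm\Theta$, whence $\gL_T(\bm\Theta) - \gL_T(\bm\Theta_\star) \le \sqrt{2}\,T\rho$ uniformly on the ball. Since the prior mass of this ball scales like $(\rho/S)^{d(K+1)}$ up to the usual volume constants, one obtains $\Reg^O(T) \lesssim d(K+1)\log(S/\rho) + \sqrt{2}\,T\rho$, and optimizing $\rho \asymp 1/T$ produces the advertised $d(K+1)\log\big(e + St/(d(K+1))\big)$ form.

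The main obstacle is not the structure — which is a routine mixture analysis — but extracting the precise leading constant $5$ and the exact logarithmic argument while certifying that no hidden $S$- or $\kappa$-factor sneaks in. This is exactly where the global, $S$-free and $\kappa$-free Lipschitzness of the log-loss (the ``importance of being improper'') does the work, since a proper learner would instead be forced to pay for the vanishing curvature $\lambda_{\min}(\mA(\vx,\bm\Theta)) \sim 1/\kappa(T)$. Carefully, I would tune the prior and the radius $\rho$ so that the volume constant and the Lipschitz term fold into the stated bound, and I would verify the reduction to Foster et al.'s exact statement by checking the $K=1$ specialization against $\beta_t$ in Theorem~\ref{thm:foster} (where $5(K+1)=10$).
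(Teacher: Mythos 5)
The first thing to note is that the paper does not prove this statement at all: it is imported verbatim as Theorem 3 of \cite{foster2018logistic} and used as a black box --- the entire point of R2CS being that only the \emph{existence} of such an algorithm matters. So the comparison must be against the proof in the cited source, and there your reconstruction is essentially faithful: Foster et al.'s algorithm is precisely the posterior-predictive mixture you write down (the paper's Proposition~\ref{prop:surjection} manipulates exactly this object, ``line 4 of Algorithm 1''), and their analysis runs through log-loss mixability plus a lower bound on the prior mass of near-optimal parameters. Your telescoping identity, the resulting exact expression $\Reg^O(T) = -\log\int e^{-(\gL_T(\bm\Theta)-\gL_T(\bm\Theta_\star))}\,d\pi(\bm\Theta)$, and the observation that the multiclass log-loss is globally $\sqrt{2}$-Lipschitz in $\bm\Theta$ (residual times $\vx_s$, so no $e^S$ or $\kappa$ factor can enter) are correct and do capture why improperness is what makes the bound possible.

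Three quantitative repairs are needed to actually land the inequality as stated. (i) With $\rho \asymp 1/T$ you get a bound of the form $d(K+1)\log(e+St)$, not the stronger $d(K+1)\log\left(e + \frac{St}{d(K+1)}\right)$; you must take $\rho \asymp d(K+1)/T$ so that the Lipschitz payment $\sqrt{2}\,T\rho$ is itself of order $d(K+1)$ and can be absorbed into the constant. (ii) The mass bound $\pi\left(B(\bm\Theta_\star,\rho)\right) \gtrsim (\rho/S)^{d(K+1)}$ fails when the comparator lies on the boundary of $\gB^{K\times d}(S)$ --- and the comparator relevant to Lemma~\ref{lem:decomposition2-multinomial} is the norm-constrained MLE, which typically does; the standard fix is to center the ball at the shrunk point $(1-\rho/S)\bm\Theta_\star$, which stays inside the support at the cost of roughly doubling the Lipschitz term. (iii) The leading constant $5$ and the exact argument of the logarithm cannot be certified from this sketch; that bookkeeping (consistent with your $K=1$ sanity check against Theorem~\ref{thm:foster}) is exactly what the cited proof supplies. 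None of these is a conceptual gap.
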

\begin{remark}
    Again, if one were to use the classical {\it O2CS} approach, then to take computational efficiency into account, one would have to use efficient variants of online multiclass logistic regression algorithm~\citep{jezequel2021logistic,agarwal2022logistic}.
    These, however, incur an online regret that scales in $S$, again, which leads to no improvement in the final regret.
\end{remark}

\subsubsection{Upper Bounding $\zeta_1(t)$: Martingale Concentrations}
Again, let $\gF_{s-1}$ be the $\sigma$-field generated by $(\vx_1, \vy_1, \cdots, \vx_{s-1}, \vy_{s-1}, \vx_s)$.
Then, $\vx_s$ and $\widetilde{\bm\Theta}_s$ are $\gF_{s-1}$-measurable, and $\bm\xi_s^\intercal (\widetilde{\bm\Theta}_s - \bm\Theta_\star) \vx_s$ is martingale difference w.r.t. $\gF_{s-1}$.
We also have that $|\bm\xi_s^\intercal (\widetilde{\bm\Theta}_s - \bm\Theta_\star) \vx_s| \leq 2 \sqrt{K}S$
and
\begin{align*}
    \E\left[ \left( \bm\xi_s^\intercal (\widetilde{\bm\Theta}_s - \bm\Theta_\star) \vx_s \right)^2 | \gF_{s-1} \right] &= \vx_s^\intercal (\widetilde{\bm\Theta}_s - \bm\Theta_\star)^\intercal \E[ \bm\xi_s \bm\xi_s^\intercal | \gF_{s-1}] (\widetilde{\bm\Theta}_s - \bm\Theta_\star) \vx_s \\
    &= \vx_s^\intercal (\widetilde{\bm\Theta}_s - \bm\Theta_\star)^\intercal \underbrace{\left( \diag(\{ \mu_k((\bm\theta_\star^{(k)})^\intercal \vx_s) \}_{k=1}^K) - \bm\mu_s \bm\mu_s^\intercal \right)}_{\triangleq \mA(\vx_s, \bm\Theta_\star)} (\widetilde{\bm\Theta}_s - \bm\Theta_\star) \vx_s \triangleq \sigma_s^2.
\end{align*}

By Freedman's concentration inequality (Lemma~\ref{lem:freedman}), the following holds for any $\eta \in \left[ 0, \frac{1}{2\sqrt{K}S} \right]$:
\begin{equation}
\label{eqn:zeta1-multinomial}
    \sP\left[ \zeta_1(t) = \sum_{s=1}^t \bm\xi_s^\intercal (\widetilde{\bm\Theta}_s - \bm\Theta_\star) \vx_s
    \leq (e - 2) \eta\sum_{s=1}^t \sigma_s^2 + \frac{1}{\eta} \log\frac{1}{\delta}, \quad \forall t \geq 1 \right] \geq 1 - \delta.
\end{equation}

\subsubsection{Lower bounding $\zeta_2(t)$: Multivariate second-order expansion of the KL Divergence}
The following lemmas are multivariate version of Lemma~\ref{lem:kl-bregman} and \ref{lem:self-concordant}
\begin{lemma}
\label{lem:kl-bregman-multinomial}
    Let $m(\vz) := \log\left( 1 + \sum_{k=1}^K e^{z_k} \right)$ be the log-exp-sum function (which is known to be the log-partition function for Categorical distribution), and $\bm\mu(\vz) = (\mu_1, \cdots, \mu_K)$ with $\mu_k := \frac{e^{z_k}}{1 + \sum_{k=1}^K e^{z_i}}$.
    Then we have that
    $\KL(\bm\mu(\vz^{(2)}), \bm\mu(\vz^{(1)})) = D_m(\vz^{(1)}, \vz^{(2)})$.
\end{lemma}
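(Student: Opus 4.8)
The plan is to prove Lemma~\ref{lem:kl-bregman-multinomial} by a direct computation, mirroring the proof of the Bernoulli case in Lemma~\ref{lem:kl-bregman} but carefully tracking the reference category. The starting point is the standard exponential-family observation that the gradient of the log-partition function returns the mean parameter: differentiating $m(\vz) = \log\left(1 + \sum_{k=1}^K e^{z_k}\right)$ componentwise gives $\frac{\partial m}{\partial z_j}(\vz) = \frac{e^{z_j}}{1 + \sum_k e^{z_k}} = \mu_j(\vz)$, so that $\nabla m(\vz) = \bm\mu(\vz)$. This is precisely the gradient that appears in the Bregman divergence $D_m(\vz^{(1)}, \vz^{(2)}) = m(\vz^{(1)}) - m(\vz^{(2)}) - \bm\mu(\vz^{(2)})^\intercal (\vz^{(1)} - \vz^{(2)})$, so no integral remainder is needed here.

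Next I would record two bookkeeping identities that let me pass between natural and mean parameters. Writing $\mu_0(\vz) := 1 - \sum_{k=1}^K \mu_k(\vz) = \frac{1}{1 + \sum_k e^{z_k}}$ for the reference-class probability, the first identity is $m(\vz) = -\log \mu_0(\vz)$, which follows directly from the definition of $m$. The second is the inverse map $z_k = \log\frac{\mu_k(\vz)}{\mu_0(\vz)}$ for each $k \in [K]$, obtained from $\mu_k(\vz) = e^{z_k}\mu_0(\vz)$. These two identities express every quantity in the Bregman divergence purely in terms of the $\mu$'s.

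Substituting into the definition, and abbreviating $\mu_k^{(i)} := \mu_k(\vz^{(i)})$, the computation runs as follows:
\begin{align*}
    D_m(\vz^{(1)}, \vz^{(2)}) &= -\log\mu_0^{(1)} + \log\mu_0^{(2)} - \sum_{k=1}^K \mu_k^{(2)}\left( \log\tfrac{\mu_k^{(1)}}{\mu_0^{(1)}} - \log\tfrac{\mu_k^{(2)}}{\mu_0^{(2)}} \right) \\
    &= \log\tfrac{\mu_0^{(2)}}{\mu_0^{(1)}} + \sum_{k=1}^K \mu_k^{(2)}\log\tfrac{\mu_k^{(2)}}{\mu_k^{(1)}} - \Big(\textstyle\sum_{k=1}^K \mu_k^{(2)}\Big)\log\tfrac{\mu_0^{(2)}}{\mu_0^{(1)}}.
\end{align*}
Using $\sum_{k=1}^K \mu_k^{(2)} = 1 - \mu_0^{(2)}$, the two reference-class terms combine into $\mu_0^{(2)}\log\frac{\mu_0^{(2)}}{\mu_0^{(1)}}$, so that $D_m(\vz^{(1)}, \vz^{(2)}) = \sum_{k=0}^K \mu_k^{(2)}\log\frac{\mu_k^{(2)}}{\mu_k^{(1)}} = \KL(\bm\mu(\vz^{(2)}), \bm\mu(\vz^{(1)}))$, which is the claim.

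The only real subtlety — and what I expect to be the one point requiring care rather than the routine algebra — is the asymmetric role of the reference category $k=0$: it is absent from the natural-parameter vector $\vz \in \sR^K$ but present in the KL sum over $\{0,1,\ldots,K\}$, so the proof hinges on showing that the $k=0$ summand is generated exactly by the leftover $m(\vz^{(1)}) - m(\vz^{(2)})$ terms via the identity $m = -\log\mu_0$ together with $\sum_{k\ge 1}\mu_k = 1-\mu_0$. Everything else is an instance of the classical Legendre duality between the log-partition function and the negative entropy of the categorical family, so I would present it as a self-contained computation for completeness rather than invoking the general information-geometric theorem.
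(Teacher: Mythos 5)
Your proposal is correct and takes essentially the same approach as the paper: a direct computation of $D_m(\vz^{(1)},\vz^{(2)})$ after converting natural parameters to mean parameters, with the $k=0$ reference category absorbed via $\sum_{k\geq 1}\mu_k = 1-\mu_0$. The only difference is bookkeeping — the paper routes the inverse map through the quantities $C_k^{(i)} = 1 + \sum_{j\neq k} e^{z_j^{(i)}}$ and must cancel residual log-ratio terms at the end, whereas your identities $m(\vz) = -\log\mu_0(\vz)$ and $z_k = \log\bigl(\mu_k/\mu_0\bigr)$ make the $\mu_0^{(2)}\log\frac{\mu_0^{(2)}}{\mu_0^{(1)}}$ summand appear directly, which is a slightly cleaner execution of the same argument.
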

\begin{proof}
    See Appendix~\ref{app:lem-bregman}.
\end{proof}

\begin{definition}[Definition 1 of \cite{trandinh2015generalized}; Definition 2 of \cite{sun2019generalized}]
    For a given function $f : \sR^d \rightarrow \sR$, define $\varphi_{\vx, \vu}(t) := f(\vx + t\vu)$ for $\vx \in \mathrm{dom}(f)$ and $\vu \in \sR^d$.
    Then, we say that $f$ is {\bf $M_f$-generalized self-concordant} if the following is true for any $\vx, \vu$:
    \begin{equation*}
        |\varphi_{\vx,\vu}'''(t)| \leq M_f \varphi_{\vx,\vu}''(t) \lVert \vu \rVert_2, \quad \forall t \in \sR, M_f > 0.
    \end{equation*}
\end{definition}

\begin{lemma}
\label{lem:generalized-self-concordant1}
Suppose $f : \sR^d \rightarrow \sR$ is $M_f$-generalized self-concordant, and let $\gZ \subset \sR^d$ be bounded.
Then, the following holds for any $\vz_1, \vz_2 \in \gZ$:
\begin{equation}
    \int_0^1 (1 - v) \nabla^2 f(\vz_1 + v(\vz_2 - \vz_1)) dv \succeq \frac{1}{2 + M_f \lVert \vz_1 - \vz_2 \rVert_2} \nabla^2 f(\vz_1).
\end{equation}
\end{lemma}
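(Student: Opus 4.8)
The plan is to deduce the matrix (Loewner) inequality from the scalar case by testing against an arbitrary direction. Since for symmetric matrices $\mA \succeq \mB$ iff $\vu^\intercal \mA \vu \ge \vu^\intercal \mB \vu$ for all $\vu \in \sR^d$, I fix an arbitrary $\vu$ and set $\vd := \vz_2 - \vz_1$; it then suffices to prove the scalar inequality
\[
  \int_0^1 (1-v)\, \vu^\intercal \nabla^2 f(\vz_1 + v\vd)\, \vu \, dv \;\ge\; \frac{\vu^\intercal \nabla^2 f(\vz_1)\, \vu}{2 + M_f \lVert \vd \rVert_2}.
\]
Because every generalized self-concordant $f$ is convex, $\nabla^2 f \succeq 0$ along the segment $[\vz_1,\vz_2]$ (which lies in $\mathrm{dom}(f)$ since $\gZ$ is bounded), so the scalar function $h(v) := \vu^\intercal \nabla^2 f(\vz_1 + v\vd)\, \vu$ is nonnegative and the integrand is well defined.

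The key step, and the main obstacle, is to control the variation of $h$ along the segment. Differentiating gives $h'(v) = \nabla^3 f(\vz_1 + v\vd)[\vd,\vu,\vu]$, a \emph{mixed-direction} third derivative, whereas the defining inequality of $M_f$-generalized self-concordance only bounds the pure-direction form $\nabla^3 f[\vw,\vw,\vw]$. I therefore invoke the standard polarization/multilinear extension of generalized self-concordance \citep{sun2019generalized,trandinh2015generalized}, namely
\[
  \lvert \nabla^3 f(\vz)[\va,\vb,\vb] \rvert \;\le\; M_f \lVert \va \rVert_2 \; \vb^\intercal \nabla^2 f(\vz)\, \vb,
\]
which with $\va = \vd$, $\vb = \vu$ yields $\lvert h'(v) \rvert \le M_f \lVert \vd \rVert_2\, h(v)$. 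If $h(0) = 0$ the target inequality is trivial (its right-hand side vanishes and the left-hand side is nonnegative); otherwise the log-derivative bound $\lvert (\log h)'(v) \rvert \le M_f \lVert \vd \rVert_2$ keeps $h$ strictly positive and, upon integrating from $0$ to $v$, gives the exponential comparison $h(v) \ge h(0)\, e^{-M_f \lVert \vd \rVert_2\, v}$.

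Finally I substitute this pointwise lower bound into the integral and evaluate the resulting elementary integral:
\[
  \int_0^1 (1-v)\, h(v)\, dv \;\ge\; h(0) \int_0^1 (1-v)\, e^{-c v}\, dv \;\ge\; \frac{h(0)}{2 + c}, \qquad c := M_f \lVert \vd \rVert_2,
\]
where the last step uses the same elementary bound $\int_0^1 (1-v)e^{-cv}\,dv = (c-1+e^{-c})/c^2 \ge 1/(2+c)$ for $c \ge 0$ that underlies the scalar Lemma~\ref{lem:self-concordant}. Since $h(0) = \vu^\intercal \nabla^2 f(\vz_1)\, \vu$ and $c = M_f \lVert \vz_1 - \vz_2 \rVert_2$, and $\vu$ was arbitrary, the claimed Loewner ordering follows. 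The only genuinely nontrivial ingredient is the mixed-direction bound; everything else is the one-dimensional exponential-decay computation transported essentially verbatim from the scalar proof.
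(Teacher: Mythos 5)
Your proof is correct and is, at bottom, the same argument as the paper's: the paper simply cites Proposition 8 of \cite{sun2019generalized} for the exponential comparison $\nabla^2 f(\vz_1 + v(\vz_2 - \vz_1)) \succeq e^{-M_f \lVert \vz_1 - \vz_2 \rVert_2 v}\, \nabla^2 f(\vz_1)$, multiplies by $(1-v)$, integrates, and closes with the same elementary inequality $\int_0^1 (1-v)e^{-cv}\,dv = (c-1+e^{-c})/c^2 \ge 1/(2+c)$ that you use. The only difference is that you re-derive that cited proposition inline via the directional Gr\"onwall argument on $h(v) = \vu^\intercal \nabla^2 f(\vz_1 + v\vd)\,\vu$, which is exactly how it is proved in the source, so your write-up makes the black box explicit rather than taking a different route. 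One caveat on attribution: the mixed-direction bound $\lvert \nabla^3 f(\vz)[\va,\vb,\vb]\rvert \le M_f \lVert \va \rVert_2\, \vb^\intercal \nabla^2 f(\vz)\,\vb$ is not really obtained by ``standard polarization'' of the pure-direction inequality in the paper's stated Definition --- polarization arguments do not obviously carry over when the three slots are measured in two different norms ($\lVert\cdot\rVert_2$ versus the local Hessian norm) --- rather, it \emph{is} Definition 2 of \cite{sun2019generalized} for order $\nu = 2$. Since the paper's own proof rests on Proposition 8 of that reference, which is itself proved from the mixed-form definition, both proofs stand on the identical foundation, so this is a citation-precision issue shared with the paper rather than a gap in your argument.
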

\begin{proof}
    See Appendix~\ref{app:lem-generalized}.
\end{proof}

By Lemma 4 of \cite{trandinh2015generalized}, the log-exp-sum function $m$ is $\sqrt{6}$-generalized self-concordant.
Via a similar second-order expansion argument and the above lemma, we have that
\begin{align*}
    \KL(\bm\mu_\star, \tilde{\bm\mu}) &= D_m(\widetilde{\bm\Theta}_s \vx_s, \bm\Theta_\star \vx_s) \\
    &= \vx_s^\intercal (\widetilde{\bm\Theta}_s - \bm\Theta_\star)^\intercal \left\{ \int_0^1 (1 - v) \nabla^2 m(\bm\Theta_\star \vx_s + v(\widetilde{\bm\Theta}_s \vx_s - \bm\Theta_\star \vx_s)) dv \right\} (\widetilde{\bm\Theta}_s - \bm\Theta_\star) \vx_s \\
    &\geq \frac{1}{2 + \sqrt{6} \lVert (\widetilde{\bm\Theta}_s - \bm\Theta_\star) \vx_s \rVert_2} \vx_s^\intercal (\widetilde{\bm\Theta}_s - \bm\Theta_\star)^\intercal \nabla^2 m(\bm\Theta_\star \vx_s) (\widetilde{\bm\Theta}_s - \bm\Theta_\star) \vx_s \\
    &\geq \frac{1}{2 + 2\sqrt{6K} S} \sigma_s^2,
\end{align*}
and thus,
\begin{equation}
\label{eqn:zeta2-multinomial}
    \zeta_2(t) \geq \frac{1}{2 + 2\sqrt{6K}S} \sum_{s=1}^t \sigma_s^2.
\end{equation}

\begin{proof}[Proof of Theorem~\ref{thm:confidence-multinomial}]
Combining Eqn.~\eqref{eqn:decomposition-multinomial}, \eqref{eqn:foster-multinomial}, \eqref{eqn:zeta1-multinomial}, \eqref{eqn:zeta2-multinomial} with the choice of $\eta = \frac{1}{2(e - 2) + 2\sqrt{6K}S} < \frac{1}{2\sqrt{K}S}$ and the fact that $-\frac{1}{2 + 2\sqrt{6K}S} + \frac{e - 2}{2(e - 2) + 2\sqrt{6K}S} < 0$, we have the desired result.
\end{proof}

\subsection{Full Theorem Statements for Regret Bounds}
\label{app:full-statements-multinomial}
We provide full theorem statements for our regret and prior arts for multinomial logistic bandits.
We start by providing the regret bound of our \texttt{MNL-UCB+}:
\begin{theorem}
\label{thm:new-regret-multinomial-full}
    \texttt{MNL-UCB+} and its improved version attain the following regret bounds, respectively, w.p. at least $1 - \delta$:
    \begin{align}
        \Reg^B(T) &\lesssim L_T R \sqrt{d \sqrt{K} S} \left( \sqrt{d \sqrt{K} \log\left(e + \frac{ST}{dK}\right)} + \sqrt{\log\frac{1}{\delta}} \right) \sqrt{\kappa(T) T \log\left( 1 + \frac{ST}{d K \kappa(T)} \right)} \nonumber \\ 
        &= \widetilde{\gO}\left( R d \sqrt{K S \kappa(T) T} \right), \\
        \Reg_{imp}^B(T) &\lesssim R \sqrt{d} K^{\frac{1}{4}} S \sqrt{\left( d\sqrt{K} \log\left(e + \frac{ST}{dK} \right) + \log\frac{1}{\delta} \right) \log \left( 1 + \frac{S^2 T}{d} \right) T} \nonumber \\
        &\quad + R d K^{\frac{3}{2}} S^{\frac{3}{2}} \max\{M_T, M_T'\} \left( d\sqrt{K} \log\left(e + \frac{ST}{dK} \right) + \log\frac{1}{\delta} \right) \log\left(1 + \frac{ST}{dK \kappa(T)} \right) \kappa(T) \nonumber \\
        &= \widetilde{\gO}\left( R d S \sqrt{KT} + R d^2 K^2 S^{\frac{3}{2}} \kappa(T) \right).
    \end{align}
\end{theorem}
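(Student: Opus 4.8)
The plan is to mirror the logistic regret analysis (Theorem~\ref{thm:new-regret-logistic}) in the matrix-parametrized MNL setting, plugging our improved confidence set (Theorem~\ref{thm:confidence-multinomial}) into the optimism framework of \cite{amani2021mnl}. First I would decompose the instantaneous regret $\bm\rho^\intercal(\bm\mu(\vx_{t,\star},\bm\Theta_\star) - \bm\mu(\vx_t,\bm\Theta_\star))$ by a second-order Taylor expansion of the vector-valued map $\bm\Theta \mapsto \bm\mu(\vx,\bm\Theta)$, whose first-order variation is governed by $\mA(\vx,\bm\Theta)$ and whose curvature is controlled by $M_T, M_T'$. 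This yields a leading term $R_1(T)$ linear in the Hessian-induced prediction error and a lower-order term $R_2(T)$ collecting the curvature remainder, exactly paralleling the $R_1/R_2$ split of the logistic proof.

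The central bridge is the optimism step: since $\bm\Theta_\star \in \gC_t(\delta)$ with probability $1-\delta$ and the bonus $\epsilon_t(\vx) = \sqrt{2\kappa(T)}RL\gamma_t(\delta)\lVert \vx \rVert_{\mV_t^{-1}}$ is designed to dominate the one-step prediction gap, the choice rule $\vx_t = \argmax_{\vx} \bm\rho^\intercal\bm\mu(\vx,\widehat{\bm\Theta}_t) + \epsilon_t(\vx)$ forces $\bm\rho^\intercal\bm\mu(\vx_{t,\star},\bm\Theta_\star) \le \bm\rho^\intercal\bm\mu(\vx_t,\widehat{\bm\Theta}_t) + \epsilon_t(\vx_t)$, so each round's regret is at most roughly twice the bonus plus the estimation error, both reducible to $\gamma_t(\delta)\lVert \vx_t \rVert_{\mV_t^{-1}}$. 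The key enabling ingredient is the multinomial Hessian-distance bound (Lemma~\ref{lem:H-norm-multinomial}), i.e. $\lVert \bm\Theta - \bm\Theta_\star \rVert^2_{\mH_t(\bm\Theta_\star)} \lesssim \gamma_t(\delta)^2$ for all $\bm\Theta \in \gC_t(\delta)$, which I would prove exactly as in Lemma~\ref{lem:H-norm}: second-order Taylor of $\gL_t$ around $\bm\Theta_\star$, lower-bounding $\widetilde{\mG}_t$ by a multiple of $\mH_t$ via generalized self-concordance (Lemma~\ref{lem:generalized-self-concordant1}), substituting the confidence radius $\beta_t(\delta)^2$ from Theorem~\ref{thm:confidence-multinomial}, and bounding the residual gradient term $\nabla\gL_t(\bm\Theta_\star)[\bm\Theta_\star - \bm\Theta]$ by Freedman's inequality (Lemma~\ref{lem:freedman}) together with an $\eps$-net over $\gB^{K \times d}(2S)$.

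With $\gamma_t(\delta)^2$ in hand, I would sum over $t$ using the elliptical potential count lemma (Lemma~\ref{lem:EPCL}) to isolate and crudely bound the few rounds where $\lVert \vx_t \rVert_{\mV_t^{-1}}^2 > 1$ — this is what shaves an extra factor of $S$ relative to the naive rescaling (Remark~\ref{rmk:EPCL}) — and the classical elliptical potential lemma (Lemma~\ref{lem:EPL}) with Cauchy--Schwarz on the remaining rounds, converting $\sum_t \gamma_t(\delta)\lVert \vx_t \rVert_{\mV_t^{-1}}$ into $\gamma_T(\delta)\sqrt{dK\,T\log(\cdots)}$; this produces the $Rd\sqrt{KS\kappa(T)T}$ leading term and the curvature-driven lower-order term for $R_2(T)$. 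Because the curvature term reintroduces $\Reg^B(T)$ through the $(\vx_{t,\star}-\vx_t)$ factors, the resulting inequality is self-referential, $\Reg^B(T) \lesssim a\sqrt{\Reg^B(T)} + b$, and I would close it with the polynomial inequality (Lemma~\ref{lem:poly}). The improved variant repeats this with the tighter bonus $\overline{\epsilon}_t$ and the Loewner-constrained sets $\gM_t$, which removes a global curvature factor from the lower-order term at the cost of tractability.

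I expect the main obstacle to be the multinomial Hessian-distance lemma and the accompanying $K$-bookkeeping. The variance proxy $\sigma_s^2 = \vx_s^\intercal(\widetilde{\bm\Theta}_s - \bm\Theta_\star)^\intercal\mA(\vx_s,\bm\Theta_\star)(\widetilde{\bm\Theta}_s - \bm\Theta_\star)\vx_s$ is now a quadratic form in a $K \times d$ matrix argument, so the Freedman step must be run against vector-valued noise $\bm\xi_s$ and the $\eps$-net lives in $Kd$ dimensions; matching the precise powers of $K$ (so the leading term carries only $\sqrt{K}$ and the lower-order term $K^2$, not worse) requires carefully propagating the Kronecker structure $\mA(\vx,\bm\Theta)\otimes\vx\vx^\intercal$ through the generalized self-concordance constant $\sqrt{6}$ and through the definitions of $L_T, M_T, M_T'$.
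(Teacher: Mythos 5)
Your bridge lemmas are largely the right ones (optimism against $\gC_t(\delta)$, Theorem~\ref{thm:confidence-multinomial}, a multinomial Hessian-distance lemma proved via Freedman plus a $Kd$-dimensional $\eps$-net, generalized self-concordance, and EPCL to avoid rescaling by $S$), but the skeleton you hang them on is not the proof of this theorem, and in one place it would not go through. The paper's MNL analysis never performs the logistic-style $R_1/R_2$ split, never produces a self-referential inequality $\Reg^B(T)\lesssim a\sqrt{\Reg^B(T)}+b$, and never invokes Lemma~\ref{lem:poly}: it is a plain UCB argument, $\Reg^B(T)\le 2\sum_{t}\epsilon_t(\vx_t)$, followed by bounding the bonus sums with elliptical potential arguments. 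This is forced by the form of the claimed bounds: the $R_1/R_2$ decomposition of \cite{abeille2021logistic} exists precisely to trade the worst-case $\kappa$ for the localized $\kappa_\star(T)$, whereas the bounds you must prove scale with $\sqrt{\kappa(T)T}$ (Algorithm~\ref{alg:multinomial1}) or $\sqrt{T}+\kappa(T)$ (Algorithm~\ref{alg:multinomial2}); chasing a $\kappa_\star$-type decomposition is attempting a strictly stronger statement than the theorem, one the paper does not establish. Moreover, as written your expansion is ill-posed: $\bm\rho^\intercal(\bm\mu(\vx_{t,\star},\bm\Theta_\star)-\bm\mu(\vx_t,\bm\Theta_\star))$ is a difference across two \emph{arms} at the \emph{same} parameter, so a Taylor expansion in $\bm\Theta$ cannot decompose it. What the paper expands in $\bm\Theta$ is the prediction error $\bm\mu(\vx,\bm\Theta_t)-\bm\mu(\vx,\bm\Theta_\star)$, and for Algorithm~\ref{alg:multinomial1} this is done \emph{exactly} via the integral form (Lemma~\ref{lem:10}), with no curvature remainder at all; the $M_T,M_T'$ terms enter only the improved algorithm, through the expansion of $\mA(\vx,\bm\Theta_t)$ around the minimal elements $\bm\Theta_{t,h}$.

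The second gap concerns the improved algorithm. Its leading term $RdS\sqrt{KT}$ requires summing self-normalized quantities of the form $\sqrt{\sum_{k=1}^K\lVert\mA(\vx_t,\bm\Theta_{t,i(t)})_k\otimes\vx_t\rVert^2_{\mL_t^{-1}}}$, where $\mL_t$ accumulates $K$ rank-one terms per round and $\mH_t(\bm\Theta_t)\succeq\mL_t$ is exactly what the Loewner-constrained sets $\gM_t$ guarantee. The scalar Lemmas~\ref{lem:EPCL} and~\ref{lem:EPL} you cite do not apply to such sums; the paper proves new generalized (``multinomial'') elliptical potential and count lemmas (Lemmas~\ref{lem:EPCL-generalized} and~\ref{lem:EPL-generalized}) for precisely this purpose, and this is what decouples $S$ from $\sqrt{T}$ and improves the $K$-dependence over \cite{amani2021mnl}. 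A final detail that matters for the advertised $S$-dependence: Lemma~\ref{lem:H-norm-multinomial} is stated and used in the $\widetilde{\mG}_t(\bm\Theta_\star,\bm\Theta)$-norm, with the subsequent Cauchy--Schwarz steps taken w.r.t. $\widetilde{\mG}_t$; converting to the $\mH_t(\bm\Theta_\star)$-norm up front, as you propose, costs an extra factor of order $S$ through generalized self-concordance, which the paper deliberately defers (paying $\sqrt{2+2\sqrt{6}S}$ only once, inside Lemma~\ref{lem:18}).
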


We now provide the previous state-of-the-art regret bounds that we compare ourselves to:
\begin{theorem}[Theorem 2, 3 of \cite{amani2021mnl}]
\label{thm:old-regret-multinomial-full1}
    \texttt{MNL-UCB} and its improved version with $\lambda = \frac{d K^{\frac{3}{2}}}{S} \left( \log \left( 1 + \frac{S T}{dK} \right) + \log\frac{1}{\delta} \right)$ attain the following regret bounds, respectively, w.p. $1 - \delta$:
    \begin{align}
        \Reg^B(T) &\lesssim L_T R d K^{\frac{3}{4}} S \left( \log\left(1 + \frac{ST}{dK}\right) + \log\frac{1}{\delta} \right) \sqrt{\max\left( \frac{S}{dK^{\frac{3}{4}} \log\frac{ST}{dK\delta}}, \kappa(T) \right) T} \nonumber \\
        &= \widetilde{\gO}\left( R d K^{\frac{3}{4}} S \sqrt{\kappa(T) T} \right), \\
        \Reg_{imp}^B(T) &\lesssim R d K^{\frac{5}{4}} S^{\frac{3}{2}} \left( \log\left(1 + \frac{ST}{dK}\right) + \log\frac{1}{\delta} \right) \sqrt{T} \nonumber \\
        &\ + R d^2 K^2 S^2 (M_T' \sqrt{K S} + M_T) \left( \left(\log\left(1 + \frac{ST}{dK}\right)\right)^2 + \left( \log\frac{1}{\delta} \right)^2 \right) \max\left( \frac{S}{dK^{\frac{3}{2}} \log\frac{ST}{dK\delta}}, \kappa(T) \right) \nonumber \\
        &= \widetilde{\gO}\left( R d K^{\frac{5}{4}} S^{\frac{3}{2}} \sqrt{T} + Rd^2K^{\frac{5}{2}}S^{\frac{5}{2}} \kappa(T) \right).
    \end{align}
\end{theorem}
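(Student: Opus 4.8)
The plan is to re-derive the regret of \texttt{MNL-UCB} and its improved version by following the analysis of \cite{amani2021mnl}, the only substantive difference being that we instantiate the regularization parameter at the stated ``optimal'' value $\lambda = \tfrac{dK^{3/2}}{S}\big(\log(1+\tfrac{ST}{dK}) + \log\tfrac1\delta\big)$ and carefully re-account for every power of $S$ that the original choice left loose. First I would set up the optimistic machinery using \emph{their} (regularized-MLE-centered, self-normalized) confidence set $\gC_t(\delta)$ containing $\bm\Theta_\star$ with probability $1-\delta$, with confidence width $\gamma_t(\delta)$, and recall that the bonus $\epsilon_t(\vx)$ is engineered so that $\bm\rho^\intercal\bm\mu(\vx,\bm\Theta)-\bm\rho^\intercal\bm\mu(\vx,\widehat{\bm\Theta}_t)\le \epsilon_t(\vx)$ uniformly over $\bm\Theta\in\gC_t(\delta)$. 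On the good event, optimism then yields the per-round bound $\bm\rho^\intercal(\bm\mu(\vx_{t,\star},\bm\Theta_\star)-\bm\mu(\vx_t,\bm\Theta_\star))\le 2\epsilon_t(\vx_t)$, so the whole task reduces to summing the bonuses.

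Next I would split $\sum_t \epsilon_t(\vx_t)$ into a leading and a lower-order piece by expanding $\bm\mu(\vx_t,\cdot)$ to first and second order around $\bm\Theta_\star$. The first-order (Jacobian) piece is controlled by the $\mA(\vx_t,\bm\Theta_\star)$-weighted Hessian $\mH_t$; using generalized self-concordance (Lemma~\ref{lem:generalized-self-concordant1}) to pass between $\mH_t$ and $\mV_t$, Cauchy--Schwarz, and the elliptical potential lemma (Lemma~\ref{lem:EPL}) together with the count lemma (Lemma~\ref{lem:EPCL}) to isolate the few rounds with $\|\vx_t\|_{\mV_t^{-1}}^2>1$, this yields a leading term of the form $\lesssim L_T R\,\gamma_T(\delta)\sqrt{d\sqrt{K}\,\kappa(T)\,T}$ up to logarithmic factors. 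The second-order Taylor remainder, governed by $M_T$ and $M_T'$, produces the lower-order $\kappa(T)$-dependent term; the improved version further uses the constrained set $\gM_t$ and the tighter bonus $\overline{\epsilon}_t$ to shave this contribution. Finally I would invoke the self-bounding trick (Lemma~\ref{lem:poly}) to close a recursion of the shape $\Reg^B(T)\lesssim b\,\sqrt{\Reg^B(T)}+c$ and read off the two claimed bounds.

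The crux, and the only genuine content beyond transcribing \cite{amani2021mnl}, is the $S$-accounting in $\gamma_t(\delta)$ and $\lambda$. The confidence width splits into a regularization-bias term $\approx\sqrt{\lambda}\,S$ and an estimation term $\approx\sqrt{dK^{3/2}\log(\cdots)}$; with the stated $\lambda$ these balance to $\gamma_T(\delta)\approx\sqrt{d}\,K^{3/4}\sqrt{S}\,\sqrt{\log(\cdots)}$, which is exactly what turns the naive $S^{3/2}$ into the claimed $S$ in the leading term $\widetilde{\gO}(RdK^{3/4}S\sqrt{\kappa(T)T})$. The step I expect to be most error-prone is propagating $\lambda$ consistently through $\mV_t=2\kappa(T)\lambda\mI_d+\sum_s\vx_s\vx_s^\intercal$: the factor $2\kappa(T)\lambda$ enters both the logarithmic arguments of Lemmas~\ref{lem:EPL}--\ref{lem:EPCL} and the rescaling needed to force $\|\vx_t\|_{\mV_t^{-1}}^2\le1$, and it is precisely the interplay between this rescaling and the count lemma that decides whether the final $S$-exponent lands where claimed. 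Everything else is a routine, if lengthy, replay of the Amani et al. regret decomposition.
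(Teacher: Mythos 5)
Your overall strategy is the right one, and in fact it is the only one available: the paper offers no independent proof of this statement. Theorem~\ref{thm:old-regret-multinomial-full1} is a restatement of Theorems 2--3 of \cite{amani2021mnl} in which the authors have merely re-tracked the $S$- and $K$-dependencies of Amani et al.'s own analysis under the ``optimal'' choice $\lambda = \frac{dK^{3/2}}{S}\big(\log(1+\frac{ST}{dK})+\log\frac1\delta\big)$. So ``replay the Amani et al.\ decomposition with the new $\lambda$'' is exactly the intended argument, and your identification of the bias/estimation balance $\sqrt{\lambda}\,S \asymp \frac{dK^{3/2}}{\sqrt{\lambda}}\log(\cdots)$, giving $\gamma_T(\delta)\asymp\sqrt{dK^{3/2}S\log(\cdots)}$, is the correct first step of that re-tracking.

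The genuine gap is that your bookkeeping, executed as written, does not land on the claimed exponents --- and since the exponent-tracking \emph{is} the entire content of this statement, that matters. Plugging your $\gamma_T(\delta)\approx\sqrt{d}\,K^{3/4}\sqrt{S}\sqrt{\log(\cdots)}$ into your own leading-term template $L_T R\,\gamma_T(\delta)\sqrt{d\sqrt{K}\,\kappa(T)T}$ yields $L_T R\, d K \sqrt{S\kappa(T)T}$, which differs from the claimed $L_T R\, dK^{3/4}S\sqrt{\kappa(T)T}$ by $K^{1/4}$ in one direction and $\sqrt{S}$ in the other. The missing $\sqrt{S}$ is the curvature/self-concordance conversion factor --- the analogue of the $\sqrt{2+2\sqrt{6}S}$ in Lemma~\ref{lem:generalized-self-concordant1} and Lemma~\ref{lem:18}, i.e., the $(1+2S)$-type factors of \cite{faury2020logistic,amani2021mnl} --- that is charged when the $\mG_t$- (or $\widetilde{\mG}_t$-) norm confidence bound is converted into the $\mV_t$-based bonus; you invoke the lemma for ``passing between $\mH_t$ and $\mV_t$'' but never pay for it, which is also why your own narrative (naive $\lambda$ gives $S^{3/2}$, optimal gives $S$) is inconsistent with your formulas (which would give naive $S$, optimal $\sqrt{S}$). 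The spurious $K^{1/4}$ comes from borrowing the display shape of the paper's own Theorem~\ref{thm:new-regret-multinomial-full}, which uses a \emph{different} confidence width; with Amani et al.'s machinery the elliptical-potential factor should be $\sqrt{dT\log(\cdots)}$ with no extra $\sqrt{\sqrt{K}}$. Two smaller mismatches: the elliptical potential \emph{count} lemma (Lemma~\ref{lem:EPCL}) and the self-bounding inequality (Lemma~\ref{lem:poly}) are this paper's tools for its \emph{new} algorithms and do not appear in Amani et al.'s analysis --- the multiplicative factor $\max\big(\frac{S}{dK^{3/4}\log\frac{ST}{dK\delta}},\kappa(T)\big)$ in the statement is precisely the signature of the classical rescaling $\max(1,\frac{1}{2\kappa(T)\lambda})$ applied to Lemma~\ref{lem:EPL}, so to reproduce the stated form you should use that rescaling rather than the count lemma (the count lemma would instead produce an additive lower-order term, as in the paper's own Lemma~\ref{lem:1}-based analysis).
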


\begin{theorem}[Theorem 2 of \cite{zhang2023mnl}]
\label{thm:old-regret-multinomial-full2}
    \texttt{MNL-UCB+}\footnote{Coincidentally, their algorithm's name is the same as the one proposed here, but as this is the only place where the names may get confused, we use their name here.} with $\lambda = \frac{dK}{S} \log\frac{1}{\delta}$ attain the following regret bounds, respectively, w.p. $1 - \delta$:
    \begin{align}
    	\Reg^B(T) &\lesssim R \min\left\{ d S \sqrt{K \kappa(T) T}, dK S^{\frac{3}{2}} \sqrt{T} + d^2 K S \kappa(T) \sqrt{\log\frac{1 + \frac{T}{d}}{\delta} \log\left(1 + \frac{T}{d}\right)} \right\} \sqrt{\log\frac{1 + \frac{T}{d}}{\delta} \log\left(1 + \frac{T}{d}\right)} \nonumber \\
    	&= \widetilde{\gO}\left( R \min\left\{ dS \sqrt{K \kappa(T) T}, dKS^{\frac{3}{2}}\sqrt{T} + d^2 K S \kappa(T) \right\} \right).
    \end{align}
\end{theorem}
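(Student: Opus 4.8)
Since \cite{zhang2023mnl} obtain this bound through an online-to-confidence-set argument structurally parallel to Theorem~\ref{thm:confidence-multinomial}, the plan is to reprove it with the machinery already developed here, mirroring our own \texttt{MNL-UCB+} analysis (Theorem~\ref{thm:new-regret-multinomial-full}). First I would condition on the confidence-set event: with probability at least $1-\delta$, $\bm\Theta_\star \in \gC_t(\delta)$ for all $t \geq 1$, and trivially $\widehat{\bm\Theta}_t \in \gC_t(\delta)$, with radius $\beta_t(\delta)^2 = \widetilde{\gO}(dK + \sqrt{K}S\log\tfrac1\delta)$. Because the algorithm selects $\vx_t$ maximizing the optimistic score $\bm\rho^\intercal \bm\mu(\vx, \widehat{\bm\Theta}_t) + \epsilon_t(\vx)$, once the bonus $\epsilon_t$ is shown to dominate the prediction gap a standard two-line optimism chain bounds the instantaneous regret by $2\epsilon_t(\vx_t)$, reducing the problem to controlling $\sum_{t=1}^T \epsilon_t(\vx_t)$ plus a second-order remainder.

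The crux of bonus validity is bounding $\bm\rho^\intercal(\bm\mu(\vx, \widehat{\bm\Theta}_t) - \bm\mu(\vx, \bm\Theta_\star))$ for every $\vx$. A first-order expansion of $\bm\mu(\vx,\cdot)$, whose curvature is carried by the Hessian $\mH_t(\bm\Theta_\star) = \lambda\mI_{Kd} + \sum_s \mA(\vx_s,\bm\Theta_\star)\otimes\vx_s\vx_s^\intercal$, followed by Cauchy--Schwarz in the $\mH_t(\bm\Theta_\star)$ geometry yields a bound of the form $R L_T \lVert \widehat{\bm\Theta}_t - \bm\Theta_\star \rVert_{\mH_t(\bm\Theta_\star)} \lVert \vx \rVert_{\mV_t^{-1}}$; here $\lambda_{\min}(\mA) \geq 1/\kappa(T)$ lets one lower-bound $\mH_t(\bm\Theta_\star)$ by $\kappa(T)^{-1}$ times the unweighted design $\mV_t$, which is exactly where the $\sqrt{\kappa(T)}$ in $\epsilon_t$ originates. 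I would then substitute the Hessian-norm bound $\lVert \widehat{\bm\Theta}_t - \bm\Theta_\star \rVert_{\mH_t(\bm\Theta_\star)}^2 \lesssim \gamma_t(\delta)^2$ from the multinomial martingale lemma (Lemma~\ref{lem:H-norm-multinomial}), and control the second-order remainder through $M_T, M_T'$ via generalized self-concordance (Lemma~\ref{lem:generalized-self-concordant1}).

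Summing gives $\sum_t \epsilon_t(\vx_t) \lesssim \sqrt{\kappa(T)}\, R L_T \gamma_T(\delta) \sqrt{T}\, \sqrt{\sum_t \lVert \vx_t \rVert_{\mV_t^{-1}}^2}$, and the elliptical potential lemma (Lemma~\ref{lem:EPL}) produces the first branch $\widetilde{\gO}(dS\sqrt{K\kappa(T)T})$. To obtain the second, $\kappa(T)$-free branch $dKS^{3/2}\sqrt{T} + d^2 KS\kappa(T)$, I would run the local/detrimental-arm analysis of \cite{abeille2021logistic,faury2022logistic}: use the elliptical potential \emph{count} lemma (Lemma~\ref{lem:EPCL}) to isolate the $\widetilde{\gO}(dK)$ rounds on which $\lVert \vx_t \rVert_{\mV_t^{-1}}$ is large (bounded crudely, yielding the lower-order $\kappa$-dependent term), and on the remaining rounds replace the uniform $\kappa(T)$ by the local curvature $\bm\rho^\intercal\mA(\vx_t,\bm\Theta_\star)\bm\rho$, so the leading sum scales with $\sqrt{\sum_t \bm\rho^\intercal\mA(\vx_t,\bm\Theta_\star)\bm\rho}$ rather than $\sqrt{\kappa(T)T}$. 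Taking the better of the two strategies gives the stated minimum; since this local sum itself splits into a $\sqrt{T}$-scale quantity plus $\Reg^B(T)$, I would close with the self-bounding quadratic inequality (Lemma~\ref{lem:poly}).

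The main obstacle I anticipate is shaving the spurious factor of $S$ (and getting $\sqrt{K}$ rather than $K^{3/4}$) out of the leading term: a naive Cauchy--Schwarz control of $\lVert \widehat{\bm\Theta}_t - \bm\Theta_\star \rVert_{\mH_t(\bm\Theta_\star)}$ loses a $\sqrt{S}$, and recovering the tight dependence needs precisely the Freedman-inequality-with-$\epsilon$-net martingale argument underlying Lemma~\ref{lem:H-norm-multinomial} rather than a crude norm bound. A secondary delicacy is keeping the Kronecker and matrix-valued $\mA(\vx,\bm\Theta)$ bookkeeping tight enough that the second-order remainder feeds only the lower-order $d^2KS\kappa(T)$ term and never inflates the $\sqrt{T}$ leading term.
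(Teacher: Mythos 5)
There is a genuine gap here, and it is a category error rather than a technical slip: the statement you are asked to prove is not a result of this paper at all. It is Theorem~2 of \cite{zhang2023mnl}, restated in the appendix (with the authors' re-tracked choice $\lambda = \frac{dK}{S}\log\frac{1}{\delta}$) purely so that the paper's own bounds can be compared against it; the paper's ``proof'' is a citation to that external work. Your proposal instead reconstructs the analysis of \emph{this} paper's algorithm --- Algorithm~\ref{alg:multinomial1} with the norm-constrained, unregularized MLE, the confidence set of Theorem~\ref{thm:confidence-multinomial}, and the Hessian-norm bound of Lemma~\ref{lem:H-norm-multinomial} --- which is precisely the proof of Theorem~\ref{thm:new-regret-multinomial-full}, not of the quoted result. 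The two algorithms merely share a name (the footnote in the statement warns about exactly this collision): \texttt{MNL-UCB+} of \cite{zhang2023mnl} uses a regularized MLE and a different confidence-set construction, and the regularization parameter $\lambda = \frac{dK}{S}\log\frac{1}{\delta}$ appearing in the statement never enters your argument anywhere --- in this paper's analysis $\lambda$ is a proof-internal quantity fixed at $\frac{K}{4S^2}$ and has no algorithmic role. A regret bound proved for one algorithm cannot establish a regret bound claimed for another, even if your machinery yields the numerically stronger $\widetilde{\gO}(Rd\sqrt{KS\kappa(T)T})$ for this paper's algorithm.

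A second, independent problem is the min-of-two-bounds structure. The quoted theorem asserts that a \emph{single} fixed algorithm simultaneously attains both the $\sqrt{\kappa(T)T}$-type bound and the $\sqrt{T} + \kappa(T)$-type bound. Your plan obtains the second branch by ``replacing the uniform $\kappa(T)$ by the local curvature $\bm\rho^\intercal\mA(\vx_t,\bm\Theta_\star)\bm\rho$'' on the good rounds, but in this paper's framework that replacement is not an analysis trick --- it requires changing the algorithm's bonus and feasible set, which is why the $\sqrt{T}+\kappa$-type guarantee here is attained by a \emph{different} algorithm (Algorithm~\ref{alg:multinomial2}, with the Loewner-minimal-element constraint), and why Table~\ref{tab:results} lists it as intractable. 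You cannot alter the bonus of a fixed, quoted algorithm mid-analysis; \cite{zhang2023mnl} achieve the simultaneity through their own construction, not through the detrimental-arm argument of \cite{abeille2021logistic}, which this paper only ever deploys in the binary logistic setting.
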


\begin{theorem}[Theorem 4 of \cite{zhang2023mnl}]
\label{thm:old-regret-multinomial-full3}
    For simplicity, suppose that $dK \gtrsim S$.
    Then, \texttt{OFUL-MLogB} with $\lambda = dKS$ attain the following regret bounds (simultaneously), respectively, w.p. $1 - \delta$:
    \begin{equation}
        \Reg^B(T) \lesssim R d K S^{\frac{3}{2}} \left( \sqrt{T \log\left( 1 + \frac{T \tilde{L}}{d K S} \right)} + d S^{\frac{3}{2}} \kappa(T) \log\left( 1 + \frac{T}{d K S \kappa(T)} \right) \right),
    \end{equation}
    with $\tilde{L}$ being the smoothness parameter of the logistic loss, or
    \begin{equation}
        \Reg^B(T) \lesssim R d S^{\frac{3}{2}} \sqrt{K \kappa(T) T \log\left( 1 + \frac{T}{d^2 K S \kappa(T)} \right)} + R d^2 K S^3 \kappa(T) \log\left( 1 + \frac{T}{d K S \kappa(T)} \right),
    \end{equation}
    where here only, for simplicity, we've omitted $\log\frac{1}{\delta}$ terms.
\end{theorem}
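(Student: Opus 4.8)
The plan is to reconstruct the analysis of \texttt{OFUL-MLogB} as an optimism-in-the-face-of-uncertainty (OFU) algorithm that, for joint computational efficiency, maintains its estimate $\widehat{\bm\Theta}_t$ via an online (Newton-step-style) update on the MNL loss rather than the batch MLE, and builds a convex confidence set $\gC_t$ centered at $\widehat{\bm\Theta}_t$ whose radius $\gamma_t(\delta)$ is the MNL analog of Theorem~\ref{thm:confidence-multinomial} (or equivalently of Lemma~\ref{lem:H-norm-multinomial}). The action rule is the optimistic one, $\vx_t \in \argmax_{\vx \in \gX_t} \max_{\bm\Theta \in \gC_t(\delta)} \bm\rho^\intercal \bm\mu(\vx, \bm\Theta)$, which is tractable because $\gL_t$ is convex and hence $\gC_t(\delta)$ is convex. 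The overall skeleton mirrors the regret proof of \texttt{MNL-UCB+} in Appendix~\ref{app:regret-multinomial}, but with the efficient estimator and a tighter bonus; the assumption $dK \gtrsim S$ is used only to merge lower-order constants when collecting terms.

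First I would decompose the instantaneous regret. By optimism, $\bm\rho^\intercal \bm\mu(\vx_{t,\star}, \bm\Theta_\star) \le \bm\rho^\intercal \bm\mu(\vx_t, \bar{\bm\Theta}_t)$ for the optimistic parameter $\bar{\bm\Theta}_t \in \gC_t(\delta)$, so the per-round regret is at most the prediction gap $\bm\rho^\intercal\big(\bm\mu(\vx_t, \bar{\bm\Theta}_t) - \bm\mu(\vx_t, \bm\Theta_\star)\big)$. A first-order expansion of $\bm\mu(\vx_t,\cdot)$, whose Jacobian in $\bm\Theta$ has the Kronecker form $\mA(\vx_t,\cdot)\otimes\vx_t$, bounds this by $\lesssim R\, \lVert (\bar{\bm\Theta}_t - \bm\Theta_\star)\vx_t\rVert_{\mA}$; applying generalized self-concordance (Lemma~\ref{lem:generalized-self-concordant1}, with the $\sqrt{6}$ constant for the log-sum-exp partition function) transfers this to $\lesssim R\,\gamma_t(\delta)\,\lVert \vx_t\rVert_{\mV_t^{-1}}$, where $\lVert\bar{\bm\Theta}_t-\bm\Theta_\star\rVert_{\mH_t(\bm\Theta_\star)} \le \gamma_t(\delta)$ is the confidence width controlled by Theorem~\ref{thm:confidence-multinomial} together with the new Freedman-plus-$\eps$-net martingale argument (the MNL version of Lemma~\ref{lem:H-norm}) that shaves the extra factor of $S$ the naive Cauchy--Schwarz bound would incur.

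Next I would sum over $t$. Splitting rounds by whether $\lVert \vx_t\rVert^2_{\mV_t^{-1}} > 1$ and handling the large-width rounds with the elliptical potential \emph{count} lemma (Lemma~\ref{lem:EPCL}, per Remark~\ref{rmk:EPCL}) keeps the $S$-dependence down, while Cauchy--Schwarz plus the elliptical potential lemma (Lemma~\ref{lem:EPL}) bounds $\sum_t \min\{1,\lVert\vx_t\rVert^2_{\mV_t^{-1}}\} \lesssim dK\log(1 + T/(dK\kappa))$. The two stated forms arise from two ways of controlling $\sum_t \lambda_{\min}(\mA(\vx_t,\bm\Theta_\star))$: using the worst-case $\kappa(T)$ uniformly yields the $\sqrt{K\kappa(T)T}$-leading bound, whereas a local analysis that replaces the summed curvature by $T$ (absorbing the curvature deficit into a $\sqrt{\Reg^B(T)}$ term) yields the $\sqrt{T}$-leading bound; in both cases the second-order Taylor remainder, governed by $M_T,M_T'$, produces the $\kappa(T)$ lower-order term. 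The resulting self-bounding inequality $\Reg^B(T) \lesssim A\sqrt{\Reg^B(T)} + B$ is then closed via Lemma~\ref{lem:poly}.

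The main obstacle will be Steps two and three under the joint-efficiency constraint: because $\widehat{\bm\Theta}_t$ is an online estimate, the confidence radius $\gamma_t(\delta)$ must be derived from the online-regret/negative-quadratic term of the update rather than from the MLE optimality $\gL_t(\widehat{\bm\Theta}_t)\le\gL_t(\bm\Theta_\star)$, and one must carefully propagate the $\sqrt{6}$-generalized-self-concordance control through the Kronecker-structured Hessian $\mA(\vx_s,\bm\Theta)\otimes\vx_s\vx_s^\intercal$ without leaking extra factors of $S$ or $K$. Getting the martingale step (Lemma~\ref{lem:freedman}) to interact cleanly with the $\eps$-net over $\gB^{K\times d}(2S)$ at the right radius, so that the variance proxy collapses to $\lVert\cdot\rVert^2_{\mH_t(\bm\Theta_\star)}$ exactly as in the logistic case, is where the delicate bookkeeping lives.
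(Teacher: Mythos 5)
The first thing to note is that the paper offers no proof of this statement at all: Theorem~\ref{thm:old-regret-multinomial-full3} is a verbatim restatement of Theorem~4 of \cite{zhang2023mnl}, included in Appendix~\ref{app:full-statements-multinomial} solely so the authors can compare their own guarantees (Theorem~\ref{thm:new-regret-multinomial-full}) against concurrent work; the only ``proof'' on offer is the citation. So there is no internal argument to measure your proposal against, and a faithful reconstruction would have to follow the machinery of \cite{zhang2023mnl}, not this paper's.

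Measured against that, your sketch has the right genus but would not prove the stated theorem. You anchor the confidence radius on Theorem~\ref{thm:confidence-multinomial} and Lemma~\ref{lem:H-norm-multinomial} --- but those are precisely this paper's R2CS improvements, whose radius is tighter in $S$ and $K$; running them through the OFU template yields the strictly better rates of Theorem~\ref{thm:new-regret-multinomial-full} (the paper itself notes its bounds improve on \cite{zhang2023mnl} by at least $\sqrt{S}$ and $\sqrt{K}$), not the $S^{3/2}$ leading and $S^{3}$ lower-order dependencies displayed here. Those looser factors are an artifact of \cite{zhang2023mnl}'s own confidence set, which is obtained by \emph{actually running} an efficient online learner and converting its regret --- an O2CS-type argument, as the authors note in Appendix~\ref{app:mnl} --- with the specific regularization $\lambda = dKS$. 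Several specifics of the statement only make sense in that framework: the smoothness parameter $\tilde{L}$ in the first bound enters through the online-learner analysis, which your sketch never produces; the hypothesis $dK \gtrsim S$ is not a cosmetic constant-merging device but governs whether factors surface as $dK$ or as extra powers of $S$ (see the footnote in Section~\ref{subsec:multinomial-regret} and the remark following the theorem); and your action rule --- a convex program over $\gC_t(\delta)$ each round --- contradicts the defining feature of \texttt{OFUL-MLogB}, namely $\gO(1)$ per-round computation in the style of \cite{faury2022logistic}, where optimism is implemented via a bonus around the online iterate rather than by optimizing over a loss-based set. In short, your hybrid would establish a different (in fact stronger, but computationally heavier) theorem --- essentially this paper's Theorem~\ref{thm:new-regret-multinomial-full} --- whereas the statement at hand should simply be attributed to, and verified from, \cite{zhang2023mnl}.
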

\begin{remark}
    If $dK \gtrsim S$, then we accordingly have extra $S$ dependency and less $dK$ dependency.
\end{remark}

\subsection{Proof of Theorem~\ref{thm:new-regret-multinomial-full} -- Regret Bound of (Improved) MNL-UCB+}
\label{app:regret-multinomial}
From hereon and forth, we vectorize everything and use $\bm\Theta$ and $\bm\theta$ interchangeably.
We denote $\bm\theta = \vectorize(\bm\Theta^\intercal) = \begin{bmatrix} \bm\theta^{(1)} \\ \vdots \\ \bm\theta^{(K)} \end{bmatrix} \in \sR^{Kd \times 1}$ for $\bm\theta^{(k)} \in \sR^d$, and the $k$-th row of $\bm\Theta$ is $\left( \bm\theta^{(k)} \right)^\intercal$.

Again, we start with the following crucial lemma, whose proof is provided in Appendix~\ref{app:H-norm-multinomial}:
\begin{lemma}
\label{lem:H-norm-multinomial}
    With $\lambda = \frac{K}{4S^2}$, for any $\bm\Theta \in \gC_t(\delta)$, the following holds with probability at least $1 - \delta$:
    \begin{equation}
        \lVert \bm\theta - \bm\theta_\star \rVert_{\widetilde{\mG}_t(\bm\Theta_\star, \bm\Theta)}^2 \lesssim \gamma_t(\delta)^2 \triangleq d K S \log\left(e + \frac{St}{dK}\right) + \sqrt{K}S \log\frac{1}{\delta} + dKL_T,
    \end{equation}
\end{lemma}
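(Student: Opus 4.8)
The plan is to mirror the logistic argument of Lemma~\ref{lem:H-norm} at the Kronecker-product level, stopping at the $\widetilde{\mG}_t$-norm rather than converting all the way to $\mH_t$. First I would perform a second-order Taylor expansion of the vectorized multinomial loss $\gL_t$ around $\bm\theta_\star$. Since the Hessian of $\ell_s$ equals $\mA(\vx_s,\bm\Theta)\otimes\vx_s\vx_s^\intercal$, the integral form of the remainder produces exactly $\widetilde{\mB}(\vx_s,\bm\Theta_\star,\bm\Theta)$, giving
\begin{equation*}
  \gL_t(\bm\theta) = \gL_t(\bm\theta_\star) + \nabla\gL_t(\bm\theta_\star)^\intercal(\bm\theta - \bm\theta_\star) + \lVert\bm\theta - \bm\theta_\star\rVert^2_{\widetilde{\mG}_t(\bm\Theta_\star,\bm\Theta) - \lambda\mI}.
\end{equation*}
Rearranging, adding back $\lambda\lVert\bm\theta-\bm\theta_\star\rVert_2^2$, and using $\gL_t(\widehat{\bm\Theta}_t)\le\gL_t(\bm\Theta_\star)$ together with $\bm\Theta\in\gC_t(\delta)$ (so that $\gL_t(\bm\Theta)-\gL_t(\widehat{\bm\Theta}_t)\le\beta_t(\delta)^2$ by Theorem~\ref{thm:confidence-multinomial}) and $\lVert\bm\theta-\bm\theta_\star\rVert_2\le 2S$ with $\lambda=\frac{K}{4S^2}$ (so $\lambda\lVert\bm\theta-\bm\theta_\star\rVert_2^2\le K$), I obtain
\begin{equation*}
  \lVert\bm\theta - \bm\theta_\star\rVert^2_{\widetilde{\mG}_t} \le \beta_t(\delta)^2 + K + \nabla\gL_t(\bm\theta_\star)^\intercal(\bm\theta_\star - \bm\theta).
\end{equation*}

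It then remains to control the gradient inner product, which unwinds into a sum of scalar martingale differences $\nabla\gL_t(\bm\theta_\star)^\intercal(\bm\theta_\star-\bm\theta) = \sum_{s}\bm\xi_s^\intercal(\bm\Theta-\bm\Theta_\star)\vx_s$ with $\bm\xi_s = \vy_s - \bm\mu(\vx_s,\bm\Theta_\star)$. As in the logistic case, I would apply the anytime Freedman inequality (Lemma~\ref{lem:freedman}) on a time-dependent $\eps_t$-net of the ball $\gB^{K\times d}(2S)$, whose covering number is $\le(5S/\eps_t)^{Kd}$, with a union bound over net points and over $t$ via $\delta_t=(\eps_t/5S)^{Kd}\delta/t$. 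Here the boundedness constant is $\lvert\bm\xi_s^\intercal(\bm\Theta-\bm\Theta_\star)\vx_s\rvert\le 4S$ and the conditional second moment is $\vx_s^\intercal(\bm\Theta-\bm\Theta_\star)^\intercal\mA(\vx_s,\bm\Theta_\star)(\bm\Theta-\bm\Theta_\star)\vx_s$, whose sum over $s$ is precisely $\lVert\bm\theta-\bm\theta_\star\rVert^2_{\mH_t(\bm\Theta_\star)-\lambda\mI}$.

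To turn this variance into the target $\widetilde{\mG}_t$-norm, I would invoke generalized self-concordance of the log-sum-exp function (Lemma~\ref{lem:generalized-self-concordant1} with $M_m=\sqrt 6$ and $\lVert(\bm\Theta-\bm\Theta_\star)\vx_s\rVert_2\le 2S$), which gives $\widetilde{\mB}(\vx_s,\bm\Theta_\star,\bm\Theta)\succeq\frac{1}{2+2\sqrt6 S}\mA(\vx_s,\bm\Theta_\star)$ and hence $\mH_t(\bm\Theta_\star)-\lambda\mI\preceq(2+2\sqrt6 S)(\widetilde{\mG}_t-\lambda\mI)$. Choosing $\eta=\frac{1}{2(e-2)(2+2\sqrt6 S)}\le\frac{1}{4S}$ makes the variance contribution equal to $\tfrac12\lVert\bm\theta-\bm\theta_\star\rVert^2_{\widetilde{\mG}_t}$, which I then absorb into the left-hand side. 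Choosing $\eps_t=dK/t$, the covering term contributes $\tfrac{Kd}{\eta}\log\tfrac{5St}{dK}\lesssim dKS\log(e+\tfrac{St}{dK})$, the deviation term $\tfrac1\eta\log\tfrac1\delta\lesssim S\log\tfrac1\delta$, and combining with the $\sqrt K S\log\tfrac1\delta$ already present in $\beta_t(\delta)^2$ yields the stated $\gamma_t(\delta)^2$.

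The main obstacle I anticipate is the net-discretization bias in the variance, i.e., the matrix analog of the scalar identity $(\vx_s^\intercal\va)^2-(\vx_s^\intercal\vb)^2=2\vx_s^\intercal\vb\,\vx_s^\intercal(\vb-\va)+(\vx_s^\intercal(\va-\vb))^2$. Writing the quadratic form $q_s(\vw)=\vw^\intercal(\mA(\vx_s,\bm\Theta_\star)\otimes\vx_s\vx_s^\intercal)\vw$, the difference $q_s(\va)-q_s(\vb)=(\va-\vb)^\intercal(\mA(\vx_s,\bm\Theta_\star)\otimes\vx_s\vx_s^\intercal)(\va+\vb)$ is bounded, using $\lVert\mA(\vx_s,\bm\Theta_\star)\otimes\vx_s\vx_s^\intercal\rVert_{op}=\lambda_{\max}(\mA(\vx_s,\bm\Theta_\star))\lVert\vx_s\rVert_2^2\le L_T$ and $\lVert\va\rVert_2,\lVert\vb\rVert_2\le 2S$, by $L_T\lVert\va-\vb\rVert_2(\lVert\va\rVert_2+\lVert\vb\rVert_2)\le 4L_T S\eps_t$; summed over $s$ and weighted by $(e-2)\eta\lesssim 1/S$ with $\eps_t t = dK$, this is exactly what produces the additive $dKL_T$ term. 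Verifying the Loewner-order self-concordance bound and carefully tracking the Kronecker structure throughout are the remaining technical points; everything else is a routine transcription of the logistic proof.
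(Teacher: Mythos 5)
Your proposal is correct and follows essentially the same route as the paper's proof: second-order Taylor expansion with the integral remainder giving the $\widetilde{\mG}_t$-norm, the $\gC_t(\delta)$ membership plus $\lambda = \frac{K}{4S^2}$ absorbing the regularization into an additive $K$, anytime Freedman's inequality over a time-dependent $\eps_t$-net of the radius-$2S$ ball with $\eps_t = dK/t$, the Kronecker identity $\bm\theta^\intercal(\mA\otimes\vx\vx^\intercal)\bm\theta = \vx^\intercal\bm\Theta^\intercal\mA\bm\Theta\vx$, generalized self-concordance to pass from the $\mA(\vx_s,\bm\Theta_\star)$-weighted variance to the $\widetilde{\mG}_t$-norm, and $\eta \asymp 1/S$ to absorb it into the left-hand side, with the net-discretization bias (bounded via $L_T$) producing exactly the $dKL_T$ term. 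The only deviations are immaterial constants (e.g., your a.s.\ bound $4S$ versus the paper's $2S$, still compatible with your choice of $\eta$) and a slightly cleaner algebraic identity for the quadratic-form bias.
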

For simplicity, we assume that the last term, $dKL_T$, is negligible.

Now, assume that we have some bonus term $\epsilon_t(\vx)$ s.t. the following holds w.h.p. for each $\vx \in \gX_t$ and $t \in [T]$:
\begin{equation}
    \Delta(\vx, \bm\Theta_t) := \left| \bm\rho^\intercal \bm\mu(\vx, \bm\Theta_\star) - \bm\rho^\intercal \bm\mu(\vx, \bm\Theta_t) \right| \leq \epsilon_t(\vx),
\end{equation}
and assume that the learner follows the following UCB algorithm:
\begin{equation}
    \vx_t = \argmax_{\vx \in \gX_t} \bm\rho^\intercal \bm\mu(\vx, \bm\Theta_t) + \epsilon_t(\vx).
\end{equation}

Then, we have that
\begin{align*}
    \Reg^B(T) &= \sum_{t=1}^T \left\{ \bm\rho^\intercal \bm\mu(\vx_{t,\star}, \bm\Theta_\star) - \bm\rho^\intercal \bm\mu(\vx_t, \bm\Theta_\star) \right\} \\
    &\leq \sum_{t=1}^T \left\{ \bm\rho^\intercal \bm\mu(\vx_{t,\star}, \bm\Theta_t) + \epsilon_t(\vx_{t,\star}) - \bm\rho^\intercal \bm\mu(\vx_t, \bm\Theta_\star) \right\} \\
    &\leq \sum_{t=1}^T \left\{ \bm\rho^\intercal \bm\mu(\vx_t, \bm\Theta_t) + \epsilon_t(\vx_t) - \bm\rho^\intercal \bm\mu(\vx_t, \bm\Theta_\star) \right\} \\
    &\leq 2\sum_{t=1}^T \epsilon_t(\vx_t).
\end{align*}


We also recall a simple technical lemma:
\begin{lemma}[Lemma 10 of \cite{amani2021mnl}]
\label{lem:10}
    \begin{equation}
        \bm\mu(\vx, \bm\Theta_1) - \bm\mu(\vx, \bm\Theta_1)  = \left[ \mB(\vx, \bm\Theta_1, \bm\Theta_2) \otimes \vx^\intercal \right] (\bm\theta_1 - \bm\theta_2).
    \end{equation}
\end{lemma}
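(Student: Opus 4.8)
The plan is to recognize this identity as the first-order fundamental theorem of calculus for the softmax map, with $\mA$ in the role of the Jacobian. First I would verify that the Jacobian of $\bm\mu(\vx, \bm\Theta)$ with respect to the logit vector $\vz = \bm\Theta\vx$ (whose $k$-th entry is $z_k = \langle \vx, \bm\theta^{(k)}\rangle$) is exactly $\mA(\vx, \bm\Theta)$: differentiating $\mu_k = e^{z_k}/(1 + \sum_{j} e^{z_j})$ gives $\partial \mu_k/\partial z_l = \mu_k \delta_{kl} - \mu_k \mu_l$, which is precisely the $(k,l)$ entry of $\diag(\bm\mu) - \bm\mu\bm\mu^\intercal = \mA(\vx, \bm\Theta)$.

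Next I would parametrize the straight-line path $\bm\Theta(v) = \bm\Theta_1 + v(\bm\Theta_2 - \bm\Theta_1)$, $v \in [0,1]$, so that the logits move along $\vz(v) = \bm\Theta(v)\vx$ with $\tfrac{d}{dv}\vz(v) = (\bm\Theta_2 - \bm\Theta_1)\vx$. By the chain rule and the Jacobian computation above,
\begin{equation*}
\frac{d}{dv}\bm\mu(\vx, \bm\Theta(v)) = \mA(\vx, \bm\Theta(v))\,(\bm\Theta_2 - \bm\Theta_1)\vx.
\end{equation*}
Integrating over $v \in [0,1]$ and pulling the constant vector $(\bm\Theta_2 - \bm\Theta_1)\vx$ outside the integral yields $\bm\mu(\vx,\bm\Theta_2) - \bm\mu(\vx,\bm\Theta_1) = \mB(\vx,\bm\Theta_1,\bm\Theta_2)(\bm\Theta_2 - \bm\Theta_1)\vx$, where I have used the definition $\mB(\vx,\bm\Theta_1,\bm\Theta_2) = \int_0^1 \mA(\vx,\bm\Theta(v))\,dv$. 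Negating both sides gives $\bm\mu(\vx,\bm\Theta_1) - \bm\mu(\vx,\bm\Theta_2) = \mB(\vx,\bm\Theta_1,\bm\Theta_2)(\bm\Theta_1 - \bm\Theta_2)\vx$.

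The final step, and the only place requiring care, is converting the matrix--vector product $(\bm\Theta_1 - \bm\Theta_2)\vx$ into the claimed Kronecker form acting on the vectorized parameter $\bm\theta_i = \vectorize(\bm\Theta_i^\intercal)$. Since this vectorization stacks the rows of $\bm\Theta_i$, the $k$-th entry of $(\bm\Theta_1 - \bm\Theta_2)\vx$ equals $\vx^\intercal(\bm\theta_1^{(k)} - \bm\theta_2^{(k)})$, which is exactly the $k$-th entry of $(\mI_K \otimes \vx^\intercal)(\bm\theta_1 - \bm\theta_2)$; hence $(\bm\Theta_1 - \bm\Theta_2)\vx = (\mI_K \otimes \vx^\intercal)(\bm\theta_1 - \bm\theta_2)$. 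Applying the mixed-product property of the Kronecker product, $\mB(\mI_K \otimes \vx^\intercal) = (\mB\,\mI_K) \otimes (1\cdot\vx^\intercal) = \mB \otimes \vx^\intercal$, then collapses the two factors and delivers $\bm\mu(\vx,\bm\Theta_1) - \bm\mu(\vx,\bm\Theta_2) = \left[\mB(\vx,\bm\Theta_1,\bm\Theta_2) \otimes \vx^\intercal\right](\bm\theta_1 - \bm\theta_2)$ (the statement as printed contains a typo, writing $\bm\Theta_1$ twice on the left-hand side). The main obstacle is purely bookkeeping: keeping the row-major vectorization consistent with the Kronecker ordering, since the wrong convention would place $\vx^\intercal$ in the wrong slot and break the $\mB \otimes \vx^\intercal$ identification.
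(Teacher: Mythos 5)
Your proof is correct, including your catch of the typo (the statement writes $\bm\mu(\vx,\bm\Theta_1)$ twice on the left-hand side, where it should read $\bm\mu(\vx,\bm\Theta_1)-\bm\mu(\vx,\bm\Theta_2)$). The paper gives no proof of this lemma at all — it is imported verbatim as Lemma 10 of \cite{amani2021mnl} — and your argument (fundamental theorem of calculus along the segment $\bm\Theta_1 + v(\bm\Theta_2-\bm\Theta_1)$, softmax Jacobian equal to $\mA$, then the row-vectorization and mixed-product identities to pass to $\mB\otimes\vx^\intercal$) is precisely the standard derivation behind the cited result, so nothing needs to be added.
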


\subsubsection{$\sqrt{\kappa T}$-type regret -- Algorithm~\ref{alg:multinomial1}}
Here, we follow the proof provided in Appendix B of \cite{amani2021mnl}, with appropriate modifications as done in our logistic bandit regret proof.
We start with the following lemma, 
\begin{lemma}[Improved Lemma 1 of \cite{amani2021mnl}]
\label{lem:1}
    For $\bm\Theta \in \gC_t(\delta)$ and $\vx \in \gX_t$, the following holds with probability at least $1 - \delta$:
    \begin{equation}
        \Delta(\vx, \bm\Theta) \leq \epsilon_t(\vx) := \sqrt{2\kappa(T)} R L_T \gamma_t(\delta) \lVert \vx \rVert_{\mV_t^{-1}}.
    \end{equation}
\end{lemma}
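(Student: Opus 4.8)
The plan is to reduce $\Delta(\vx,\bm\Theta)$ to a Cauchy--Schwarz inequality against the Hessian-type norm controlled by Lemma~\ref{lem:H-norm-multinomial}, and then convert that $\widetilde{\mG}_t$-norm into the $\mV_t$-norm appearing in $\epsilon_t(\vx)$ using the eigenvalue bounds $\lambda_{\min}(\mA)\ge 1/\kappa(T)$ and $\lambda_{\max}(\mA)\le L_T$. First I would apply Lemma~\ref{lem:10} with $\bm\Theta_1=\bm\Theta_\star$ and $\bm\Theta_2=\bm\Theta$ to write
\begin{equation*}
\bm\mu(\vx,\bm\Theta_\star)-\bm\mu(\vx,\bm\Theta)=\left[\mB(\vx,\bm\Theta_\star,\bm\Theta)\otimes\vx^\intercal\right](\bm\theta_\star-\bm\theta).
\end{equation*}
Writing $\mB:=\mB(\vx,\bm\Theta_\star,\bm\Theta)$, which is symmetric since each $\mA(\vx,\cdot)$ is, and transposing, this gives $\Delta(\vx,\bm\Theta)=\left|\langle(\mB\otimes\vx)\bm\rho,\ \bm\theta_\star-\bm\theta\rangle\right|$. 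Cauchy--Schwarz in the $\widetilde{\mG}_t(\bm\Theta_\star,\bm\Theta)$-geometry then yields
\begin{equation*}
\Delta(\vx,\bm\Theta)\le\lVert(\mB\otimes\vx)\bm\rho\rVert_{\widetilde{\mG}_t(\bm\Theta_\star,\bm\Theta)^{-1}}\cdot\lVert\bm\theta_\star-\bm\theta\rVert_{\widetilde{\mG}_t(\bm\Theta_\star,\bm\Theta)},
\end{equation*}
and the second factor is at most $\gamma_t(\delta)$ (up to a universal constant) by Lemma~\ref{lem:H-norm-multinomial}, on the same high-probability event.

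The core deterministic step is bounding the dual-norm factor. I would first establish the Loewner lower bound $\widetilde{\mG}_t(\bm\Theta_\star,\bm\Theta)\succeq\frac{1}{2\kappa(T)}(\mI_K\otimes\mV_t)$: since $\mA(\vx_s,\cdot)\succeq\frac{1}{\kappa(T)}\mI_K$, we get $\widetilde{\mB}(\vx_s,\bm\Theta_\star,\bm\Theta)=\int_0^1(1-v)\mA\,dv\succeq\frac{1}{2\kappa(T)}\mI_K$, and tensoring with $\vx_s\vx_s^\intercal\succeq0$ preserves the ordering; summing over $s$ and matching the regularizer via $\lambda\mI_{Kd}=\frac{1}{2\kappa(T)}\mI_K\otimes(2\kappa(T)\lambda\mI_d)$ gives the claim. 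Inverting, $\widetilde{\mG}_t(\bm\Theta_\star,\bm\Theta)^{-1}\preceq2\kappa(T)(\mI_K\otimes\mV_t^{-1})$, so by the Kronecker mixed-product identity $(\mB\otimes\vx^\intercal)(\mI_K\otimes\mV_t^{-1})(\mB\otimes\vx)=\mB^2\otimes(\vx^\intercal\mV_t^{-1}\vx)$, whence
\begin{equation*}
\lVert(\mB\otimes\vx)\bm\rho\rVert_{\widetilde{\mG}_t(\bm\Theta_\star,\bm\Theta)^{-1}}^2\le2\kappa(T)\,\lVert\vx\rVert_{\mV_t^{-1}}^2\,\bm\rho^\intercal\mB^2\bm\rho.
\end{equation*}
Finally I would bound $\bm\rho^\intercal\mB^2\bm\rho\le\lambda_{\max}(\mB)^2\lVert\bm\rho\rVert_2^2\le L_T^2R^2$, using $\lambda_{\max}(\mB)\le\int_0^1\lambda_{\max}(\mA)\,dv\le L_T$ and Assumption~\ref{assumption:R}. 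Taking square roots and multiplying by the $\gamma_t(\delta)$ factor gives $\Delta(\vx,\bm\Theta)\le\sqrt{2\kappa(T)}\,R L_T\,\gamma_t(\delta)\lVert\vx\rVert_{\mV_t^{-1}}=\epsilon_t(\vx)$.

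The only genuinely delicate points, and where I expect to spend care rather than hit difficulty, are bookkeeping the Kronecker products (the orientations of $\mB\otimes\vx$ versus $\mB\otimes\vx^\intercal$ and the mixed-product rule) and verifying that the regularization constant $\lambda=\frac{K}{4S^2}$ in $\widetilde{\mG}_t$ is exactly the one that makes $2\kappa(T)\lambda\mI_d$ reproduce the definition of $\mV_t$, so the $\mI_K\otimes\mV_t$ lower bound is tight at the regularizer. Everything outside the invocation of Lemma~\ref{lem:H-norm-multinomial} is deterministic, so the probability $1-\delta$ is inherited entirely from that lemma.
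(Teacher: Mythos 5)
Your proof is correct and follows essentially the same route as the paper's: Lemma~\ref{lem:10}, Cauchy--Schwarz in the $\widetilde{\mG}_t(\bm\Theta_\star,\bm\Theta)$-geometry, the Loewner bound $\widetilde{\mG}_t(\bm\Theta_\star,\bm\Theta) \succeq \frac{1}{2\kappa(T)}\,\mI_K\otimes\mV_t$ (via $\lambda_{\min}(\mA)\ge 1/\kappa(T)$), the eigenvalue bound $\lambda_{\max}(\mB)\le L_T$ together with $\lVert\bm\rho\rVert_2\le R$, and Lemma~\ref{lem:H-norm-multinomial} supplying the $\gamma_t(\delta)$ factor on the same high-probability event. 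The only cosmetic difference is ordering: you keep $\bm\rho$ inside the inner product and apply one weighted Cauchy--Schwarz before bounding the quadratic form $\bm\rho^\intercal\mB^2\bm\rho$, whereas the paper first peels off $R$ by a Euclidean Cauchy--Schwarz and then works with operator norms; the two reduce to the same computation.
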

\begin{proof}
    We have that
    \begin{align*}
        \Delta(\vx, \bm\Theta) &\leq R \left\lVert \left[ \mB(\vx, \bm\Theta_\star, \bm\Theta) \otimes \vx^\intercal \right] (\bm\theta_\star - \bm\theta) \right\rVert_2 \tag{Assumption~\ref{assumption:R}, CS, Lemma~\ref{lem:10}} \\
        &\leq R \left\lVert \left[ \mB(\vx, \bm\Theta_\star, \bm\Theta) \otimes \vx^\intercal \right] \widetilde{\mG}_t(\bm\Theta_\star, \bm\Theta)^{-1/2} \right\rVert_2 \left\lVert \bm\theta_\star - \bm\theta \right\rVert_{\widetilde{\mG}_t(\bm\Theta_\star, \bm\Theta)} \tag{CS} \\
        &\overset{(*)}{\leq} R L_T \sqrt{\lambda_{\max}\left( \left[ \mI_K \otimes \vx^\intercal \right] \widetilde{\mG}_t(\bm\Theta_\star, \bm\Theta)^{-1} \left[ \mI_K \otimes \vx \right] \right)} \left\lVert \bm\theta_\star - \bm\theta \right\rVert_{\widetilde{\mG}_t(\bm\Theta_\star, \bm\Theta)} \\
        &\leq R L_T \sqrt{2\kappa \lambda_{\max}\left( \left[ \mI_K \otimes \vx^\intercal \right] \left[ \mI_K \otimes \mV_t^{-1} \right] \left[ \mI_K \otimes \vx \right] \right)} \left\lVert \bm\theta_\star - \bm\theta \right\rVert_{\widetilde{\mG}_t(\bm\Theta_\star, \bm\Theta)} \tag{Lemma~\ref{lem:generalized-self-concordant1}} \\
        &= \sqrt{2\kappa(T)} R L_T \gamma_t(\delta) \lVert \vx \rVert_{\mV_t^{-1}} \tag{$\bm\theta \in \gC_t(\delta)$, Theorem~\ref{thm:confidence-multinomial}},
    \end{align*}
    where CS refers to Cauchy-Schwartz inequality and $(*)$ is when the hidden computations are precisely the same as done in the chain of inequalities in Appendix B.2 of \cite{amani2021mnl}.
\end{proof}

Again, instead of na\"{i}vely applying the Elliptical Potential Lemma (Lemma~\ref{lem:EPL}), we utilize the Elliptical Potential Count Lemma (Lemma~\ref{lem:EPCL}).
Letting $\gH_T := \left\{ t \in [T] : \lVert \vx_t \rVert_{\mV_t^{-1}}^2 > 1 \right\}$,
\begin{align*}
    \Reg^B(T) &\lesssim \sum_{t=1}^T \epsilon_t(\vx_t) \\
    &\leq \sqrt{\kappa(T)} R L_T \gamma_T(\delta) \left( \frac{|\gH_T|}{\lambda \kappa(T)} + \sum_{t \not\in \gH_T} \lVert \vx_t \rVert_{\mV_t^{-1}} \right) \\
    &\lesssim \sqrt{\kappa(T)} R L_T \gamma_T(\delta) \left( \frac{S^2}{\kappa(T) K} \log\frac{S^2}{\kappa(T) K} + \sqrt{(T - |\gH_T|) \sum_{t \not\in \gH_T} \lVert \vx_t \rVert_{\mV_t^{-1}}^2} \right) \tag{CS, Lemma~\ref{lem:EPCL}} \\
    &\overset{(*)}{\lesssim} \sqrt{\kappa(T)} R L_T \gamma_T(\delta) \sqrt{T \sum_{t \in [T]} \min\left\{ 1, \lVert \vx_t \rVert_{\mV_t^{-1}}^2 \right\}} \\
    &\lesssim R L_T \gamma_T(\delta) \sqrt{d \kappa(T) T \log \left( 1 + \frac{ST}{d K \kappa(T)} \right)} \tag{Lemma~\ref{lem:EPL}},
\end{align*}
where $(*)$ follows from the fact that $\kappa(T)$ scales exponentially in $S$ and thus the first term in the parentheses, which has no dependency on $T$, can be ignored; see Eqn.~(26) of \cite{amani2021mnl}.

Plugging in the definition of $\gamma_T(\delta)$, we have the following regret bound:
\begin{align*}
    \Reg^B(T) &\lesssim L_T R \sqrt{d KS \log\left(e + \frac{ST}{dK}\right) + \sqrt{K} S \log\frac{1}{\delta}} \sqrt{d \kappa(T) T \log\left( 1 + \frac{ST}{d K \kappa(T)} \right)} \\
    &\lesssim L_T R \sqrt{d \sqrt{K} S} \left( \sqrt{d \sqrt{K} \log\left(e + \frac{ST}{dK}\right)} + \sqrt{\log\frac{1}{\delta}} \right) \sqrt{\kappa(T) T \log\left( 1 + \frac{ST}{d K \kappa(T)} \right)}.
\end{align*}

\subsubsection{$\sqrt{T} + \kappa $-type regret -- Algorithm~\ref{alg:multinomial2}}
\label{app:multinomial2}
Here, we follow the proof provided in Appendix D of \cite{amani2021mnl}, with appropriate modifications as done in our logistic bandit regret proof.
To do that, we first recall some notions.

For each $t \in [T]$, $\vx \in \gX_t$, let $\left\{ \bm\Theta_{t, h} \right\}_{h \in [N_t]} \subset \gC_t(\delta) \cap \gB^{K \times d}(S)$ be the set of minimal elements w.r.t. Loewner ordering, i.e., $N_t$ is the number of minimal elements of $\gC_t(\delta) \cap \gB^{K \times d}(S)$.
Also, define
\begin{equation}
    \gM_t(\delta) := \left\{ \bm\Theta \in \sR^{K \times d} : \forall s \in [t - 1] \ \exists i(s) \in [N_s] \ \text{s.t.} \ \mA(\vx_s, \bm\Theta) \succeq \mA(\vx_s, \bm\Theta_{s,i(s)}) \right\},
\end{equation}
and define $\gW_t(\delta) := \gC_t(\delta) \cap \gM_t(\delta)$ to be the new feasible set of estimators.

With similar reasoning as previous, we first have the following:
\begin{lemma}[Improved Lemma 17 of \cite{amani2021mnl}]
\label{lem:17}
    For any $\bm\Theta_1, \bm\Theta_2 \in \gW_t(\delta)$ and any $t \in [T]$, with probability at least $1 - \delta$ we have that
    \begin{equation}
        \bm\mu(\vx, \bm\Theta_1) - \bm\mu(\vx, \bm\Theta_2) \leq \left[ \mA(\vx, \bm\Theta_2) \otimes \vx^\intercal \right] (\bm\theta_1 - \bm\theta_2) + 2 \kappa(T) M_T \gamma_t(\delta)^2 \lVert \vx \rVert^2_{\mV_t^{-1}} \bm1,
    \end{equation}
    where $\leq$ holds elementwise.
\end{lemma}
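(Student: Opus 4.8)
The plan is to follow the structure of Lemma~17 of \cite{amani2021mnl}, but feeding in our improved confidence-set radius $\gamma_t(\delta)$ (Lemma~\ref{lem:H-norm-multinomial}) at the end. First I would fix $t$, $\vx \in \gX_t$ and $\bm\Theta_1,\bm\Theta_2 \in \gW_t(\delta)$, and Taylor-expand each coordinate $\mu_k(\vx,\bm\Theta_1)$ to second order around $\bm\Theta_2$. The Jacobian of $\bm\mu(\vx,\cdot)$ in the vectorized parameter equals $\mA(\vx,\bm\Theta_2) \otimes \vx^\intercal$ (this is exactly the first-order content of Lemma~\ref{lem:10} specialized to the base point $\bm\Theta_2$), so the gradient term produces precisely the claimed leading term $[\mA(\vx,\bm\Theta_2)\otimes\vx^\intercal](\bm\theta_1-\bm\theta_2)$. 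It then remains to bound, for every $k$, the remainder $R_k = \tfrac12 (\bm\theta_1-\bm\theta_2)^\intercal \nabla^2\mu_k(\vx,\widetilde{\bm\Theta})(\bm\theta_1-\bm\theta_2)$ for some $\widetilde{\bm\Theta}$ on the segment, and to show $R_k \le 2\kappa(T)M_T\gamma_t(\delta)^2\lVert\vx\rVert^2_{\mV_t^{-1}}$; adding this scalar to each coordinate via $\bm1$ yields the elementwise inequality.

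To control the remainder I would exploit the Kronecker structure $\nabla^2_{\bm\theta}\mu_k(\vx,\bm\Theta) = \nabla^2_{z}\mu_k \otimes \vx\vx^\intercal$ coming from the fact that $\mu_k$ depends on $\bm\theta$ only through $z = \bm\Theta\vx$. Together with the eigenvalue bound $\lambda_{\max}(\nabla^2\mu_k)\le M_T$, this gives
\[
R_k \le \frac{M_T}{2}\,(\bm\theta_1-\bm\theta_2)^\intercal\bigl(\mI_K\otimes\vx\vx^\intercal\bigr)(\bm\theta_1-\bm\theta_2).
\]
The task is now reduced to converting this quadratic form, measured in the plain $\mI_K\otimes\vx\vx^\intercal$ geometry, into the $\widetilde{\mG}_t$ geometry in which the confidence set lives. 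By the generalized Cauchy--Schwarz identity $\bm\delta^\intercal(\mI_K\otimes\vx\vx^\intercal)\bm\delta \le \lambda_{\max}\!\bigl((\mI_K\otimes\vx^\intercal)\widetilde{\mG}_t^{-1}(\mI_K\otimes\vx)\bigr)\,\lVert\bm\delta\rVert^2_{\widetilde{\mG}_t}$, where $\bm\delta = \bm\theta_1-\bm\theta_2$, it suffices to upper bound the displayed operator norm and to bound $\lVert\bm\delta\rVert^2_{\widetilde{\mG}_t}$.

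The main obstacle, and the step I would spend most care on, is establishing the curvature comparison $\widetilde{\mG}_t(\bm\Theta_1,\bm\Theta_2) \succeq \tfrac{1}{2\kappa(T)}\,(\mI_K\otimes\mV_t)$ for parameters in the restricted feasible set $\gW_t(\delta)$. This is where the $\gM_t$-constraint is essential: via the generalized self-concordance of the log-sum-exp potential (Lemma~\ref{lem:generalized-self-concordant1}) one lower-bounds each $\widetilde{\mB}(\vx_s,\bm\Theta_1,\bm\Theta_2)$ by a constant multiple of $\mA(\vx_s,\cdot)$, and then the minimal-element constraint defining $\gM_t$, combined with $\mA(\vx_s,\bm\Theta)\succeq \tfrac{1}{\kappa(T)}\mI_K$, upgrades this to a uniform $\mV_t$-type lower bound whose $2\kappa(T)\lambda$ regularization exactly matches the definition $\mV_t = 2\kappa(T)\lambda\mI_d + \sum_s \vx_s\vx_s^\intercal$. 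Granting this, $\widetilde{\mG}_t^{-1}\preceq 2\kappa(T)(\mI_K\otimes\mV_t^{-1})$ forces the operator norm above to be at most $2\kappa(T)\lVert\vx\rVert^2_{\mV_t^{-1}}$. Finally I would invoke Lemma~\ref{lem:H-norm-multinomial} together with the triangle inequality through $\bm\Theta_\star$ (which lies in $\gW_t(\delta)$ with probability at least $1-\delta$) to get $\lVert\bm\theta_1-\bm\theta_2\rVert^2_{\widetilde{\mG}_t}\lesssim\gamma_t(\delta)^2$. Chaining these bounds gives $R_k \le 2\kappa(T)M_T\gamma_t(\delta)^2\lVert\vx\rVert^2_{\mV_t^{-1}}$ up to the stated constant, completing the proof, with all randomness confined to the single high-probability event of Lemma~\ref{lem:H-norm-multinomial}.
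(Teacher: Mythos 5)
Your proof skeleton is the same as the paper's: Taylor expansion with Jacobian $\mA(\vx,\bm\Theta_2)\otimes\vx^\intercal$ and an $M_T$-bounded remainder, Cauchy--Schwarz with respect to $\widetilde{\mG}_t(\bm\Theta_1,\bm\Theta_2)$, a curvature comparison of $\widetilde{\mG}_t$ against $\mI_K\otimes\mV_t$, and Lemma~\ref{lem:H-norm-multinomial} for the confidence width. The gap is in how you execute the curvature comparison. You assert that the $\gM_t$ minimal-element constraint is ``essential'' and you route through generalized self-concordance (Lemma~\ref{lem:generalized-self-concordant1}); both are off the mark. The comparison $\widetilde{\mG}_t(\bm\Theta_1,\bm\Theta_2)\succeq\frac{1}{2\kappa(T)}(\mI_K\otimes\mV_t)$ needs neither: since $\gW_t(\delta)\subseteq\gC_t(\delta)\subseteq\gB^{K\times d}(S)$ and the ball is convex, every point on the segment joining $\bm\Theta_1,\bm\Theta_2$ lies in $\gB^{K\times d}(S)$, so $\mA(\vx_s,\cdot)\succeq\frac{1}{\kappa(T)}\mI_K$ holds pointwise inside the integral defining $\widetilde{\mB}$, and $\int_0^1(1-v)\,dv=\frac12$ yields exactly the $\frac{1}{2\kappa(T)}$ that matches the $2\kappa(T)\lambda$ regularizer in $\mV_t$; this is the ``hidden computation'' the paper imports from \cite{amani2021mnl}. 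The $\gM_t$ constraint plays no role in Lemma~\ref{lem:17} at all --- it is used later in the regret analysis to obtain $\mH_t(\bm\Theta_t)\succeq\mL_t$. Moreover, your self-concordance detour only gives $\widetilde{\mB}\succeq\frac{1}{(2+2\sqrt{6}S)\kappa(T)}\mI_K$, i.e., an extra factor of order $S$ in the final bound, which is exactly the kind of $S$-dependence this paper is designed to eliminate (self-concordance belongs in Lemma~\ref{lem:18}, where the comparison target is $\mH_t(\bm\Theta_t)$ rather than $\mV_t$).

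Second, your plan to control $\lVert\bm\theta_1-\bm\theta_2\rVert_{\widetilde{\mG}_t(\bm\Theta_1,\bm\Theta_2)}$ by a triangle inequality through $\bm\Theta_\star$ does not go through at the claimed constant: the triangle inequality operates in the single norm induced by $\widetilde{\mG}_t(\bm\Theta_1,\bm\Theta_2)$, whereas Lemma~\ref{lem:H-norm-multinomial} controls $\lVert\bm\theta_i-\bm\theta_\star\rVert$ in the different metrics $\widetilde{\mG}_t(\bm\Theta_\star,\bm\Theta_i)$. Converting between these metrics costs a factor up to $L_T\kappa(T)$ (comparing the uniform upper bound $\widetilde{\mB}\preceq\frac{L_T}{2}\mI_K$ against the lower bound $\frac{1}{2\kappa(T)}\mI_K$), which injects an additional $\kappa(T)$ into the bound. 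The paper instead applies Lemma~\ref{lem:H-norm-multinomial} directly to the pair $(\bm\Theta_1,\bm\Theta_2)$, mirroring how \cite{amani2021mnl} treat pairs of parameters inside their confidence set. As written, your chain produces a bound of order $S\,\kappa(T)^2 M_T\,\gamma_t(\delta)^2\lVert\vx\rVert^2_{\mV_t^{-1}}$ rather than the stated $2\kappa(T)M_T\gamma_t(\delta)^2\lVert\vx\rVert^2_{\mV_t^{-1}}$; in a paper whose entire contribution is sharpening $S$ and $\kappa$ factors, these are not ignorable constants.
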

\begin{proof}
    In their chain of inequalities for their proof of Lemma 17 in their Appendix D~\citep{amani2021mnl}, we alternatively proceed as follows:
    \begin{align*}
        M_T \left\lVert \left[ \mI_K \otimes \vx^\intercal \right] (\bm\theta_1 - \bm\theta_2) \right\rVert_2^2 &\leq M_T \left\lVert \left[ \mI_K \otimes \vx^\intercal \right] \widetilde{\mG}_t(\bm\Theta_1, \bm\Theta_2)^{-1/2} \right\lVert_2^2 \left\lVert \bm\theta_1 - \bm\theta_2 \right\rVert_{\widetilde{\mG}(\bm\Theta_1, \bm\Theta_2)}^2 \tag{CS} \\
        &\leq M_T \left\lVert \left[ \mI_K \otimes \vx^\intercal \right] \widetilde{\mG}_t(\bm\Theta_1, \bm\Theta_2)^{-1/2} \right\lVert_2^2 \gamma_t(\delta)^2 \tag{Lemma~\ref{lem:H-norm-multinomial}} \\
        &\overset{(*)}{\leq} 2\kappa(T) M_T \gamma_t(\delta)^2 \lVert \vx \rVert_{\mV_t^{-1}}^2
    \end{align*}
    where CS refers to Cauchy-Schwartz inequality w.r.t. $\widetilde{\mG}_t$ instead of $\mG_t$, and $(*)$ is when the hidden computations are precisely the same as done in the chain of inequalities in Appendix D of \cite{amani2021mnl}.
    The rest of the proof is then the same.
\end{proof}

\begin{lemma}[Improved Lemma 18 of \cite{amani2021mnl}]
\label{lem:18}
    \begin{equation}
    \label{eqn:improved-bonus-multinomial}
        \Delta(\vx, \bm\Theta_t) \leq \overline{\epsilon}_t(\vx, \bm\Theta_t) := R \sqrt{2 + 2\sqrt{6}S} \gamma_t(\delta) \left\lVert \left[ \mA(\vx, \bm\Theta_t) \otimes \vx^\intercal \right] \mH_t(\bm\Theta_t)^{-1/2} \right\rVert_2 + 2 \kappa(T) M_T \left( \sum_{k=1}^K \rho_k \right) \gamma_t(\delta)^2 \lVert \vx \rVert_{\mV_t^{-1}}^2.
    \end{equation}
\end{lemma}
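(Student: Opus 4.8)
The plan is to follow the structure of Amani et al.'s original Lemma 18 proof (Appendix D of \cite{amani2021mnl}), substituting in our improved ingredients: the tighter radius $\gamma_t(\delta)$ from Lemma~\ref{lem:H-norm-multinomial}, the improved mean-gap expansion of Lemma~\ref{lem:17}, and the generalized self-concordance of Lemma~\ref{lem:generalized-self-concordant1}. First I would apply Lemma~\ref{lem:17} with $\bm\Theta_1 = \bm\Theta_\star$ and $\bm\Theta_2 = \bm\Theta_t$ (both in $\gW_t(\delta)$ on the high-probability event), which yields, elementwise,
\[
 \bm\mu(\vx,\bm\Theta_\star) - \bm\mu(\vx,\bm\Theta_t) \leq \left[\mA(\vx,\bm\Theta_t)\otimes\vx^\intercal\right](\bm\theta_\star - \bm\theta_t) + 2\kappa(T) M_T \gamma_t(\delta)^2 \lVert\vx\rVert_{\mV_t^{-1}}^2 \bm1.
\]
Left-multiplying by $\bm\rho^\intercal$ and using $\rho_k \geq 0$ (so the elementwise inequality is preserved) together with $\bm\rho^\intercal\bm1 = \sum_{k=1}^K \rho_k$ immediately produces the second summand of $\overline{\epsilon}_t$. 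Running the same argument with the roles of $\bm\Theta_\star$ and $\bm\Theta_t$ swapped controls the opposite sign, so that $\Delta(\vx,\bm\Theta_t)$ (an absolute value) obeys the same bound; here the Loewner-ordering constraint defining $\gW_t(\delta)$ is precisely what lets me retain $\mA(\vx,\bm\Theta_t)$ as the computable weight in both directions.

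It then remains to bound the linear term $\bm\rho^\intercal[\mA(\vx,\bm\Theta_t)\otimes\vx^\intercal](\bm\theta_\star-\bm\theta_t)$ by the first summand of $\overline{\epsilon}_t$. Writing $\mP := \mA(\vx,\bm\Theta_t)\otimes\vx^\intercal$ and applying Cauchy--Schwarz in the $\mH_t(\bm\Theta_t)$-geometry,
\[
 \bm\rho^\intercal\mP(\bm\theta_\star-\bm\theta_t) = (\mP^\intercal\bm\rho)^\intercal(\bm\theta_\star-\bm\theta_t) \leq \lVert\mP^\intercal\bm\rho\rVert_{\mH_t(\bm\Theta_t)^{-1}}\,\lVert\bm\theta_\star-\bm\theta_t\rVert_{\mH_t(\bm\Theta_t)}.
\]
For the first factor I would use $\lVert\mP^\intercal\bm\rho\rVert_{\mH_t(\bm\Theta_t)^{-1}} = \lVert\mH_t(\bm\Theta_t)^{-1/2}\mP^\intercal\bm\rho\rVert_2 \leq R\,\lVert\mP\,\mH_t(\bm\Theta_t)^{-1/2}\rVert_2$, invoking $\lVert\bm\rho\rVert_2\leq R$ (Assumption~\ref{assumption:R}) and $\lVert\mM^\intercal\rVert_2=\lVert\mM\rVert_2$; this is exactly the operator-norm factor appearing in $\overline{\epsilon}_t$.

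The crux is the second factor, $\lVert\bm\theta_\star-\bm\theta_t\rVert_{\mH_t(\bm\Theta_t)}$, which must become $\sqrt{2+2\sqrt{6}S}\,\gamma_t(\delta)$. Here I would apply Lemma~\ref{lem:generalized-self-concordant1} to the log-sum-exp function $m$ (which is $\sqrt{6}$-generalized self-concordant by Lemma~4 of \cite{trandinh2015generalized}) with base point $\bm\Theta_t\vx_s$: since $\lVert(\bm\Theta_\star-\bm\Theta_t)\vx_s\rVert_2 \leq 2S$ — both parameters now genuinely lie in $\gB^{K\times d}(S)$, so the factor is $2+2\sqrt{6}S$ rather than the $\sqrt{K}$-inflated constant of the confidence-set proof — summing over $s$ and tensoring with $\vx_s\vx_s^\intercal$ gives $\mH_t(\bm\Theta_t) \preceq (2+2\sqrt{6}S)\,\widetilde{\mG}_t(\bm\Theta_t,\bm\Theta_\star)$. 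Combined with Lemma~\ref{lem:H-norm-multinomial}, which certifies $\lVert\bm\theta_\star-\bm\theta_t\rVert_{\widetilde{\mG}_t(\bm\Theta_t,\bm\Theta_\star)}\leq\gamma_t(\delta)$, this closes the bound. The main obstacle I anticipate is exactly this self-concordant conversion: the weight $\mA$ from Lemma~\ref{lem:17} and the computable Hessian $\mH_t(\bm\Theta_t)$ are both centered at $\bm\Theta_t$, so I must ensure Lemma~\ref{lem:H-norm-multinomial} is used in the argument ordering $\widetilde{\mG}_t(\bm\Theta_t,\bm\Theta_\star)$ (equivalently, re-center its Taylor expansion at $\bm\Theta_t$) and that the constant stays $2+2\sqrt{6}S$ — matching step $(*)$ in the proofs of Lemma~\ref{lem:1} and Lemma~\ref{lem:17} — rather than picking up an exponential $e^{O(S)}$ factor from switching base points along the segment.
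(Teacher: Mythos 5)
Your skeleton is the paper's: Lemma~\ref{lem:17} supplies the second-order remainder (the $\sum_{k}\rho_k$ summand), Cauchy--Schwarz splits the linear term into an operator-norm factor times a parameter-distance factor, and a generalized self-concordance comparison with constant $2+2\sqrt{6}S$ bridges $\widetilde{\mG}_t$ and $\mH_t(\bm\Theta_t)$. The difference is \emph{where} you place that bridge, and your placement creates a genuine gap. You run Cauchy--Schwarz in the $\mH_t(\bm\Theta_t)$ geometry, so you must bound $\lVert\bm\theta_\star-\bm\theta_t\rVert_{\mH_t(\bm\Theta_t)}$; your plan is $\mH_t(\bm\Theta_t)\preceq(2+2\sqrt{6}S)\,\widetilde{\mG}_t(\bm\Theta_t,\bm\Theta_\star)$ (this direction of Lemma~\ref{lem:generalized-self-concordant1} is indeed correct) followed by Lemma~\ref{lem:H-norm-multinomial} ``in the argument ordering $\widetilde{\mG}_t(\bm\Theta_t,\bm\Theta_\star)$.'' But that lemma is not symmetric in its arguments, and your parenthetical fix---re-centering its Taylor expansion at $\bm\Theta_t$---fails. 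The proof of Lemma~\ref{lem:H-norm-multinomial} expands $\gL_t$ at $\bm\theta_\star$ precisely so that the linear term is $\nabla\gL_t(\bm\theta_\star)^\intercal(\bm\theta_\star-\bm\theta)=\sum_s\bm\xi_s^\intercal\left(\vectorize^{-1}(\bm\theta_\star-\bm\theta)\vx_s\right)$, a sum of martingale differences to which Freedman's inequality (Lemma~\ref{lem:freedman}) and the $\eps$-net argument apply. Expanding at the data-dependent $\bm\Theta_t$ replaces this with a term involving $\nabla\gL_t(\bm\theta_t)$, which is neither a martingale sum nor zero ($\bm\Theta_t$ is a generic point of $\gW_t(\delta)$, not the unconstrained MLE), so the concentration step collapses; the two orderings are not interchangeable. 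Patching instead by comparing $\widetilde{\mG}_t(\bm\Theta_t,\bm\Theta_\star)$ to $\widetilde{\mG}_t(\bm\Theta_\star,\bm\Theta_t)$ pointwise along the segment (via Proposition 8 of \cite{sun2019generalized}) only yields a constant of order $S^2$, because the weight $(1-v)$ favors the first endpoint---costing exactly the extra $\sqrt{S}$ you are trying to avoid.

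The paper's proof places the burden on the other factor: it applies Cauchy--Schwarz in the $\widetilde{\mG}_t(\bm\Theta_\star,\bm\Theta_t)$ geometry, so Lemma~\ref{lem:H-norm-multinomial} is invoked in exactly its stated ordering, and the self-concordance conversion is applied to the operator-norm term, turning $\lVert[\mA(\vx,\bm\Theta_t)\otimes\vx^\intercal]\widetilde{\mG}_t(\bm\Theta_\star,\bm\Theta_t)^{-1/2}\rVert_2$ into $\sqrt{2+2\sqrt{6}S}\,\lVert[\mA(\vx,\bm\Theta_t)\otimes\vx^\intercal]\mH_t(\bm\Theta_t)^{-1/2}\rVert_2$. (To be fair, the ordering subtlety you flagged is real and is glossed over there too: Lemma~\ref{lem:generalized-self-concordant1} as stated compares $\widetilde{\mG}_t(\bm\Theta_\star,\bm\Theta_t)$ to the Hessian at its \emph{first} argument, $\mH_t(\bm\Theta_\star)$, not $\mH_t(\bm\Theta_t)$; but any repair must happen on that operator-norm side, not by re-deriving Lemma~\ref{lem:H-norm-multinomial} centered at $\bm\Theta_t$, which the martingale structure forbids.) The remainder of your argument---using nonnegativity of $\bm\rho$ to pass from the elementwise bound of Lemma~\ref{lem:17} to $\bm\rho^\intercal(\cdot)$, the symmetric treatment of the absolute value via the Loewner constraint defining $\gW_t(\delta)$, and bounding the $\bm\rho$-dependent factor by $R$ times the operator norm---matches the paper's proof.
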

\begin{proof}
    In their chain of inequalities for their proof of Lemma 18 in their Appendix D~\citep{amani2021mnl}, we alternatively proceed as follows:
    \begin{align*}
        \Delta(\vx, \bm\Theta_t) &\leq R \left\lVert \left[ \mA(\vx, \bm\Theta_t) \otimes \vx^\intercal \right] (\bm\theta_\star - \bm\theta_t) \right\rVert_2 + 2\kappa(T)M_T \left( \sum_{k=1}^K \rho_k \right) \gamma_t(\delta)^2 \lVert \vx \rVert_{\mV_t^{-1}}^2 \\
        &\leq R \left\lVert \left[ \mA(\vx, \bm\Theta_t) \otimes \vx^\intercal \right] \widetilde{\mG}_t(\bm\Theta_\star, \bm\Theta_t)^{-1/2} \right\lVert_2 \left\lVert \bm\theta_\star - \bm\theta_t \right\rVert_{\widetilde{\mG}(\bm\Theta_\star, \bm\Theta_t)} + 2\kappa(T)M_T \left( \sum_{k=1}^K \rho_k \right) \gamma_t(\delta)^2 \lVert \vx \rVert_{\mV_t^{-1}}^2 \tag{CS} \\
        &\leq R \gamma_t(\delta) \left\lVert \left[ \mA(\vx, \bm\Theta) \otimes \vx^\intercal \right] \widetilde{\mG}_t(\bm\Theta_\star, \bm\Theta_t)^{-1/2} \right\lVert_2 + 2\kappa(T)M_T \left( \sum_{k=1}^K \rho_k \right) \gamma_t(\delta)^2 \lVert \vx \rVert_{\mV_t^{-1}}^2 \tag{Lemma~\ref{lem:H-norm-multinomial}} \\
        &\leq R \sqrt{2 + 2\sqrt{6} S} \gamma_t(\delta) \left\lVert \left[ \mA(\vx, \bm\Theta_t) \otimes \vx^\intercal \right] \mH_t(\bm\Theta_t)^{-1/2} \right\lVert_2 + 2\kappa(T)M_T \left( \sum_{k=1}^K \rho_k \right) \gamma_t(\delta)^2 \lVert \vx \rVert_{\mV_t^{-1}}^2, \tag{Lemma~\ref{lem:generalized-self-concordant1}} \\
        &\leq R \sqrt{2 + 2\sqrt{6} S} \gamma_t(\delta) \left\lVert \left[ \mA(\vx, \bm\Theta_t) \otimes \vx^\intercal \right] \mH_t(\bm\Theta_t)^{-1/2} \right\lVert_2 + 2\kappa(T)M_T \gamma_t(\delta)^2 \sqrt{RK} \lVert \vx \rVert_{\mV_t^{-1}}^2,
    \end{align*}
    where CS refers to Cauchy-Schwartz inequality w.r.t. $\widetilde{\mG}_t$ instead of $\mG_t$.
\end{proof}

We now follow through with proof of Theorem 3 of \cite{amani2021mnl} as shown in their Appendix D, with some key differences.
One is that we use Cauchy-Schwartz inequality w.r.t. $\widetilde{\mG}_t$ instead of $\mG_t$, and another is that we utilize the Elliptical Potential Count Lemma-type argument.

By first-order Taylor expansion, we have that $\mA(\vx, \bm\Theta_t) = \mA(\vx, \bm\Theta_{t,h}) + \mU(\vx, \bm\Theta_t, \bm\Theta_{t,h})$, where $\mU(\vx, \bm\Theta_t, \bm\Theta_{t,h}) \in \sR^{K \times K}$ is defined as
\begin{equation}
    \mU(\vx, \bm\Theta_t, \bm\Theta_{t,h})_{ij} := \vx^\intercal (\bm\Theta_t - \bm\Theta_{t,h}) \int_0^1 \nabla \left[\mA \left( \vx, v \bm\Theta_t + (1 - v) \bm\Theta_{t,h} \right)_{ij} \right] dv, \quad \forall i, j \in [K]
\end{equation}

Following a similar line of reasoning as done in our Lemma~\ref{lem:18} and in \cite{amani2021mnl}, we have
\begin{equation*}
    \lambda_{\max}\left( \mU(\vx, \bm\Theta_t, \bm\Theta_{t,h}) \right) \leq M_T' K \sqrt{2\kappa(T)} \gamma_t(\delta) \lVert \vx \rVert_{\mV_t^{-1}}.
\end{equation*}

Thus,
\begin{align*}
    &\left\lVert \left[ \mA(\vx, \bm\Theta_t) \otimes \vx^\intercal \right] \mH_t(\bm\Theta_t)^{-1/2} \right\rVert_2 \\
    &\leq \left\lVert \left[ \mA(\vx, \bm\Theta_{t,h}) \otimes \vx^\intercal \right] \mH_t(\bm\Theta_t)^{-1/2} \right\rVert_2 + \left\lVert \left[ \mU(\vx, \bm\Theta_t, \bm\Theta_{t,h}) \otimes \vx^\intercal \right] \mH_t(\bm\Theta_t)^{-1/2} \right\rVert_2 \\
    &\leq \left\lVert \left[ \mA(\vx, \bm\Theta_{t,h}) \otimes \vx^\intercal \right] \mH_t(\bm\Theta_t)^{-1/2} \right\rVert_2 + \left\lVert \mU(\vx, \bm\Theta_t, \bm\Theta_{t,h}) \right\rVert_2 \left\lVert \left[ \mI_K \otimes \vx^\intercal \right] \mH_t(\bm\Theta_t)^{-1/2} \right\rVert_2 \\
    &\leq \left\lVert \left[ \mA(\vx, \bm\Theta_{t,h}) \otimes \vx^\intercal \right] \mH_t(\bm\Theta_t)^{-1/2} \right\rVert_2 + \sqrt{2\kappa(T)} M_T' K \gamma_t(\delta) \left\lVert \left[ \mI_K \otimes \vx^\intercal \right] \mH_t(\bm\Theta_t)^{-1/2} \right\rVert_2 \lVert \vx \rVert_{\mV_t^{-1}} \\
    &\overset{(*)}{\leq} \left\lVert \left[ \mA(\vx, \bm\Theta_{t,h}) \otimes \vx^\intercal \right] \mH_t(\bm\Theta_t)^{-1/2} \right\rVert_2 + 2 M_T' K \kappa(T) \gamma_t(\delta) \left\lVert \left[ \mI_K \otimes \vx^\intercal \right] (\mI_K \otimes \mV_t)^{-1/2} \right\rVert_2 \lVert \vx \rVert_{\mV_t^{-1}} \\
    &\leq \left\lVert \left[ \mA(\vx, \bm\Theta_{t,h}) \otimes \vx^\intercal \right] \mH_t(\bm\Theta_t)^{-1/2} \right\rVert_2 + 2 M_T' K \kappa(T) \gamma_t(\delta) \lVert \vx \rVert_{\mV_t^{-1}}^2,
\end{align*}
where $(*)$ follows from the fact that $\mI_K \otimes \mV_t \preceq 2 \kappa(T) \mG_t(\bm\Theta_1, \bm\Theta_2)$.

Recall that for each $t \in [T]$ and for each $s \in [t - 1]$, let $i(s) \in [N_s]$ be the index such that $\mA(\vx_s, \bm\Theta_t) \succeq \mA(\vx_s, \bm\Theta_{s,i(s)})$.
By Eqn. (86) of \cite{amani2021mnl}, we have
\begin{equation}
    \mH_t(\bm\Theta_t) \succeq \mL_t := \lambda \mI_{Kd} + \sum_{s=1}^{t-1} \sum_{k=1}^K \tilde{\vx}_{s, k} \tilde{\vx}_{s, k}^\intercal,
\end{equation}
where $\tilde{\vx}_{s,k} := \mA(\vx_s, \bm\Theta_{s,i(s)})_k \otimes \vx_s \in \sR^{Kd \times 1}$ satisfies $\lVert \tilde{\vx}_{s,k} \rVert_2 = \left\lVert \mA(\vx_s, \bm\Theta_{s,i(s)})_k \right\rVert_2 \lVert \vx_s \rVert_2 \leq \lVert \mA(\vx_s, \bm\Theta_{s,i(s)}) \rVert_2 \leq 1$.

We then observe that
\begin{align*}
    \sum_{t=1}^T \left\lVert \left[ \mA(\vx, \bm\Theta_{t,i(t)}) \otimes \vx^\intercal \right] \mH_t(\bm\Theta_t)^{-1/2} \right\rVert_2 &= \sum_{t=1}^T \sqrt{\sum_{k=1}^K \left\lVert \mA(\vx_t, \bm\Theta_{t, i(t)})_k \otimes \vx_t \right\rVert_{\mH_t^{-1}(\bm\theta_\star)}^2 } \\
    &= \sum_{t=1}^T \sqrt{\sum_{k=1}^K \left\lVert \tilde{\vx}_{t, k} \right\rVert_{\mH_t^{-1}(\bm\theta_\star)}^2 } \\
    &\leq \sum_{t=1}^T \sqrt{\sum_{k=1}^K \left\lVert \tilde{\vx}_{t, k} \right\rVert_{\mL_t^{-1}(\bm\theta_\star)}^2 }.
\end{align*}

Distinct from our logistic bandits proof, we extend the previous elliptical lemmas (Lemma~\ref{lem:EPCL} and \ref{lem:EPL}) to more general, ``multinomial'' versions, which we present here:
\begin{lemma}[Generalized Elliptical Potential Count Lemma]
\label{lem:EPCL-generalized}
    Let $\left\{ \vx_{t,k} \right\}_{t \in [T], k \in [K]} \subset \gB^d(1)$ be a sequence of vectors, $\mV_t := \lambda \mI + \sum_{s=1}^{t-1} \sum_{k=1}^K \vx_{s,k} \vx_{s,k}^\intercal$, and let us define the following: $\gH_T := \left\{ t \in [T] : \sum_{k=1}^K \lVert \vx_{t,k} \rVert_{\mV_t^{-1}}^2 > 1 \right\}$.
    Then, we have that
    \begin{equation}
        |\gH_T| \leq \frac{2d}{\log(2)} \log\left( 1 + \frac{K}{\lambda \log(2)} \right).
    \end{equation}
\end{lemma}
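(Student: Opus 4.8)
The plan is to generalize the log-determinant potential argument behind the single-vector count lemma (Lemma~\ref{lem:EPCL}) to the setting with $K$ vectors per round. Two ingredients change relative to the $K=1$ case: the one-step determinant update must now handle a rank-$K$ perturbation, and the trace bound will scale with $K$. First I would record the per-round growth of $\det\mV_t$. Writing $\vu_k := \mV_t^{-1/2}\vx_{t,k}$ and factoring $\mV_{t+1} = \mV_t^{1/2}\bigl(\mI + \sum_k \vu_k\vu_k^\intercal\bigr)\mV_t^{1/2}$, one gets
\[
\det\mV_{t+1} = \det\mV_t \cdot \det\Bigl(\mI + \textstyle\sum_{k=1}^K \vu_k\vu_k^\intercal\Bigr) \ge \det\mV_t\Bigl(1 + \textstyle\sum_{k=1}^K \lVert\vx_{t,k}\rVert_{\mV_t^{-1}}^2\Bigr),
\]
where the inequality uses $\det(\mI + \sum_k\vu_k\vu_k^\intercal) = \prod_i(1+\mu_i) \ge 1 + \sum_i \mu_i = 1 + \sum_k\lVert\vu_k\rVert_2^2$ for the nonnegative eigenvalues $\mu_i$ of $\sum_k \vu_k\vu_k^\intercal$. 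Hence on each round $t \in \gH_T$ the determinant at least doubles, since there $\sum_k\lVert\vx_{t,k}\rVert_{\mV_t^{-1}}^2 > 1$.

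The conceptual crux is that iterating this over all $T$ rounds only controls $m := |\gH_T|$ through $\det\mV_{T+1}$, whose trace (and hence determinant) grows with $T$; a naive bound would therefore depend on $T$. To remove this dependence I would restrict to the bad rounds $t_1 < \cdots < t_m$ and introduce the compressed matrices $\mW_j := \lambda\mI + \sum_{i<j}\sum_{k} \vx_{t_i,k}\vx_{t_i,k}^\intercal$. Because $\mV_{t_j}$ contains every vector from rounds preceding $t_j$, in particular those of the earlier bad rounds, we have $\mV_{t_j}\succeq \mW_j$ and thus $\sum_k\lVert\vx_{t_j,k}\rVert_{\mW_j^{-1}}^2 \ge \sum_k\lVert\vx_{t_j,k}\rVert_{\mV_{t_j}^{-1}}^2 > 1$. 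Applying the determinant inequality to the $\mW$-sequence gives $\det\mW_{m+1} > 2^m\lambda^d$, while $\tr\mW_{m+1} \le \lambda d + mK$ (each bad round contributes $K$ vectors of norm at most $1$), so AM--GM yields $\det\mW_{m+1} \le (\lambda + mK/d)^d$. Combining the two estimates produces the clean, $T$-free inequality $2^m\lambda^d < (\lambda + mK/d)^d$, i.e. $m\log 2 < d\log\!\bigl(1 + \tfrac{mK}{d\lambda}\bigr)$.

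The remaining and most delicate obstacle is solving this self-referential inequality while matching the exact stated constant. I would substitute $u := \tfrac{mK}{d\lambda}$ and $b := \tfrac{K}{\lambda\log 2}$, which turns it into $u < b\log(1+u)$, and then invoke the elementary self-bounding fact that $u \le b\log(1+u)$ implies $u \le 2b\log(1+b)$. This fact follows by setting $v := 2b\log(1+b)$ and verifying $b\log(1+v) \le v$, which after simplification reduces to $\log(1+b) \le 1 + b/2$ — true for all $b \ge 0$ — so that $v$ dominates the unique positive root of $u \mapsto b\log(1+u) - u$ and hence bounds $u$. Translating back through $m = \tfrac{d\lambda u}{K}$ gives exactly $m \le \tfrac{2d}{\log 2}\log\!\bigl(1 + \tfrac{K}{\lambda\log 2}\bigr)$, completing the argument. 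I expect the subsequence restriction (to secure $T$-independence) and this final transcendental-inequality step to be where essentially all the work lies; the rank-$K$ determinant inequality itself is routine.
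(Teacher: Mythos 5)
Your proof is correct and follows essentially the same route as the paper's: restrict to the bad rounds (your $\mW_j$ is the paper's $\mM_t$), lower-bound the determinant by $2^{|\gH_T|}\lambda^d$ via the rank-$K$ eigenvalue inequality $\det(\mI + \sum_k \vu_k \vu_k^\intercal) \ge 1 + \sum_k \lVert \vu_k \rVert_2^2$, upper-bound it by $\left(\lambda + K|\gH_T|/d\right)^d$ via AM--GM on the trace, and combine to obtain $|\gH_T|\log 2 \le d \log\left(1 + \tfrac{K|\gH_T|}{d\lambda}\right)$. The only difference is that where the paper delegates resolving this self-referential inequality to Lemma 8 of \cite{gales2022norm}, you solve it explicitly, and your verification (reducing $b\log(1+2b\log(1+b)) \le 2b\log(1+b)$ to the elementary fact $\log(1+b) \le 1 + b/2$) is correct and yields exactly the stated constant.
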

\begin{proof}
    Although the proof is similar to \cite{gales2022norm}, there are some subtle differences; we provide the full proof in Appendix~\ref{app:lem-generalized-elliptical1}.
\end{proof}

\begin{lemma}[Generalized Elliptical Potential Lemma]
\label{lem:EPL-generalized}
    Let $\left\{ \vx_{t,k} \right\}_{t \in [T], k \in [K]} \subset \gB^d(1)$ be a sequence of vectors, $\mV_t := \lambda \mI + \sum_{s=1}^{t-1} \sum_{k=1}^K \vx_{s,k} \vx_{s,k}^\intercal$.
    Then, we have that
    \begin{equation}
        \sum_{t=1}^T \min\left\{1, \sum_{k=1}^K \left\lVert \vx_{t, k} \right\rVert_{\mV_t^{-1}}^2 \right\} \leq 2 d \log\left( 1 + \frac{KT}{d \lambda} \right)
    \end{equation}
\end{lemma}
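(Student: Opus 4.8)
The plan is to generalize the determinant-telescoping proof of the classical Elliptical Potential Lemma (Lemma~\ref{lem:EPL}) to the setting where each time step contributes $K$ rank-one updates at once. Writing $\mX_t := [\vx_{t,1} \mid \cdots \mid \vx_{t,K}] \in \sR^{d \times K}$ so that $\mV_{t+1} = \mV_t + \mX_t \mX_t^\intercal$, the single-vector identity $\det(\mV_t + \vx\vx^\intercal) = \det(\mV_t)(1 + \lVert \vx \rVert_{\mV_t^{-1}}^2)$ no longer holds exactly, so the crux is to find the right one-step replacement.

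First I would apply the matrix determinant lemma (Sylvester's identity) to write $\det(\mV_{t+1}) = \det(\mV_t)\det(\mI_K + \mX_t^\intercal \mV_t^{-1}\mX_t)$. Since $\mX_t^\intercal \mV_t^{-1}\mX_t$ is positive semidefinite with eigenvalues $\{\lambda_i\}$, expanding $\prod_i(1+\lambda_i) \ge 1 + \sum_i \lambda_i$ gives $\det(\mI_K + \mX_t^\intercal \mV_t^{-1}\mX_t) \ge 1 + \Tr(\mX_t^\intercal \mV_t^{-1}\mX_t) = 1 + \sum_{k=1}^K \lVert \vx_{t,k}\rVert_{\mV_t^{-1}}^2$. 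This yields the key one-step inequality $\det(\mV_{t+1}) \ge \det(\mV_t)\bigl(1 + \sum_k \lVert \vx_{t,k}\rVert_{\mV_t^{-1}}^2\bigr)$, which is precisely what stands in for the exact identity in the single-vector case.

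Next, using the elementary scalar bound $\min\{1,x\} \le 2\log(1+x)$ valid for all $x \ge 0$, I would write $\min\{1, \sum_k \lVert \vx_{t,k}\rVert_{\mV_t^{-1}}^2\} \le 2\log(1 + \sum_k \lVert \vx_{t,k}\rVert_{\mV_t^{-1}}^2) \le 2\log\frac{\det(\mV_{t+1})}{\det(\mV_t)}$, and then sum over $t$ so the right-hand side telescopes to $2\log\frac{\det(\mV_{T+1})}{\det(\mV_1)}$. Finally, since $\det(\mV_1) = \lambda^d$ and, by the trace--AM--GM bound, $\det(\mV_{T+1}) \le (\Tr(\mV_{T+1})/d)^d \le (\lambda + KT/d)^d$ using $\Tr(\mV_{T+1}) = d\lambda + \sum_{t,k}\lVert \vx_{t,k}\rVert_2^2 \le d\lambda + KT$, the bound collapses to $2d\log(1 + \frac{KT}{d\lambda})$, exactly as claimed.

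The only genuinely new ingredient relative to the classical argument is the one-step estimate: the single rank-one determinant identity is exact, whereas here I must settle for the inequality $\det(\mI_K + A) \ge 1 + \Tr(A)$. This is the step I expect to be the main (if mild) obstacle, since one must check that the inequality points in the direction needed for an upper bound on the potential sum — and it does, because we are lower-bounding the ratio $\det(\mV_{t+1})/\det(\mV_t)$, which sits on the larger side of the chain above. An alternative would be to unfold each step into $K$ successive single-vector updates and invoke Lemma~\ref{lem:EPL} with careful bookkeeping of the $K$ intermediate matrices, but the determinant route avoids tracking those intermediates and is cleaner.
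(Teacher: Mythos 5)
Your proof is correct and follows essentially the same route as the paper's: the one-step determinant lower bound $\det(\mV_{t+1}) \geq \det(\mV_t)\left(1 + \sum_{k}\lVert \vx_{t,k}\rVert_{\mV_t^{-1}}^2\right)$ via the eigenvalue inequality $\det(\mI + \mA\mA^\intercal) \geq 1 + \tr(\mA\mA^\intercal)$, the scalar bound $\min\{1,z\} \leq 2\log(1+z)$, telescoping, and the AM--GM determinant--trace bound. The only cosmetic difference is that you pass to the $K \times K$ Gram matrix $\mI_K + \mX_t^\intercal \mV_t^{-1}\mX_t$ via Sylvester's identity while the paper works with the equivalent $d \times d$ form $\mI_d + \sum_k \mV_t^{-1/2}\vx_{t,k}\vx_{t,k}^\intercal\mV_t^{-1/2}$, which changes nothing.
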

\begin{proof}
    The proof is similar to \cite{abbasiyadkori2011linear}, except we use some matrix determinant inequalities.
    We provide the full proof in Appendix~\ref{app:lem-generalized-elliptical2}.
\end{proof}

Using these new elliptical lemmas, we have:
\begin{align*}
    \sum_{t=1}^T \sqrt{\sum_{k=1}^K \left\lVert \tilde{\vx}_{t, k} \right\rVert_{\mL_t^{-1}(\bm\theta_\star)}^2 } &= \sum_{t \in \gH_T} \sqrt{\sum_{k=1}^K \left\lVert \tilde{\vx}_{t, k} \right\rVert_{\mL_t^{-1}(\bm\theta_\star)}^2 } + \sum_{t \not\in \gH_T} \sqrt{\sum_{k=1}^K \left\lVert \tilde{\vx}_{t, k} \right\rVert_{\mL_t^{-1}(\bm\theta_\star)}^2 } \\
    &\lesssim d S \log (1 + S^2) + 
    \sqrt{ T \sum_{t=1}^T \min\left\{ 1, \sum_{k=1}^K \left\lVert \tilde{\vx}_{t, k} \right\rVert_{\mL_t^{-1}(\bm\theta_\star)}^2  \right\} } \tag{CS, $\lambda = \frac{K}{S^2}$, Lemma~\ref{lem:EPCL-generalized}} \\
    &\lesssim d S \log (1 + S^2) + 
    \sqrt{ d T \log\left( 1 + \frac{S^2 T}{d} \right) } \tag{Lemma~\ref{lem:EPL-generalized}} \\
\end{align*}

\begin{remark}
    Note that by decoupling $S$ and $\sqrt{T}$, we have significantly improved upon \cite{amani2021mnl}, which relies on a matrix determinant-norm lemma~\citep[Lemma 12]{abbasiyadkori2011linear}.
\end{remark}

All in all, we have the following regret bound:
\begin{align*}
    \Reg^B(T) &\lesssim \sum_{t=1}^T \overline{\epsilon}_t(\vx_t, \bm\Theta_t) \\
    &\lesssim \sqrt{RK} \kappa(T) \gamma_T(\delta)^2 \left( M_T + M_T' \sqrt{RKS} \right) \sum_{t=1}^T \lVert \vx_t \rVert_{\mV_t^{-1}}^2
    + R \sqrt{S} \gamma_T(\delta) \sqrt{ d T \log\left( 1 + \frac{S^2 T}{d} \right) } \\
    &\lesssim \sqrt{RK} \kappa(T) \gamma_T(\delta)^2 \left( M_T + M_T' \sqrt{RKS} \right) \left( \frac{dS^2}{K \kappa(T)} \log\left(1 + \frac{S}{K \kappa(T)} \right) + d \log\left(1 + \frac{ST}{dK \kappa(T)} \right) \right) \\
    &\quad + R \sqrt{S} \gamma_T(\delta) \sqrt{ d T \log\left( 1 + \frac{S^2 T}{d} \right) } \tag{Lemma~\ref{lem:EPCL}, \ref{lem:EPL}} \\
    &\overset{(*)}{\lesssim} R \sqrt{d} K^{\frac{1}{4}} S \sqrt{\left( d\sqrt{K} \log\left(e + \frac{ST}{dK} \right) + \log\frac{1}{\delta} \right) \log\left( 1 + \frac{S^2 T}{d} \right) T} \\
    &\quad + R d K^{\frac{3}{2}} S^{\frac{3}{2}} \max\{M_T, M_T'\} \left( d\sqrt{K} \log\left(e + \frac{ST}{dK} \right) + \log\frac{1}{\delta} \right) \log\left(1 + \frac{ST}{dK \kappa(T)} \right) \kappa(T),
\end{align*}
where at $(*)$, we recall that $\kappa(T) = \Theta(e^S)$ (Section 3 of \cite{amani2021mnl}) and thus the first term in the parentheses is ignorable.
\hfill\qedsymbol

\subsection{Proof of Supporting Lemmas}
\subsubsection{Proof of Lemma~\ref{lem:decomposition1-multinomial}}
\label{app:lem-decomposition1}
We overload the notation and let $\ell(\bm\mu) = - y_0 \log \left( 1 - \sum_{k=1}^K \mu_k \right) - \sum_{k=1}^K y_k \log \mu_k$, where $\bm\mu = (\mu_1, \cdots, \mu_K)$.
For simplicity denote $\mu_0(\bm\mu) = \mu_0 = 1 - \sum_{k=1}^K \mu_k$ and $\mu_0^\star = \mu_0(\bm\mu^\star)$.
Then we first have that for $k \neq k' \in [K]$,
\begin{equation*}
    \partial_k \ell(\bm\mu) = \frac{y_0}{\mu_0} - \frac{y_k}{\mu_k}, \quad
    \partial_{kk} \ell(\bm\mu) = \frac{y_0}{\mu_0^2} + \frac{y_k}{\mu_k^2}, \quad
    \partial_{k k'} \ell(\bm\mu) = \frac{y_0}{\mu_0^2}.
\end{equation*}

Let $\alpha$ be multi-index.
By multivariate Taylor's theorem with the integral form of the remainder,
\begin{align*}
    \ell(\bm\mu) - \ell(\bm\mu^\star) &= \nabla \ell(\bm\mu^\star)^\intercal (\bm\mu - \bm\mu^\star) + 2 \sum_{|\alpha| = 2} \frac{(\bm\mu - \bm\mu^\star)^\alpha}{\alpha!} \int_0^1 (1 - t) \partial^\alpha \ell(\bm\mu^\star + t(\bm\mu - \bm\mu^\star)) dt \\
    &= \nabla \ell(\bm\mu^\star)^\intercal (\bm\mu - \bm\mu^\star)
    + \sum_{k=1}^K (\mu_k - \mu^\star_k)^2 \int_0^1 (1 - t) \left\{ \frac{y_0}{(\mu^\star_0 + t(\mu_0 - \mu^\star_0))^2} + \frac{y_k}{(\mu^\star_k + t(\mu_k - \mu^\star_k))^2} \right\} dt \\
    &\quad + 2 \sum_{1 \leq k < k' \leq K} (\mu_k - \mu^\star_k) (\mu_{k'} - \mu^\star_{k'}) \int_0^1 (1 - t) \frac{y_0}{(\mu^\star_0 + t(\mu_0 - \mu^\star_0))^2} dt \\
    &= \nabla \ell(\bm\mu^\star)^\intercal (\bm\mu - \bm\mu^\star) + \sum_{k=1}^K (\mu_k - \mu^\star_k)^2 \int_0^1 (1 - t) \frac{y_k}{(\mu^\star_k + t(\mu_k - \mu^\star_k))^2} dt \\
    &\quad + \left( \sum_{k=1}^K (\mu_k - \mu^\star_k) \right)^2 \int_0^1 (1 - t) \frac{y_0}{(\mu^\star_0 + t(\mu_0 - \mu^\star_0))^2} dt \\
    &= \underbrace{\nabla \ell(\bm\mu^\star)^\intercal (\bm\mu - \bm\mu^\star)}_{(a)}
    + \underbrace{\sum_{k=0}^K (\mu_k - \mu^\star_k)^2 \int_0^1 (1 - t) \frac{y_k}{(\mu^\star_k + t(\mu_k - \mu^\star_k))^2} dt}_{(b)}.
\end{align*}

\paragraph{(a)}
\begin{align*}
    \nabla \ell(\bm\mu^\star)^\intercal (\bm\mu - \bm\mu^\star) &= \sum_{k=1}^K \left( \frac{y_0}{\mu_0^\star} - \frac{y_k}{\mu_k^\star} \right) (\mu_k - \mu^\star_k) \\
    &= \sum_{k=1}^K \left( \frac{y_0}{\mu_0^\star} (\mu_k - \mu_k^\star) - \frac{y_k}{\mu_k^\star}\mu_k + y_k \right).
\end{align*}

\paragraph{(b)}
\begin{align*}
    \sum_{k=0}^K (\mu_k - \mu^\star_k)^2 \int_0^1 (1 - t) \frac{y_k}{(\mu^\star_k + t(\mu_k - \mu^\star_k))^2} dt &= \sum_{k=0}^K (\mu_k - \mu^\star_k)^2 \int_{\mu_k^\star}^{\mu_k} \left( 1 - \frac{v - \mu^\star_k}{\mu_k - \mu^\star_k} \right) \frac{y_k}{v^2} \frac{1}{\mu_k - \mu^\star_k} dv \\
    &= \sum_{k=0}^K y_k \int_{\mu_k^\star}^{\mu_k} \frac{\mu_k - v}{v^2} dv \\
    &= \sum_{k=0}^K y_k \left\{ \frac{\mu_k}{\mu_k^\star} - 1 - \log\frac{\mu_k}{\mu_k^\star} \right\}.
\end{align*}

Recall that $\sum_{k=0}^K y_k = \sum_{k=0}^K \mu_k = \sum_{k=0}^K \mu_k^\star = 1$ and $y_k = \mu_k^\star + \xi_k$ for $k \in [K]$.
Denoting $\xi_0 = - \sum_{k=1}^K \xi_k$, we then also have that $y_0 = \mu_0^\star + \xi_0$.
Then, we have that 
\begin{align*}
    \ell(\bm\mu) - \ell(\bm\mu^\star) &= y_0 \left\{ \frac{\mu_0}{\mu_0^\star} - 1 - \log\frac{\mu_0}{\mu_0^\star} \right\} + \sum_{k=1}^K \left\{ \frac{y_0}{\mu_0^\star} (\mu_k - \mu_k^\star) - y_k \log\frac{\mu_k}{\mu_k^\star} \right\} \\
    &= \frac{y_0}{\mu_0^\star} \sum_{k=0}^K \mu_k - y_0 + y_0\log\frac{\mu_0^\star}{\mu_0} + \sum_{k=1}^K \left\{ - \frac{y_0}{\mu_0^\star} \mu_k^\star + y_k \log\frac{\mu_k^\star}{\mu_k} \right\} \\
    &= \frac{y_0}{\mu_0^\star} - \frac{y_0}{\mu_0^\star} \sum_{k=1}^K \mu_k^\star - y_0
    + \sum_{k=0}^K y_k \log\frac{\mu_k^\star}{\mu_k} \\
    &= \sum_{k=0}^K \mu_k^\star \log\frac{\mu_k^\star}{\mu_k} + \sum_{k=0}^K \xi_k \log\frac{\mu_k^\star}{\mu_k} \\
    &= \sum_{k=0}^K \mu_k^\star \log\frac{\mu_k^\star}{\mu_k} + \sum_{k=0}^K \xi_k \log\frac{\mu_k^\star}{\mu_k} \\
    &= \KL(\bm\mu^\star, \bm\mu) + \sum_{k=1}^K \xi_k \left( \log\frac{\mu_k^\star}{\mu_0^\star} - \log\frac{\mu_k}{\mu_0} \right) \\
    &\overset{(*)}{=} \KL(\bm\mu^\star, \bm\mu) + \sum_{k=1}^K \xi_k \langle \vx_t, \bm\theta_\star^{(k)} - \bm\theta_t^{(k)} \rangle,
\end{align*}
where at $(*)$, we recall the definitions of $\bm\mu^\star$ and $\bm\mu$.
Then, with proper matrix notations, the statement follows.
\hfill\qedsymbol

\subsubsection{Proof of Lemma~\ref{lem:kl-bregman-multinomial}}
\label{app:lem-bregman}
Denote $\mu_k^{(i)} = \mu_k(\vz^{(i)})$ and $C_k^{(i)} := 1 + \sum_{j \neq k} e^{z_j^{(i)}}$.
Then we have the following conversion between $\mu, C$, and $z$:
\begin{equation*}
    \mu_k^{(i)} = \frac{e^{z_k^{(i)}}}{C_k^{(i)} + e^{z_k^{(i)}}}, \quad z_k^{(i)} = \frac{\mu_k^{(i)} C_k^{(i)}}{1 - \mu_k^{(i)}}.
\end{equation*}
The statement then follows from direct computation:
\begin{align*}
    &D_m(\vz^{(1)}, \vz^{(2)}) \\
    &= m(\vz^{(1)}) - m(\vz^{(2)}) - \nabla m(\vz^{(2)})^\intercal (\vz^{(1)} - \vz^{(2)}) \\
    &= \log\left( 1 + \sum_{k=1}^K e^{z^{(1)}_k} \right) - \log\left( 1 + \sum_{k=1}^K e^{z^{(2)}_k} \right) - \sum_{k=1}^K \frac{e^{z^{(2)}_k}}{1 + \sum_{k=1}^K e^{z^{(2)}_k}} (z^{(1)}_k - z^{(2)}_k) \\
    &= \log\frac{1 - \sum_{k=1}^K \mu_k^{(2)}}{1 - \sum_{k=1}^K \mu_k^{(1)}} - \sum_{k=1}^K \mu_k^{(2)} \log\frac{\mu_k^{(1)} (1 - \mu_k^{(2)}) C_k^{(1)}}{\mu_k^{(2)}(1 - \mu_k^{(1)}) C_k^{(2)}} \\
    &= \left( 1 - \sum_{k=1}^K \mu_k^{(2)} \right) \log\frac{1 - \sum_{k=1}^K \mu_k^{(2)}}{1 - \sum_{k=1}^K \mu_k^{(1)}} + \sum_{k=1}^K \mu_k^{(2)} \log\frac{\mu_k^{(2)}}{\mu_k^{(1)}}
    + \sum_{k=1}^K \mu_k^{(2)} \left\{ \log\frac{1 - \sum_{j=1}^K \mu_j^{(2)}}{1 - \sum_{j=1}^K \mu_j^{(1)}} - \log\frac{(1 - \mu_k^{(2)}) C_k^{(1)}}{(1 - \mu_k^{(1)}) C_k^{(2)}} \right\} \\
    &= \KL(\bm\mu(\vz^{(2)}), \bm\mu(\vz^{(1)})) + \sum_{k=1}^K \mu_k^{(2)} \left\{ \log\frac{\sum_{j=1}^K e^{z_j^{(1)}}}{\sum_{j=1}^K e^{z_j^{(2)}}} - \log\frac{C^{(1)}_k + e^{z_k^{(1)}}}{C^{(2)}_k + e^{z_k^{(2)}}} \right\} \\
    &= \KL(\bm\mu(\vz^{(2)}), \bm\mu(\vz^{(1)})).
\end{align*}
\hfill\qedsymbol

\subsubsection{Proof of Lemma~\ref{lem:generalized-self-concordant1}}
\label{app:lem-generalized}
By Proposition 8 of \cite{sun2019generalized}, we have that for any $\vz_1, \vz_2$,
\begin{equation*}
    \nabla^2 f(\vz_1 + v(\vz_2 - \vz_1)) \succeq e^{-M_f \lVert \vz_1 - \vz_2 \rVert_2 v} \nabla^2 f(\vz_1).
\end{equation*}
Multiplying both sides by $(1 - v)$ and integrating over $[0, 1]$ w.r.t. $v$, the statement follows:
\begin{align*}
    \int_0^1 (1 - v) \nabla^2 f(\vz_1 + v(\vz_2 - \vz_1)) dv &\succeq \int_0^1 (1 - v) e^{- M_f \lVert \vz_1 - \vz_2 \rVert_2 v} \nabla^2 f(\vz_1) dv \\
    &= \left( \frac{1}{M_f \lVert \vz_1 - \vz_2 \rVert_2} + \frac{\exp(-M_f \lVert \vz_1 - \vz_2 \rVert_2) - 1}{(M_f \lVert \vz_1 - \vz_2 \rVert_2)^2} \right) \nabla^2 f(\vz_1) \\
    &\succeq \frac{1}{2 + M_f \lVert \vz_1 - \vz_2 \rVert_2} \nabla^2 f(\vz_1),
\end{align*}
where the last inequality follows from the elementary inequality $\frac{1}{z} + \frac{e^{-z} - 1}{z^2} \geq \frac{1}{2 + z}$ for any $z \geq 0$.
\hfill\qedsymbol

\subsubsection{Proof of Lemma~\ref{lem:H-norm-multinomial}}
\label{app:H-norm-multinomial}

By Theorem~\ref{thm:confidence-multinomial}, we have that with probability at least $1 - \delta$, $\gL_t(\bm\Theta_\star) - \gL_t(\widehat{\bm\Theta}_t) \leq \beta_t(\delta)^2$, which we assume to be true throughout the proof.
Let $\bm\Theta \in \gC_t(\delta)$, and recall that $\bm\theta = \mathrm{vec}(\bm\Theta^\intercal)$.
Then, we first have that via second-order Taylor expansion of $\gL_t(\bm\theta)$ around $\bm\theta_\star$,
\begin{align}
    \lVert \bm\theta - \bm\theta_\star \rVert_{\widetilde{\mG}_t(\bm\theta_\star, \bm\theta)}^2 &= \gL_t(\bm\theta) - \gL_t(\bm\theta_\star) + \nabla \gL_t(\bm\theta_\star)^\intercal (\bm\theta_\star - \bm\theta) + \lambda \lVert \bm\theta - \bm\theta_\star \rVert_2^2 \nonumber \\
    &\leq \gL_t(\bm\theta) - \gL_t(\widehat{\bm\theta}_t) + \nabla \gL_t(\bm\theta_\star)^\intercal (\bm\theta_\star - \bm\theta) + \lambda \lVert \bm\theta - \bm\theta_\star \rVert_2^2 \nonumber \\
    &\leq K + \beta_t(\delta)^2 + \nabla \gL_t(\bm\theta_\star)^\intercal (\bm\theta_\star - \bm\theta), \quad \text{w.p. at least $1 - \delta$} \label{eqn:norm-H-multinomial},
\end{align}
where we chose $\lambda = \frac{K}{4S^2}$.

Now observe that
\begin{equation*}
    \nabla \gL_t(\bm\theta_\star)^\intercal \vv = \sum_{s=1}^t \left[ \left( \bm\mu(\vx_s, \bm\Theta_\star) - \vy_s \right) \otimes \vx_s \right]^\intercal \vv
    = \sum_{s=1}^t \bm\xi_s^\intercal \vectorize^{-1}(\vv) \vx_s
\end{equation*}
where $\mathrm{vec}^{-1}$ is the matricization operator, and we overload the notation and define $\bm\xi_s := \bm\mu(\vx_s, \bm\Theta_\star) - \vy_s$.

Let $\gB^{dK}(2S)$ be a $dK$-ball of radius $2S$, and $\vv \in \gB^{dK}(2S)$.
It can be easily checked that $\bm\xi_s^\intercal \mathrm{vec}^{-1}(\vv) \vx_s$ is also a martingale difference sequence that satisfies
\begin{align*}
    \left| \bm\xi_s^\intercal \left( \mathrm{vec}^{-1}(\vv) \vx_s \right) \right| &\leq 2S, \\
    \E\left[ \left( \bm\xi_s^\intercal \left( \mathrm{vec}^{-1}(\vv) \vx_s \right) \right)^2 \Big| \gF_{s-1} \right] &= \lVert \vectorize^{-1}(\vv) \vx_s \rVert_{\mA_\star(\vx_s)}^2.
\end{align*}
where for simplicity we denote $\mA_\star(\vx_s) := \mA(\vx_s, \bm\Theta_\star)$.
Thus, by Freedman's inequality (Lemma~\ref{lem:freedman}), for any $\eta \in \left[ 0, \frac{1}{2S} \right]$, the following holds:
\begin{equation}
    \sP\left[ \sum_{s=1}^t \bm\xi_s^\intercal \left( \mathrm{vec}^{-1}(\vv) \vx_s \right) \leq (e - 2) \eta \sum_{s=1}^t \lVert \vectorize^{-1}(\vv) \vx_s \rVert_{\mA_\star(\vx_s)}^2 + \frac{1}{\eta} \log\frac{1}{\delta} \right] \geq 1 - \delta.
\end{equation}
Then, via similar reasoning ($\eps$-net and union bound) as in the proof of Lemma~\ref{lem:H-norm}, we have the following:
for $\vv_t$ s.t. $\lVert \vv_t \rVert_2 \leq 2S$ and $\lVert (\bm\theta_\star - \bm\theta) - \vv_t \rVert_2 \leq \eps_t$,
\begin{align*}
    &\nabla \gL_t(\bm\theta_\star)^\intercal (\bm\theta_\star - \bm\theta) \\
    &= \sum_{s=1}^t \bm\xi_s^\intercal \left( \mathrm{vec}^{-1}(\vv_t) \vx_s \right) + \sum_{s=1}^t \bm\xi_s^\intercal \left( \mathrm{vec}^{-1}((\bm\theta_\star - \bm\theta) - \vv_t) \vx_s \right) \tag{linearity of $\vectorize^{-1}$} \\
    &\leq (e - 2)\eta \sum_{s=1}^t \lVert \vectorize^{-1}(\vv_t) \vx_s \rVert_{\mA_\star(\vx_s)}^2 + \frac{dK}{\eta} \log\frac{5S}{\eps_t} + \frac{1}{\eta} \log\frac{1}{\delta} + \eps_t t \tag{w.p. at least $1 - \delta$} \\
    &= (e - 2)\eta \left\{ \sum_{s=1}^t \lVert \vectorize^{-1}(\bm\theta_\star - \bm\theta) \vx_s \rVert_{\mA_\star(\vx_s)}^2 + \sum_{s=1}^t \left( \lVert \vectorize^{-1}(\vv_t) \vx_s \rVert_{\mA_\star(\vx_s)}^2 - \lVert \vectorize^{-1}\left( \bm\theta_\star - \bm\theta \right) \vx_s \rVert_{\mA_\star(\vx_s)}^2 \right) \right\} \\
    &\quad + \frac{dK}{\eta} \log\frac{5S}{\eps_t} + \frac{1}{\eta} \log\frac{1}{\delta} + \eps_t t \\
    &\overset{(*)}{\leq} (e - 2)\eta \sum_{s=1}^t \left\lVert (\bm\Theta_\star - \bm\Theta) \vx_s \right\rVert_{\mA_\star(\vx_s)}^2 + (e - 2)\eta L\left( 4S + \eps_t \right) \eps_t t + \frac{dK}{\eta} \log\frac{5S}{\eps_t} + \frac{1}{\eta} \log\frac{1}{\delta} + \eps_t t \\
    &\overset{(**)}{=} (e - 2)\eta \lVert \bm\theta_\star - \bm\theta \rVert_{\mH_t(\bm\theta_\star)}^2 + \frac{dK}{\eta} \log\frac{5S}{\eps_t} + \frac{1}{\eta} \log\frac{1}{\delta} + \left( (e-2)L \left( 4S\eta + \eps_t \eta \right) + 1 \right) \eps_t t \\
    &\leq (e - 2) (2 + 2\sqrt{6}S) \eta \lVert \bm\theta_\star - \bm\theta \rVert_{\widetilde{\mG}_t(\bm\theta_\star, \bm\theta)}^2 + \frac{dK}{\eta} \log\frac{5S}{\eps_t} + \frac{1}{\eta} \log\frac{1}{\delta} + \left( (e-2)L \left( 4S\eta + \eps_t \eta \right) + 1 \right) \eps_t t \tag{$\mH_t(\bm\theta_\star) \preceq (2 + 2\sqrt{6}S) \widetilde{\mG}_t(\bm\theta_\star, \bm\theta)$}, 
\end{align*}
where $(*)$ follows from the observation that
\begin{align*}
    \lVert \mC \vx_s \rVert_{\mA_\star(\vx_s)}^2 - \lVert \mD \vx_s \rVert_{\mA_\star(\vx_s)}^2 &= \lVert \mD \vx_s + (\mC - \mD) \vx_s \rVert_{\mA_\star(\vx_s)}^2 - \lVert \mD \vx_s \rVert_{\mA_\star(\vx_s)}^2 \\
    &= 2 \vx_s^\intercal \mD^\intercal \mA_\star(\vx_s) (\mC - \mD) \vx_s + \vx_s^\intercal (\mC - \mD)^\intercal \mA_\star(\vx_s) (\mC - \mD) \vx_s \\
    &\leq 2 \lVert \mD^\intercal \mA_\star(\vx_s) (\mC - \mD) \vx_s \rVert_2 + L \eps_t^2 \tag{Definition of $L$ (Eqn.~\eqref{eqn:L})} \\
    &\leq 2 \lVert \mD^\intercal \rVert_2 \lVert \mA_\star(\vx_s) \rVert_2 \lVert (\mC - \mD) \rVert_2 + L \eps_t^2 \\
    &\leq 2L \lVert \mD^\intercal \rVert_F \lVert (\mC - \mD) \rVert_F + L \eps_t^2 \tag{Definition of $L$ (Eqn.~\eqref{eqn:L})} \\
    &\leq L\left( 4S + \eps_t \right) \eps_t
\end{align*}
for any $\mC, \mD \in \sR^{d \times K}$ with $\lVert \mC \rVert_F, \lVert \mD \rVert_F \leq 2S$ and $\lVert \mC - \mD \rVert_F \leq \eps_t$.
$(**)$ follows from the observation that for $\bm\theta = \vectorize(\bm\Theta^\intercal)$,
\begin{align*}
    \bm\theta^\intercal (\mA \otimes \vx\vx^\intercal) \bm\theta &= \vectorize(\bm\Theta^\intercal)^\intercal (\mA \otimes \vx\vx^\intercal) \vectorize(\bm\Theta^\intercal) \\
    &\overset{(a)}{=} \vectorize(\bm\Theta^\intercal)^\intercal \vectorize\left( \vx \vx^\intercal \bm\Theta^\intercal \mA^\intercal \right) \\
    &\overset{(a)}{=} \vectorize(\bm\Theta^\intercal)^\intercal \left( \mA \bm\Theta \otimes \vx \right) \vx \\
    &\overset{(b)}{=} \vx^\intercal  \left( \bm\Theta^\intercal \mA^\intercal \otimes \vx^\intercal \right) \vectorize(\bm\Theta^\intercal) \\
    &\overset{(a)}{=} \vx^\intercal \vectorize(\vx^\intercal \bm\Theta^\intercal \mA \bm\Theta) \\
    &= \vx^\intercal \bm\Theta^\intercal \mA \bm\Theta \vx,
\end{align*}
where $(a)$ follows from the mixed Kronecker matrix-vector product property, $(\mC \otimes \mD) \vectorize(\mE) = \vectorize(\mD \mE \mC^\intercal)$, and $(b)$ follows from the tranpose property of the Kronecker product, $(\mC \otimes \mD)^\intercal = \mC^\intercal \otimes \mD^\intercal$.

Choosing $\eta = \frac{1}{2 (e - 2) (2 + 2\sqrt{6}S)} < \frac{1}{2S}$, $\eps_t = \frac{dK}{t}$, and rearranging Eqn.~\eqref{eqn:norm-H-multinomial} with Theorem~\ref{thm:confidence-multinomial}, we finally have that
\begin{align*}
    \lVert \bm\theta - \bm\theta_\star \rVert_{\widetilde{\mG}_t(\bm\theta_\star, \bm\theta)}^2 \lesssim d K S \log\left(e + \frac{St}{dK}\right) + \sqrt{K}S \log\frac{1}{\delta} + dKL.
\end{align*}
\hfill\qedsymbol

\subsubsection{Proof of Lemma~\ref{lem:EPL-generalized}}
\label{app:lem-generalized-elliptical1}

We follow the proof of the usual elliptical potential lemma as provided in \cite{abbasiyadkori2011linear}:
\begin{align*}
    \det(\mV_{t+1}) &= \det\left( \mV_t + \sum_{k=1}^K \vx_{t, k} \vx_{t, k}^\intercal \right) \\
    &= \det(\mV_t) \det\left( \mI + \sum_{k=1}^K \mV_t^{-\frac{1}{2}}\vx_{t, k} \left( \mV_t^{-\frac{1}{2}}\vx_{t, k} \right)^\intercal \right) \\
    &\overset{(*)}{\geq} \det(\mV_t) \left( 1 + \sum_{k=1}^K \left\lVert \vx_{t,k} \right\rVert_{\mV_t^{-1}}^2 \right),
\end{align*}
where $(*)$ follows from the following lemma:
\begin{lemma}
\label{lem:matrix}
    For $\mA = [\va_1 \cdots \va_K]$, $\det(\mI + \mA \mA^\intercal) \geq 1 + \tr(\mA \mA^\intercal) = 1 + \sum_{k=1}^K \lVert \va_k \rVert^2$.
\end{lemma}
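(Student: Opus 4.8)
The plan is to reduce everything to the eigenvalues of the positive semidefinite matrix $\mA \mA^\intercal$. First I would note that $\mA \mA^\intercal \in \sR^{d \times d}$ is symmetric and positive semidefinite, so by the spectral theorem it has an orthonormal eigenbasis with real eigenvalues $\lambda_1, \ldots, \lambda_d \geq 0$. Since $\mI + \mA \mA^\intercal$ shares this eigenbasis with eigenvalues $1 + \lambda_i$, its determinant factors as
\begin{equation*}
    \det(\mI + \mA \mA^\intercal) = \prod_{i=1}^d (1 + \lambda_i).
\end{equation*}

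The key step is then a purely scalar inequality for nonnegative reals: expanding the product gives
\begin{equation*}
    \prod_{i=1}^d (1 + \lambda_i) = 1 + \sum_{i=1}^d \lambda_i + \sum_{i < j} \lambda_i \lambda_j + \cdots,
\end{equation*}
where every omitted term is an elementary symmetric polynomial in the $\lambda_i$. Because all $\lambda_i \geq 0$, each of these higher-order terms is nonnegative, so discarding them only decreases the right-hand side, yielding $\prod_{i=1}^d (1 + \lambda_i) \geq 1 + \sum_{i=1}^d \lambda_i$.

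To finish, I would identify $\sum_{i=1}^d \lambda_i$ with the quantity in the statement: the sum of eigenvalues of $\mA \mA^\intercal$ equals its trace, and by the cyclic property $\tr(\mA \mA^\intercal) = \tr(\mA^\intercal \mA) = \sum_{k=1}^K \va_k^\intercal \va_k = \sum_{k=1}^K \lVert \va_k \rVert^2$. Combining the three displays gives $\det(\mI + \mA \mA^\intercal) \geq 1 + \tr(\mA \mA^\intercal) = 1 + \sum_{k=1}^K \lVert \va_k \rVert^2$, as claimed. There is no real obstacle here; the only point requiring care is invoking positive semidefiniteness to guarantee $\lambda_i \geq 0$, which is exactly what makes the dropped symmetric-polynomial terms nonnegative (the inequality would fail for indefinite matrices).
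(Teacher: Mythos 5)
Your proof is correct and follows essentially the same route as the paper's: diagonalize $\mA\mA^\intercal$, write $\det(\mI + \mA\mA^\intercal)$ as a product of $(1+\lambda_i)$, drop the nonnegative higher-order symmetric-polynomial terms, and identify the eigenvalue sum with $\tr(\mA\mA^\intercal) = \sum_{k} \lVert \va_k \rVert^2$. The only cosmetic differences are that the paper indexes the eigenvalues up to $m = \min\{d,K\}$ and computes the trace via $\mA\mA^\intercal = \sum_k \va_k \va_k^\intercal$ rather than the cyclic property, neither of which changes the argument.
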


Taking the log on both sides and using the inequality $2\log(1 + z) \geq z$ for $z \in [0, 1]$,
\begin{align*}
    \sum_{t=1}^T \min\left\{1, \sum_{k=1}^K \left\lVert \vx_{t, k} \right\rVert_{\mV_t^{-1}(\bm\theta_\star)}^2 \right\} &\leq 2 \sum_{t=1}^T \log\left( 1 + \sum_{k=1}^K \left\lVert \vx_{t, k} \right\rVert_{\mV_t^{-1}(\bm\theta_\star)}^2 \right) \\
    &\leq 2 \left( \log\det(\mV_T) - d \log\lambda \right) \\
    &\overset{(*)}{\leq} 2 d \log\left( 1 + \frac{KT}{d \lambda} \right),
\end{align*}
where $(*)$ follows from the fact that for $\lVert \vx_{t,k} \rVert \leq 1$, $\det(\mV_T) \leq \left( \frac{d\lambda + KT}{d} \right)^d$ by AM-GM inequality.

We conclude by proving Lemma~\ref{lem:matrix}: let the eigenvalues of $\mA \mA^\intercal$ be $\lambda_1 \geq \cdots \geq \lambda_m \geq 0$ with $m = \min\{d, K\}$.
Then, we have that
\begin{equation*}
    \det(\mI + \mA \mA^\intercal) = \prod_{k=1}^m (1 + \lambda_k) \geq 1 + \sum_{k=1}^m \lambda_k
    = 1 + \tr(\mA \mA^\intercal)
    = 1 + \tr\left( \sum_{k=1}^K \va_k \va_k^\intercal \right)
    = 1 + \sum_{k=1}^K \lVert \va_k \rVert^2.
\end{equation*}
\hfill\qedsymbol

\subsubsection{Proof of Lemma~\ref{lem:EPCL-generalized}}
\label{app:lem-generalized-elliptical2}

We follow the proof of the elliptical potential count lemma as provided in \cite{gales2022norm}.

Let $\mM_T := \lambda \mI + \sum_{t \in \gH_T} \sum_{k=1}^K \vx_{t,k} \vx_{t,k}^\intercal$, and let $\lambda_1 \geq \cdots \geq \lambda_d \geq 0$ be the eigenvalues of $\sum_{t \in \gH_T} \sum_{k=1}^K \vx_{t,k} \vx_{t,k}^\intercal$.
We first have that
\begin{align*}
    \det(\mM_T) &= \prod_{i=1}^d (\lambda + \lambda_i) \\
    &\leq \left( \sum_{i=1}^d \frac{\lambda + \lambda_i}{d} \right)^d \tag{AM-GM inequality} \\
    &\leq \left( \lambda + \frac{1}{d} \tr\left( \sum_{t \in \gH_T} \sum_{k=1}^K \vx_{t,k} \vx_{t,k}^\intercal \right) \right)^d \\
    &\leq \left( \lambda + \frac{K |\gH_T|}{d} \right)^d.
\end{align*}
Next, from the proof of our generalized elliptical potential lemma, we have that
\begin{equation*}
    \det(\mM_T) \geq \lambda^d \prod_{t \in \gH_T} \left( 1 + \sum_{k=1}^K \lVert \vx_{t,k} \rVert_{\mM_t^{-1}}^2 \right)
    \geq \lambda^d \prod_{t \in \gH_T} \left( 1 + \sum_{k=1}^K \lVert \vx_{t,k} \rVert_{\mV_t^{-1}}^2 \right)
    \geq \lambda^d 2^{|\gH_T|}.
\end{equation*}

Combining the two, we have
\begin{equation*}
    |\gH_T| \leq \frac{d}{\log(2)} \log\left( 1 + \frac{K |\gH_T|}{\lambda d} \right).
\end{equation*}

From here, we are done with the same algebraic computations as done in \cite{gales2022norm} using their Lemma 8.

\hfill\qedsymbol

\end{document}